\titlespacing\section{0pt}{4pt}{0pt}
\titlespacing\subsection{0pt}{2pt}{0pt}
\newcommand*\boldell{\pmb{\ell}}
\newcommand\loss{\ell}
\newcommand\lossB{\ensuremath{\boldell}}
\newcommand\Loss{\ensuremath{L}}
\newcommand\LossB{\ensuremath{\mathbi{L}}}
\newcommand\Policies{\ensuremath{\Pi}}
\newcommand\policy{\ensuremath{\pi}}
\newcommand\PoliciesDist{q}
\newcommand\PoliciesDistB{\mathbi{q}}
\newcommand\pw{h}
\newcommand\pwB{\mathbi{h}}
\newcommand\QueryIndicator{\textit{U}}
\newcommand\queryProb{z}
\def\mathbi#1{\textbf{\em #1}}
\newcommand\PoliciesNum{\ensuremath{n}}
\newcommand\policyIndex{i}
\newcommand\modelIndex{j}
\newcommand\ModelsDist{w}%
\newcommand\ModelsDistB{\mathbi{w}}%
\newcommand\pd{\ensuremath{\hat{\clabel}}}
\newcommand\pdB{\ensuremath{\hat{{\mathbi{\clabel}}}}}
\newcommand\qlb{\ensuremath{{\frac{1}{\sqrt{t}}}}}
\newcommand\tconst{\ensuremath{\tau_\text{const}}}
\newcommand\Models{\ensuremath{\mathcal{F}}}
\newcommand\ModelsNum{\ensuremath{k}}
\newcommand\model{\ensuremath{f}}
\newcommand\instance{\ensuremath{\mathbi{x}}}
\newcommand\domInstance{\ensuremath{\mathcal{X}}}
\newcommand\domClabel{\ensuremath{\mathcal{Y}}}
\newcommand\numClabel{\ensuremath{c}}
\newcommand\clabel{y} %
\newcommand\Learner{\ensuremath{\mathcal{A}}\xspace}
\newcommand\adviceMatrix{\ensuremath{\mathbi{E}}}
\newcommand\expectation{\ensuremath{\mathbb{E}}}
\newcommand\budget{\ensuremath{b}}
\newcommand\entropyF{\mathfrak{E}}
\newcommand\minpgap[1]{\ensuremath{\rho_{#1}}}
\newcommand\maxind[1]{\ensuremath{\text{maxind}(#1)}}
\renewcommand{\Pr}[1]{\ensuremath{\mathbb{P}\left[#1\right] }}
\newcommand{\expct}[1]{\mathbb{E}\left[#1\right]}
\newcommand{\expctover}[2]{\mathbb{E}_{#1}\!\left[#2\right]}
\def \argmax {\mathop{\rm arg\,max}}
\newcommand{\algname}{\textsc{CAMS}\xspace}
\newcommand{\algRecommend}{\textsc{Recommend}\xspace}
\newif\iffinal
\newcommand{\fixremoved}[1]{{\color{black} #1}}
    \newcommand{\fix}[1]{{#1}}
    \newcommand{\YC}[1]{}
    \newcommand{\YCinline}[1]{}
    \newcommand{\note}[1]{}
    \newcommand{\pref}[1]{}
    \newcommand{\rebuttal}[2]{{\color{black} #2}}
    \newcommand{\fix}[1]{{\color{red} #1}}
    \newcommand{\rebuttal}[2]{{\color{red} #2}}
    \newcommand{\YC}[1]{\todo[fancyline,color=Maroon!40]{YC: #1}\xspace}
    \newcommand{\YCinline}[1]{\textcolor{Maroon}{[YC: #1]}}
    \newcommand{\note}[1]{{\color{purple}[XL: #1]}}
    \newcommand{\pref}[1]{{\color{blue}(\ref{#1})}}
\newcommand{\reals}{\ensuremath{\mathbb{R}}}
\newcommand{\tabref}[1]{Table~\ref{#1}}
\newcommand{\figref}[1]{Fig.~\ref{#1}}
\newcommand{\eqnref}[1]{\text{Eq.}~(\ref{#1})}
\newcommand{\appref}[1]{Appendix~\ref{#1}}
\newcommand{\thmref}[1]{Theorem~\ref{#1}}
\newcommand{\lemref}[1]{Lemma~\ref{#1}}
\newcommand{\algoref}[1]{Algorithm~\ref{#1}}
\newcommand{\paren} [1] {\ensuremath{ \left( {#1} \right) }}
\newcommand{\bracket}[1]{\left[#1\right]}
\newcommand{\tuple}[1]{\ensuremath{\left\langle #1 \right\rangle}}
\newcommand{\curlybracket}[1]{\ensuremath{\left\{#1\right\}}}
\theoremstyle{plain}
\newtheorem{theorem}{Theorem}%
\newtheorem{lemma}[theorem]{Lemma}
\theoremstyle{definition}
\theoremstyle{remark}
\newtheorem{remark}[theorem]{Remark}
\newcommand{\stochastic}{\textcolor{ForestGreen}{\textsc{stochastic}}}
\newcommand{\adversarial}{\textcolor{Maroon}{\textsc{adversarial}}}
\newcommand{\queryCost}{C}
\newcommand{\numTotalPolicies}{m}
\newcommand*{\inlineequation}[2][]{%
  \begingroup
    \refstepcounter{equation}%
    \ifx\\#1\\%
    \else
      \label{#1}%
    \fi
    \relpenalty=10000 %
    \binoppenalty=10000 %
    \ensuremath{%
      #2%
    }%
    ~\@eqnnum
  \endgroup
}
\newcommand{\denselist}{\itemsep 0pt\topsep-10pt\partopsep-6pt}
    \renewcommand{\appref}[1]{App.~\ref{#1}}
    \renewcommand{\appref}[1]{the supplemental materials}
\title{
Contextual Active Model Selection
}
\author{Xuefeng Liu\textsuperscript{1}\thanks{Correspondence to: Xuefeng Liu <\href{mailto:xuefeng@uchicago.edu}{xuefeng@uchicago.edu}>.} ,~\textbf{Fangfang Xia\textsuperscript{2}},~\textbf{Rick L. Stevens\textsuperscript{1,2}},~\textbf{Yuxin Chen\textsuperscript{1}} \\
\textsuperscript{1}Department of Computer Science, University of Chicago\\
\textsuperscript{2}Argonne National Laboratory 
}
\begin{document}

\maketitle

\begin{abstract}

While training models and labeling data are resource-intensive, a wealth of pre-trained models and unlabeled data exists. %
To effectively utilize these resources, we present an approach to actively select pre-trained models while minimizing labeling costs. We frame this as an \textit{online contextual active model selection} problem: At each round, the learner receives an unlabeled data point as a context. The objective is to adaptively select the best model to make a prediction while limiting label requests. %
To tackle this problem, we propose \algname, a contextual active model selection algorithm that relies on two novel components: (1) a contextual model selection mechanism, which leverages context information to make informed decisions about which model is likely to perform best for a given context, and (2) an active query component, which strategically chooses when to request labels for data points, minimizing the overall labeling cost. We provide rigorous theoretical analysis for the regret and query complexity under both adversarial and stochastic settings. Furthermore, we demonstrate the effectiveness of our algorithm on a diverse collection of benchmark classification tasks. %
Notably, \algname requires substantially less labeling effort (less than 10\%) compared to existing methods on CIFAR10 and DRIFT benchmarks, while achieving similar or better accuracy. Our code is publicly available at: \url{https://github.com/xuefeng-cs/Contextual-Active-Model-Selection}.

\end{abstract}

\section{Introduction}\label{sec:intro}

{As pre-trained models %
become increasingly prevalent in a variety of real-world machine learning applications \citep{brown2020language,ouyang2022training,achiam2023gpt,liuentropy},}
there is a growing demand for label-efficient approaches for model selection, especially when facing varying data distributions and contexts at run time. Oftentimes, no single pre-trained model achieves the best performance for every context, and a proper approach is to construct a policy for adaptively selecting models for specific contexts \citep{luo2020metaselector}. %
For instance, in medical diagnosis and drug discovery, accurate predictions are of paramount importance. The diagnosis of diseases through {pathologist}
or the determination of {compound chemical properties}
through lab testing can be costly and time-consuming.
Different models may excel in analyzing different types of {pathological images~\citep{abd2018two,du2016overview,aggarwal2021diagnostic}}
or chemical compounds~\citep{hansen2015machine,clyde2023ai,liu2014multi}. %
Furthermore, 
in many real-world applications, %
the collection of labels for model evaluation can be expensive {and}
data instances may arrive as a stream rather than all at once. 
This scenario necessitates \textit{cost-effective} and \textit{robust} online algorithms capable of determining the most efficient model selection policy even when faced with a limited supply of labels, a scenario not fully addressed by previous works that typically assume access to all labels \cite{freund1997decision, oza2001online, auer2002nonstochastic, auer2002finite}.

{Recently, the problem of online model selection with the consideration of label acquisition costs was studied in a \textit{context-free} setting by \citet{karimi2021online}.}
However, this approach doesn't fully capture the dynamics of data contexts that are essential in many applications. %
Recognizing this gap, in this paper, we consider a more general problem setting that incorporates context information for adaptive model selection.
We introduce \algname, an {algorithm}
for active model selection that dynamically adapts to the data context to choose the most suitable 
models for an arbitrary data stream. As highlighted in \tabref{tab:alg:characteristics}, \algname aims to address the need for %
adaptive and effective model selection, 
by bridging the gap between contextual bandits, online learning, and active learning. 

Our key contributions are summarized as follows:
\vspace{-3mm}
\begin{itemize}[leftmargin=*] %
    \item 
We investigate a novel problem which we refer to as \textit{contextual active model selection}, and introduce a novel principled %
{algorithm}
that features two key technical components: (1) a \textit{contextual online model selection} procedure, designed to handle both stochastic and adversarial settings, and (2) an \textit{active query} strategy. 
The proposed algorithm is designed to be robust to heterogeneous data streams, accommodating both stochastic and adversarial online data streaming scenarios. %

\item We provide rigorous theoretical analysis on the \textit{regret} and \textit{query complexity} of the proposed algorithms. %
We establish regret upper bounds %
for both adversarial and stochastic data streams under limited label costs. Our regret upper bounds are within constant factors of the existing lower bounds for online learning problems with expert advice under the full information setting. %

\item Empirically, we demonstrate the effectiveness and robustness of our approach on a variety of online model selection tasks spanning different application domains (from generic ML benchmarks such as CIFAR10 to domain-specific tasks in biomedical analysis), data scales (ranging from 80 to 10K), data modalities (i.e., tabular, image, and graph-based data), and label types (binary or multiclass labels). For the tasks evaluated, (1) \algname outperforms all competing baselines by a significant margin. (2) Asymptotically, \algname performs no worse than the best single model. %
(3) \algname is not only robust to adversarial data streams but also can efficiently recover from ``malicious experts'' (i.e. inferior pre-trained models). %
\end{itemize}

\begin{table*}[t]
\centering
\scalebox{0.65}{
\begin{threeparttable}
\begin{tabular}{l | l l l l l l l}
\toprule
\multirow{ 3}{*}{\backslashbox{\textbf{Setup}}{\textbf{Algorithm}}}
& \textbf{\makecell[l]{{Online bagging}\tnote{*}}} \makecell[l]{}
& \textbf{Hedge}
& \textbf{EXP3}
& \textbf{EXP4}
& \textbf{Query by Committee}
& \textbf{ModelPicker}
& \textbf{\algname}
\\
& {\citep{oza2001online}}
& {\citep{freund1997decision}}
& {\citep{auer2002nonstochastic}} 
& {\citep{auer2002nonstochastic}}
& {\citep{seung1992query}}
& {\citep{karimi2021online}}
& {(ours)}
\\
& {\textrm{{bagging}}}
& {\textrm{{online learning}}}
& {\textrm{{bandit}}} 
& {\textrm{{contextual bandits}}}
& {\textrm{{active learning}}}
& {\textrm{{model selection}}}
& {(ours)}
\\
\hline
model %
selection\tnote{\textdagger}
& $\times$
& $\checkmark$
& $\checkmark$
& $\checkmark$
& $\times$
& $\checkmark$
& $\checkmark$ \\
\hline
full-information
& $\checkmark$
& $\checkmark$
& $\times$ 
& $\times$ 
& $\checkmark$ 
& $\checkmark$
& $\checkmark$ \\
\hline
active queries
& $\times$  
& $\times$  
& $\times$  
& $\times$ 
& $\checkmark$ 
& $\checkmark$
& $\checkmark$ \\
\hline
context-aware 
& $\times$ 
& $\times$  
& $\times$  
& $\checkmark$ 
& $\times$  
& $\times$  
& $\checkmark$ \\
\bottomrule
\end{tabular}
\begin{tablenotes}\footnotesize
\item[\textdagger] We regard ``arms'' as ``models'' when comparing \algname against bandit algorithms, such as EXP3/EXP4. 
\item[*] Online ensemble learning aims to build a composite model by aggregating multiple models %
rather than selecting the best model (for a given context).
\end{tablenotes}
\end{threeparttable}
}
\caption{%
Comparing \algname against related work in terms of problem setup. %
}\label{tab:alg:characteristics}
\end{table*}

\section{Related Work}\label{sec:related}

\paragraph{Contextual bandits.} Classical bandit algorithms %
(e.g., \citep{auer2002nonstochastic,auer2002finite}) %
aim to find the best arm(s) %
through a sequence of actions. When side information (e.g., user profile for recommender systems or environmental context for experimental design) is available, %
many bandit algorithms can be lifted to the contextual setting: For example, EXP4 \citep{auer2002nonstochastic,beygelzimer2011contextual,neu2015explore} considers the bandit setting with expert advice: At each round, experts announce their predictions of which actions are the most promising for the given context, and the goal is to construct a expect selection policy that competes with the best expert from hindsight. In bandit problems, the learner only gets to observe the reward for each action taken. In contrast, for the online model selection problem considered in this work---where an action corresponds to choosing a model to make prediction on an incoming data point---we get to see the loss/reward of \textit{all} models on the labeled data point. {By utilizing the information from unchosen arms, it could significantly reduce the cumulative regret. %
In this regard, this work aligns more closely with online learning with \textit{full information} setting, where the learner has access to the loss of all the arms at each round (e.g. as considered in the Hedge algorithm \citep{freund1997decision,burtini2015survey,cesa2006prediction,hoi2021online}).

\paragraph{Online learning with full information.} A clear distinction between our work and online learning is that %
we assume the labels of the online data stream are not readily available but can be acquired at each round with a cost. In addition, the learner only observes the loss incurred by all models on a data point when it decides to query its label. In contrast, in the canonical online learning setting, labels arrive with the data and one gets to observe the loss of all candidate models at each round. Similar setting also applies to other online learning problems, such as online boosting or bagging. %
A related work to ours is online learning with label-efficient prediction \citep{cesa2005minimizing}, which proposes an online learning algorithm with matching upper and lower bounds on the regret. However, they consider a fixed query probability that leads to a linear query complexity. Our algorithm, inspired by uncertainty sampling in active learning, achieves an improved query complexity with the adaptive query strategy while maintaining a comparable regret.

\paragraph{Stream-based Active learning.}%
Active learning aims to achieve a target learning performance with fewer training examples \citep{settles2009active}. 
The active learning framework closest to our setting is query-by-committee (QBC) \citep{seung1992query}, in particular under the stream-based setting %
\citep{loy2012stream,ho2008query}. QBC maintains a committee of hypotheses; each committee member votes on the label of an instance, and the instances with the maximal disagreement among the committee are considered the most informative labels. Note that existing stream-based QBC algorithms are designed and analyzed assuming i.i.d. data streams. In comparison, our work uses a different query strategy as well as a novel model recommendation strategy, which also applies to the adversarial setting.

\paragraph{Active model selection.} Active model selection captures a broad class of problems where model evaluations are expensive, either  due to (1) the cost of evaluating (or ``probing'') a model, or (2) the cost of annotating a training example. 
Existing works under the former setting  \citep{madani2012active,santana2020contextual} {and online setting \citep{cutkosky2021dynamic,shukla2019adaptive}} often ignore context information and data annotation cost, and only consider \emph{partial} feedback on the models being evaluated/ probed on i.i.d. data. The goal is to identify the best model with as few model probes as possible. %
This is quite different from our problem setting which considers the full information setting as well as non-negligible data annotation cost. 
{
\citep{van2018almost} proposes that the optimal model choice is influenced by the sample size rather than any individual sample feature.
\citep{li2020finding} addresses the active model selection problem, however both works do not adopt a stream-based approach.}
For the later, apart from \citet{karimi2021online}, {an online contextual-free model selection work}, as shown in \tabref{tab:alg:characteristics}, most existing works assume a pool-based setting where the learner can choose among the pool of unlabeled data \citep{sugiyama2008batch,madani2012active,sawade2012,sawade2010active,ali2014active,gardner2015bayesian,zhang2014beyond,leite2010active}, and the goal is to identify the best model with a minimal set of labels.

\section{Problem Statement}\label{sec:problem_statement}

\paragraph{Notations.} Let $\domInstance$ be the input domain %
and $\domClabel := \curlybracket{0, \dots, \numClabel-1}$ be the set of $\numClabel$  possible class labels for each input instance.  %
Let $\Models = \curlybracket{\model_1, \dots, \model_k}$ be a set of $\ModelsNum$ pre-trained classifiers over $\domInstance\times \domClabel$. %
A model selection policy $\policy: \domInstance \rightarrow \Delta^{\ModelsNum-1}$ maps any input instance $\instance \in \domInstance$ to a distribution over the pre-trained classifiers $\Models$, specifying the probability $\policy\paren{\instance}$ of selecting each classifier under input $\instance$. Here, $\Delta^{\ModelsNum-1}$ denotes the $\ModelsNum$-dimensional probability simplex $\curlybracket{\ModelsDistB \in \reals^\ModelsNum: |\ModelsDistB|=1, \ModelsDistB \geq 0}$. %
One can interpret a policy $\policy$ as an ``expert'' that suggests which model to select for a given \textit{context} $\instance$.

Let $\Policies$ be a collection of model selection policies. In this paper, we propose an \textit{extended policy set}
$    \Policies^* := \Policies \cup \curlybracket{\policy^{\text{const}}_1, \dots, \policy^{\text{const}}_k} \label{eq:extpolset}$
which also includes constant policies that always suggest a fixed model. %
Here, $\policy^{\text{const}}_j (\cdot) := \mathbi{e}_j$, and $\mathbi{e}_j \in \Delta^{\ModelsNum-1}$ denotes the canonical basis vector with $e_j=1$. 
Unless otherwise specified, we assume $\Policies$ is finite %
with $|\Policies| = \PoliciesNum$, and $|\Policies^*| \leq \PoliciesNum+\ModelsNum$. As a special case, when $\Policies = \emptyset$, our problem reduces to the contextual-free setting.

\looseness -1 \noindent \textbf{The contextual active model selection protocol.} Assume that the learner knows the set of classifiers $\Models$ as well as the set of model selection policies $\Policies$. At round $t$, the learner receives a data instance $\instance_t \in \domInstance$ as the context for the current round, and computes the predicted label $\pd_{t,j} = \model_j\paren{\instance_t}$ for each pre-trained classifier indexed by $j \in [\ModelsNum]$. Denote the vector of predicted labels by all $\ModelsNum$ models by $\pdB_t := [\pd_{t,1}, \dots, \pd_{t,\ModelsNum}]^\top$. %
Based on previous observations, the learner identifies a model/classifier $f_{j_t}$ and makes a prediction $\pd_{t,j_t}$ for the instance $\instance_t$.
Meanwhile, the learner can obtain the true label $\clabel_t$ \textit{only if} it decides to query $\instance_t$. Upon observing $\clabel_t$, the learner incurs a \textit{query cost}, and receives a (full) loss vector %
$\lossB_{t}=\mathbb{I}_{\curlybracket{\pdB_t\neq{\clabel_t}}}$, 
where the $j$th entry %
$\loss_{t,j} := \mathbb{I}_{\curlybracket{\pd_{t,j}\neq{\clabel_t}}}$
corresponds to the 0-1 loss for model $j\in [k]$ at round $t$. The learner can then use the queried labels to adjust its model selection criterion for future rounds. 

\paragraph{Performance metric.}
If $\instance_t$ is misclassified by the model $j_t$ selected by learner at round $t$, i.e. $\pd_{t,j_t} \neq \clabel_t$, it will be counted towards the \textit{cumulative loss} of the learner, regardless of the learner making a query. Otherwise, no loss will be incurred for that round. For a learning algorithm \Learner, its cumulative loss over $T$ rounds is defined as ${L}^\Learner_T := \sum_{t=1}^T{\loss}_{t,j_t}$. %

In practice, the choice of model $j_t$ at round $t$ by the learner \Learner could be random: For {\emph{stochastic}} data streams where $\paren{\instance, \clabel}$ {arrives i.i.d.}, the learner may choose different models for different random realizations of $\paren{\instance_t, \clabel_t}$. For the {\emph{adversarial}} setting where the data stream $\curlybracket{\paren{\instance_t,\clabel_t}}_{t\geq{1}}$ is {chosen by an adversary before each round}, the learner may randomize its choice of model to avoid a constant loss at each round \citep{hazan2019introduction}. Therefore, due to the randomness of ${L}^\Learner_T$, we consider the \textit{expected} cumulative loss $\expectation[{L}^\Learner_T]$ as a key performance measure of the learner $\Learner$. To characterize the progress of $\Learner$, we consider the \textit{regret}---formally defined as follows--- as the difference between the cumulative loss received by the learner and the loss if the learner selects the ``best policy'' $\policy^*\in \Policies^*$ in hindsight. 

For stochastic data streams, we assume that each policy $i$ recommends the \emph{most probable} model
w.r.t. $\policy_i(\instance_t)$ for context $\instance_t$. We use $\maxind{\ModelsDistB} := \argmax_{j: \ModelsDist_{j}\in \ModelsDistB } \ModelsDist_{j}$ to denote the index of the maximal-value entry\footnote{Assume ties are broken randomly.} of $\ModelsDistB$. Since $\paren{\instance, \clabel}$ are drawn i.i.d., we define %
$\mu_i = \frac1T\sum_{t=1}^T\expctover{\instance_t,\clabel_t}{{\loss}_{t,\maxind{\policy_i(\instance_t)}}}$.
This leads to the pseudo-regret for the stochastic setting over $T$ rounds, defined as
\begin{equation}\label{eq:expected_stochastic_regret}
\overline{\mathcal{R}}_T\paren{\Learner}=\expectation[{\Loss}^{\Learner}_T]-T\min_{i\in [|\Policies^*|]}\mu_{i}. 
\end{equation}
In an adversarial setting, since the data stream (and hence the loss vector) is determined by an adversary, we consider the reference best policy to be the one that minimizes the loss on the adversarial data stream, and the expected regret %
\begin{equation}\label{eq:expected_adversarial_regret}
   \mathcal{R}_T\paren{\Learner} = \expectation{[{\Loss}^{\Learner}_T]} - \min_{i\in [|\Policies^*|]} \sum_{t=1}^T \tilde{\loss}_{t,i},
\end{equation}
where $\tilde{\loss}_{t,i} := \langle \policy_i\paren{\instance_t}, {\lossB}_{t} \rangle$ denotes the expected loss if the learner commits to policy $\policy_i$, randomizes and selects $j_t \sim \policy_i\paren{\instance_t}$ (and receives loss ${\loss}_{t,j_t}$) at round $t$. %
Our goal is to devise a principled online active model selection strategy %
to minimize the regret as defined in \eqref{eq:expected_stochastic_regret} or \eqref{eq:expected_adversarial_regret}, while maintaining a low total query cost.
For convenience, we refer the readers to \appref{app:notations} for a summary of the notations used in this paper.

\section{Contextual Active Model Selection}\label{sec:cams}
In this section, we introduce our main algorithm for both stochastic and adversarial data streams. %
\begin{figure*}[h]
  \centering
  \scalebox{0.76}{
  \fbox{
  \begin{subfigure}[t]{0.6\textwidth}
\begin{algorithmic}[1]
  \State {\bfseries Input:} Models $\Models$, policies $\Policies$, \#rounds $T$, budget $\budget$
  \State Initialize loss $\tilde{\LossB}_{0}\leftarrow 0$; query cost $\queryCost_0 \leftarrow 0$ %
  \State Set $\Policies^* \leftarrow \Policies \cup \curlybracket{\policy^{\text{const}}_1, \dots, \policy^{\text{const}}_k}$ according to \eqnref{eq:extpolset} \label{alg:cams:line:extpolset} %
  \For{$t=1,2,...,T$}
  \State Receive $\instance_t$ \label{alg:cams:line:context}
  \State $\eta_t \leftarrow \textsc{SetRate}(t,\instance_t, \left|\Policies^*\right|)$
    \State Set $\PoliciesDist_{t,i} \propto \exp{\paren{-\eta_t\tilde{\Loss}_{t-1,i}}}~\forall \policyIndex \in |\Policies^*|$ 
  \State $j_t \leftarrow \algRecommend(\instance_t,\PoliciesDistB_t) $ %
  \State Output $\pd_{t,j_{t}} \sim f_{t,j_{t}}$ as the prediction for $\instance_t$ \label{alg:cams:line:recommend}
  \State Compute $\queryProb_t$ in \eqnref{eq:query-probability} \label{alg:cams:line:queryprob}
  \State Sample $\mathbb{\QueryIndicator}_t \sim \textrm{Ber}\paren{\queryProb_t}$ %
  \If{$\mathbb{\QueryIndicator}_t = 1$ and $\queryCost_t\leq {\budget}$} 
  \State Query the label $\clabel_t$
  \State $\queryCost_t \leftarrow \queryCost_{t-1} + 1$ \label{alg:cams:line:observe}
  \State Compute ${\lossB}_t$: $\loss_{t,j}=\mathbb{I}\curlybracket{\pd_{t,j}\neq \clabel_t}, \forall {j\in{[\left|\Models\right|]}}$ \label{alg:cams:line:update}
    \State Estimate model loss:  $\hat{\loss}_{t,j}=\frac{\loss_{t,j}}{\queryProb_t}, \forall j \in{[\left|\Models\right|]}$
    \State Update $\tilde{\lossB}_{t}$: %
    $\tilde{\loss}_{t,i} \leftarrow \langle \policy_i(\instance_t),  \hat{\loss}_{t,j} \rangle, \forall i\in \left[|\Policies^*|\right]$
    \State $\tilde{\LossB}_{t} = \tilde{\LossB}_{t-1} + \tilde{\lossB}_{t}$
    \Else
    \State $\tilde{\LossB}_{t} = \tilde{\LossB}_{t-1}$
    \State $\queryCost_t \leftarrow \queryCost_{t-1}$ \label{alg:cams:line:endofprocedure}
    \EndIf
  \EndFor
\end{algorithmic}
    \end{subfigure}
  \begin{subfigure}[t]{0.50\textwidth}
  \begin{subfigure}[t]{1\textwidth}
\hfill
\begin{algorithmic}[1]
\setcounter{ALG@line}{20}
    \Procedure{SetRate}{$t,\instance_t,\numTotalPolicies$}
        \If{\stochastic}
        \State     $\eta_t=\sqrt{\frac{\ln{{\numTotalPolicies}}}{t}}$ \label{alg:cames:line:setrate:stochastic}
        \EndIf
        \If{\adversarial}
        \State %
        Set $\minpgap{t}$ as in adversarial setting section
        \label{alg:cames:line:setrate:mingap}
        \State  $\eta_t=\sqrt{\qlb + \frac{ \minpgap{t}}{c^2\ln c }}\cdot \sqrt{\frac{\ln{{\numTotalPolicies}}}{T}}$ \label{alg:cames:line:setrate:rate}
        \EndIf
        \State \Return $\eta_t$
    \EndProcedure
\end{algorithmic}
    \end{subfigure}
    \\~
    \\~
 \begin{subfigure}[t]{1\textwidth}
\hfill
\begin{algorithmic}[1]
\setcounter{ALG@line}{28}
    \Procedure{\algRecommend}{$\instance_t,\PoliciesDistB_t$}
        \If{\stochastic}
        \State $\ModelsDistB_t = \sum_{i \in |\Policies^*|} \PoliciesDist_{t,i} \policy_i(\instance_t)$ \label{alg:cams:line:stochastic:weightmodel}
        \State $j_t \leftarrow \maxind{\ModelsDistB_t}$ \label{alg:cams:line:stochastic:selectmodel}
        \EndIf
        \If{\adversarial}
        \State  $i_t \sim \PoliciesDistB_t$ \label{alg:cams:line:adversarial:selectpolicy}
        \State  $j_t \sim \policy_{i_t}\paren{\instance_t}$ \label{alg:cams:line:adversarial:selectmodel}
        \EndIf
        \State \Return $j_t$
    \EndProcedure
    \end{algorithmic}
        \end{subfigure} 
    \\~
    \end{subfigure}
  }}
  \vspace{-1mm}
  \caption{The \underline{C}ontextual \underline{A}ctive \underline{M}odel \underline{S}election (\algname) algorithm}
    \label{alg:CAMS}
    \vspace{-3mm}
\end{figure*}

\noindent \textbf{Contextual model selection.}
Our key insight underlying the contextual model selection strategy extends from the \textit{online learning with expert advice} framework \citep{freund1997decision,burtini2015survey}. 
We start by appending the constant policies that always pick single pre-trained \emph{models} to form the extended policy set $\Policies^*$ (Line~\ref{alg:cams:line:extpolset}, in \figref{alg:CAMS}). This allows \algname to be at least as competitive as the best model. Then, at each round, \algname maintains a probability distribution over the (extended) policy set $\Policies^*$, and updates those according to the observed loss for each policy. %
We use $\PoliciesDistB_t := (\PoliciesDist_{t,i})_{i\in |\Policies^*|}$ to denote the probability distribution over $\Policies^*$ at $t$.  Specifically, the probability $\PoliciesDist_{t,i}$ is computed based on the exponentially weighted cumulative loss, i.e. $\PoliciesDist_{t,i} \propto \exp{\paren{-\eta_t\tilde{\Loss}_{t-1,i}}}$ where $\tilde{\Loss}_{t,i} := \sum_{\tau=1}^{t} \tilde{\loss}_{\tau,i}$ denotes the cumulative loss of policy $i$. 

For adversarial data streams, it is natural for both the online learner and the model selection policies to randomize their actions to avoid linear regret \citep{hazan2019introduction}. Following this insight, %
\algname randomly samples a policy $i_t \sim \PoliciesDistB_t$, and---based on the current context $\instance_t$---samples a classifier $j_t \sim \policy_{i_t}\paren{\instance_t}$ to recommend at round $t$.

Under the stochastic setting, \algname adopts a 
weighted majority strategy \citep{littlestone1994weighted} when selecting models. The vector of each model's weighted votes from the policies, $\ModelsDistB_t = \sum_{i \in |\Policies^*|} \PoliciesDist_{t,i} \policy_i(\instance_t)$, is interpreted as a distribution induced by the weighted policy. 
 The model $j_t = \maxind{\ModelsDistB_t}$ which receives the highest probability becomes the recommended model at round $t$. This deterministic model selection strategy is commonly used in stochastic online optimization \citep{hazan2019introduction}. 
 An alternative strategy is to take a randomized approach as in the adversarial setting, or take a Follow-the-Leader approach \citep{lattimore2020bandit} and go with the most probable model recommended by the most probable policy (i.e. use $\ModelsDistB_t = \policy_{\maxind{\PoliciesDistB_t}}(\instance_t)$).As shown in experimental results section
 and further discussed in Appendix (outperformance over the best policy/expert section),
 \algname outperforms these policies in a wide range of practical applications. The model selection steps are detailed in Line~\ref{alg:cams:line:context}-\ref{alg:cams:line:recommend} in \figref{alg:CAMS}.

\noindent \textbf{Active queries.} Under a limited budget, %
we intend to query the labels of those instances that exhibit significant disagreement among the pre-trained models $\Models$. To achieve this goal, we design an adaptive query strategy with query probability $\queryProb_t$. Concretely, given context $\instance_t$, model predictions $\pdB_t$ and model distribution $\ModelsDistB_t$, we denote by $\bar\loss_t^{\clabel} := \langle \ModelsDistB_t, \mathbb{I}\curlybracket{\pdB_t \neq \clabel} \rangle$ as the expected loss if the true label is $\clabel$. We characterize the model disagreement as
\begin{equation}\label{eq:instance-info}
\entropyF\paren{\pdB_t,\ModelsDistB_t} :=\frac{1}{\numClabel} \sum_{\clabel \in \domClabel, \bar\loss_t^{\clabel} \in (0,1)} \bar\loss_t^{\clabel} \log_{\numClabel}{\frac{1}{\bar\loss_t^{\clabel}}}.
\end{equation}
Intuitively, when $\bar\loss_t^{\clabel}$ is close to $0$ or $1$, there is little disagreement among the models in labeling $\instance_t$ as $\clabel$, otherwise %
there is significant disagreement. %
We capture this insight with function $h(x)=-x\log x$. %
Since the label $\clabel_t$ is unknown upfront when receiving $\instance_t$, we iterate through all the possible labels $\clabel \in \domClabel$ and take the average value as in \eqnref{eq:instance-info}. Note that $\entropyF$ takes a similar algebraic form to the entropy function, although it does not inherit the information-theoretic interpretation.

With the model disagreement term defined above, we consider an adaptive query probability\footnote{For convenience of discussion, we assume that those rounds where all policies in $\Policies^*$ select the same models or all models $\Models$ make the same predictions are removed as a precondition.}
\begin{equation}\label{eq:query-probability}
    \queryProb_t= \max{\curlybracket{\delta_0^t,\entropyF\paren{\pdB_t,\ModelsDistB_t}}},
\end{equation}
where $\delta_0^t=\qlb \in (0,1]$ is an adaptive lower bound %
on the query probability to encourage exploration at an early stage. The query strategy is summarized in Line~\ref{alg:cams:line:queryprob}-\ref{alg:cams:line:observe} in \figref{alg:CAMS}.

\paragraph{Model updates.} Now define $\QueryIndicator_t \sim \text{Ber}\paren{\queryProb_t}$ as a binary query indicator that is sampled from a Bernoulli distribution parametrized by $\queryProb_t$. 
Upon querying the label $\clabel_t$, one can calculate the loss for each model $\model_j \in{\Models}$ as $\loss_{t,j}=\mathbb{I}\curlybracket{\pd_{t,j}\neq \clabel_t}$. Since \algname does not query all the i.i.d. examples, we introduce an unbiased loss estimator for the models, defined as 
$\hat{\loss}_{t,j}=\frac{\loss_{t,j}}{\queryProb_t}\QueryIndicator_t$. %
The unbiased loss of policy $\policy_i \in \Policies^*$ can then be computed as $\tilde{\loss}_{t,i} = \langle \policy_i(\instance_t),  \hat{\loss}_{t,j} \rangle$. 
In the end, \algname computes the (unbiased) cumulative loss of policy $\policy_i$ as $\tilde{\Loss}_{T,i}=\sum_{t=1}^T\tilde{\loss}_{t,i}$, which is used to update the policy probability distribution in next round. Pseudocode for the model update steps is summarized in Line~\ref{alg:cams:line:update}-\ref{alg:cams:line:endofprocedure} in  \figref{alg:CAMS}.%

\looseness -1
\noindent \textit{Remark. }
{\algname runs efficiently with time complexity ${{O}\paren{nk}}%
$ per round %
 and space complexity ${{O}\paren{\paren{\PoliciesNum+\ModelsNum}\cdot \ModelsNum}}$. %
 At each round, each model selection policy specifies a probability distribution over the models for the given context. When these distributions correspond to constant Dirac delta distributions (regardless of the context), the problem reduces to the context-free problem investigated by \citet{karimi2021online}. }

\section{Theoretical Analysis}\label{sec:analysis}
We now present theoretical bounds on the regret (defined in \eqnref{eq:expected_stochastic_regret} and \eqnref{eq:expected_adversarial_regret}, respectively) and the query complexity of CAMS for both the stochastic and the adversarial settings.  

\subsection{Stochastic setting} \label{sec:analysis:stochastic}
Under the stochastic setting, 
the cumulative loss of \algname over T rounds---as specified by the \algRecommend procedure---is ${\Loss}^{\algname}_T=\sum_{t=1}^T
\hat{\loss}_{t,\maxind{\ModelsDistB_t}} %
$ where recall $\ModelsDistB_t = \sum_{i \in |\Policies^*|} \PoliciesDist_{t,i} \policy_i(\instance_t)$ is the probability distribution over $\Models$ induced by the weighted policy. 

Let $i^* = \arg\min_{i\in[|\Policies^*|]}\mu_i$ be the index of the best policy ($\mu_i$ denotes the expected loss of policy $i$, as defined in problem statement section.
The cumulative expected loss of policy $i^*$ is $T \mu_{i^*}$; therefore the expected pseudo-regret (\eqnref{eq:expected_stochastic_regret}) is $\overline{\mathcal{R}}_T\paren{\algname}=\expct{\sum_{t=1}^T \hat{\loss}_{t,\maxind{\ModelsDistB_t}}}-T \mu_{i^*}.$

Define ${\Delta} :=\min_{i\neq i^*} (\mu_i - \mu_{i^*})$ as the minimal sub-optimality gap\footnote{w.l.o.g. assume there is a single best policy, and thus ${\Delta}>0$.} in terms of the expected loss against the best policy $i^*$. 
Furthermore, let $\ModelsDistB_{i^*}^t := \policy_{i^*}\paren{\instance_t}$ be probability distribution over $\Models$ induced by policy $i^*$ at round $t$. 
We define %
\fix{
$\inlineequation[eq:gamma_sto]{\gamma := \min_{\instance_t}\{\max_{\ModelsDist_j \in \ModelsDistB_{i^*}^t} \ModelsDist_j  -\max_{\ModelsDist_j \in \ModelsDistB_{i^*}^t,j\neq\maxind{\ModelsDistB_{i^*}^t}} \ModelsDist_j \}}$}
as the minimal probability gap between the most probable model and the rest (assuming no ties) induced by the best policy $i^*$. We further define \fix{$b=p_{\min}\log_{c}{(1/p_{\min})}$, where $p_{\min} = \min_{s,i} \pi(\mathbf{x}_s)$ denotes the minimal model selection probability by any policy\footnote{We assume $p_{\min} > 0$ per the policy regularization criterion in Appendix C.3. (cf. Algorithm 1 on ``Regularized policy $\bar{\policy}(\mathbf{x}_t)$)''.}.} As our first main theoretical result, 
we show that, without exhaustively querying the labels of the stochastic stream, \algname achieves constant expected regret. %

\begin{theorem}{(Regret)}\label{thm:stochasticReg}
In the stochastic environment, %
{with probability at least $1-\delta$}, \algname achieves constant expected pseudo regret 
$\overline{\mathcal{R}}_T\paren{\algname} {=} \paren{\frac{\ln{\frac{|\Policies^*|\fix{-1}}{\gamma}}+{\sqrt{\ln{|\Policies^*|}\cdot{2\fix{b^2}\ln{\frac{2}{\delta}}}}}}{\sqrt{\ln{|\Policies^*|}}\Delta}}^2$.
\end{theorem}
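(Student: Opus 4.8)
\emph{Overall strategy.} The plan is to show that the pseudo-regret is essentially the (expected) number of rounds before the exponential weights place enough mass on the best policy $\policy_{i^*}$ that the weighted-majority recommendation $\maxind{\ModelsDistB_t}$ coincides with $\maxind{\policy_{i^*}(\instance_t)}$. First I would reduce to counting ``mistake rounds.'' Since $\maxind{\ModelsDistB_t}$ is measurable w.r.t.\ the history $\mathcal{H}_{t-1}$ and $\instance_t$ (it does not depend on the query coin $\QueryIndicator_t$) and the estimator is conditionally unbiased, $\expct{\hat\loss_{t,\maxind{\ModelsDistB_t}}}=\expct{\loss_{t,\maxind{\ModelsDistB_t}}}$; combined with $T\mu_{i^*}=\sum_t\expct{\loss_{t,\maxind{\policy_{i^*}(\instance_t)}}}$ this gives $\overline{\mathcal{R}}_T(\algname)=\sum_{t=1}^T\expct{\loss_{t,\maxind{\ModelsDistB_t}}-\loss_{t,\maxind{\policy_{i^*}(\instance_t)}}}$. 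Each summand lies in $[-1,1]$ and vanishes whenever $\maxind{\ModelsDistB_t}=\maxind{\policy_{i^*}(\instance_t)}$, so $\overline{\mathcal{R}}_T(\algname)\le\sum_{t=1}^T\Pr{\maxind{\ModelsDistB_t}\neq\maxind{\policy_{i^*}(\instance_t)}}$.

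\emph{Gap lemma.} Next I would show that a single mistake forces the weight on $\policy_{i^*}$ to be small. Writing $\ModelsDistB_t=\PoliciesDist_{t,i^*}\policy_{i^*}(\instance_t)+\sum_{i\neq i^*}\PoliciesDist_{t,i}\policy_i(\instance_t)$ and letting $j^\star=\maxind{\policy_{i^*}(\instance_t)}$, for any $j\neq j^\star$ the definition of $\gamma$ gives $(\policy_{i^*}(\instance_t))_{j^\star}-(\policy_{i^*}(\instance_t))_{j}\ge\gamma$, while the remaining policies contribute at least $-(1-\PoliciesDist_{t,i^*})$; hence $(\ModelsDistB_t)_{j^\star}-(\ModelsDistB_t)_{j}\ge\PoliciesDist_{t,i^*}(1+\gamma)-1$. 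Consequently $\PoliciesDist_{t,i^*}>\tfrac{1}{1+\gamma}$ already implies $\maxind{\ModelsDistB_t}=j^\star$, i.e.\ no mistake, so it suffices to bound $\sum_t\Pr{\PoliciesDist_{t,i^*}\le\tfrac1{1+\gamma}}$.

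\emph{From weights to a cumulative-loss margin.} I would then translate the weight condition into a statement about the estimated cumulative-loss gap $D_{t,i}:=\tilde{\Loss}_{t,i}-\tilde{\Loss}_{t,i^*}$. Since $\PoliciesDist_{t,i^*}^{-1}=1+\sum_{i\neq i^*}\exp(-\eta_t D_{t-1,i})$, the bad event $\{\PoliciesDist_{t,i^*}\le\tfrac1{1+\gamma}\}$ is equivalent to $\sum_{i\neq i^*}\exp(-\eta_t D_{t-1,i})\ge\gamma$, which fails as soon as $D_{t-1,i}>\tfrac1{\eta_t}\ln\tfrac{|\Policies^*|}{\gamma}$ for every $i\neq i^*$. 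With $\eta_t=\sqrt{\ln|\Policies^*|/t}$ this threshold equals $\sqrt{t/\ln|\Policies^*|}\,\ln\tfrac{|\Policies^*|}{\gamma}=\Theta(\sqrt t)$. The margin $D_{t-1,i}$ is a sum of increments whose conditional mean is the per-round expected loss gap (lower bounded by $\Delta$), hence it has positive linear drift $\approx(t-1)\Delta$. Inserting a concentration bound of the form $D_{t-1,i}\ge(t-1)\Delta-\sqrt{2(t-1)\ln\tfrac2\delta}$ (holding uniformly over $i\neq i^*$ and over rounds with failure probability $\delta$) and requiring the drift to overtake the $\Theta(\sqrt t)$ threshold yields $\sqrt t\,\Delta\ge\big(\ln\tfrac{|\Policies^*|}{\gamma}+\sqrt{\ln|\Policies^*|\cdot2\ln\tfrac2\delta}\big)/\sqrt{\ln|\Policies^*|}$; solving for $t$ gives exactly $t_0=\big(\tfrac{\ln(|\Policies^*|/\gamma)+\sqrt{\ln|\Policies^*|\cdot2\ln(2/\delta)}}{\sqrt{\ln|\Policies^*|}\,\Delta}\big)^2$. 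On the good event no mistake occurs for $t>t_0$, so the regret is at most $t_0$, which is the claimed constant.

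\emph{Main obstacle.} The delicate step is the concentration of $D_{t,i}$. Because the loss is estimated by importance weighting, each increment carries a factor $1/\queryProb_t$ that can be as large as $\sqrt t$ (since $\queryProb_t\ge\delta_0^t=1/\sqrt t$), so a naive range-based Azuma bound gives an $O(t)$ deviation, too weak to beat the linear drift. I would instead control $D_{t,i}$ through its conditional variance (Bernstein/Freedman), exploiting that the per-round conditional second moment of $\tilde\loss_{t,i}-\tilde\loss_{t,i^*}$ is of order $1/\queryProb_t\le\sqrt t$, to obtain an $o(t)$ deviation and recover the $\sqrt{2t\ln(2/\delta)}$-scale term in the bound; making this hold simultaneously for all $t>t_0$ (a maximal-inequality/union argument over rounds, using that the threshold grows only like $\Theta(\sqrt t)$ while the drift is linear) is the crux. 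A secondary subtlety is that the exponential weights track the ``soft'' loss $\expct{\langle\policy_i(\instance_t),\lossB_t\rangle}$ whereas $\Delta$ is defined through the ``hard'' $\maxind{}$ loss, so I would need to verify that $i^*$ remains the drift-minimizer and that the effective drift is indeed at least $\Delta$.
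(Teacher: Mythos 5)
Your proposal follows essentially the same route as the paper's proof: your mistake-counting reduction, gap lemma, and drift-versus-threshold calculation correspond one-to-one to the paper's argument in \appref{app:stochastic_results_regret} (\lemref{lem:optpolicyprob} and \lemref{lem:optpolicyprobdominate}), which likewise shows that $\PoliciesDist_{t,i^*}\geq 1/(1+\gamma)$ forces the weighted-majority recommendation to coincide with that of $\policy_{i^*}$, and then finds the constant time $t_0$ at which the linear drift $t\Delta$ of the estimated cumulative-loss gap overtakes the $\Theta(\sqrt{t})$ threshold $\frac{1}{\eta_t}\ln\frac{|\Policies^*|}{\gamma}$, yielding exactly the stated bound. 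The one place you diverge is the concentration step, and the difference is informative: the paper simply applies Hoeffding's inequality to the averaged increments (producing the $\sqrt{2t\ln(2/\delta)}$ term), implicitly treating the importance-weighted differences $\tilde{\loss}_{t,i}-\tilde{\loss}_{t,i^*}$ as $O(1)$-bounded even though their range can be as large as $1/\queryProb_t\leq\sqrt{t}$ --- precisely the issue you single out as the crux; the paper similarly glosses over your second subtlety, identifying the ``soft'' drift $\expct{\langle\policy_i(\instance_t)-\policy_{i^*}(\instance_t),\lossB_t\rangle}$ with the ``hard'' maxind gap $\Delta$ without comment. Be aware, however, that your proposed Freedman/Bernstein fix would not recover the theorem's exact constant: with per-round conditional variance of order $1/\queryProb_t\leq\sqrt{t}$, the cumulative variance is $O(t^{3/2})$, so Freedman gives deviations of order $t^{3/4}$ --- enough to beat the linear drift (so a constant, $T$-independent regret survives), but leading to a larger $t_0$ with worse dependence on $\Delta$ and $\delta$ than the displayed expression. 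In short: same architecture as the paper; your version is more honest about the importance-weighting issue, while the paper's exact stated constant rests on the looser Hoeffding step.
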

{Note that in the stochastic setting, a lower bound of $\Omega\paren{{\left(\log{\Policies^*}\right)/\Delta}}$ was shown in \citet{mourtada2019optimality} for online learning problems with expert advice under the full information setting (i.e. assuming labels are given for all data points in the stochastic stream).} To establish the proof of \thmref{thm:stochasticReg}, we consider a novel procedure to connect the weighted policy by \algname to the best policy $\policy_{i^*}$. Conceptually, we would like to show that, after a \textit{constant} %
number of rounds $\tconst$, with high probability, the model selected by \algname (Line~\ref{alg:cams:line:stochastic:selectmodel}) will be the same as the one selected by the best policy $i^*$. In that way, the expected pseudo regret will be dominated by the maximal cumulative loss up to $\tconst$. Toward this goal, we first bound the weight of the best policy $\ModelsDist_{t,i^*}$ as a function of $t$, by choosing a proper learning rate $\eta_t$ (\algname, Line~\ref{alg:cames:line:setrate:stochastic}). Then, we identify a constant threshold $\tconst$, beyond which \algname exhibits the same behavior as $\policy_{i^*}$ with high probability. Finally, we obtain the regret bound by inspecting the regret at the two stages separately. 
The formal statement of \thmref{thm:stochasticReg} and the detailed proof are deferred to \appref{app:stochastic_results_regret}.

Next, we provide an upper bound on the query complexity in the stochastic setting. 
\begin{theorem}{(Query Complexity)}%
\label{thm:stochastic-query-complexity}. %
For $\numClabel$-class classification problems, with probability at least $1-\delta$, the expected number of queries made by \algname over $T$ rounds is upper bounded by %
$%
{\paren{{\paren{\frac{\ln{\frac{|\Policies^*|\fix{-1}}{\gamma}}+\sqrt{\ln{|\Policies^*|}\cdot{2\fix{b^2}\ln{\frac{2}{\delta}}}}}{\sqrt{\ln{|\Policies^*|}}\Delta}}^2}+T \mu_{i^*}}\frac{\ln{T}}{\numClabel\ln{\numClabel}}}%
$. %

\end{theorem}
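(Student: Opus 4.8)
The plan is to bound the expected number of queries by the expected cumulative loss of \algname up to the factor $\frac{\ln T}{\numClabel\ln\numClabel}$, and then invoke \thmref{thm:stochasticReg} to replace that loss by $\tconst+T\mu_{i^*}$ (where $\tconst$ denotes the constant regret/threshold quantity appearing in \thmref{thm:stochasticReg}). First I would observe that, since $\QueryIndicator_t\sim\textrm{Ber}(\queryProb_t)$ and the budget indicator $\mathbb{I}\curlybracket{\queryCost_t\le\budget}$ can only suppress queries, the expected number of queries is at most $\expct{\sum_{t=1}^T\queryProb_t}=\expct{\sum_{t=1}^T\max\curlybracket{\delta_0^t,\entropyF(\pdB_t,\ModelsDistB_t)}}$. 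It therefore suffices to control the per-round disagreement $\entropyF(\pdB_t,\ModelsDistB_t)$ and the floor $\delta_0^t=\qlb$ separately.

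The crux is a pointwise inequality relating the disagreement term to the expected $0$-$1$ loss at round $t$. Writing $p_{\clabel}:=1-\bar\loss_t^{\clabel}=\Prover{\modelIndex\sim\ModelsDistB_t}{\pd_{t,\modelIndex}=\clabel}$ for the label distribution induced by $\ModelsDistB_t$, we have $\sum_{\clabel\in\domClabel}p_{\clabel}=1$ and $\entropyF(\pdB_t,\ModelsDistB_t)=\frac1\numClabel\sum_{\clabel}h(1-p_{\clabel})$ with $h(x)=-x\log_{\numClabel}x$. I would split this sum into the true-label term $\clabel=\clabel_t$ and the remaining $\numClabel-1$ terms. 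For the true-label term, $h(\bar\loss_t^{\clabel_t})=-\bar\loss_t^{\clabel_t}\log_{\numClabel}\bar\loss_t^{\clabel_t}\le\bar\loss_t^{\clabel_t}\log_{\numClabel}T$ whenever $\bar\loss_t^{\clabel_t}\ge 1/T$ (and when $\bar\loss_t^{\clabel_t}<1/T$ the term is below $\tfrac1T\log_{\numClabel}T$, contributing only a lower-order additive $O(\log_{\numClabel}T)$ after summing over $T$ rounds). For each remaining term, the inequality $-\ln(1-p_{\clabel})\le p_{\clabel}/(1-p_{\clabel})$ gives $h(1-p_{\clabel})\le p_{\clabel}/\ln\numClabel$, and since $\sum_{\clabel\ne\clabel_t}p_{\clabel}=1-p_{\clabel_t}=\bar\loss_t^{\clabel_t}$ these sum to at most $\bar\loss_t^{\clabel_t}/\ln\numClabel$. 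Combining, $\entropyF(\pdB_t,\ModelsDistB_t)\le\bar\loss_t^{\clabel_t}\cdot\frac{\ln T}{\numClabel\ln\numClabel}$ up to the stated lower-order term, i.e. the disagreement is governed by the weighted expected loss times exactly the target log-factor.

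Summing over $t$ and taking expectations yields $\expct{\sum_t\entropyF(\pdB_t,\ModelsDistB_t)}\le\frac{\ln T}{\numClabel\ln\numClabel}\,\expct{\sum_t\bar\loss_t^{\clabel_t}}$. To bound the cumulative expected (weighted) loss, I would reuse the two-phase structure from the proof of \thmref{thm:stochasticReg}: in the first $\tconst$ rounds the per-round loss is at most $1$, contributing $\tconst$; after $\tconst$ rounds, with probability at least $1-\delta$ the weighted policy $\PoliciesDistB_t$ concentrates on the best policy $\policy_{i^*}$, so $\ModelsDistB_t$ tracks $\policy_{i^*}(\instance_t)$ and the accumulated loss is dominated by that of $\policy_{i^*}$, i.e. $\approx T\mu_{i^*}$. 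This gives $\expct{\sum_t\bar\loss_t^{\clabel_t}}\le\tconst+T\mu_{i^*}$. Finally, the floor contributes $\sum_t\delta_0^t=\sum_t\qlb=O(\sqrt T)$, which is dominated by the loss-driven term $\frac{\ln T}{\numClabel\ln\numClabel}T\mu_{i^*}$ in the regime $\mu_{i^*}=\Omega(1)$ and can be absorbed; assembling the pieces gives the claimed bound $\paren{\tconst+T\mu_{i^*}}\frac{\ln T}{\numClabel\ln\numClabel}$.

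The hard part will be the pointwise disagreement-to-loss inequality together with bridging the \emph{weighted} (soft) loss $\bar\loss_t^{\clabel_t}$ that drives the query probability with the \emph{argmax}-based quantities $\mu_{i^*}$ and the regret controlled by \thmref{thm:stochasticReg}: $\ModelsDistB_t$ need not be a point mass even after $\PoliciesDistB_t$ concentrates on $\policy_{i^*}$, so relating $\sum_t\bar\loss_t^{\clabel_t}$ to $T\mu_{i^*}$ requires using the probability gap $\gamma$ and the high-probability model-agreement event from \thmref{thm:stochasticReg}. A secondary technical point is the clean treatment of the floor $\delta_0^t$ and the $1/T$ clipping threshold, so that the low-disagreement rounds incur only a lower-order additive cost.
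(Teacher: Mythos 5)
Your proposal is correct and takes essentially the same route as the paper: it bounds the expected number of queries by $\expct{\sum_{t=1}^T \queryProb_t}$, handles the $\delta_0^t$ floor as $O(\sqrt{T})$ (\lemref{lem:eta_query_ub}), converts the model-disagreement term into the cumulative weighted loss times (roughly) $\frac{\ln T}{\numClabel\ln\numClabel}$ (\lemref{lem:entrophy_query_ub}; the paper aggregates with Jensen across rounds where you clip at $1/T$, a cosmetic difference), and then invokes \thmref{thm:stochasticReg} to bound that loss by the constant regret term plus $T\mu_{i^*}$. The weighted-loss-versus-argmax-loss bridging that you flag as the hard part is a genuine subtlety, but the paper's own proof passes over it at exactly the same spot, directly asserting $\expct{R_T}\le \mathrm{const}+T\mu_{i^*}$ from \thmref{thm:stochasticReg}.
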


\thmref{thm:stochastic-query-complexity} is built upon \thmref{thm:stochasticReg}, where the 
the key idea behind the proof is to relate the number of updates to the regret. { When $T\mu_{i^*},\tilde{\Loss}_{T,*}$ are regarded as constants (given by an oracle), %
the query-complexity bound is then sub-linear \textit{w.r.t.} $T$. Note that the number of class labels $\numClabel$ affects the quality of the query complexity bound. The intuition behind this result is, with larger number of classes, \emph{each query may carry more information upon observation}. For instance, in an extreme case where only one expert always recommends the best model and others gives random recommendations of models (and predicts random labels), having more classes lowers the chance of a model making the correct guess, and therefore helps to "filter out" those suboptimal experts in fewer rounds---hence being more query efficient.
We defer the proof of \thmref{thm:stochastic-query-complexity} to \appref{app:stochastic_results_query_complexity}.

\subsection{Adversarial setting}\label{sec:analysis:adversarial}
 Now we consider the adversarial setting.  %
Let $\tilde{\Loss}_{T,*} := \min_{i \in [|\Policies^*|]} \sum_{t=1}^{T} \tilde{\loss}_{t,i}$ be the cumulative loss of the best policy. The expected regret (\eqnref{eq:expected_adversarial_regret}) for \algname equals to $\mathcal{R}_T\paren{\algname} = \expectation{\big[
   \sum_{t=1}^T \langle \PoliciesDistB_t,\tilde{\lossB}_t \rangle  \big]} - \tilde{\Loss}_{T,*}$.
We show that under the adversarial setting, \algname achieves sub-linear regret in $T$ without accessing all labels.
\begin{theorem}{(Regret)}\label{thm:adversarial-regret-bound}
Let $c$ be the number of classes and $\minpgap{t}$ %
be specified as Line~\ref{alg:cames:line:setrate:mingap}-\ref{alg:cames:line:setrate:rate} in the \textsc{SetRate} procedure.
Under the adversarial setting, the expected regret of \algname is bounded by 
$2\numClabel\sqrt{\ln \numClabel / {\max}\{\minpgap{T}, {\sqrt{1/T}}\}} \cdot \sqrt{{T\log{{|\Policies^*|}}}}$.
\end{theorem}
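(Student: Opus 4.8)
The plan is to view \algname (adversarial branch) as exponentially weighted forecasting (Hedge) run on the \emph{estimated} policy losses $\tilde{\loss}_{t,i}^{\mathrm{est}} := \langle \policy_i(\instance_t), \hat{\lossB}_t\rangle$, and then transfer the guarantee to the true regret $\mathcal{R}_T(\algname)$ using unbiasedness of the importance-weighted estimator $\hat{\loss}_{t,j} = \loss_{t,j}\QueryIndicator_t/\queryProb_t$. Relative to textbook Hedge the two new ingredients are the time-varying rate $\eta_t$ and the inflated second moment of the loss estimates caused by active querying; both are absorbed into a variance term that the learning rate is designed to control. Throughout I write $m := |\Policies^*|$, let $\tilde{\loss}_{t,i}^{\mathrm{true}} := \langle \policy_i(\instance_t), \lossB_t\rangle$ denote the true expected policy loss, and condition on the filtration $\mathcal{F}_{t-1}$, which fixes $\PoliciesDistB_t$, $\eta_t$ and $\queryProb_t$ (in the oblivious-adversary model the only randomness at round $t$ is $\QueryIndicator_t$ and the sampling of $i_t,j_t$).

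First I would derive the per-round potential inequality. With $W_t = \sum_i \exp(-\eta_t \tilde{\Loss}_{t-1,i}^{\mathrm{est}})$ and $e^{-x}\le 1-x+\tfrac{x^2}{2}$ for $x\ge 0$ (valid since $\tilde{\loss}^{\mathrm{est}}\ge 0$), this gives $\langle \PoliciesDistB_t, \tilde{\lossB}_t^{\mathrm{est}}\rangle \le \tfrac{1}{\eta_t}\ln\tfrac{W_t}{W_{t+1}'} + \tfrac{\eta_t}{2}\langle \PoliciesDistB_t, (\tilde{\lossB}_t^{\mathrm{est}})^2\rangle$, where $W_{t+1}' = \sum_i e^{-\eta_t \tilde{\Loss}_{t,i}^{\mathrm{est}}}$ uses the \emph{current} rate. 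Because $\eta_t$ changes across rounds the logarithmic terms do not telescope directly; I would close this gap using monotonicity of the log-partition functional $\eta \mapsto -\tfrac1\eta \ln(\tfrac1m \sum_i e^{-\eta \tilde{\Loss}_i})$ in $\eta$ together with $\eta_t$ being non-increasing (which holds since both $1/\sqrt{t}$ and $\minpgap{t}$ are non-increasing in $t$). This yields the deterministic bound $\sum_t \langle \PoliciesDistB_t, \tilde{\lossB}_t^{\mathrm{est}}\rangle - \min_i \tilde{\Loss}_{T,i}^{\mathrm{est}} \le \tfrac{\ln m}{\eta_T} + \tfrac12\sum_t \eta_t \langle \PoliciesDistB_t, (\tilde{\lossB}_t^{\mathrm{est}})^2\rangle$.

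Next I would take expectations and pass to the true regret. Unbiasedness gives $\expct{\langle \PoliciesDistB_t, \tilde{\lossB}_t^{\mathrm{est}}\rangle \mid \mathcal{F}_{t-1}} = \langle \PoliciesDistB_t, \tilde{\lossB}_t^{\mathrm{true}}\rangle$, which is exactly the expected per-round loss of \algname (since $i_t\sim\PoliciesDistB_t$, $j_t\sim\policy_{i_t}(\instance_t)$), so the learner term reproduces $\expct{[\Loss_T^{\algname}]}$. For the comparator, Jensen applied to the concave $\min$ gives $\expct{\min_i \tilde{\Loss}_{T,i}^{\mathrm{est}}} \le \min_i \tilde{\Loss}_{T,i}^{\mathrm{true}} = \tilde{\Loss}_{T,*}$, i.e. the favorable direction. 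The crucial variance identity is $\expct{(\tilde{\loss}_{t,i}^{\mathrm{est}})^2 \mid \mathcal{F}_{t-1}} = (\tilde{\loss}_{t,i}^{\mathrm{true}})^2/\queryProb_t$ (because $\tilde{\loss}_{t,i}^{\mathrm{est}} = \QueryIndicator_t \tilde{\loss}_{t,i}^{\mathrm{true}}/\queryProb_t$ and $\QueryIndicator_t^2=\QueryIndicator_t$), so that $\expct{\langle \PoliciesDistB_t, (\tilde{\lossB}_t^{\mathrm{est}})^2\rangle \mid \mathcal{F}_{t-1}} \le \bar{\loss}_t^{\clabel_t}/\queryProb_t \le 1/\queryProb_t$, using $\tilde{\loss}^{\mathrm{true}}\in[0,1]$ and $\sum_i \PoliciesDist_{t,i}\tilde{\loss}_{t,i}^{\mathrm{true}} = \langle \ModelsDistB_t, \lossB_t\rangle = \bar{\loss}_t^{\clabel_t}$.

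Finally I would invoke the query criterion and tune the rate. The lower bound $\entropyF(\pdB_t,\ModelsDistB_t) \ge \tfrac1\numClabel \bar{\loss}_t^{\clabel_t}\log_\numClabel(1/\bar{\loss}_t^{\clabel_t})$ on \eqnref{eq:instance-info}, combined with the floor $\delta_0^t = 1/\sqrt{t}$ and the choice of $\minpgap{t}$ in \textsc{SetRate}, ensures $\queryProb_t \ge \max\{1/\sqrt{t}, \minpgap{t}\}$, so the per-round variance is at most the reciprocal of the effective query floor $M_t := \max\{1/\sqrt{t}, \minpgap{t}\}$. Substituting $\eta_t = \sqrt{1/\sqrt{t} + \minpgap{t}/(\numClabel^2\ln \numClabel)}\cdot\sqrt{\ln m/T}$ makes $\tfrac{\ln m}{\eta_T} \le \numClabel\sqrt{\ln \numClabel\cdot T\ln m/b}$ with $b = \max\{\minpgap{T}, \sqrt{1/T}\}$, and a parallel estimate of $\tfrac12\sum_t \eta_t/\queryProb_t$ (via $\sqrt{a+a'}\le\sqrt{a}+\sqrt{a'}$, the bound $\eta_t \le \sqrt{2M_t}\sqrt{\ln m/T}$, and summing $\sum_t 1/\sqrt{M_t}$ against the non-increasing floor) shows the variance term is dominated by the same quantity in both regimes $\minpgap{T}\gtrless \sqrt{1/T}$; adding the two yields $2\numClabel\sqrt{\ln \numClabel/b}\cdot\sqrt{T\ln m}$. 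The main obstacle is precisely this coupling: the telescoping needs $\eta_t$ non-increasing and a careful log-partition rate-change argument, while the estimator variance $1/\queryProb_t$ grows as the query floor shrinks, so $\eta_t$ must be tuned through $\minpgap{t}$ to decay in lockstep with $\queryProb_t$; the delicate bookkeeping is verifying that the variance sum never overtakes the leading $\ln m/\eta_T$ term, collapsing the constants to the clean factor $2\numClabel\sqrt{\ln \numClabel}$.
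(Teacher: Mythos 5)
Your proof skeleton coincides with the paper's: run adaptive-rate Hedge on the importance-weighted policy losses, pass to the true regret via unbiasedness, and control the resulting variance term through the query probability with the \textsc{SetRate} tuning. (Your log-partition monotonicity treatment of the changing rate, and the explicit Jensen step for the comparator, are in fact handled more carefully than in the paper's own write-up, which telescopes loosely using $\eta_T\le\eta_t$.) However, there is a genuine gap at the variance-control step, and it is precisely the step where the entropy query rule enters the analysis. You bound the conditional second moment by $\expct{\langle\PoliciesDistB_t,(\tilde{\lossB}_t^{\mathrm{est}})^2\rangle\mid\mathcal{F}_{t-1}}\le \bar\loss_t^{\clabel_t}/\queryProb_t\le 1/\queryProb_t$ and then assert the floor $\queryProb_t\ge\max\curlybracket{1/\sqrt{t},\minpgap{t}}$. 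That inequality is false in general: $\queryProb_t=\max\curlybracket{1/\sqrt{t},\entropyF\paren{\pdB_t,\ModelsDistB_t}}$, and the entropy term need not dominate $\minpgap{t}$, since $\minpgap{t}$ measures vote concentration while $\entropyF$ carries an intrinsic $1/(\numClabel\ln\numClabel)$ scaling. Concretely, let $\numClabel$ be large and suppose the model mass splits evenly between two labels at every round: then $\entropyF\paren{\pdB_t,\ModelsDistB_t}=\log_{\numClabel}2/\numClabel=\Theta\paren{1/(\numClabel\ln\numClabel)}$ while $\minpgap{t}=1/2$, so once $1/\sqrt{t}$ is small you have $\queryProb_t\ll\minpgap{t}$; moreover, with deterministic policies (so the $\tilde\loss^{\mathrm{true}}_{t,i}$ are $0/1$-valued) the per-round variance genuinely is of order $\bar\loss_t^{\clabel_t}/\queryProb_t=\Theta(\numClabel\ln\numClabel)$, not $1/\minpgap{t}=2$ as your bound claims. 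A telltale sign is that your variance bound, if true, would yield regret of order $\sqrt{T\log|\Policies^*|/\minpgap{T}}$ with no $\numClabel\sqrt{\ln\numClabel}$ factor, i.e., strictly stronger than the theorem; that factor is exactly the price of the query rule's $1/(\numClabel\ln\numClabel)$ scaling.

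The missing idea is that one must \emph{not} discard the numerator $\bar\loss_t^{\clabel_t}=1-p_{t,\clabel_t}$ before dividing by $\queryProb_t$. The paper first proves the structural lower bound $\queryProb_t\ge\frac{1}{\numClabel\ln\numClabel}\paren{p_{t,\clabel_t}(1-p_{t,\clabel_t})+p_{t,\clabel}(1-p_{t,\clabel})}$ for every $\clabel\ne\clabel_t$ (via $\ln\frac{1}{1-p}\ge p$), so that in the ratio $\frac{1-p_{t,\clabel_t}}{\queryProb_t}$ the factor $1-p_{t,\clabel_t}$ cancels, leaving $\numClabel\ln\numClabel\big/\bigparen{p_{t,\clabel_t}+\frac{(1-p_{t,\clabel})p_{t,\clabel}}{1-p_{t,\clabel_t}}}$; a case analysis on $p_{t,\clabel_t}\ge 1/\numClabel$ versus $p_{t,\clabel_t}<1/\numClabel$ (choosing $\clabel$ to be the most probable predicted label in the second case) then lower-bounds the bracketed quantity by $\delta_1^t/\numClabel$, where $\delta_1^t$ is the quantity whose running minimum defines $\minpgap{t}$. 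This is how $\minpgap{t}$ --- a statement about the models' votes, not about $\queryProb_t$ itself --- legitimately enters the variance bound, together with the accompanying $\numClabel^2\ln\numClabel$ factor that \textsc{SetRate}'s $\eta_t$ is designed to absorb. If you replace your asserted floor with this lemma and case analysis, the rest of your argument (telescoping, expectations, and the final tuning yielding $2\numClabel\sqrt{\ln\numClabel/\max\curlybracket{\minpgap{T},\sqrt{1/T}}}\cdot\sqrt{T\log|\Policies^*|}$) goes through.
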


The proof is provided in {\appref{app:adversarial_regret_bound}}.
{Assuming $\minpgap{t}$ to be a constant, our regret upper bound in \thmref{thm:adversarial-regret-bound} matches (up to constants) the lower bound of $\Omega\paren{\sqrt{T\ln{|\Policies^*|}}}$ for online learning problems with expert advice under the full information setting \citep{cesa1997use,seldin2016lower} (i.e. assuming labels are given for all data points).}
Hereby, the decaying learning rate $\eta_t$ as specified in Line~\ref{alg:cames:line:setrate:rate} is based on two parameters, where $1/\sqrt{t}$ corresponds to the lower bound $\delta_0^t$ on the query probability, 
and %
$\inlineequation[eq:rho_sto]{\minpgap{t} \triangleq 1 - \max_{\tau\in [t{-1}]} \langle \ModelsDistB_{{\tau}}, \mathbb{I}\curlybracket{\pdB_{{\tau}} = {\clabel_{\tau}}} \rangle}$ 
is a (data-dependent) term that is chosen to reduce the impact of the randomized query strategy on the regret bound (especially when $t$ is large). %
Intuitively, $\minpgap{t}$ relates to the skewness of the policy %
where the $\max$ term corresponds to the maximal probability of most probable mispredicted label over $t$ rounds.
Note that in theory $\minpgap{t}$ can be small (e.g. \algname may choose a constant policy $\policy^{\text{const}}_i \in \Policies^*$ that mispredict the label for $\instance_t$, which leads to $\minpgap{t}=0$); {in such cases, our result still translates to a sublinear regret bound of $O(c\sqrt{\log c} \cdot T^{\frac{3}{4}}\sqrt{\log{{|\Policies^*|}}}$). Furthermore}, in practice, we consider to ``regularize'' the policies (\appref{app:regularized_policy}) to ensure that probability a policy selecting any model is bounded away from 0. %

Finally, the following theorem (proof in \appref{app:adversarial_query_complexity}) 
establishes a query complexity bound of \algname.
\begin{theorem}{(Query Complexity)}\label{thm:adversarial-query-complexity}. %
Under the adversarial setting, the expected query complexity %
over $T$ rounds is
${O}\paren{
{\ln{{T}}}\paren{\sqrt{\frac{T\log{{|\Policies^*|}}}{{\max}\{\minpgap{T},{\sqrt{1/T}}\}}} + \tilde{\Loss}_{T,*}}}.$
\end{theorem}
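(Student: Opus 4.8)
The plan is to bound the expected number of queries by the expected cumulative loss of \algname, and then substitute the regret bound of \thmref{thm:adversarial-regret-bound} to obtain the stated form. Since $\QueryIndicator_t \sim \mathrm{Ber}(\queryProb_t)$ and the budget cap can only suppress queries (never create them), the tower rule gives that the expected query count is at most $\expct{\sum_{t=1}^T \QueryIndicator_t} = \expct{\sum_{t=1}^T \queryProb_t}$. Writing $\queryProb_t = \max\{\delta_0^t, \entropyF(\pdB_t,\ModelsDistB_t)\}$, I would separate each round by which term attains the maximum. The exploration floor contributes $\sum_t \delta_0^t = \sum_t 1/\sqrt{t} = O(\sqrt{T})$, which is dominated by the first term of the target bound; the remaining mass is controlled by the disagreement term $\entropyF(\pdB_t,\ModelsDistB_t)$.

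The heart of the argument is a per-round inequality relating disagreement to incurred loss. Recalling $h(x) = -x\log_\numClabel x = x\log_\numClabel(1/x)$ and $\bar\loss_t^{\clabel} = \langle \ModelsDistB_t, \mathbb{I}\{\pdB_t \neq \clabel\}\rangle = 1 - P_t(\clabel)$, where $P_t(\clabel)$ is the weight $\ModelsDistB_t$ places on models predicting $\clabel$, I would establish that $\entropyF(\pdB_t,\ModelsDistB_t) \le \frac{\log_\numClabel T}{\numClabel}\,\bar\loss_t^{\clabel_t}$ on every round where the expected loss against the true label satisfies $\bar\loss_t^{\clabel_t} \ge 1/T$. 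The monotone cutoff $\log_\numClabel(1/x) \le \log_\numClabel T$ for $x \ge 1/T$ is exactly what produces the $\frac{\ln T}{\numClabel\ln \numClabel}$ prefactor. Rounds with $\bar\loss_t^{\clabel_t} < 1/T$ are handled separately: on each such round $\entropyF$ is itself $O\!\left(\frac{\ln T}{\numClabel T \ln\numClabel}\right)$, so their aggregate contribution over all $T$ rounds is only $O\!\left(\frac{\ln T}{\numClabel\ln\numClabel}\right)$ and is absorbed into the bound.

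Summing the per-round inequality yields $\expct{\sum_t \entropyF(\pdB_t,\ModelsDistB_t)} \le \frac{\log_\numClabel T}{\numClabel}\,\expct{\sum_t \bar\loss_t^{\clabel_t}}$, after which I would identify $\expct{\sum_t \bar\loss_t^{\clabel_t}}$ with the learner's expected cumulative loss. Since $\ModelsDistB_t = \sum_i \PoliciesDist_{t,i}\policy_i(\instance_t)$ and the recommendation draws $i_t \sim \PoliciesDistB_t$, $j_t \sim \policy_{i_t}(\instance_t)$, the conditional expectation of the realized loss equals $\langle \ModelsDistB_t, \lossB_t\rangle = \langle \PoliciesDistB_t, \tilde\lossB_t\rangle$, so $\expct{\sum_t \bar\loss_t^{\clabel_t}} = \expct{\Loss^{\algname}_T} = \mathcal{R}_T(\algname) + \tilde{\Loss}_{T,*}$ by the definition of adversarial regret. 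Plugging in $\mathcal{R}_T(\algname) = O\!\left(\numClabel\sqrt{\ln\numClabel}\,\sqrt{T\log|\Policies^*|/\max\{\minpgap{T},\sqrt{1/T}\}}\right)$ from \thmref{thm:adversarial-regret-bound} and combining with the $O(\sqrt{T})$ exploration term gives the claimed expression (with the $\numClabel$-dependent constant folded into the informal $O(\cdot)$).

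The main obstacle is the per-round disagreement-to-loss inequality. In the adversarial setting the loss vector is fixed by the adversary and $\entropyF$ aggregates $h(\bar\loss_t^{\clabel})$ over \emph{every} candidate label, whereas the loss actually paid depends only on the true label $\clabel_t$. The argument must therefore show that substantial disagreement across the label set forces $P_t(\clabel_t)$ away from $1$ (equivalently $\bar\loss_t^{\clabel_t}$ away from $0$); this is where the simplex constraint $\sum_\clabel P_t(\clabel) = 1$ and the concavity of $h$ enter. Making the inequality robust for arbitrary true labels rather than only near-dominant ones, and cleanly isolating the sub-$1/T$ rounds, is the delicate part. The subsequent summation and the invocation of the regret bound are then routine.
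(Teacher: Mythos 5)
Your proposal is correct, and its skeleton matches the paper's proof of \thmref{thm:adversarial-query-complexity}: bound the expected query count by $\expct{\sum_{t=1}^T \queryProb_t}$, peel off the exploration floor $\sum_{t=1}^T 1/\sqrt{t} \le 2\sqrt{T}$ (the paper's \lemref{lem:eta_query_ub}), convert the accumulated disagreement $\sum_t \entropyF\paren{\pdB_t,\ModelsDistB_t}$ into roughly $\frac{\ln T}{\numClabel\ln\numClabel}$ times the learner's expected cumulative loss, write that loss as $\mathcal{R}_T\paren{\algname} + \tilde{\Loss}_{T,*}$, and invoke \thmref{thm:adversarial-regret-bound} (both you and the paper then fold the resulting extra factor $2\numClabel\sqrt{\ln\numClabel}$ into the informal $O(\cdot)$). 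The one genuine difference is how the disagreement-to-loss conversion is established. The paper's \lemref{lem:entrophy_query_ub} argues in aggregate: per round it applies Jensen's inequality over the wrong labels to get $\entropyF_t \le \frac{r_t\log_{\numClabel}\paren{(\numClabel-1)/r_t^2}}{\numClabel}$ with $r_t = \bar\loss_t^{\clabel_t}$, then applies concavity again \emph{across} rounds to obtain the bound $\frac{R_T\log_{\numClabel}\paren{T^2(\numClabel-1)/R_T^2}}{\numClabel}$ in terms of the cumulative loss $R_T$, and finally replaces $\log_{\numClabel}\paren{T^2(\numClabel-1)/R_T^2}$ by $\log_{\numClabel}\paren{T\numClabel}$, a step that implicitly requires the plugged-in loss expression to be $\Omega(\sqrt{T})$. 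Your route instead truncates per round at $\bar\loss_t^{\clabel_t} \ge 1/T$: the true-label term is bounded by $\bar\loss_t^{\clabel_t}\log_\numClabel T$, the wrong-label terms by $\sum_{\clabel\ne\clabel_t}p_{t,\clabel}/\ln\numClabel = \bar\loss_t^{\clabel_t}/\ln\numClabel$ (your simplex-constraint observation, made precise by $(1-x)\ln\frac{1}{1-x}\le x$), and the sub-$1/T$ rounds contribute only $O\paren{\frac{\ln T}{\numClabel\ln\numClabel}}$ in total. This yields the same $\frac{\ln T}{\numClabel\ln\numClabel}$ prefactor (yours comes out as $\frac{\ln T + 1}{\numClabel\ln\numClabel}$, immaterial at the $O(\cdot)$ level) while avoiding both the across-round Jensen step and any lower bound on the cumulative loss, so your argument is round-by-round robust and sidesteps the tersest step in the paper's proof, at the cost of an extra case split.
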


\section{Experiments}\label{sec:experiment}

\begin{figure*}[t!]

\begin{subfigure}{1\textwidth}
    \centering
    \includegraphics[height=0.3cm, 
    clip={0,0,0,0}]{./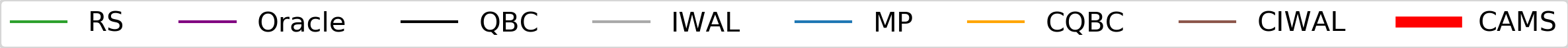}
\end{subfigure}
\rotatebox[origin=l]{90}{
\scriptsize \qquad\qquad Cumulative loss
}
\begin{subfigure}{.24\textwidth}
    \centering
    \includegraphics[%
    width=3.4cm,  clip={0,0,0,0}]{./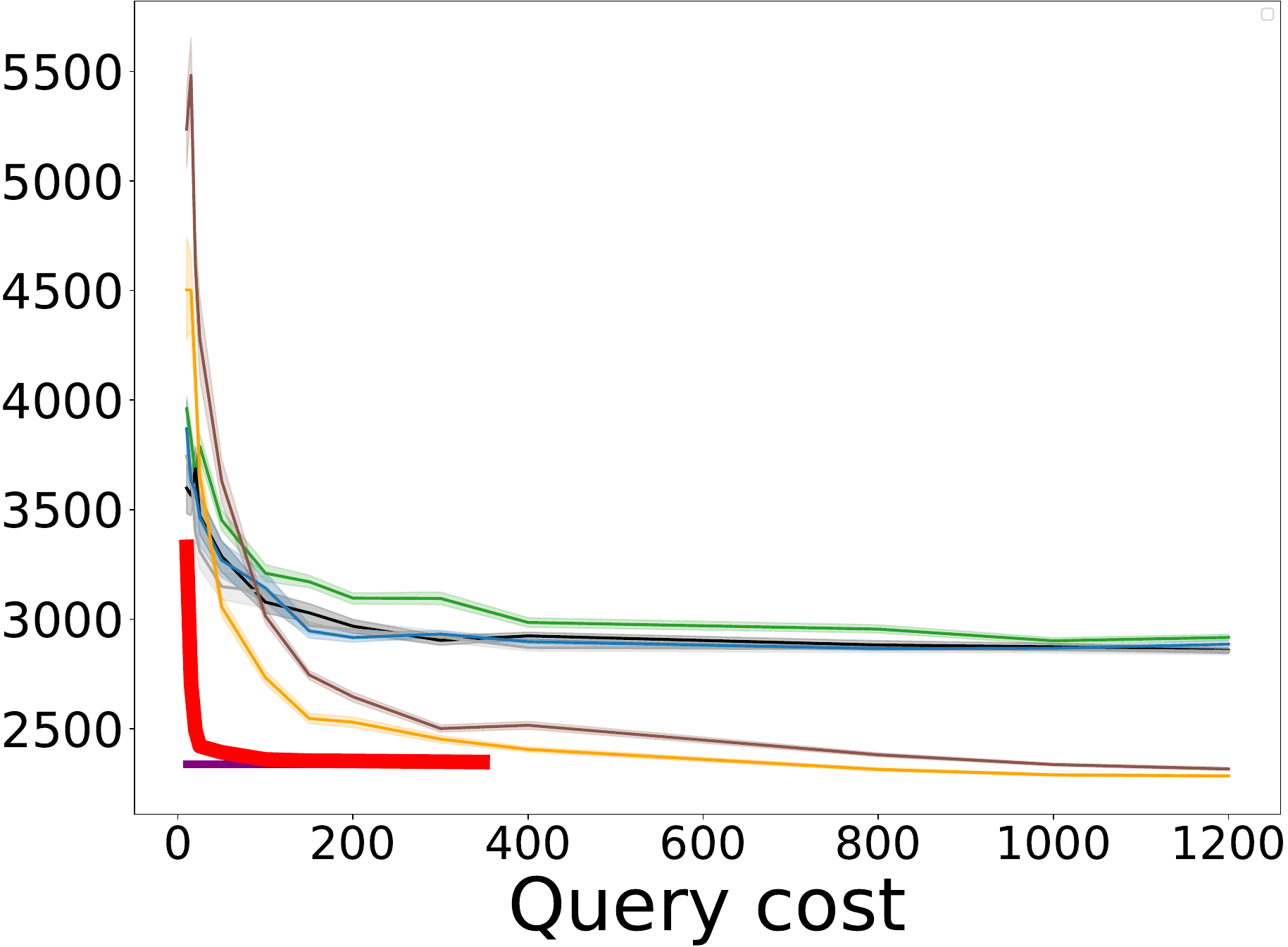}
    \caption{CIFAR10}\label{fig:results:cifar10}
\end{subfigure}
\begin{subfigure}{.24\textwidth}
    \centering
    \includegraphics[%
    width=3.4cm,  clip={0,0,0,0}]{./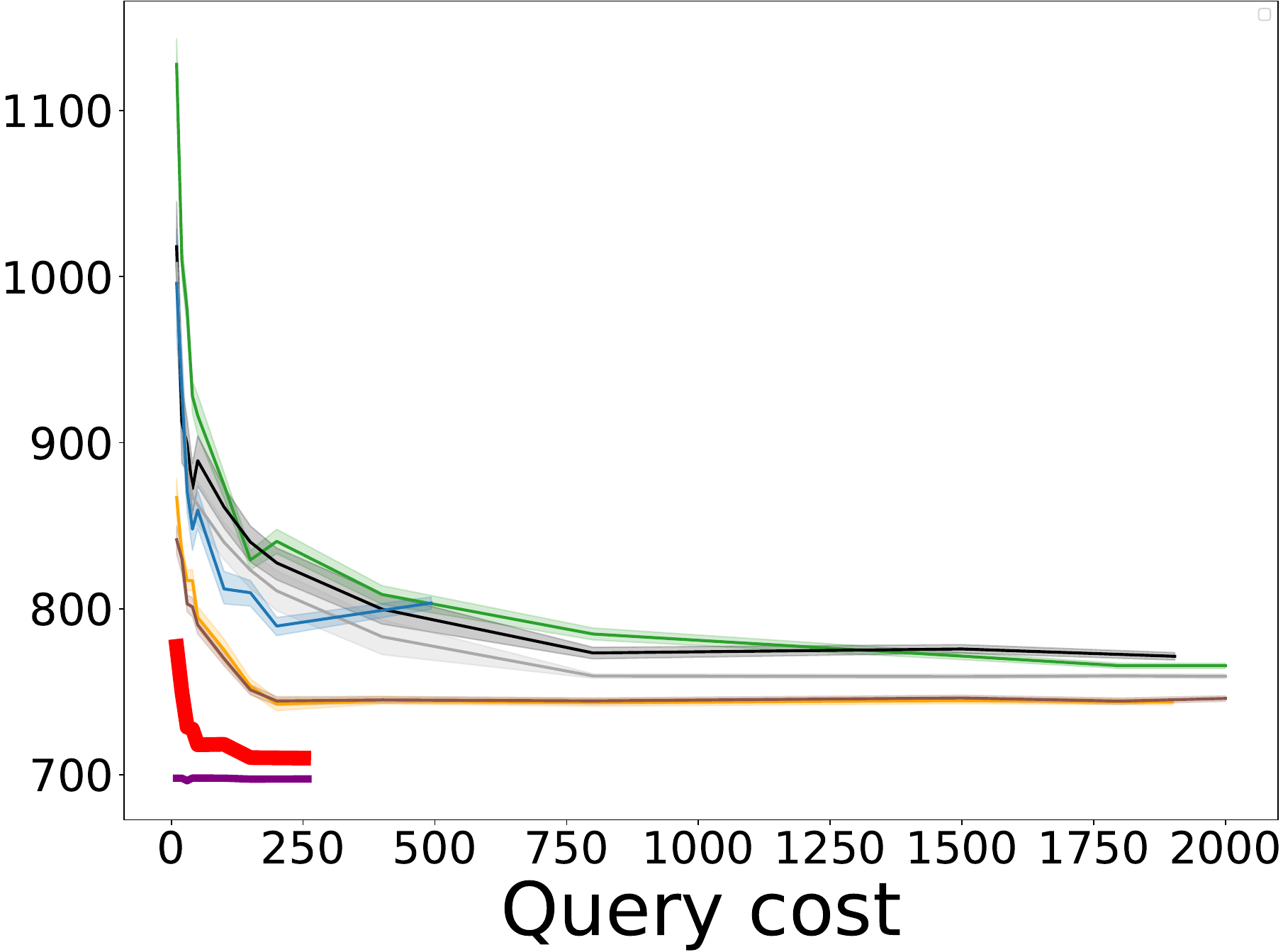}
    \caption{DRIFT}\label{fig:results:drift}
\end{subfigure}
\begin{subfigure}{.24\textwidth}
    \centering
    \includegraphics[%
    width=3.4cm,  clip={0,0,0,0}]{./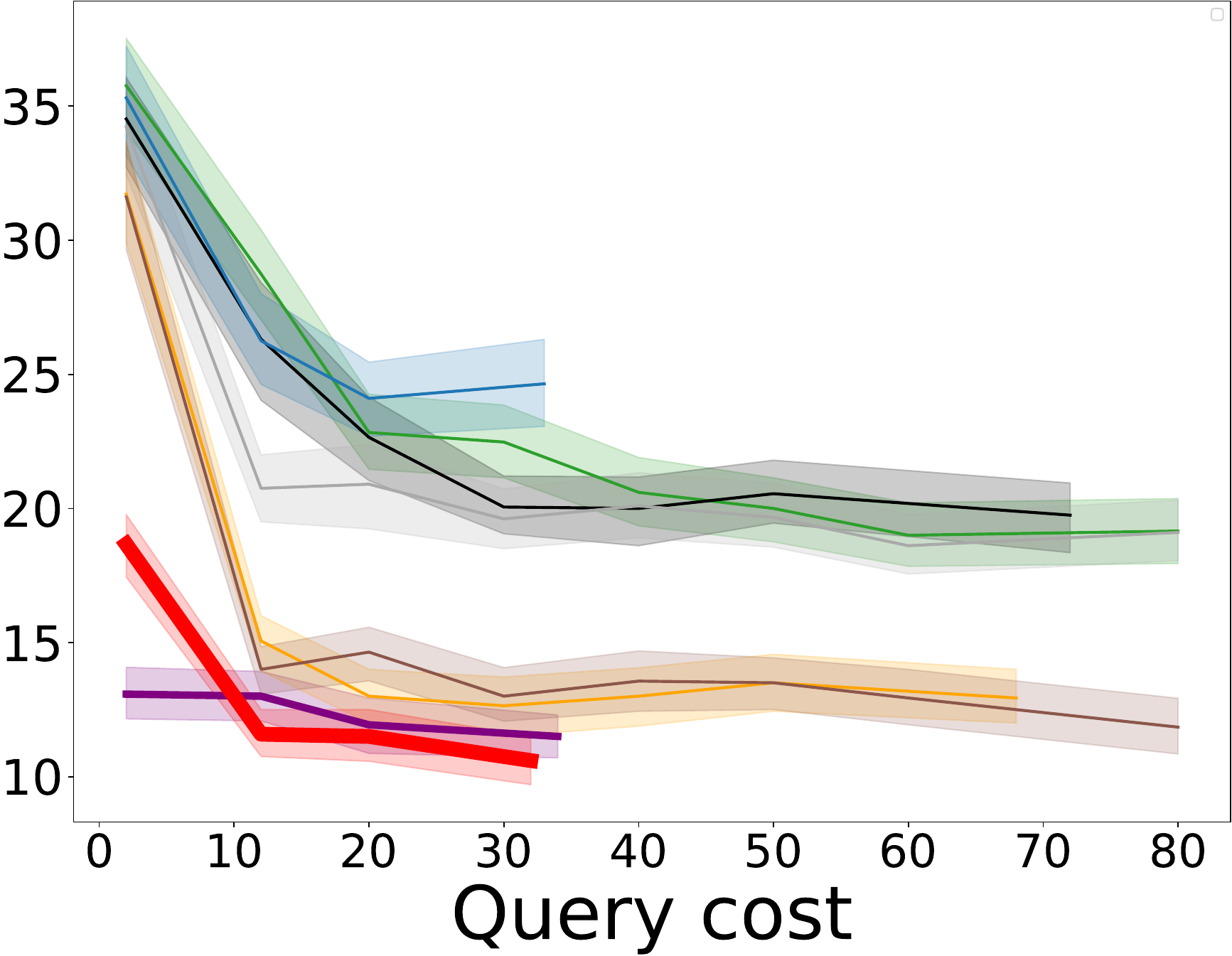}
    \caption{VERTEBRAL}\label{fig:results:vertebral}
\end{subfigure}
\begin{subfigure}{.24\textwidth}
    \centering
    \includegraphics[%
    width=3.4cm,  clip={0,0,0,0}]{./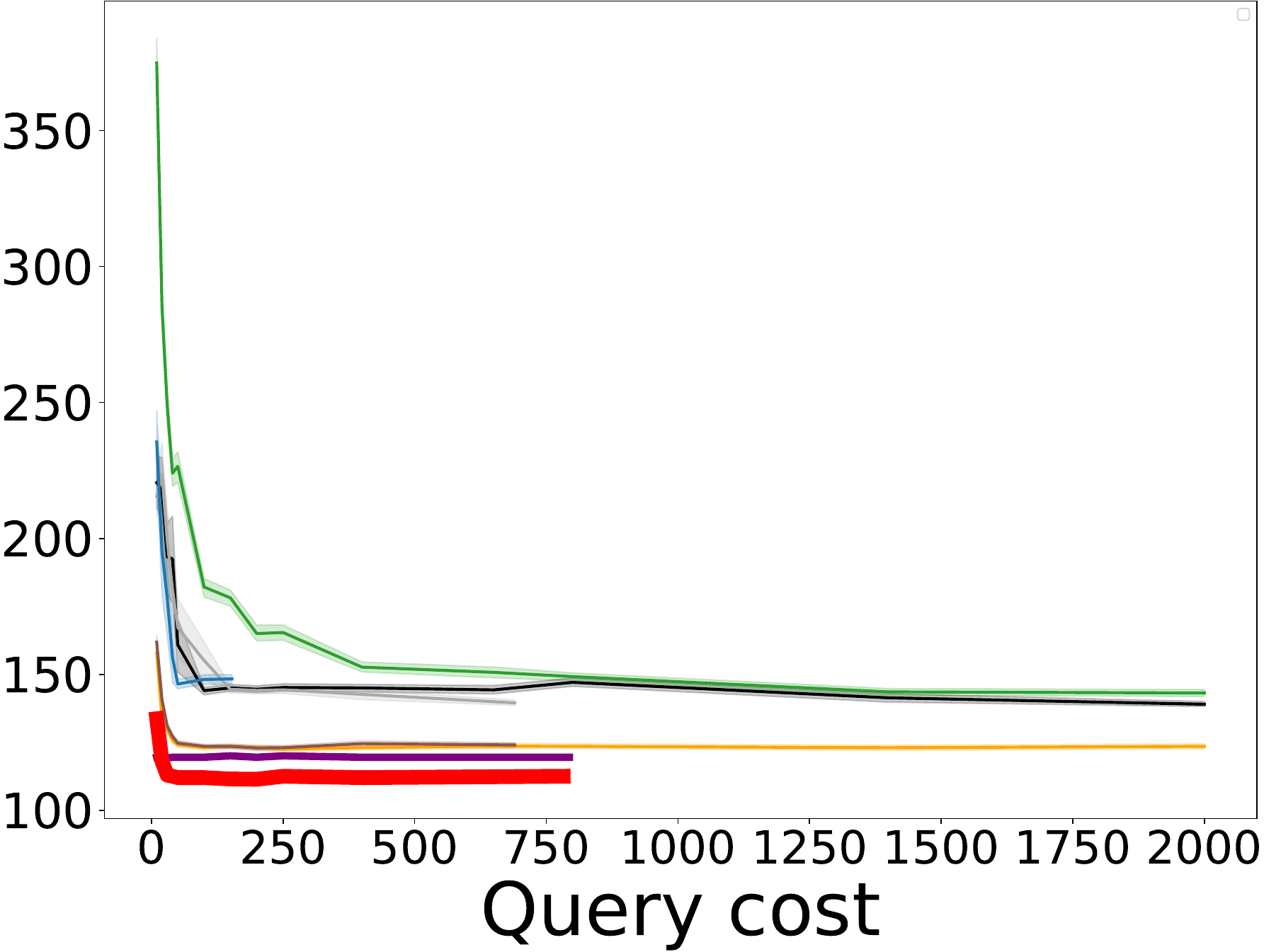}
    \caption{HIV}\label{fig:results:hiv}
\end{subfigure}
        \caption{\textbf{Main results.} Comparison of \algname with 7 baselines across 4 diverse benchmarks in terms of cost effectiveness. We plot the cumulative loss as we increase the query cost for a fixed number of rounds $T$ and maximal query cost $B$ (from left to right:  $T=10000, 3000, 80, 4000$, and $B=1200,2000,80,2000$). \algname outperforms all baselines. \textbf{Algorithms}: 4 contextual \{Oracle, CQBC, CIWAL, CAMS\} and 4 non-contextual baselines \{RS, QBC, IWAL, MP\} are included (see Section \pref{main:baseines}). 90\% confident interval are indicated in shades. %
        }
  \label{fig:exp:results}
\end{figure*}

\paragraph{Datasets.} %
We evaluate our approach using five datasets: %
(1) CIFAR10 \citep{krizhevsky2009learning} contains 60,000 images from 10 different balanced classes. (2) DRIFT \citep{vergara2012chemical} is a tabular dataset with 128-dimensional features, based on 13,910 chemical sensor measurements of 6 types of gases at various concentration levels. 
(3) VERTEBRAL \citep{asuncion2007uci} is a biomedical tabular dataset which %
classifies 310 patients into three classes (Normal, Spondylolisthesis, Disk Hernia) based on 6 attributes. (4) HIV \citep{wu2018moleculenet} contains over 40,000 compounds annotated with molecular graph features and binary labels (active, inactive) indicating their ability to inhibit HIV replication. %
{(5) CovType~\citep{Dua:2019} 
has 580K samples and contains details including slope, aspect, elevation, 
measurements of 
area, 
and type of forest cover.
}

\looseness -1 \noindent \textbf{Policy sets.
} %
We construct the policy sets $\Pi$ for each dataset following a procedure similar to Meta-selector~\citep{luo2020metaselector}. In this approach, a set of recommender algorithms is considered, and Meta-selector assigns varying ratings to these algorithms based on the specific user. Concretely, 
we first construct a set of models trained on %
different subsamples from each dataset. We then construct a set of policies, which include \textit{malicious}, \textit{normal}, \textit{random}, and \textit{biased} policy types for each dataset based on different models and features. %
Details on the classifiers and policies are provided in the supplemental materials. 
The \textit{malicious} policy provides contrary advice; the \textit{random} policy provides random advice; the \textit{biased} policy provides biased advice by training on a biased distribution for classifying specific classes. The \textit{normal} policy gives reasonable advice, being trained under a standard process on the training set. We represent the output of the $i_{th}$ policy as $\policy_i\paren{\instance_t}$, indicating the rewards distribution of all the base classifiers on $\instance_t$. In total, we create 80, 10, 6, 4 classifiers and 85, 11, 17, 20 policies for CIFAR10, DRIFT, VERTEBRAL, and HIV, respectively. %

\paragraph{Baselines.}\label{main:baseines} %
We evaluate \algname against both \textit{contextual} and \textit{non-contextual} active model selection baselines. We consider four \textit{non-contextual} baselines: (1) Random Query Strategy (RS) which queries the instance label with a fixed probability $\frac{\budget}{T}$; (2) Model Picker (MP) \citep{karimi2021online} that employs variance-based active sampling with a coin-flip query probability $\max\curlybracket{v\paren{\pdB_t,\ModelsDistB_t},\eta_t}$, where the variance term is defined as $v\paren{\pdB_t,\ModelsDistB_t}=\max_{\clabel\in \domClabel}\bar\loss_t^{\clabel} \paren{1-\bar\loss_t^{\clabel}}$; (3) Query by Committee (QBC) implementing committee-based sampling \citep{dagan1995committee};
and (4) Importance Weighted Active Learning (IWAL) \citep{beygelzimer2009importance} that calculates query probability based on labeling disagreements of surviving classifiers. %
Since no \textit{contextual} baselines exist yet, we propose contextual versions of QBC and IWAL as (5) CQBC and (6) CIWAL. Both extensions maintain their respective original query strategies but incorporate the context into the cumulative rewards. 
For \textit{model selection}, \algname, MP, CQBC, and CIWAL recommend the classifier with the highest probability. The other baselines use Follow-the-Leader (FTL), recommending the model with the minimum cumulative loss for past queried instances. Finally, we add (7) {Oracle} to represent the best single policy with the minimum cumulative loss, with the same query strategy as \algname. %

\begin{figure*}[t!]

\begin{subfigure}{.245\textwidth}
    \centering
    \includegraphics[height=2.6cm,  clip={0,0,0,0}]{./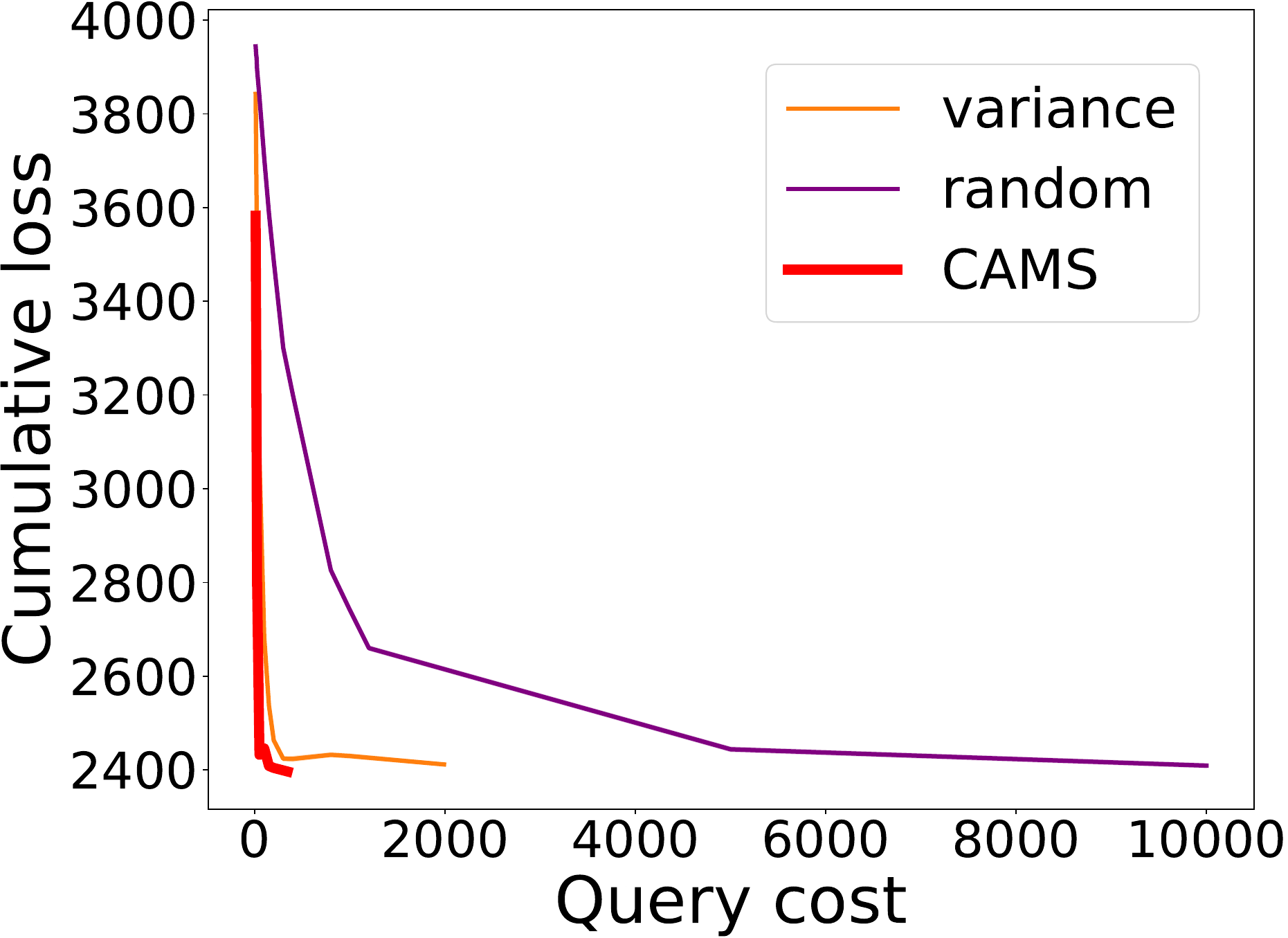}
    \caption{ \scriptsize Query strategy ablation study %
    }\label{fig:exp:active_query_study2}
\end{subfigure}
\begin{subfigure}{.245\textwidth}
    \centering
    \includegraphics[height=2.6cm,  clip={0,0,0,0}]{./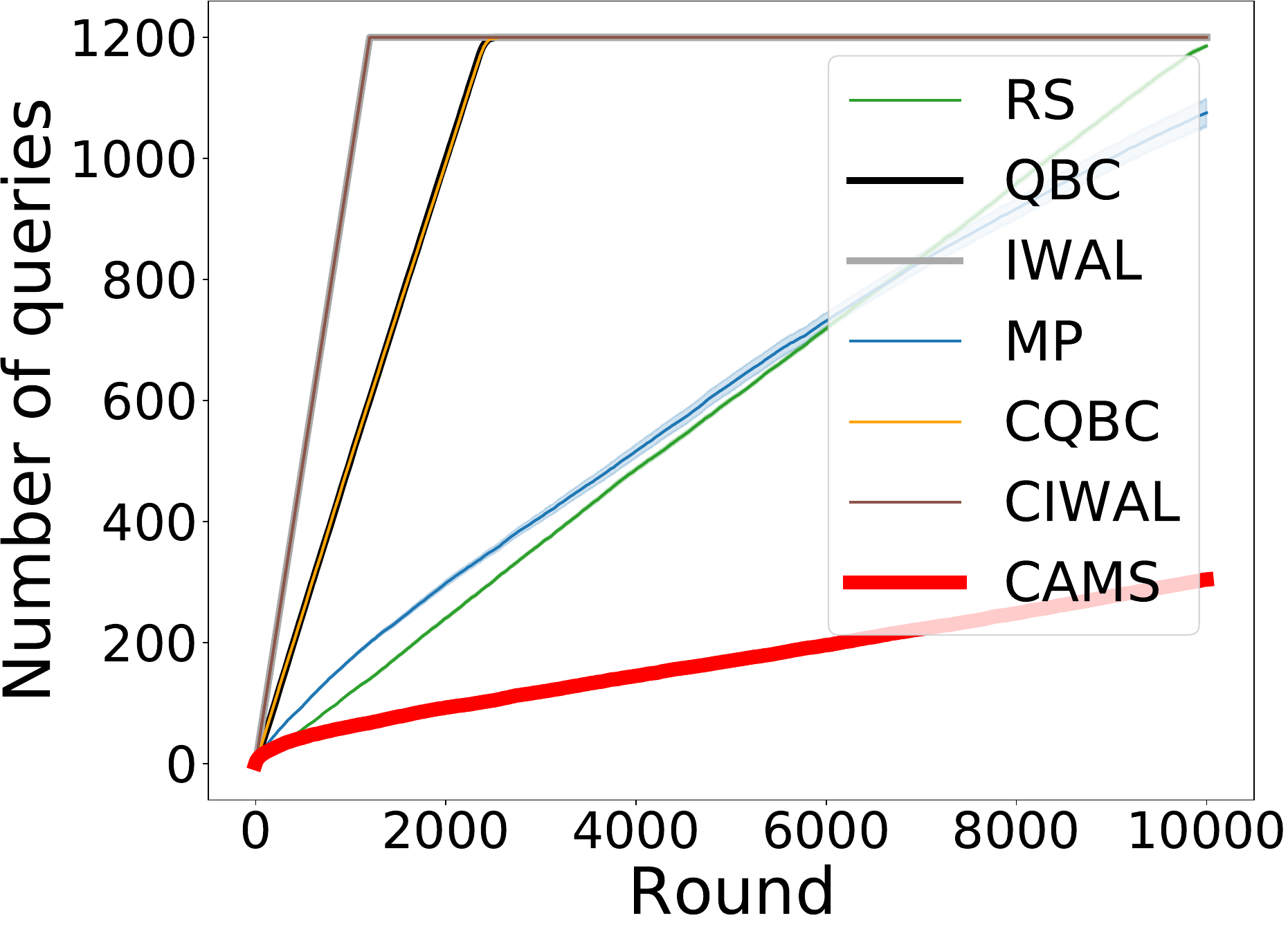}
    \caption{ \scriptsize Query complexity study%
    }
    \label{fig:results:query_complexity}
\end{subfigure}
\begin{subfigure}{.245\textwidth}
    \centering
        \includegraphics[height=2.6cm,  clip={0,0,0,0}]{./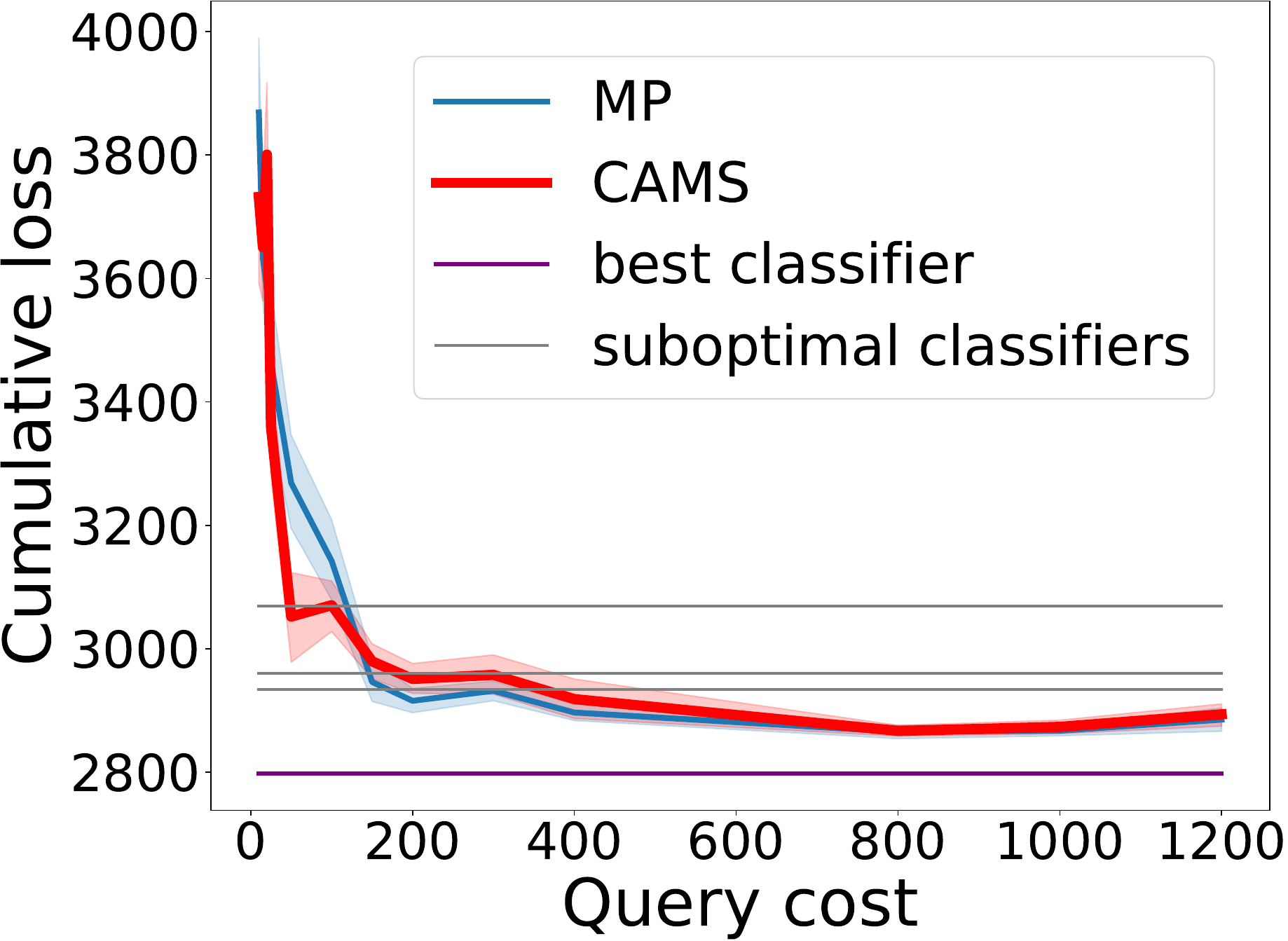}
    \caption{ \scriptsize Context-free environment%
    }\label{fig:exp:context-free}
\end{subfigure}
\begin{subfigure}{.245\textwidth}
    \centering
        \includegraphics[height=2.6cm,  clip={0,0,0,0}]{./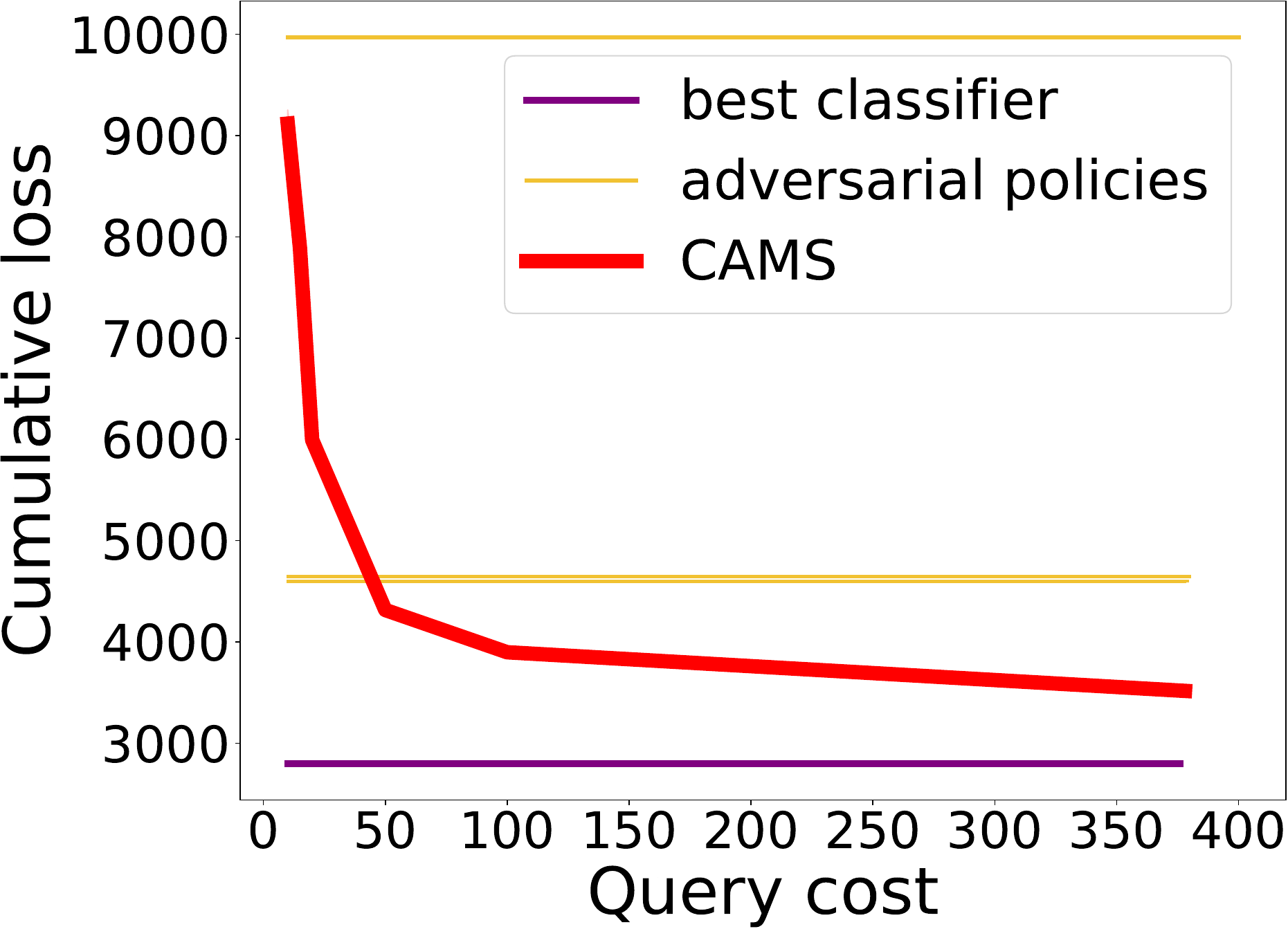}
    \caption{ \scriptsize Robust in pure adversarial set%
    }\label{fig:exp:adversarial}
\end{subfigure}

\vspace{1mm}

\begin{subfigure}{.245\textwidth}
    \centering
        \includegraphics[height=2.45cm, clip={0,0,0,0}]{./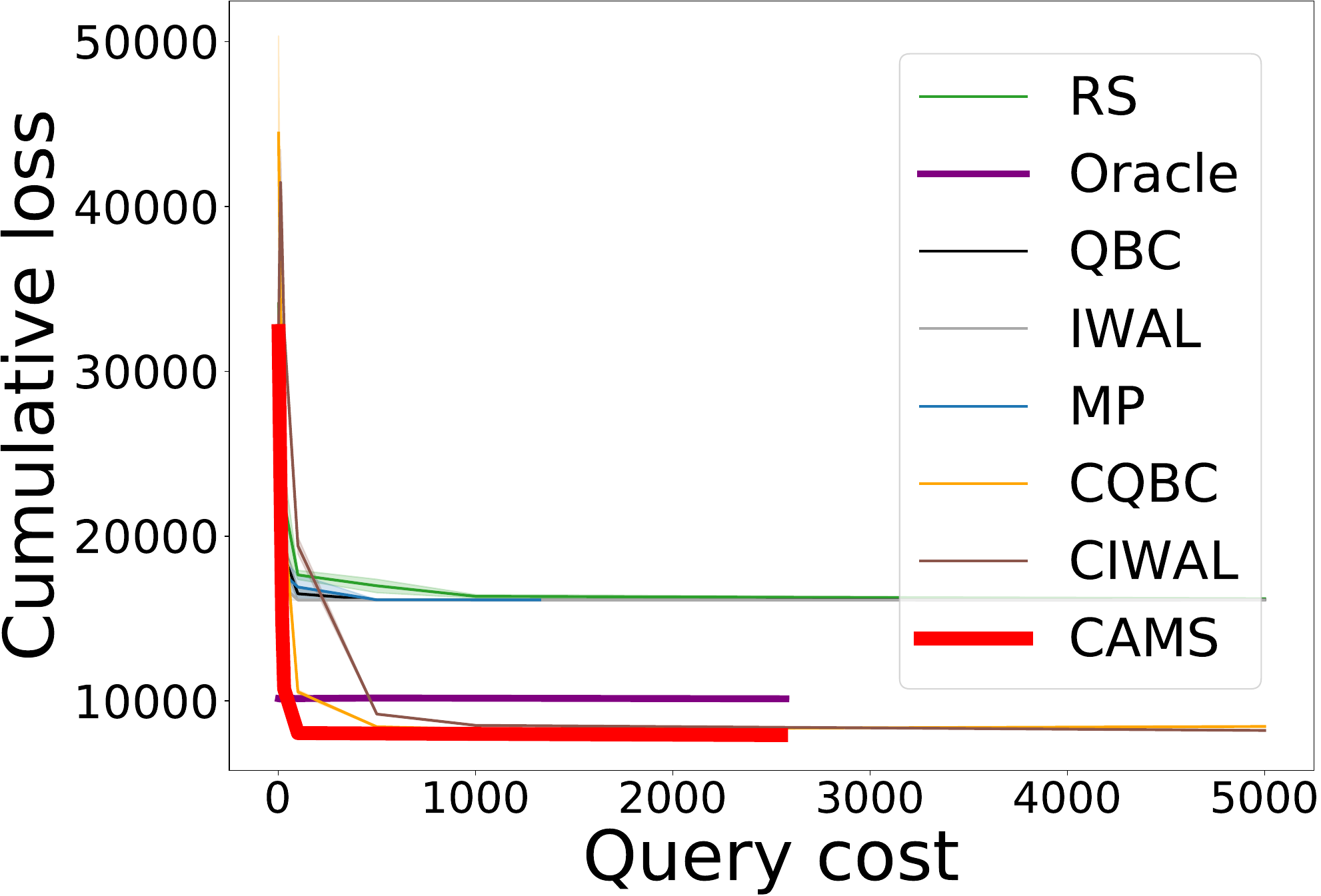}
    \caption{ \scriptsize %
    {Large dataset (COVTYPE)%
    }}\label{fig:exp:scalability}
\end{subfigure}
\begin{subfigure}{.245\textwidth}
    \centering
        \includegraphics[height=2.5cm,  clip={0,0,0,0}]{./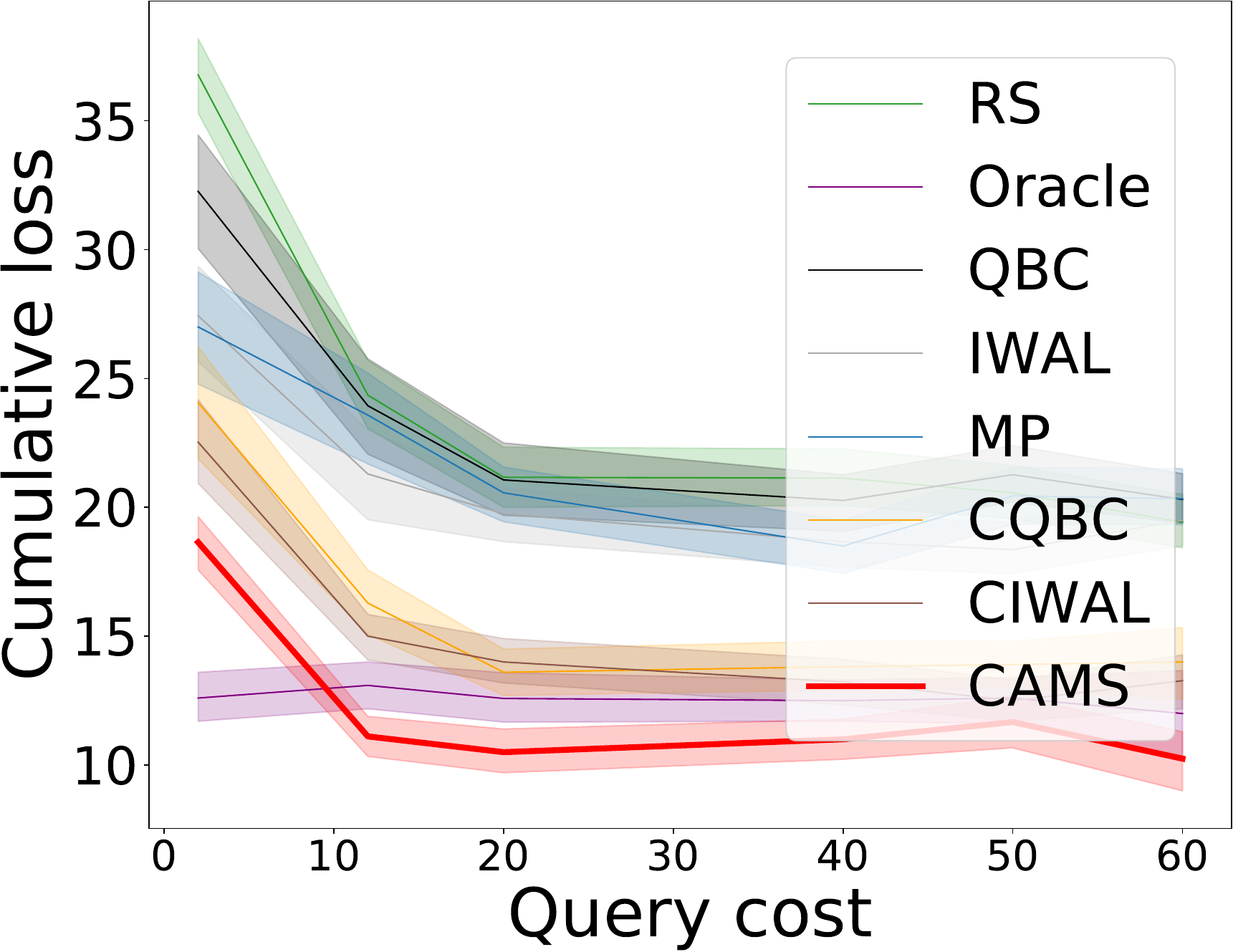}
    \caption{ \scriptsize %
    {Adjust \fix{prob.} (VERTEBRAL)%
    }}
    \label{fig:exp:adjust_vertebral}
\end{subfigure}
\begin{subfigure}{.245\textwidth}
    \centering
        \includegraphics[height=2.6cm,  clip={0,0,0,0}]{./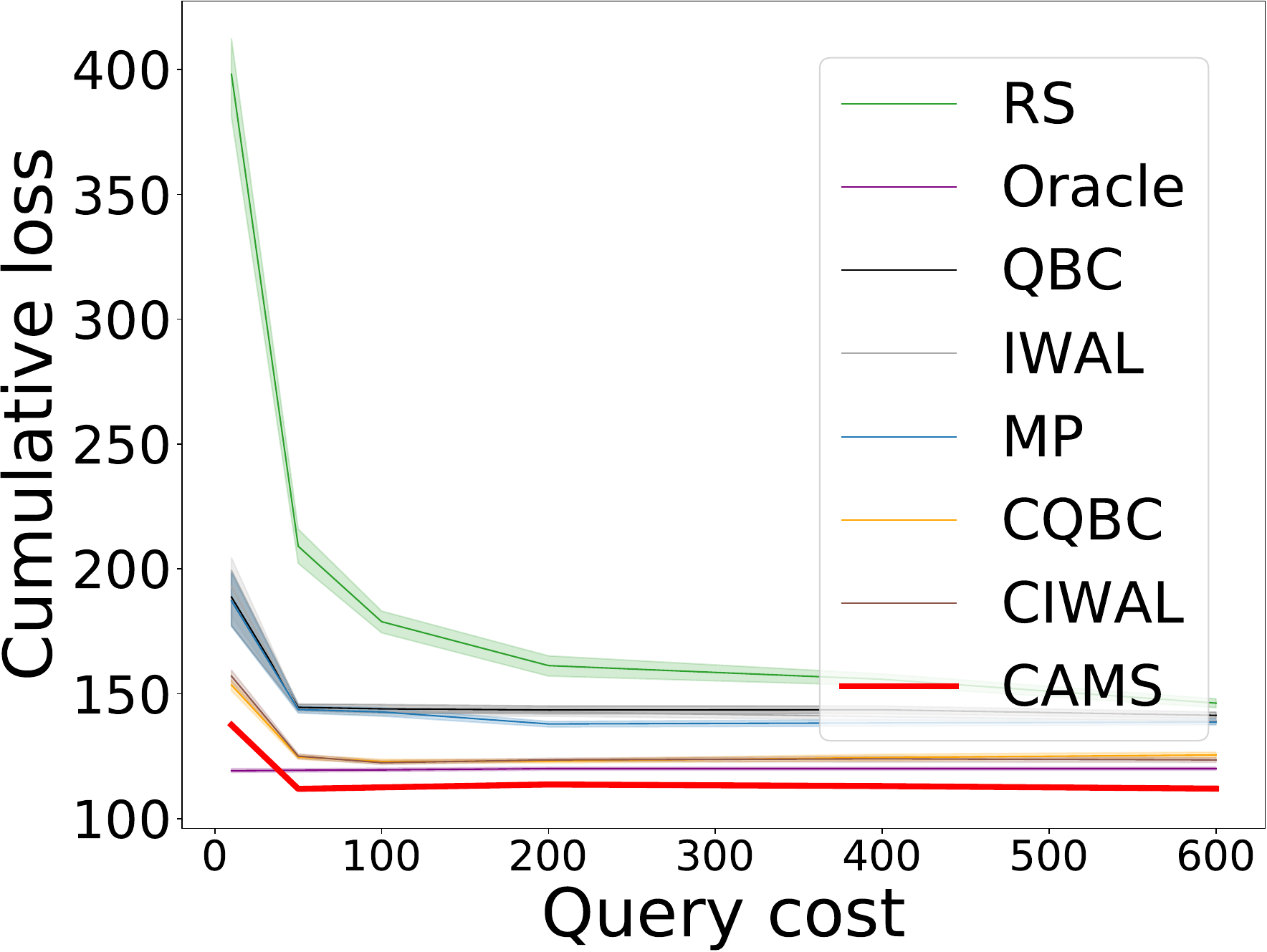}
    \caption{ \scriptsize %
    {Adjust probability (HIV) %
    }
    }
    \label{fig:exp:adjust_hiv}
\end{subfigure}
\begin{subfigure}{.245\textwidth}
    \centering
    \includegraphics[height=2.6cm,  clip={0,0,0,0}]{./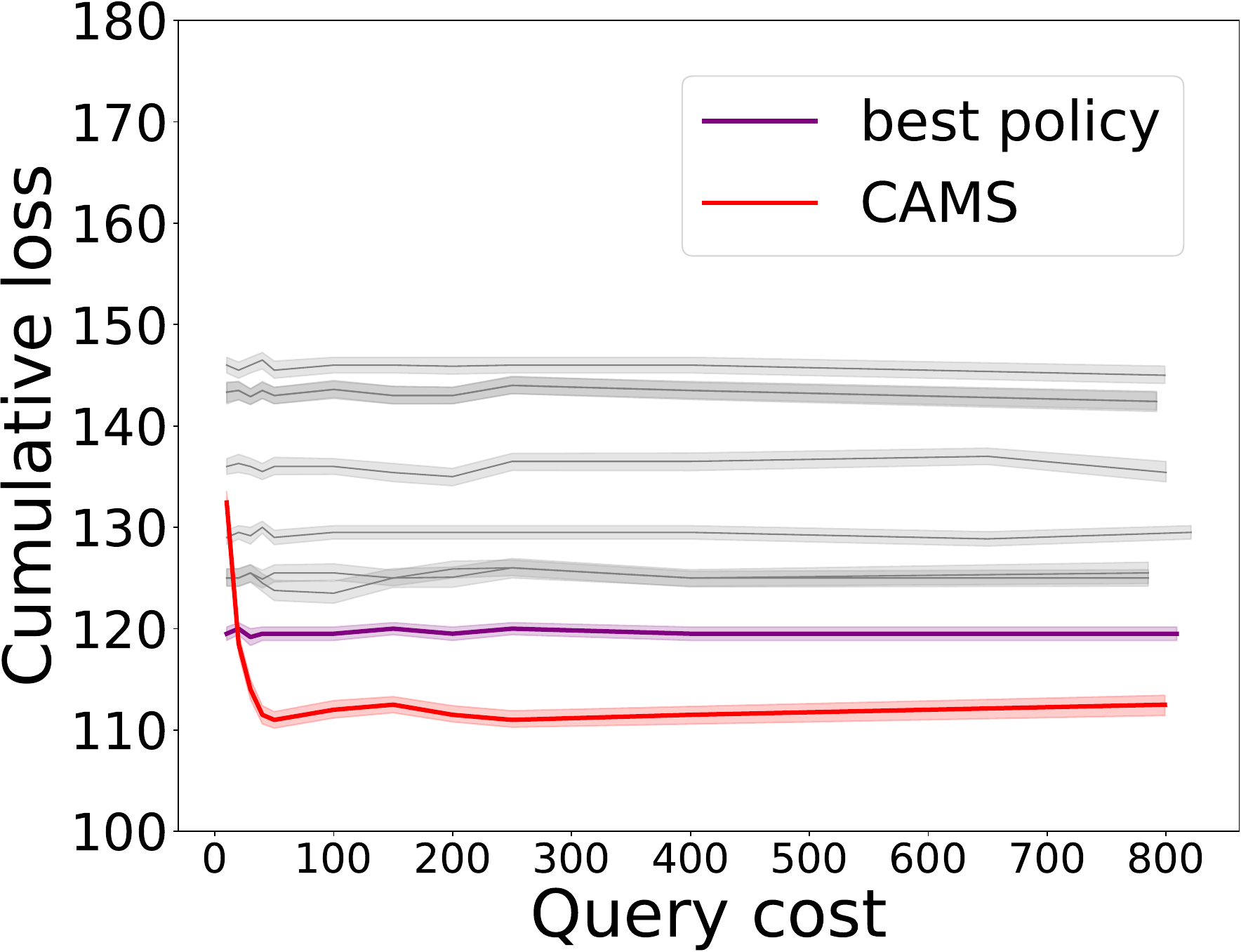}
    \caption{ \scriptsize %
    \algname vs best policy %
    (HIV)}%
    \label{fig:results:outperform_all}
\end{subfigure}

    \centering
    
    \caption{\textbf{Ablation studies.} (a) Comparing three query strategies $\curlybracket{\algname\textrm{, variance-based,    random}}$ under same model selection policy. (b) Comparing the increasing rate of CAMS' query cost over other baselines. (c) Comparing CAMS with MP in context-free environment. (d) Evaluating the performance of \algname under a pure adversarial setting. %
    {(e) Large dataset. (f,g) Adjustable query probability.} (h) \algname outperforms the best single policy.
    {The ablation study (a)-(d) is conducted on CIFAR10. %
    For additional results on other benchmarks, please refer to the supplemental material. }}
    
  \label{fig:exp:ablation}
\vspace{-3mm}
\end{figure*}

\subsection{Main results}
{\figref{fig:exp:results} visualizes the \emph{cost effectiveness} of \algname and the baselines. Here, we define \emph{cost effectiveness} as the measure of how %
{quickly} the cumulative loss decreases in response to an increase in query cost. }
\figref{fig:exp:results}
demonstrates that \algname outperforms all the comparison methods across all benchmarks. 
Remarkably, it outperforms even the oracle on the VERTEBRAL %
{(\figref{fig:results:vertebral})} and HIV %
{(\figref{fig:results:hiv})} benchmarks with fewer than 10 and 20 queries, respectively. In the case of the VERTEBRAL benchmark, \algname outperforms the best baseline in query cost by a margin of $20\%$, despite the fact that 11 out of the 17 experts provided malicious or random advice. This level of performance is attained by utilizing an active query strategy to retrieve highly informative data, thereby maximizing the differentiation between models and policies within the constraints of a limited budget. Additionally, the model selection strategy allows for effectively combining the expertise among the experts. %

\subsection{Ablation studies}

{\textbf{%
{Effectiveness} of active querying.} %
{In  \figref{fig:exp:active_query_study2} and \figref{fig:results:query_complexity}},
we perform ablation studies %
to demonstrate the effectiveness of our active query strategy.
We fix the model recommendation strategy as the one used by \algname, and compare three query strategies: (1) \algname %
, (2) the state-of-the-art variance-based query strategy from Model Picker \citep{karimi2021online} (referred to as ``variance''), 
and (3) a random query strategy. Figure \ref{fig:exp:active_query_study2} demonstrates that \algname has the fastest convergence rate in terms of cumulative loss {on CIFAR10}, implying effective use of queried labels. {Furthermore, CAMS not only achieves the minimum cumulative loss but also incurs significantly lower query costs, with reductions of 71\% and 95\% compared to the variance and random strategies respectively as showed in \figref{fig:results:query_complexity}.}
This suggests that \algname selectively queries data to optimize policy improvement, whereas the {other strategies may query}
unnecessary labels, %
including potentially noisy or uninformative ones, which impede policy improvement and convergence.

\looseness -1 \textbf{{Robustness.}} %
{ In \figref{fig:exp:results}, \ref{fig:exp:context-free},  \ref{fig:exp:adversarial},\ref{fig:exp:scalability}, \ref{fig:exp:adjust_vertebral}, and \ref{fig:exp:adjust_hiv}, \algname exhibits robustness in a variety of environmental settings.
}
Firstly, 
As shown in \figref{fig:exp:results}, \algname outshines other methods in a contextual environment, whereas in \figref{fig:exp:context-free}, a non-contextual (no experts) environment, it achieves comparable performance to the state-of-the-art Model Picker in identifying the best classifier.
Secondly, \algname is robust in both stochastic and adversarial environments. As demonstrated in \figref{fig:exp:results}, %
\algname surpasses other methods in a stochastic environment. Additionally, 
{as illustrated in \figref{fig:exp:adversarial}},
in a worst-case adversarial environment, \algname effectively recovers from adversarial actions and approaches the performance of the best classifier (see \appref{app:recover_in_complete_malicious_environment}). 
{
We further observe that \algname demonstrates robustness to varying scales of data, where the online stream sizes range from 80 to 10K (\figref{fig:exp:results}) to 100K (\figref{fig:exp:scalability}, where we randomly sample 100K samples from the {{CovType}} dataset~\citep{Dua:2019}). %

\looseness -1 In \figref{fig:exp:results}, we assume that the stream length $T$ is hidden and not used as input to \algname. Under the stochastic setting, however, knowing $T$ can provide additional information that one can leverage to optimize the query probability, thereby giving an advantage to some of the baseline algorithms (e.g. random). 
As an ablation study, in \figref{fig:exp:adjust_vertebral} and \figref{fig:exp:adjust_hiv}, 
we assume the stochastic setting where the total length $T$ of the online stream is given. 
Given the stream length $T$ and query budget $b$, we may optimize each algorithm by scaling their query probabilities, so that each algorithm allocates its query budget to the top $b$ informative labels in the entire online stream based on its own query criterion. 
\algname still ourperform the baselines under the setting.
}

{

\looseness -1 \noindent \textbf{Improvement over the best classifier and policy.}
{\figref{fig:results:outperform_all} demonstrates that} when provided with good policies, \algname formulates a stronger policy which incurs no regret. \algname has the potential to outperform an oracle, especially in rounds where the oracle does not make the optimal recommendation. For instance, in the stochastic version of \algname (as shown in lines 22-23 and 30-32 of \figref{alg:CAMS}), \algname recommends a model using a weighted majority vote among all policies, enabling the formation of a new policy in each round by amalgamating the strengths of each sub-optimal policy. This adaptive strategy can potentially outperform any single {policy}. Moreover, in most real-world scenarios and conducted experiments (as depicted in %
\appref{app:outperform_best_expert}), data streams may not be strictly stochastic, and therefore no single policy consistently performs the best. In such cases, \algname's weighted policy may find an enhanced combination of ``advices'', leading to improved performance.}

\section{Conclusion}\label{sec:conclusion}
We introduced \algname, an online contextual active model selection framework based on a novel model selection and active query strategy. The algorithm was motivated by %
many real-world use cases that need to make decision by taking both contextual information and the cost into consideration. We have demonstrated \algname's compelling performance of using the minimum query cost to learn the optimal contextual model selection policy on several diverse online model selection tasks. In addition to the promising empirical performance, we also provided rigorous theoretical guarantees on the regret and query complexity for both stochastic and adversarial settings. %
We hope our work can inspire future works
to handle more complex real-world model selection tasks (e.g. beyond classification or non-uniform loss functions, etc. where our analysis does not readily apply).

\clearpage
\subsubsection*{Acknowledgements}
This work is supported in part by the RadBio-AI project (DE-AC02-06CH11357), U.S. Department of Energy Office of Science, Office of Biological and Environment Research, the IMPROVE project under contract (75N91019F00134, 75N91019D00024, 89233218CNA000001, DE-AC02-06-CH11357, DE-AC52-07NA27344, DE-AC05-00OR22725), the Laboratory Directed Research and Development (LDRD) funding from Argonne National Laboratory provided by the Director, Office of Science, of the U.S.\ Department of Energy under Contract No. DE-AC02-06CH11357, the Exascale Computing Project (17-SC-20-SC), a collaborative effort of the U.S.\ Department of Energy Office of Science and the National Nuclear Security Administration,  the University of Chicago Joint Task Force Initiative, the AI-Assisted Hybrid Renewable Energy, Nutrient, and Water Recovery project (DOE DE-EE0009505), and the National Science Foundation under Grant No. IIS 2313131 and IIS 2332475. %

\bibliographystyle{plainnat}
\bibliography{reference}

\begin{thebibliography}{77}
\providecommand{\natexlab}[1]{#1}
\providecommand{\url}[1]{\texttt{#1}}
\expandafter\ifx\csname urlstyle\endcsname\relax
  \providecommand{\doi}[1]{doi: #1}\else
  \providecommand{\doi}{doi: \begingroup \urlstyle{rm}\Url}\fi

\bibitem[Abd-Ellah et~al.(2018)Abd-Ellah, Awad, Khalaf, and Hamed]{abd2018two}
Mahmoud~Khaled Abd-Ellah, Ali~Ismail Awad, Ashraf~AM Khalaf, and Hesham~FA Hamed.
\newblock Two-phase multi-model automatic brain tumour diagnosis system from magnetic resonance images using convolutional neural networks.
\newblock \emph{EURASIP Journal on Image and Video Processing}, 2018\penalty0 (1):\penalty0 1--10, 2018.

\bibitem[Achiam et~al.(2023)Achiam, Adler, Agarwal, Ahmad, Akkaya, Aleman, Almeida, Altenschmidt, Altman, Anadkat, et~al.]{achiam2023gpt}
Josh Achiam, Steven Adler, Sandhini Agarwal, Lama Ahmad, Ilge Akkaya, Florencia~Leoni Aleman, Diogo Almeida, Janko Altenschmidt, Sam Altman, Shyamal Anadkat, et~al.
\newblock Gpt-4 technical report.
\newblock \emph{arXiv preprint arXiv:2303.08774}, 2023.

\bibitem[Aggarwal et~al.(2021)Aggarwal, Sounderajah, Martin, Ting, Karthikesalingam, King, Ashrafian, and Darzi]{aggarwal2021diagnostic}
Ravi Aggarwal, Viknesh Sounderajah, Guy Martin, Daniel~SW Ting, Alan Karthikesalingam, Dominic King, Hutan Ashrafian, and Ara Darzi.
\newblock Diagnostic accuracy of deep learning in medical imaging: a systematic review and meta-analysis.
\newblock \emph{NPJ digital medicine}, 4\penalty0 (1):\penalty0 65, 2021.

\bibitem[Ali et~al.(2014)Ali, Caruana, and Kapoor]{ali2014active}
Alnur Ali, Rich Caruana, and Ashish Kapoor.
\newblock Active learning with model selection.
\newblock In \emph{Proceedings of the AAAI conference on artificial intelligence}, volume~28, 2014.

\bibitem[Asuncion and Newman(2007)]{asuncion2007uci}
Arthur Asuncion and David Newman.
\newblock Uci machine learning repository, 2007.

\bibitem[Auer et~al.(2002{\natexlab{a}})Auer, Cesa-Bianchi, and Fischer]{auer2002finite}
Peter Auer, Nicolo Cesa-Bianchi, and Paul Fischer.
\newblock Finite-time analysis of the multiarmed bandit problem.
\newblock \emph{Machine learning}, 47\penalty0 (2):\penalty0 235--256, 2002{\natexlab{a}}.

\bibitem[Auer et~al.(2002{\natexlab{b}})Auer, Cesa-Bianchi, Freund, and Schapire]{auer2002nonstochastic}
Peter Auer, Nicolo Cesa-Bianchi, Yoav Freund, and Robert~E Schapire.
\newblock The nonstochastic multiarmed bandit problem.
\newblock \emph{SIAM journal on computing}, 32\penalty0 (1):\penalty0 48--77, 2002{\natexlab{b}}.

\bibitem[Beygelzimer et~al.(2009)Beygelzimer, Dasgupta, and Langford]{beygelzimer2009importance}
Alina Beygelzimer, Sanjoy Dasgupta, and John Langford.
\newblock Importance weighted active learning.
\newblock In \emph{Proceedings of the 26th annual international conference on machine learning}, pages 49--56, 2009.

\bibitem[Beygelzimer et~al.(2011)Beygelzimer, Langford, Li, Reyzin, and Schapire]{beygelzimer2011contextual}
Alina Beygelzimer, John Langford, Lihong Li, Lev Reyzin, and Robert Schapire.
\newblock Contextual bandit algorithms with supervised learning guarantees.
\newblock In \emph{Proceedings of the Fourteenth International Conference on Artificial Intelligence and Statistics}, pages 19--26. JMLR Workshop and Conference Proceedings, 2011.

\bibitem[Breiman(2001)]{breiman2001random}
Leo Breiman.
\newblock Random forests.
\newblock \emph{Machine learning}, 45\penalty0 (1):\penalty0 5--32, 2001.

\bibitem[Brown et~al.(2020)Brown, Mann, Ryder, Subbiah, Kaplan, Dhariwal, Neelakantan, Shyam, Sastry, Askell, et~al.]{brown2020language}
Tom Brown, Benjamin Mann, Nick Ryder, Melanie Subbiah, Jared~D Kaplan, Prafulla Dhariwal, Arvind Neelakantan, Pranav Shyam, Girish Sastry, Amanda Askell, et~al.
\newblock Language models are few-shot learners.
\newblock \emph{Advances in neural information processing systems}, 33:\penalty0 1877--1901, 2020.

\bibitem[Bubeck et~al.(2012)Bubeck, Cesa-Bianchi, et~al.]{bubeck2012regret}
S{\'e}bastien Bubeck, Nicolo Cesa-Bianchi, et~al.
\newblock Regret analysis of stochastic and nonstochastic multi-armed bandit problems.
\newblock \emph{Foundations and Trends{\textregistered} in Machine Learning}, 5\penalty0 (1):\penalty0 1--122, 2012.

\bibitem[Burtini et~al.(2015)Burtini, Loeppky, and Lawrence]{burtini2015survey}
Giuseppe Burtini, Jason Loeppky, and Ramon Lawrence.
\newblock A survey of online experiment design with the stochastic multi-armed bandit.
\newblock \emph{arXiv preprint arXiv:1510.00757}, 2015.

\bibitem[Cesa-Bianchi and Lugosi(2006)]{cesa2006prediction}
Nicolo Cesa-Bianchi and G{\'a}bor Lugosi.
\newblock \emph{Prediction, learning, and games}.
\newblock Cambridge university press, 2006.

\bibitem[Cesa-Bianchi et~al.(1997)Cesa-Bianchi, Freund, Haussler, Helmbold, Schapire, and Warmuth]{cesa1997use}
Nicolo Cesa-Bianchi, Yoav Freund, David Haussler, David~P Helmbold, Robert~E Schapire, and Manfred~K Warmuth.
\newblock How to use expert advice.
\newblock \emph{Journal of the ACM (JACM)}, 44\penalty0 (3):\penalty0 427--485, 1997.

\bibitem[Cesa-Bianchi et~al.(2005)Cesa-Bianchi, Lugosi, and Stoltz]{cesa2005minimizing}
Nicolo Cesa-Bianchi, G{\'a}bor Lugosi, and Gilles Stoltz.
\newblock Minimizing regret with label efficient prediction.
\newblock \emph{IEEE Transactions on Information Theory}, 51\penalty0 (6):\penalty0 2152--2162, 2005.

\bibitem[Clyde et~al.(2023)Clyde, Liu, Brettin, Yoo, Partin, Babuji, Blaiszik, Mohd-Yusof, Merzky, Turilli, et~al.]{clyde2023ai}
Austin Clyde, Xuefeng Liu, Thomas Brettin, Hyunseung Yoo, Alexander Partin, Yadu Babuji, Ben Blaiszik, Jamaludin Mohd-Yusof, Andre Merzky, Matteo Turilli, et~al.
\newblock Ai-accelerated protein-ligand docking for sars-cov-2 is 100-fold faster with no significant change in detection.
\newblock \emph{Scientific Reports}, 13\penalty0 (1):\penalty0 2105, 2023.

\bibitem[Cortes and Vapnik(1995)]{cortes1995support}
Corinna Cortes and Vladimir Vapnik.
\newblock Support-vector networks.
\newblock \emph{Machine learning}, 20\penalty0 (3):\penalty0 273--297, 1995.

\bibitem[Cover and Hart(1967)]{cover1967nearest}
Thomas Cover and Peter Hart.
\newblock Nearest neighbor pattern classification.
\newblock \emph{IEEE transactions on information theory}, 13\penalty0 (1):\penalty0 21--27, 1967.

\bibitem[Cramer(2002)]{cramer2002origins}
Jan~Salomon Cramer.
\newblock The origins of logistic regression.
\newblock 2002.

\bibitem[Cutkosky et~al.(2021)Cutkosky, Dann, Das, Gentile, Pacchiano, and Purohit]{cutkosky2021dynamic}
Ashok Cutkosky, Christoph Dann, Abhimanyu Das, Claudio Gentile, Aldo Pacchiano, and Manish Purohit.
\newblock Dynamic balancing for model selection in bandits and rl.
\newblock In \emph{International Conference on Machine Learning}, pages 2276--2285. PMLR, 2021.

\bibitem[Dagan and Engelson(1995)]{dagan1995committee}
Ido Dagan and Sean~P Engelson.
\newblock Committee-based sampling for training probabilistic classifiers.
\newblock In \emph{Machine Learning Proceedings 1995}, pages 150--157. Elsevier, 1995.

\bibitem[Du et~al.(2016)Du, Li, Lu, and Xiao]{du2016overview}
Jiao Du, Weisheng Li, Ke~Lu, and Bin Xiao.
\newblock An overview of multi-modal medical image fusion.
\newblock \emph{Neurocomputing}, 215:\penalty0 3--20, 2016.

\bibitem[Dua and Graff(2017)]{Dua:2019}
Dheeru Dua and Casey Graff.
\newblock {UCI} machine learning repository, 2017.
\newblock URL \url{http://archive.ics.uci.edu/ml}.

\bibitem[Durant et~al.(2002)Durant, Leland, Henry, and Nourse]{durant2002reoptimization}
Joseph~L Durant, Burton~A Leland, Douglas~R Henry, and James~G Nourse.
\newblock Reoptimization of mdl keys for use in drug discovery.
\newblock \emph{Journal of chemical information and computer sciences}, 42\penalty0 (6):\penalty0 1273--1280, 2002.

\bibitem[Fisher(1938)]{fisher1938statistical}
Ronald~A Fisher.
\newblock The statistical utilization of multiple measurements.
\newblock \emph{Annals of eugenics}, 8\penalty0 (4):\penalty0 376--386, 1938.

\bibitem[Freund and Schapire(1997)]{freund1997decision}
Yoav Freund and Robert~E Schapire.
\newblock A decision-theoretic generalization of on-line learning and an application to boosting.
\newblock \emph{Journal of computer and system sciences}, 55\penalty0 (1):\penalty0 119--139, 1997.

\bibitem[Freund et~al.(1999)Freund, Schapire, and Abe]{freund1999short}
Yoav Freund, Robert Schapire, and Naoki Abe.
\newblock A short introduction to boosting.
\newblock \emph{Journal-Japanese Society For Artificial Intelligence}, 14\penalty0 (771-780):\penalty0 1612, 1999.

\bibitem[Gardner et~al.(2015)Gardner, Malkomes, Garnett, Weinberger, Barbour, and Cunningham]{gardner2015bayesian}
Jacob~R Gardner, Gustavo Malkomes, Roman Garnett, Kilian~Q Weinberger, Dennis Barbour, and John~P Cunningham.
\newblock Bayesian active model selection with an application to automated audiometry.
\newblock In \emph{Proceedings of the 28th International Conference on Neural Information Processing Systems-Volume 2}, pages 2386--2394, 2015.

\bibitem[Geurts et~al.(2006)Geurts, Ernst, and Wehenkel]{geurts2006extremely}
Pierre Geurts, Damien Ernst, and Louis Wehenkel.
\newblock Extremely randomized trees.
\newblock \emph{Machine learning}, 63\penalty0 (1):\penalty0 3--42, 2006.

\bibitem[Hand and Yu(2001)]{hand2001idiot}
David~J Hand and Keming Yu.
\newblock Idiot's bayes—not so stupid after all?
\newblock \emph{International statistical review}, 69\penalty0 (3):\penalty0 385--398, 2001.

\bibitem[Hansen et~al.(2015)Hansen, Biegler, Ramakrishnan, Pronobis, Von~Lilienfeld, Muller, and Tkatchenko]{hansen2015machine}
Katja Hansen, Franziska Biegler, Raghunathan Ramakrishnan, Wiktor Pronobis, O~Anatole Von~Lilienfeld, Klaus-Robert Muller, and Alexandre Tkatchenko.
\newblock Machine learning predictions of molecular properties: Accurate many-body potentials and nonlocality in chemical space.
\newblock \emph{The journal of physical chemistry letters}, 6\penalty0 (12):\penalty0 2326--2331, 2015.

\bibitem[Hazan(2019)]{hazan2019introduction}
Elad Hazan.
\newblock Introduction to online convex optimization.
\newblock \emph{arXiv preprint arXiv:1909.05207}, 2019.

\bibitem[He et~al.(2016)He, Zhang, Ren, and Sun]{he2016deep}
Kaiming He, Xiangyu Zhang, Shaoqing Ren, and Jian Sun.
\newblock Deep residual learning for image recognition.
\newblock In \emph{Proceedings of the IEEE conference on computer vision and pattern recognition}, pages 770--778, 2016.

\bibitem[Ho and Wechsler(2008)]{ho2008query}
Shen-Shyang Ho and Harry Wechsler.
\newblock Query by transduction.
\newblock \emph{IEEE transactions on pattern analysis and machine intelligence}, 30\penalty0 (9):\penalty0 1557--1571, 2008.

\bibitem[Hoi et~al.(2021)Hoi, Sahoo, Lu, and Zhao]{hoi2021online}
Steven~CH Hoi, Doyen Sahoo, Jing Lu, and Peilin Zhao.
\newblock Online learning: A comprehensive survey.
\newblock \emph{Neurocomputing}, 459:\penalty0 249--289, 2021.

\bibitem[Howard et~al.(2017)Howard, Zhu, Chen, Kalenichenko, Wang, Weyand, Andreetto, and Adam]{howard2017mobilenets}
Andrew~G Howard, Menglong Zhu, Bo~Chen, Dmitry Kalenichenko, Weijun Wang, Tobias Weyand, Marco Andreetto, and Hartwig Adam.
\newblock Mobilenets: Efficient convolutional neural networks for mobile vision applications.
\newblock \emph{arXiv preprint arXiv:1704.04861}, 2017.

\bibitem[Huang et~al.(2017)Huang, Liu, Van Der~Maaten, and Weinberger]{huang2017densely}
Gao Huang, Zhuang Liu, Laurens Van Der~Maaten, and Kilian~Q Weinberger.
\newblock Densely connected convolutional networks.
\newblock In \emph{Proceedings of the IEEE conference on computer vision and pattern recognition}, pages 4700--4708, 2017.

\bibitem[Karimi et~al.(2021)Karimi, G{\"u}rel, Karla{\v{s}}, Rausch, Zhang, and Krause]{karimi2021online}
Mohammad~Reza Karimi, Nezihe~Merve G{\"u}rel, Bojan Karla{\v{s}}, Johannes Rausch, Ce~Zhang, and Andreas Krause.
\newblock Online active model selection for pre-trained classifiers.
\newblock In \emph{International Conference on Artificial Intelligence and Statistics}, pages 307--315. PMLR, 2021.

\bibitem[Kipf and Welling(2016)]{kipf2016semi}
Thomas~N Kipf and Max Welling.
\newblock Semi-supervised classification with graph convolutional networks.
\newblock \emph{arXiv preprint arXiv:1609.02907}, 2016.

\bibitem[Krizhevsky et~al.(2009)Krizhevsky, Hinton, et~al.]{krizhevsky2009learning}
Alex Krizhevsky, Geoffrey Hinton, et~al.
\newblock Learning multiple layers of features from tiny images.
\newblock 2009.

\bibitem[Lattimore and Szepesv{\'a}ri(2020)]{lattimore2020bandit}
Tor Lattimore and Csaba Szepesv{\'a}ri.
\newblock \emph{Bandit algorithms}.
\newblock Cambridge University Press, 2020.

\bibitem[Leite and Brazdil(2010)]{leite2010active}
Rui Leite and Pavel Brazdil.
\newblock Active testing strategy to predict the best classification algorithm via sampling and metalearning.
\newblock In \emph{ECAI}, pages 309--314, 2010.

\bibitem[Li et~al.(2020)Li, Dasarathy, Natesan~Ramamurthy, and Berisha]{li2020finding}
Weizhi Li, Gautam Dasarathy, Karthikeyan Natesan~Ramamurthy, and Visar Berisha.
\newblock Finding the homology of decision boundaries with active learning.
\newblock \emph{Advances in Neural Information Processing Systems}, 33:\penalty0 8355--8365, 2020.

\bibitem[Littlestone and Warmuth(1994)]{littlestone1994weighted}
Nick Littlestone and Manfred~K Warmuth.
\newblock The weighted majority algorithm.
\newblock \emph{Information and computation}, 108\penalty0 (2):\penalty0 212--261, 1994.

\bibitem[Liu et~al.(2024)Liu, Tien, Ding, Jiang, and Stevens]{liuentropy}
Xuefeng Liu, Chih-chan Tien, Peng Ding, Songhao Jiang, and Rick~L Stevens.
\newblock Entropy-reinforced planning with large language models for drug discovery.
\newblock In \emph{Forty-first International Conference on Machine Learning}, 2024.

\bibitem[Liu et~al.(2014)Liu, Li, Huang, Xu, Yang, Chen, Chen, Shi, and Zhu]{liu2014multi}
Ying-tao Liu, Yi~Li, Zi-fu Huang, Zhi-jian Xu, Zhuo Yang, Zhu-xi Chen, Kai-xian Chen, Ji-ye Shi, and Wei-liang Zhu.
\newblock Multi-algorithm and multi-model based drug target prediction and web server.
\newblock \emph{Acta Pharmacologica Sinica}, 35\penalty0 (3):\penalty0 419--431, 2014.

\bibitem[Loy et~al.(2012)Loy, Hospedales, Xiang, and Gong]{loy2012stream}
Chen~Change Loy, Timothy~M Hospedales, Tao Xiang, and Shaogang Gong.
\newblock Stream-based joint exploration-exploitation active learning.
\newblock In \emph{2012 IEEE Conference on Computer Vision and Pattern Recognition}, pages 1560--1567. IEEE, 2012.

\bibitem[Luo et~al.(2020)Luo, Chen, Cheng, Dong, He, Feng, and Li]{luo2020metaselector}
Mi~Luo, Fei Chen, Pengxiang Cheng, Zhenhua Dong, Xiuqiang He, Jiashi Feng, and Zhenguo Li.
\newblock Metaselector: Meta-learning for recommendation with user-level adaptive model selection.
\newblock In \emph{Proceedings of The Web Conference 2020}, pages 2507--2513, 2020.

\bibitem[Madani et~al.(2012)Madani, Lizotte, and Greiner]{madani2012active}
Omid Madani, Daniel~J Lizotte, and Russell Greiner.
\newblock Active model selection.
\newblock \emph{arXiv preprint arXiv:1207.4138}, 2012.

\bibitem[Mourtada and Ga{\"\i}ffas(2019)]{mourtada2019optimality}
Jaouad Mourtada and St{\'e}phane Ga{\"\i}ffas.
\newblock On the optimality of the hedge algorithm in the stochastic regime.
\newblock \emph{Journal of Machine Learning Research}, 20:\penalty0 1--28, 2019.

\bibitem[Musavi et~al.(1992)Musavi, Ahmed, Chan, Faris, and Hummels]{musavi1992training}
Mohamad~T Musavi, Wahid Ahmed, Khue~Hiang Chan, Kathleen~B Faris, and Donald~M Hummels.
\newblock On the training of radial basis function classifiers.
\newblock \emph{Neural networks}, 5\penalty0 (4):\penalty0 595--603, 1992.

\bibitem[Neu(2015)]{neu2015explore}
Gergely Neu.
\newblock Explore no more: Improved high-probability regret bounds for non-stochastic bandits.
\newblock \emph{arXiv preprint arXiv:1506.03271}, 2015.

\bibitem[Ouyang et~al.(2022)Ouyang, Wu, Jiang, Almeida, Wainwright, Mishkin, Zhang, Agarwal, Slama, Ray, et~al.]{ouyang2022training}
Long Ouyang, Jeffrey Wu, Xu~Jiang, Diogo Almeida, Carroll Wainwright, Pamela Mishkin, Chong Zhang, Sandhini Agarwal, Katarina Slama, Alex Ray, et~al.
\newblock Training language models to follow instructions with human feedback.
\newblock \emph{Advances in Neural Information Processing Systems}, 35:\penalty0 27730--27744, 2022.

\bibitem[Oza and Russell(2001)]{oza2001online}
Nikunj~C Oza and Stuart~J Russell.
\newblock Online bagging and boosting.
\newblock In \emph{International Workshop on Artificial Intelligence and Statistics}, pages 229--236. PMLR, 2001.

\bibitem[Quinlan(1986)]{quinlan1986induction}
J.~Ross Quinlan.
\newblock Induction of decision trees.
\newblock \emph{Machine learning}, 1\penalty0 (1):\penalty0 81--106, 1986.

\bibitem[Rasmussen(2003)]{rasmussen2003gaussian}
Carl~Edward Rasmussen.
\newblock Gaussian processes in machine learning.
\newblock In \emph{Summer school on machine learning}, pages 63--71. Springer, 2003.

\bibitem[Rifkin and Lippert(2007)]{rifkin2007notes}
Ryan~M Rifkin and Ross~A Lippert.
\newblock Notes on regularized least squares.
\newblock 2007.

\bibitem[Rogers and Hahn(2010)]{rogers2010extended}
David Rogers and Mathew Hahn.
\newblock Extended-connectivity fingerprints.
\newblock \emph{Journal of chemical information and modeling}, 50\penalty0 (5):\penalty0 742--754, 2010.

\bibitem[Santana et~al.(2020)Santana, Melo, Camargo, Brand{\~a}o, Soares, Oliveira, and Caetano]{santana2020contextual}
Marlesson~RO Santana, Luckeciano~C Melo, Fernando~HF Camargo, Bruno Brand{\~a}o, Anderson Soares, Renan~M Oliveira, and Sandor Caetano.
\newblock Contextual meta-bandit for recommender systems selection.
\newblock In \emph{Fourteenth ACM Conference on Recommender Systems}, pages 444--449, 2020.

\bibitem[Sawade et~al.(2010)Sawade, Landwehr, Bickel, and Scheffer]{sawade2010active}
Christoph Sawade, Niels Landwehr, Steffen Bickel, and Tobias Scheffer.
\newblock Active risk estimation.
\newblock In \emph{ICML}, 2010.

\bibitem[Sawade et~al.(2012)Sawade, Landwehr, and Scheffer]{sawade2012}
Christoph Sawade, Niels Landwehr, and Tobias Scheffer.
\newblock Active comparison of prediction models.
\newblock In F.~Pereira, C.~J.~C. Burges, L.~Bottou, and K.~Q. Weinberger, editors, \emph{Advances in Neural Information Processing Systems}, volume~25. Curran Associates, Inc., 2012.

\bibitem[Schneider et~al.(2017)Schneider, Piewak, Stiller, and Franke]{schneider2017regnet}
Nick Schneider, Florian Piewak, Christoph Stiller, and Uwe Franke.
\newblock Regnet: Multimodal sensor registration using deep neural networks.
\newblock In \emph{2017 IEEE intelligent vehicles symposium (IV)}, pages 1803--1810. IEEE, 2017.

\bibitem[Seldin and Lugosi(2016)]{seldin2016lower}
Yevgeny Seldin and G{\'a}bor Lugosi.
\newblock A lower bound for multi-armed bandits with expert advice.
\newblock In \emph{13th European Workshop on Reinforcement Learning (EWRL)}, 2016.

\bibitem[Settles(2009)]{settles2009active}
Burr Settles.
\newblock Active learning literature survey.
\newblock 2009.

\bibitem[Seung et~al.(1992)Seung, Opper, and Sompolinsky]{seung1992query}
H~Sebastian Seung, Manfred Opper, and Haim Sompolinsky.
\newblock Query by committee.
\newblock In \emph{Proceedings of the fifth annual workshop on Computational learning theory}, pages 287--294, 1992.

\bibitem[Shukla et~al.(2019)Shukla, Kolbeinsson, Marla, and Yellepeddi]{shukla2019adaptive}
Naman Shukla, Arinbj{\"o}rn Kolbeinsson, Lavanya Marla, and Kartik Yellepeddi.
\newblock Adaptive model selection framework: An application to airline pricing.
\newblock \emph{arXiv preprint arXiv:1905.08874}, 2019.

\bibitem[Simonyan and Zisserman(2014)]{simonyan2014very}
Karen Simonyan and Andrew Zisserman.
\newblock Very deep convolutional networks for large-scale image recognition.
\newblock \emph{arXiv preprint arXiv:1409.1556}, 2014.

\bibitem[Sugiyama and Rubens(2008)]{sugiyama2008batch}
Masashi Sugiyama and Neil Rubens.
\newblock A batch ensemble approach to active learning with model selection.
\newblock \emph{Neural Networks}, 21\penalty0 (9):\penalty0 1278--1286, 2008.

\bibitem[Szegedy et~al.(2015)Szegedy, Liu, Jia, Sermanet, Reed, Anguelov, Erhan, Vanhoucke, and Rabinovich]{szegedy2015going}
Christian Szegedy, Wei Liu, Yangqing Jia, Pierre Sermanet, Scott Reed, Dragomir Anguelov, Dumitru Erhan, Vincent Vanhoucke, and Andrew Rabinovich.
\newblock Going deeper with convolutions.
\newblock In \emph{Proceedings of the IEEE conference on computer vision and pattern recognition}, pages 1--9, 2015.

\bibitem[Tan and Le(2019)]{tan2019efficientnet}
Mingxing Tan and Quoc Le.
\newblock Efficientnet: Rethinking model scaling for convolutional neural networks.
\newblock In \emph{International conference on machine learning}, pages 6105--6114. PMLR, 2019.

\bibitem[Van Der~Pas and Gr{\"u}nwald(2018)]{van2018almost}
St{\'e}phanie Van Der~Pas and Peter Gr{\"u}nwald.
\newblock Almost the best of three worlds: Risk, consistency and optional stopping for the switch criterion in nested model selection.
\newblock \emph{Statistica Sinica}, pages 229--253, 2018.

\bibitem[Veli{\v{c}}kovi{\'c} et~al.(2017)Veli{\v{c}}kovi{\'c}, Cucurull, Casanova, Romero, Lio, and Bengio]{velivckovic2017graph}
Petar Veli{\v{c}}kovi{\'c}, Guillem Cucurull, Arantxa Casanova, Adriana Romero, Pietro Lio, and Yoshua Bengio.
\newblock Graph attention networks.
\newblock \emph{arXiv preprint arXiv:1710.10903}, 2017.

\bibitem[Vergara et~al.(2012)Vergara, Vembu, Ayhan, Ryan, Homer, and Huerta]{vergara2012chemical}
Alexander Vergara, Shankar Vembu, Tuba Ayhan, Margaret~A Ryan, Margie~L Homer, and Ram{\'o}n Huerta.
\newblock Chemical gas sensor drift compensation using classifier ensembles.
\newblock \emph{Sensors and Actuators B: Chemical}, 166:\penalty0 320--329, 2012.

\bibitem[Wu et~al.(2018)Wu, Ramsundar, Feinberg, Gomes, Geniesse, Pappu, Leswing, and Pande]{wu2018moleculenet}
Zhenqin Wu, Bharath Ramsundar, Evan~N Feinberg, Joseph Gomes, Caleb Geniesse, Aneesh~S Pappu, Karl Leswing, and Vijay Pande.
\newblock Moleculenet: a benchmark for molecular machine learning.
\newblock \emph{Chemical science}, 9\penalty0 (2):\penalty0 513--530, 2018.

\bibitem[Xiong et~al.(2019)Xiong, Wang, Liu, Zhong, Wan, Li, Li, Luo, Chen, Jiang, et~al.]{xiong2019pushing}
Zhaoping Xiong, Dingyan Wang, Xiaohong Liu, Feisheng Zhong, Xiaozhe Wan, Xutong Li, Zhaojun Li, Xiaomin Luo, Kaixian Chen, Hualiang Jiang, et~al.
\newblock Pushing the boundaries of molecular representation for drug discovery with the graph attention mechanism.
\newblock \emph{Journal of medicinal chemistry}, 63\penalty0 (16):\penalty0 8749--8760, 2019.

\bibitem[Zhang and Chaudhuri(2014)]{zhang2014beyond}
Chicheng Zhang and Kamalika Chaudhuri.
\newblock Beyond disagreement-based agnostic active learning.
\newblock \emph{Advances in Neural Information Processing Systems}, 27:\penalty0 442--450, 2014.

\end{thebibliography}
\iftoggle{longversion}{
    \clearpage
    \onecolumn
    \appendix

\section{Impact Statements}
This paper introduces a novel framework for adaptive model selection in label-efficient learning. By integrating robust online learning with active query strategies, our algorithm effectively adapts to varying data contexts and minimizes labeling efforts, crucial in domains requiring swift and accurate decisions, such as disease identification and financial predictions. Ethically, the framework's design promotes efficient and context-aware model selection, reducing potential biases associated with context-ignorant model selections. No major ethical concerns are anticipated, given the algorithm's generality and focus on solving practical problems.

\section{Table of Notations Defined in the Main Paper}\label{app:notations}

\begin{table*}[h]
\scalebox{0.67}{
\begin{tabular}{l l}
\toprule
\textbf{notation} & \textbf{meaning} \\
\midrule
&\textbf{\quad\quad\quad\quad\quad Problem Statement}\\
\domInstance & input domain \\
\instance & input instance, $\instance \in \domInstance$\\
$t$, $T$ & index of a round, total number of rounds \\
$\policyIndex$, $\modelIndex$ & index of policies, models/classifiers\\
$\PoliciesNum$  & number of policies \\
$\ModelsNum$  &  number of classifiers \\
\domClabel & \curlybracket{0, \dots, \numClabel-1}, set of $\numClabel$ possible class labels for each input instance\\
$\numClabel$ & number of class labels, { $|\domClabel|$}  \\
$\Delta^{\ModelsNum-1}$ & $\ModelsNum$-dimensional probability simplex $\curlybracket{\ModelsDistB \in \reals^\ModelsNum: |\ModelsDistB|=1, \ModelsDistB \geq 0}$\\
\model  & single pre-trained classifier (model) \\
\Models  &  \curlybracket{\model_1, \dots, \model_k}, set of $\ModelsNum$ pre-trained classifiers over $\domInstance\times \domClabel$\\
\policy, \policy\paren{\instance} &   model selection policy (expert) $\policy: \domInstance \rightarrow \Delta^{\ModelsNum-1}$, probability of selecting each classifier under input $\instance$\\
$\policy^{\text{const}}$ &  $\policy^{\text{const}}_j (\cdot) := \mathbi{e}_j$, $\mathbi{e}_j \in \Delta^{\ModelsNum-1}$ denotes the canonical basis vector with $e_j=1$\\
\Policies &  collection of model selection policies \\
$\Policies^*$ & $\Policies \cup \{\pi^{\text{const}}_1, \dots, \pi^{\text{const}}_k\}$, extended policy set including constant policies that always suggest a fixed model\\
|\Policies|, |$\Policies^*$| & \PoliciesNum, $|\Policies^*|\leq \paren{\PoliciesNum+\ModelsNum}$\\
$\policy^*$ & $\policy^*\in \Policies^*$, best policy\\

$\pd_{t,j}$ & $\model_j\paren{\instance_t}$, predicted label for $j_{\text{th}}$ pre-trained classifier at round $t$\\
$\clabel_t$ & true label of $\instance_t$\\

$\pdB_t$ & $[\pd_{t,1}, \dots, \pd_{t,\ModelsNum}]^\top$, predicted labels by all $\ModelsNum$ models at round t\\

$\loss_{t,j}$ & $\mathbb{I}_{\curlybracket{\pd_{t,j}\neq{\clabel_t}}}$,  0-1 loss for model $j\in [k]$ at round $t$\\
$\lossB_{t}$ & $\mathbb{I}_{\curlybracket{\pdB_t\neq{\clabel_t}}}$, full loss vector upon observing $\clabel_t$\\
\Learner & the learner\\
${L}^\Learner_T$ & $\sum_{t=1}^T{\loss}_{t,j_t}$, cumulative loss over $T$ rounds for a learning algorithm \Learner\\
$\tilde{\loss}_{t,i}$ & $ \langle \policy_i\paren{\instance_t}, {\lossB}_{t} \rangle$, %
expected loss if the learner commits to policy $\policy_i$ and take random selection at round $t$\\
$\maxind{\ModelsDistB}$ & $\argmax_{j, \ModelsDist_{j}\in \ModelsDistB } \ModelsDist_{j}$, index of maximal value entry of $\ModelsDistB$\\
$\mu$ & $\frac1T\sum_{t=1}^T\expctover{\instance_t,\clabel_t}{\hat{\loss}_{t,\maxind{\policy_i(\instance_t)}}}$\\
$\mathcal{R}_T\paren{\Learner}$, $\overline{\mathcal{R}}_T\paren{\Learner}$ & expected regret in adversarial setting, pseudo-regret for stochastic setting\\
$\expectation_t[\cdot]$ &  $\expectation[\cdot|\mathcal{F}_{t}]$, $\mathcal{F}_{t}=\sigma\paren{\adviceMatrix^{\paren{1}},\pdB_1,...,\pdB_{t-1},\adviceMatrix^{\paren{t}}}$ \\

\hline
&\textbf{\quad\quad\quad\quad\quad Algorithm}\\
$\PoliciesDistB_t$ & $(\PoliciesDist_{t,i})_{i\in |\Policies^*|}$, probability distribution over $\Policies^*$ at $t$\\
$\tilde{\Loss}_{t,i}$& $\sum_{\tau=1}^{t} \tilde{\loss}_{\tau,i}$, cumulative loss of policy $i$\\
$\ModelsDistB_t$ & $\sum_{i \in |\Policies^*|} \PoliciesDist_{t,i} \policy_i(\instance_t)$, distribution induced by the weighted policy \\
$\bar\loss_t^{\clabel}$&$\langle \ModelsDistB_t, \mathbb{I}\curlybracket{\pdB_t \neq \clabel} \rangle$, expected loss if the true label is $\clabel$\\
$\entropyF\paren{\pdB_t,\ModelsDistB_t}$ &  model disagreement function\\
$h(x)$&$-x\log x$\\
$\delta_0^t$& $\frac{1}{\sqrt{t}}$, lower bound of query probability\\
$\queryProb_t$&$ \max{\curlybracket{\delta_0^t,\entropyF\paren{\pdB_t,\ModelsDistB_t}}}$, adaptive query probability\\
$\hat{\loss}_{t,j}$&$\frac{\loss_{t,j}}{\queryProb_t}\QueryIndicator_t$\\
$\QueryIndicator$ &  query indicator\\
$\eta_t$ & adaptive learning rate\\
$\minpgap{t}$ & $  {1 - \max_{\tau\in [t{-1}]} \langle \ModelsDistB_{{\tau}}, \mathbb{I}\curlybracket{\pdB_{{\tau}} = \fix{\clabel_{\tau}}} \rangle}$ \\
\budget & query budget\\
$\widehat{\lossB}$, $\paren{\widehat{\loss}_{t,i}}_{i \in{[\ModelsNum]}}$ & unbiased estimate of classifier loss vector\\
$\widetilde{\lossB}$, $\paren{\widetilde{\loss}_{t,i}}_{i \in{[\PoliciesNum]}}$ &  unbiased estimate of policy loss vector\\
$\widehat{\LossB}$%
& unbiased cumulative loss of classifiers, policies\\

\hline
&\textbf{\quad\quad\quad\quad\quad Analysis}\\
$p_{t,y} $ & $\sum_{j \in [k]} \mathbb{I}\curlybracket{\pd_{t,j} = y} \ModelsDist_j$, the total probability of classifiers predicts label $y$ at round $t$\\
\\
${\Delta}$ & $\min_{i\neq i^*}{\Delta}_i=\min_{i\neq i^*} (\mu_i - \mu_{i^*})$, sub-optimality gap\\
$\gamma$& $ \min_{\instance_t}\left\{\max_{\ModelsDist_j \in \ModelsDistB_{i^*}^t} \ModelsDist_j  -\max_{\ModelsDist_j \in \ModelsDistB_{i^*}^t,j\neq\maxind{\ModelsDistB_{i^*}^t}} \ModelsDist_j  \right\}$, sub-optimality model probability gap of $\policy_{i^*}$\\

${\Delta}_i$&$\expectation[\widetilde{\loss}_{.,i}-\widetilde{\loss}_{.,i^*}]$, sub-optimality gap or immediate regret of policy $i$\\
$\Loss_{T,*}$ & the cumulative loss of oracle at round T\\
\bottomrule
\end{tabular}
}
\caption{Notations used in the main paper}
\label{table:notation}
\end{table*}

\clearpage

    \section{Summary of Regret and Query Complexity Bounds}
We summarize the regret and query complexity bounds (if applicable) of related algorithms in \tabref{table:query_complexity_table}.

\begin{table*}[h!]
\centering
\scalebox{0.75}{
\begin{tabular}{l l l}
\toprule
\textbf{Algorithm} &  
\textbf{Regret} & 
\textbf{Query Complexity} 

\\
\midrule
\makecell[l]{{Exp3} \\ \citep{lattimore2020bandit}}& $2\sqrt{T\ModelsNum\log{\ModelsNum}}$  & --  \\
\hline
\makecell[l]{{Exp3.p} \\ \citep{bubeck2012regret}}& $5.15\sqrt{nT\log{\frac{n}{\delta}}}$ & -- \\
\hline
\makecell[l]{{Exp4} \\ \citep{lattimore2020bandit}} & $\sqrt{2T\ModelsNum\log{\PoliciesNum}}$ & -- \\
\hline 
\makecell[l]{{Exp4.p} \\ \citep{beygelzimer2011contextual}} & $6\sqrt{\ModelsNum T\ln{\frac{\PoliciesNum}{\delta}}}$& -- \\
\hline
\makecell[l]{${\text{Model Picker}}_{\text{stochastic}}$ \\ {\citep{karimi2021online}}}
 & \makecell[l]{ $62\max_{i} \Delta_i{\ModelsNum}/\paren{{\lambda}^2\log{\ModelsNum}}$ \\ $\lambda=\min_{j\in \bracket{\ModelsNum} \backslash  \curlybracket{i^*}}{\Delta^2_j}/{\theta_j}$}& $\sqrt{2T\log{\ModelsNum}}(1+4\frac{\numClabel}{\Delta})$  \\
\hline
\makecell[l]{${\text{Model Picker}}_{\text{adversarial}}$ \\  {\citep{karimi2021online}}} & $2\sqrt{2T\log{\ModelsNum}}$ & $5\sqrt{T\log{\ModelsNum}}+2\Loss_{T,*}$ \\
\hline
\makecell[l]{$\textbf{\algname}_{\stochastic}$ 
}& %
$\paren{\frac{\ln{\frac{|\Policies^*|\fix{-1}}{\gamma}}+\sqrt{\ln{|\Policies^*|}\cdot{2\fix{b^2}\ln{\frac{2}{\delta}}}}}{\sqrt{\ln{|\Policies^*|}}\Delta}}^2$
&$%
{\paren{{\paren{\frac{\ln{\frac{|\Policies^*|\fix{-1}}{\gamma}}+\sqrt{\ln{|\Policies^*|}\cdot{2\fix{b^2}\ln{\frac{2}{\delta}}}}}{\sqrt{\ln{|\Policies^*|}}\Delta}}^2}+T \mu_{i^*}}\frac{\ln {T}}{\numClabel\ln{\numClabel}}}
$ 
\\
\hline
\makecell[l]{$\textbf{\algname}_{{\adversarial}}$
} &$2\numClabel\sqrt{\ln \numClabel / {\max}\{\minpgap{T},{\sqrt{1/T}}\}} \cdot \sqrt{{T\log{{|\Policies^*|}}}}$ & %
${O}\paren{\paren{\sqrt{\frac{T\log{|\Policies^*|}}{\fixremoved{\max}\{\minpgap{T},\fixremoved{\sqrt{1/T}}\}}} + \tilde{\Loss}_{T,*}}\paren{\ln{{T}}}}$
\\ %
\bottomrule
\end{tabular}
}
\caption{Regret and query complexity bounds. For the notations in this table: $i^*$ is the model with the highest expected accuracy; $\theta_j=\Pr{\ell_{.,j}\neq\ell_{.,i^*}}$ is the probability that exactly one of $j$ and $i^*$ correctly classifies a sample; $\gamma$ and $\minpgap{T}$ are defined in Eq.~\eqref{eq:gamma_sto} and \eqref{eq:rho_sto}, respectively.
\fix{$b=p_{\min}\log_{c}{(1/p_{\min})}$, where $p_{\min} = \min_{s,i} \pi(\mathbf{x}_s)$ denotes the minimal model selection probability by any policy.}
}
\label{table:query_complexity_table}
\end{table*}

\begin{remark}
When $T\mu_{i^*},\tilde{\Loss}_{T,*}$ are regarded as constants (given by an oracle), %
the query-complexity bound is then sub-linear \textit{w.r.t.} $T$. 
\end{remark}

\begin{remark} 
Note that the number of class labels $\numClabel$ affects the quality of the query complexity bound. The intuition behind this result is, with larger number of classes, \emph{each query may carry more information upon observation}. For instance, in an extreme case where only one expert always recommends the best model and others gives random recommendations of models (and predicts random labels), having more classes lowers the chance of a model making the correct guess, and therefore helps to "filter out" those suboptimal experts in fewer rounds---hence being more query efficient.
\end{remark}

\begin{remark}
    {To prove the practical feasibility of CAMS, we have analyzed its time and space complexity. Our analysis shows that \algname has a time complexity of ${{O}\paren{Tnk}}%
    $ in total or $O(nk)$ %
    per round (due to the \textsc{Recommend} procedure under the stochastic setting), and a space complexity of ${{O}\paren{\paren{\PoliciesNum+\ModelsNum}\cdot \ModelsNum}}$. Here, $T$ refers to the online horizon, $\PoliciesNum$ denotes the number of policies, and $\ModelsNum$ denotes the number of models. Taking into account these complexities, we can confirm that CAMS is practically feasible.
    }
\end{remark}
    
\section{Supplemental Materials on Experimental Setup}\label{sec:supp:expsetup}

\subsection{Baselines}\label{app:baselines}

\paragraph{Model Picker (MP)}
Model Picker \citep{karimi2021online} is a context-free online active model selection method inspired by EXP3. Model Picker aims to find the best classifier in hindsight while making a small number of queries. For query strategy, it uses a variance-based active learning sampling method to select the most informative label to query to differentiate a pool of models, where the variance is defined as $v\paren{\pdB_t,\ModelsDistB_t}=\max_{\clabel\in \domClabel}\bar\loss_t^{\clabel} \paren{1-\bar\loss_t^{\clabel}}$. The coin-flip query probability is defined as $\max\curlybracket{v\paren{\pdB_t,\ModelsDistB_t},\eta_t}$ when $v\paren{\pdB_t,\ModelsDistB_t}\neq 0$, or 0 otherwise. For model recommendation, it uses an exponential weight algorithm to recommend the model with minimal exponential cumulative loss based on the past queried labels at each round.

\paragraph{Query by Committee (QBC)} 
For query strategy, we have adapted the method of \cite{dagan1995committee} as a disagreement-based selective sampling query strategy for online streaming data. We treat each classifier as a committee member and compute the query probability by measuring disagreement between models for each instance. The query function is coin-flip by vote entropy probability $-\frac{1}{\log{\min{\paren{k, |C|}}}}\sum_{c}\frac{V\paren{
c, x}}{k}\log\frac{V\paren{c, x}}{k}$, where $V\paren{c,x}$ stands for the number of committee members assigning a class c for input context x and k is the number of committee.
For the model recommendation part, we use the method of Follow-the-Leader (FTL) \citep{lattimore2020bandit}, which greedily recommends the model with the minimum cumulative loss for past queried instances.

\paragraph{Importance Weighted Active Learning (IWAL)}
We have implemented \cite{beygelzimer2009importance} as the IWAL baseline. For the query strategy part, IWAL computes an adaptive rejection threshold for each instance and assigns an importance weight to each classifier in the hypothesis space $\mathcal{H}_t$. IWAL retains the classifiers in the hypothesis space according to their weighted error versus the current best classifier's weighted error at round $t$. The query probability is calculated based on labeling disagreements of surviving classifiers through function $\max_{i,j\in \mathcal{H}_t,\clabel\in[\numClabel]}\loss^{\paren{\clabel}}_{t,i}-\loss^{\paren{\clabel}}_{t,j}$. For model recommendation, we also adopt the Follow-the-Leader (FTL) strategy.

\paragraph{Random Query Strategy (RS)}
The RS method queries the label of incoming instances by the coin-flip fixed probability $\frac{b}{T}$. It also uses the FTL strategy based on queried instances for model recommendation.

\paragraph{Contextual Query by Committee (CQBC)} 
We have created a contextual variant of QBC termed CQBC, which has the same entropy query strategy as the original QBC. For model recommendation, we combine two model selection strategies. The first strategy calculates the cumulative reward of each classifier based on past queries and normalizes it as a probability simplex vector. We also adopt Exp4's arm recommending vector to use contextual information. Finally, we compute the element-wise product of the two vectors and normalize it to be CQBC's model recommendation vector. At each round, CQBC would recommend the top model based on the classifiers' historical performance on queried instances and the online advice matrix for streaming data.

\paragraph{Contextual Importance Weighted Active Learning (CIWAL)} 
We have created a variant version of importance-weighted active learning. Similar to CQBC, CIWAL adopts the query strategy from IWAL and converts the model selection strategy to be contextual. For model selection, we incorporate Exp4's arm recommendation strategy based on the side-information advice matrix and each classifier's historical performance according to queried instances. We compute the element-wise product of the two vectors as the model selection vector of CIWAL and normalize it as a weighted vector. Finally, CIWAL recommends the classifier with the highest weight.

\paragraph{Oracle:} 
Among all the given policies, oracle represents the best single policy that achieves the minimum cumulative loss, and it has the same query strategy as \algname.

\subsection{Details on policies and classifiers
}\label{app:policy_classifier}

We constructed different expert-model configurations to reflect the cases in real-world applications\footnote{To list a few other scenarios beyond the ones used in the paper: In \emph{healthcare}, models could be the treatments, experts could be the doctors and the context could be the condition of a patient. For any patient (context), doctors (experts) will have their own advice on the treatment (model) recommendation for this patient based on their past experience. In the \emph{finance} domain, models could be trading strategies, experts could be portfolio managers, and the context could be the stock/equity. Some trading strategies (models) might work well for the information technology sector, and some other models might work well for the energy sector, so depending on the sector of stock (context), different portfolio managers (experts) might have their own advice on different trading strategies (models) based their past trading experience. 
}. This section lists the collection of policies and models used in our experiments.

\paragraph{CIFAR10:}  We have constructed 80 diversified classifiers based on VGG \citep{simonyan2014very}, ResNet \citep{he2016deep}, DenseNet \citep{huang2017densely}, GoogLeNet \citep{szegedy2015going}. We have also used EfficientNet \citep{tan2019efficientnet}, MobileNets \citep{howard2017mobilenets}, RegNet \citep{schneider2017regnet}, and ResNet to construct 85 diversified policies.

\paragraph{DRIFT:}  We have constructed ten classifiers using Decision Tree \citep{quinlan1986induction}, SVM \citep{cortes1995support}, AdaBoost \citep{freund1999short}, Logistic Regression \citep{cramer2002origins}, KNN \citep{cover1967nearest} models. We have also created 8 diversified policies with multilayer perceptron (MLP) models of different layer configurations: (128, 30, 10); (128, 60, 30, 10); (128, 120, 30, 10); (128, 240, 120, 30, 10).

\paragraph{VERTEBRAL:} We have built six classifiers using Random Forest \citep{breiman2001random}, Gaussian Process \citep{rasmussen2003gaussian}, linear discriminant analysis \citep{fisher1938statistical}, Naive Bayes \citep{hand2001idiot} algorithms. We have constructed policies by using standard scikit-learn built-in models including Random Forest Classifier, Extra Trees Classifier \citep{geurts2006extremely}, Decision Tree Classifier, Radius Neighbors Classifier \citep{musavi1992training}, Ridge Classifier \citep{rifkin2007notes} and K-Nearest-Neighbor classifiers.

\paragraph{HIV:} We have used graph convolutional networks (GCN) \citep{kipf2016semi}, Graph Attention Networks (GAT) \citep{velivckovic2017graph}, AttentiveFP \citep{xiong2019pushing}, and Random Forest to construct 4 classifiers. We have also used various feature representations of molecules such as MACCS key \citep{durant2002reoptimization}, ECFP2, ECFP4, and ECFP6 \citep{rogers2010extended} molecular fingerprints to build 6 MLP-based policies, respectively.  

\paragraph{CovType:} We have built 6 classifiers using Random Forest, Gaussian Process, linear discriminant analysis, Naive Bayes algorithms. We have constructed 17 policies by using standard scikit-learn built-in models including Random Forest Classifier, Extra Trees Classifier, Decision Tree Classifier, Radius Neighbors Classifier, Ridge Classifier and K-Nearest-Neighbor classifiers.

\subsection{Implementation details} 
We build our evaluation pipeline on top of prior work \citep{karimi2021online} around the four benchmark datasets. Specifically,
\begin{itemize}\denselist
    \item \textit{Context} $\instance_t$ is the raw context of the data (e.g., the 32x32 image for CIFAR10).
    \item \textit{Predictions} $\pdB_t$ contain the predicted label vector of all the classifiers' predictions according to the online context $x_t$. \item \textit{Oracle} contains the true label $y_t$ of $\instance_t$. 
    \item \textit{Advice matrix} contains %
    all policies' probability distribution $\lambda$ over all the classifiers on context $x_t$. 
\end{itemize}
To adapt to an online setting, we sequentially draw random $T$ i.i.d. instances $\instance_{1:T}$ from the test pool and define it as a realization. For a fair comparison, all algorithms receive data instances in the same order within the same realization. %

\begin{algorithm}
   \caption{Regularized policy $\overline{\policy}\paren{\instance_t}$}
   \label{alg:policy_regularized}
\begin{algorithmic}[1]
    \State {\bfseries Input:} context $\instance_t$, Models $\Models$, policy $\policy \in \Policies^*$
           \State $\eta=\sum_{j=1}^{|\Models|}{\paren{\bracket{\policy\paren{x_t}}_j-\frac{1}{|\Models|}}^2}$
            \State \Return $\frac{\policy_i\paren{x_t}+\eta}{1+|\Models|\cdot\eta}$ 
\end{algorithmic}
\end{algorithm}
\subsection{Regularized policy}\label{app:regularized_policy}
As discussed in adversarial section, we wish to ensure that the probability a policy selecting any model is bounded away from 0 so that the regret bound in \thmref{thm:adversarial-regret-bound} is non vacuous. In our experiments, we achieve this goal by applying a regularized policy $\overline{\policy}$ as shown in Algorithm~\ref{alg:policy_regularized}.

\subsection{Summary of datasets and models}\label{app:dataset_table}
We summarize the attributes of datasets, the models, and the model selection policies as follows.
\begin{table}[ht!]
\centering
\scalebox{1}{
\begin{tabular}{l l l l l l l}
\toprule
\textbf{dataset} & \textbf{classification}& \textbf{total instances} & \textbf{test set} & \textbf{stream size}  & \textbf{classifier} & \textbf{policy} \\
\midrule
CIFAR10 & 10  & 60000 & 10000 &10000 & 80 & 85 \\
DRIFT & 6  & 13910 & 3060 & 3000 & 10 & 11 \\
VERTEBRAL & 3 & 310 & 127 & 80 & 6 & 17\\
HIV & 2 & 40000 & 4113 & 4000 & 4 & 20 \\
\fix{CovType} & 55 & 580000 & 100000 & 100000 & 6 & 17 \\
\bottomrule\\
\end{tabular}
}
\caption{Attributes of benchmark datasets}
\label{table:dataset_attribute}
\end{table}

\subsection{Hyperparameters}\label{app:Hyperparameters}

We performed our experiments on a Linux server with 80 Intel(R) Xeon(R) Gold 6148 CPU @ 2.40GHz and total 528 Gigabyte memory.

By considering the resource of server, 
We set 100 realizations and 3000 stream-size for DRIFT, 20 realizations and 10000 stream-size for CIFAR10, 200 realizations and 4000 stream size for HIV, 300 realization and 80 stream-size for VERTEBRAL. In each realization, we randomly selected stream-size aligned data from testing-set and make it as online streaming data which is the input of each algorithm. Thus, we got independent result for each realization. 

A small realization number would increase the variance of the results due to the randomness of stream order. A large realization number would make the result be more stable but at the cost of increasing computational cost (time, memory, etc.). We chose the realization number by balancing both aspects.

    \section{Proofs for the Stochastic Setting}\label{app:stochastic_results}

In this section, we focus on the stochastic setting. We first prove the regret bound presented in \thmref{thm:stochasticReg} and then prove the query complexity presented in \thmref{thm:stochastic-query-complexity} for \algoref{alg:CAMS}.

\subsection{Proof of \thmref{thm:stochasticReg}}\label{app:stochastic_results_regret}

Before providing the proof of \thmref{thm:stochasticReg}, we first introduce the following lemma.
\begin{lemma}\label{lem:optpolicyprob}
Fix $\tau \in \paren{0,1}$. Let $\PoliciesDist_{t,i^*}$ be the probability of the optimal policy $i^*$ maintained by \algoref{alg:CAMS} at $t$, \rebuttal{}{and let $b=p_{\min}\log_{c}{(1/p_{\min})}$, where $p_{\min} = \min_{s,i} \pi(\mathbf{x}_s)$ denotes the minimal model selection probability by any policy\footnote{We assume $p_{\min} > 0$ per the policy regularization criterion in Appendix C.3. (cf. Algorithm 1 on ``Regularized policy $\bar{\policy}(\mathbf{x}_t)$)''.}}. %
When $t \geq \paren{\frac{\ln{\frac{(|\Policies^*|\fix{-1})\tau}{1-\tau}}}{\sqrt{\ln{|\Policies^{*}|}}\paren{\Delta-\sqrt{\frac{2\rebuttal{}{b^2}}{t}\ln{\frac{2}{\delta}}}}}}^2$, with probability at least $1-\delta$, it holds that $q_{t,i^*}\geq \tau$.
\end{lemma}
\begin{proof}[Proof of \lemref{lem:optpolicyprob}]
W.l.o.g, we assume $\mu_1 \leq \mu_2 \leq \dots \mu_{n+k}$. 
Recall that we define ${\Delta}=\min_{i\neq i^*}{\Delta}_i=\mu_{2}-\mu_{1}=\frac{\expct{{\widetilde{\Loss}_{t,2}-\widetilde{\Loss}_{t,1}}}}{t}$, and \rebuttal{}{$\pi_1$ is the policy with the minimal expected loss}.

\rebuttal{}{Define
\begin{align}\label{eq:empirical_subopt}
    \delta_t &\triangleq \tilde{\ell}_{t-1,i'} - \tilde{\ell}_{t-1,1}.
\end{align}
where $i' \triangleq \arg\min_{i\neq 1} \tilde{L}_{t-1,i}$ denotes the index of the best empirical policy up to $t-1$ other than $\pi_1$. Therefore for $i\geq 2$, it holds that $$\widetilde{\Loss}_{t-1,i'} - \widetilde{\Loss}_{t-1,i} = \sum_{s=1}^{t-1}\delta_s \leq 0.$$
}

We have $\PoliciesDist_{t,i^*}=\PoliciesDist_{t,1}=\frac{\exp\paren{-\eta_t\widetilde{\Loss}_{t-1,1}}}{\sum_{i=1}^{|\Policies^*|} {\exp\paren{-\eta_t\widetilde{\Loss}_{t-1,i}}}}$ as the weight of optimal expert at round $t$. Therefore
\begin{flalign}
{\PoliciesDist_{t,i^*}}={\PoliciesDist_{t,1}}
&={\frac{\exp\paren{-\eta_t\widetilde{\Loss}_{t-1,1}}}{\sum_{i=1}^{|\Policies^*|} {\exp\paren{-\eta_t\widetilde{\Loss}_{t-1,i}}}} } \nonumber \\
&\stackrel{\paren{a}}{=}{\frac{\exp\paren{-\eta_t{\widetilde{\Loss}_{t-1,1}}+\eta_t{\widetilde{\Loss}_{t-1,\rebuttal{2}{i'}}}}}{\sum_{i=1}^{|\Policies^*|}{\exp\paren{-\eta_t{\widetilde{\Loss}_{t-1,i}}+\eta_t{\widetilde{\Loss}_{t-1,\rebuttal{2}{i'}}}}}} \nonumber} \\
&\stackrel{\paren{b}}{=}\frac{\exp\paren{\eta_t \sum_{s=1}^{t}\delta_s}}{\exp\paren{\eta_t  \sum_{s=1}^{t}\delta_s}+\sum_{i=2}^{|\Policies^*|} {\exp\paren{-\eta_t{\widetilde{\Loss}_{t-1,i}}+\eta_t{\widetilde{\Loss}_{t-1,\rebuttal{2}{i'}}}}}} \nonumber \\
&\geq \frac{\exp\paren{\eta_t \sum_{s=1}^{t}\delta_{s}}}{\exp\paren{\eta_t \sum_{s=1}^{t}\delta_{s}}+|\Policies^*|\rebuttal{}{-1}}\nonumber \\
\end{flalign}

where step \paren{a} is by dividing the cumulative loss of sub-optimal policy $\policy_{\rebuttal{2}{i'}}$ and step (b) is by \rebuttal{defining $\delta_t \triangleq \tilde{\loss}_{t-1,2}-\tilde{\loss}_{t-1,1}$.}{the definition of $\delta_t$ in Equation~\eqref{eq:empirical_subopt}.}

Let $\tau \in \paren{0,1}$, such that
${\PoliciesDist_{t,i^*}}\geq \frac{\exp\paren{\eta_t \sum_{s=1}^{t}\delta_{s}}}{\exp\paren{\eta_t \sum_{s=1}^{t}\delta_{s}}+|\Policies^*|\rebuttal{}{-1}} \geq \tau$. 
Plugging in $\eta_t=\sqrt{\frac{\ln{|\Policies^*|}}{t}}$ and define $\overline{\delta_t} = \frac{1}{t}\sum_{s=1}^{t}\delta_{s}$, we get
\begin{flalign*}
\frac{\exp\paren{\sqrt{\ln{|\Policies^*|}}\sqrt{t} \cdot \overline{\delta_t} }}{\exp\paren{\sqrt{\ln{|\Policies^*|}}\sqrt{t} \cdot \overline{\delta_t}}+|\Policies^*|\rebuttal{}{-1}} &\stackrel{}{\geq} \tau\\
\end{flalign*}
Therefore, we obtain %
$\exp\paren{\sqrt{\ln{|\Policies^*|}}\sqrt{t} \cdot \overline{\delta_t}} \geq{\frac{(|\Policies^*|\rebuttal{}{-1})\tau}{1-\tau}}$. Rearranging the terms, we get
\begin{flalign*}
t &\geq \paren{\frac{\ln{\frac{(|\Policies^*|\rebuttal{}{-1})\tau}{1-\tau}}}{\sqrt{\ln{|\Policies^{*}|}}\cdot \overline{\delta_t}}}^2%
\end{flalign*}
\rebuttal{Now by Hoeffding's inequality, we know  %
$\Pr{|\overline{\delta}_t-\Delta|\geq \epsilon}\leq 2e^{-\frac{{t}\epsilon^{2}}{2}}$.}{
Next, we seek a high probability upper bound on $\overline{\delta}_t$. Denote $\Delta_i \triangleq \mu_i - \mu_1$ for $i \in {1, \dots, |\Policies^*|}$. 
We know 
\begin{align}\label{eq:hoeffding-stochastic}
P(\overline{\delta}_t \leq \Delta_2 - \epsilon) 
&\stackrel{\text{(a)}}{\leq} P(\overline{\delta}_t \leq \Delta_{i'} - \epsilon) 
\stackrel{}{=} P(\frac{1}{t}\sum_{s=1}^{t}\delta_{s} - \Delta_{i'} \leq -\epsilon) 
\stackrel{\text{(b)}}{\leq} e^{-\frac{t \epsilon^{2}}{2b^2}}
\end{align} 
Here, step (\ref{eq:hoeffding-stochastic}a) is by the fact that $\Delta_{2} = \min_{i\neq 1} \Delta_i \leq \Delta_{i'}$, 
and step (\ref{eq:hoeffding-stochastic}b) is by Hoeffding's inequality where $b$ denotes the upper bound on $|\delta_{s}|$. %
Further note that %
\begin{align*}
    {\delta}_{s+1} = \tilde{\ell}_{s,i'} - \tilde{\ell}_{s,1} 
    &=  \frac{U_s}{z_s} \langle \pi_{i'}
(\mathbf{x}_s) - \pi_1(\mathbf{x}_s), %
    \mathbb{I}\curlybracket{\pdB_{s}\neq \clabel_s}
\rangle \\ 
& \leq \frac{\langle \pi_{i'}(\mathbf{x}_s), \mathbb{I}\curlybracket{\pdB_{s}\neq \clabel_s} \rangle}{z_s}
\\ 
&\stackrel{\text{Eq.~\eqref{eq:query-probability}}}{\leq} U_s \frac{\langle \pi_{i'}(\mathbf{x}_s), \mathbb{I}\curlybracket{\pdB_{s}\neq \clabel_s} \rangle}{\frac{1}{\numClabel} \sum_{\clabel \in \domClabel} \langle \ModelsDistB_s, \mathbb{I}\curlybracket{\pdB_s \neq \clabel} \rangle \log_{\numClabel}{\frac{1}{\langle \ModelsDistB_s, \mathbb{I}\curlybracket{\pdB_s \neq \clabel} \rangle}}}
\end{align*}
Given $p_{\min} = \min_{s,i} \pi(\mathbf{x}_s)$, %
we obtain $\delta_{s+1} \leq \frac{1}{p_{\min} \log_c (1/p_{\min})}$ and similarly, $\delta_{s+1} \geq - \frac{\langle \pi_{1}(\mathbf{x}_s), \mathbb{I}\curlybracket{\pdB_{s}\neq \clabel_s} \rangle}{z_s} \geq -\frac{1}{p_{\min} \log_c (1/p_{\min})}$. We hence conclude that $|\delta_{s+1}| \leq b$. %

}

Let $2e^{-\frac{t\epsilon^{2}}{2\rebuttal{}{b^2}}}=\delta$. Therefore, when $t \geq \paren{\frac{\ln{\frac{(|\Policies^*|\fix{-1})\tau}{1-\tau}}}{\sqrt{\ln{|\Policies^{*}|}}\paren{\Delta-\epsilon}}}^2 = \paren{\frac{\ln{\frac{(|\Policies^*|\fix{-1})\tau}{1-\tau}}}{\sqrt{\ln{|\Policies^{*}|}}\paren{\Delta-\sqrt{\frac{2\rebuttal{}{b^2}}{t}\ln{\frac{2}{\delta}}}}}}^2$, it holds that ${q_{t,i^*}}\geq \tau$ with probability at least $1-\delta$. %

\end{proof}

\begin{lemma}\label{lem:optpolicyprobdominate}
At round t, when $t \geq  \paren{\frac{\ln{\frac{|\Policies^*|\rebuttal{}{-1}}{\gamma}}+\sqrt{\ln{|\Policies^*|}\cdot{2\rebuttal{}{b^2}\ln{\frac{2}{\delta}}}}}{\sqrt{\ln{|\Policies^*|}}\Delta}}^2$, it holds that the arm chosen by the best policy $i^*$ will be the arm chosen by \algoref{alg:CAMS} with probability at least $1-\delta$. That is, ${\argmax\curlybracket{ {\sum_{i \in [|\Policies^*|]} \PoliciesDist_{t,i} \policy_i(\instance_t)}}}={\argmax\curlybracket{\policy_{i^*}(\instance_t)}}$.
\end{lemma}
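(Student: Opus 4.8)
The plan is to reduce the claim to the weight bound of \lemref{lem:optpolicyprob} by picking the threshold $\tau$ just large enough that the weighted majority vote is forced to agree with $\policy_{i^*}$ on its top model, and then to discharge the implicit precondition of that lemma by elementary algebra. Concretely, I would set $j^* := \maxind{\policy_{i^*}(\instance_t)}$ and decompose the induced distribution as $\ModelsDistB_t = \PoliciesDist_{t,i^*}\,\policy_{i^*}(\instance_t) + \sum_{i\neq i^*}\PoliciesDist_{t,i}\,\policy_i(\instance_t)$. To certify $\maxind{\ModelsDistB_t}=j^*$ it suffices to lower bound the $j^*$ coordinate by $\PoliciesDist_{t,i^*}\bracket{\policy_{i^*}(\instance_t)}_{j^*}$ (discarding the nonnegative contributions of the other policies) and to upper bound every competing coordinate $m\neq j^*$ by $\PoliciesDist_{t,i^*}\bracket{\policy_{i^*}(\instance_t)}_{m} + (1-\PoliciesDist_{t,i^*})$, using $\bracket{\policy_i(\instance_t)}_m\le 1$ and $\sum_{i\neq i^*}\PoliciesDist_{t,i}=1-\PoliciesDist_{t,i^*}$. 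The separation I need is therefore $\PoliciesDist_{t,i^*}\paren{\bracket{\policy_{i^*}(\instance_t)}_{j^*}-\bracket{\policy_{i^*}(\instance_t)}_{m}} > 1-\PoliciesDist_{t,i^*}$, and since by definition $\gamma$ lower bounds this per-policy margin uniformly over all contexts, it is enough that $\PoliciesDist_{t,i^*}\,\gamma \ge 1-\PoliciesDist_{t,i^*}$, i.e. $\PoliciesDist_{t,i^*}\ge \tfrac{1}{1+\gamma}$.

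I would then apply \lemref{lem:optpolicyprob} with $\tau = \tfrac{1}{1+\gamma}$, for which $\tfrac{\tau}{1-\tau}=\tfrac{1}{\gamma}$, so the numerator $\ln\tfrac{|\Policies^*|\tau}{1-\tau}$ collapses to $\ln\tfrac{|\Policies^*|}{\gamma}$. The remaining task is to clear the self-reference in the precondition $\sqrt{t}\,\bigl(\Delta-\sqrt{\tfrac{2}{t}\ln\tfrac{2}{\delta}}\bigr)\ge \ln\tfrac{|\Policies^*|}{\gamma}/\sqrt{\ln|\Policies^*|}$: multiplying through by $\sqrt t$ cancels the $1/\sqrt t$ inside the root, leaving the linear-in-$\sqrt t$ condition $\sqrt{t}\,\Delta \ge \ln\tfrac{|\Policies^*|}{\gamma}/\sqrt{\ln|\Policies^*|}+\sqrt{2\ln\tfrac{2}{\delta}}$, which squares to exactly the threshold on $t$ in the statement. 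Hence under that threshold \lemref{lem:optpolicyprob} gives $\PoliciesDist_{t,i^*}\ge\tfrac{1}{1+\gamma}$ with probability at least $1-\delta$, and the first step then forces $\maxind{\ModelsDistB_t}=\maxind{\policy_{i^*}(\instance_t)}$ on that event.

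I expect the genuine content to lie in the argmax-preservation step rather than in the algebra: one must recognize that the adversarially worst redistribution of the residual mass $1-\PoliciesDist_{t,i^*}$ among the competing models is exactly what the crude bound $\bracket{\policy_i(\instance_t)}_m\le 1$ encodes, and that the uniform margin $\gamma$ of the best policy is precisely the quantity that must outweigh this residual mass. A minor technicality is the degenerate boundary $\PoliciesDist_{t,i^*}=\tfrac{1}{1+\gamma}$ with a tight margin, where the strict inequality can collapse to a tie; this is absorbed by the random tie-breaking convention already assumed for $\maxind{\cdot}$, or by observing that the residual-mass bound is strict unless every suboptimal policy concentrates on the same single model.
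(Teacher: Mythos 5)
Your proposal is correct and follows essentially the same route as the paper's proof: the same decomposition of the induced distribution with the residual mass $1-\PoliciesDist_{t,i^*}$ bounding all competing policies' contributions, the same threshold $\tau = \tfrac{1}{1+\gamma}$ fed into \lemref{lem:optpolicyprob} (with $\tfrac{\tau}{1-\tau}=\tfrac{1}{\gamma}$ collapsing the numerator), and the same algebraic clearing of the self-referential $\sqrt{2\ln(2/\delta)/t}$ term to recover the stated threshold on $t$. Your explicit treatment of the tie at the boundary is a small refinement the paper glosses over, but it does not change the argument.
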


\begin{proof}[Proof of \lemref{lem:optpolicyprobdominate}]

At round t, for \algoref{alg:CAMS}, we have loss $\sum_{j=1}^{\ModelsNum} \mathbb{I}\curlybracket{j=\argmax\curlybracket{ \sum_{i \in [|\Policies^*|]} \PoliciesDist_{t,i} \policy_i(\instance_t)}} \widehat{\loss}_{t,j}$. Let ${\PoliciesDist_{t,i^*}} \geq \tau$. At round $t$, the best policy $i^*$'s top weight arm $j_{t,i^*}$'s probability 
$\max\curlybracket{\policy_{i^*}{(\instance_t)}}$ is at least $\frac{1}{\ModelsNum}$. The second rank probability of $\policy_{i^*}{(\instance_t)}$ is $\max_j{[\policy_{i^*}\paren{\instance_t}]_{j\neq \maxind{\policy_{i^*}{(\instance_t)}}}}$.
Let us define

\begin{equation}\label{eq:min_gap_model}
\gamma := \min_{\instance_t}\left\{\max_{\ModelsDist_j \in \ModelsDistB_{i^*}^t} \ModelsDist_j  -\max_{\ModelsDist_j \in \ModelsDistB_{i^*}^t,j\neq\maxind{\ModelsDistB_{i^*}^t}} \ModelsDist_j  \right\} 
\end{equation}
\[
=\max\curlybracket{\policy_{i^*}\paren{\instance_t}}-\max_j\curlybracket{[\policy_{i^*}\paren{\instance_t}]_{j\neq\maxind{\policy_{i^*}\paren{\instance_t}}}},
\]

as the minimal gap in model distribution space of best policy. The arm recommended by the best policy $i^*$ of \algname will dominate \algname's selection, when we have

\begin{flalign}
{\PoliciesDist_{t,i^*}}\cdot \max\curlybracket{\policy_{i^*}{(\instance_t)}} &\geq \paren{1-{\PoliciesDist_{t,i^*}}} +{\PoliciesDist_{t,i^*}}\paren{\max_j{[\policy_{i^*}\paren{\instance_t}]_{j\neq \maxind{\policy_{i^*}{(\instance_t)}}}}} 
\end{flalign}
Rearranging the terms, and by 
\begin{flalign*}
{\PoliciesDist_{t,i^*}}\cdot\gamma& \stackrel{\eqnref{eq:min_gap_model}}{=} {\PoliciesDist_{t,i^*}}\paren{\max\curlybracket{\policy_{i^*}{(\instance_t)}} -\max_j{[\policy_{i^*}\paren{\instance_t}]_{j\neq \maxind{\policy_{i^*}{(\instance_t)}}}}} \geq \paren{1-{\PoliciesDist_{t,i^*}}} \\
\end{flalign*}
Therefore, we get $\tau\cdot \paren{\gamma  } \stackrel{}{\geq} \paren{1-\tau}$, and thus $\tau \geq \frac{1}{\gamma  +1}$.

Set $\tau \geq \frac{1}{\gamma  +1}$. By \lemref{lem:optpolicyprob}, we get 
\begin{flalign*}
t &\geq \paren{\frac{\ln{\frac{|\Policies^*\fix{-1}|\tau}{1-\tau}}}{\sqrt{\ln{|\Policies^{*}|}}\paren{\Delta-\epsilon}}}^2 \\
&\geq \paren{\frac{\ln{\paren{\frac{|\Policies^*|\fix{-1}}{\gamma }}}}{\sqrt{\ln{|\Policies^*|}}\paren{\Delta-\epsilon}}}^2\\
&\stackrel{\paren{c}}{\geq} \paren{\frac{\ln{\frac{|\Policies^*|\fix{-1}}{\gamma}}}{\sqrt{\ln{|\Policies^{*}|}}\Delta- \sqrt{\ln{|\Policies^*|}\cdot{\frac{2\rebuttal{}{b^2}}{t}}\ln{\frac{2}{\delta}}}}}^2
\end{flalign*}
where the last step is by applying $2e^{-\frac{t\epsilon^{2}}{2\rebuttal{}{b^2}}}=\delta$, thus, $\epsilon=\sqrt{\frac{2\rebuttal{}{b^2}}{t}\ln{\frac{2}{\delta}}}$. 
Dividing both sides by $t$ 
\begin{flalign*}
1 &\stackrel{\paren{d}}{\geq} \paren{\frac{\ln{\frac{|\Policies^*|\fix{-1}}{
\gamma
}}}{\sqrt{\ln{|\Policies^{*}|}\cdot{t}}\Delta- \sqrt{\ln{|\Policies^*|}\cdot{{2\rebuttal{}{b^2}}}\ln{\frac{2}{\delta}}}}}^2\\
\ln{\frac{|\Policies^*|\fix{-1}}{
\gamma
}} &\leq \sqrt{t}\sqrt{\ln\paren{|\Policies^*|}}\Delta- \sqrt{\ln\paren{|\Policies^*|}\cdot{2\rebuttal{}{b^2}\ln{\frac{2}{\delta}}}}\\
t &\geq \paren{\frac{\ln{\frac{|\Policies^*|\fix{-1}}{
\gamma
}}+\sqrt{\ln{|\Policies^*|}\cdot{2\rebuttal{}{b^2}\ln{\frac{2}{\delta}}}}}{\sqrt{\ln{|\Policies^*|}}\Delta}}^2.
\end{flalign*}
So, 
when $t \geq \paren{\frac{\ln{\frac{|\Policies^*|\fix{-1}}{
\gamma
}}+\sqrt{\ln{|\Policies^*|}\cdot{2\rebuttal{}{b^2}\ln{\frac{2}{\delta}}}}}{\sqrt{\ln{|\Policies^*|}}\Delta}}^2$, it holds that ${\argmax\curlybracket{ {\sum_{i \in [|\Policies^*|]} \PoliciesDist_{t,i} \policy_i(\instance_t)}}}={\argmax\curlybracket{\policy_{i^*}(\instance_t)}}$.
\end{proof}

\begin{proof}[Proof of \thmref{thm:stochasticReg}]

Therefore, with probability at least $1-\delta$ 
, we get constant regret $\paren{\frac{\ln{\frac{|\Policies^*|\fix{-1}}{
\gamma
}}+\sqrt{\ln{|\Policies^*|}\cdot{2\rebuttal{}{b^2}\ln{\frac{2}{\delta}}}}}{\sqrt{\ln{|\Policies^*|}}\Delta}}^2$. 

Furthermore, with probability at most $\delta$, the regret is upper bounded by $T$.  Thus, we have
\begin{align*}
\overline{\mathcal{R}}\paren{T}&\leq \paren{1-\delta}\paren{\frac{\ln{\frac{|\Policies^*|\fix{-1}}{\gamma}}+\sqrt{\ln{|\Policies^*|}\cdot{2\rebuttal{}{b^2}\ln{\frac{2}{\delta}}}}}{\sqrt{\ln{|\Policies^*|}}\Delta}}^2+\delta T\\
&\stackrel{\paren{a}}{\leq} \paren{1-\frac{1}{T}}\paren{\frac{\ln{\frac{|\Policies^*|\fix{-1}}{\gamma}}+\rebuttal{}{b}\sqrt{\ln{|\Policies^*|}\cdot{\paren{2\ln{T}+2\ln{2}}}}}{\sqrt{\ln{|\Policies^*|}}\Delta}}^2+1\\
&= O\paren{\frac{\rebuttal{}{b}\ln{T}}{\Delta^2}+\paren{\frac{\ln{\frac{|\Policies^*|\fix{-1}}{\gamma}}}{\sqrt{\ln{|\Policies^*|}}\Delta}}^2},
\end{align*}
where step (a) by setting $\delta=\frac{1}{T}$, and where $\gamma$ in \eqnref{eq:min_gap_model} is the min gap. 
\end{proof}

\subsection{Proof of \thmref{thm:stochastic-query-complexity}}\label{app:stochastic_results_query_complexity}

In this section, we analyze the query complexity of \algname in the stochastic setting, where we take a similar approach as proposed by \citet{karimi2021online} for the context-free model selection problem. Our main idea is to derive from query indicator $\QueryIndicator_t$ and query probability $\queryProb_t$. We first used %
\lemref{lem:query_complexity}
to bound the expected number of queries $\sum_{t=1}^T\QueryIndicator_t$ by the sum of query probability as $\sum_{t=1}^T\delta_0^t+\sum_{t=1}^T\entropyF\paren{\pdB_t,\ModelsDistB_t}$. %
Then we used \lemref{lem:eta_query_ub} to bound the first item (which corresponds to the lower bound of query probability over $T$ rounds) and applied \lemref{lem:entrophy_query_ub} %
to bound the second %
term (which characterizes the model disagreement). %
Finally, we combined the upper bounds on the two parts %
to reach the desired result. 

\begin{lemma}\label{lem:query_complexity}
The query complexity of \algoref{alg:CAMS} is upper bounded by
\begin{equation}
\expct{\sum_{t=1}^T{\paren{\frac{1}{\sqrt{t}}+\frac{\sum_{y\in \domClabel} \langle \ModelsDistB_t,\lossB_t^y \rangle \log_{|\domClabel|}{\frac{1}{\langle \ModelsDistB_t,\lossB_t^y \rangle}}}{|\domClabel|}}}}.
\end{equation}
\end{lemma}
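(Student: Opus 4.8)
The plan is to bound the expected number of label queries that \algname issues over $T$ rounds. Because a query is actually executed only when the running query cost has not yet exceeded the budget $\budget$ (the conditional branch in \figref{alg:CAMS}), this number is at most $\expct{\sum_{t=1}^T \QueryIndicator_t}$, the expected number of rounds in which the Bernoulli query indicator fires; I would establish the stated bound for this larger quantity. First I would remove the randomness in $\QueryIndicator_t$ via the tower property. Writing $\mathcal{F}_t$ for the filtration generated by the observations available at the start of round $t$, the query probability $\queryProb_t = \max\curlybracket{\delta_0^t, \entropyF(\pdB_t, \ModelsDistB_t)}$ is $\mathcal{F}_t$-measurable (it is a deterministic function of $\instance_t$, the predictions $\pdB_t$, and the weighted model distribution $\ModelsDistB_t$), while $\QueryIndicator_t \sim \mathrm{Ber}(\queryProb_t)$. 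Hence $\expct{\QueryIndicator_t \mid \mathcal{F}_t} = \queryProb_t$, so that $\expct{\sum_{t=1}^T \QueryIndicator_t} = \expct{\sum_{t=1}^T \queryProb_t}$.

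Next I would bound $\queryProb_t$ pointwise. Since both $\delta_0^t \ge 0$ and $\entropyF(\pdB_t, \ModelsDistB_t) \ge 0$, the maximum is dominated by the sum: $\queryProb_t \le \delta_0^t + \entropyF(\pdB_t, \ModelsDistB_t)$. Substituting $\delta_0^t = 1/\sqrt{t}$ and unfolding $\entropyF$ from \eqnref{eq:instance-info}, together with the identification $\langle \ModelsDistB_t, \lossB_t^y \rangle = \bar\loss_t^y$ where $\lossB_t^y = \mathbb{I}\curlybracket{\pdB_t \neq y}$ is the hypothetical loss vector for true label $y$, reduces the claim to showing that the inner sum may range over all $y \in \domClabel$ rather than only over those indices with $\bar\loss_t^y \in (0,1)$ as in the definition of $\entropyF$.

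Reconciling the two index sets is the only bookkeeping involved. The terms that $\entropyF$ omits contribute nothing: for any $y$ with $\bar\loss_t^y = 1$ the summand equals $\log_{\numClabel} 1 = 0$, and for any $y$ with $\bar\loss_t^y = 0$ it equals $0 \cdot \log_{\numClabel}(1/0) = 0$ under the standard convention $0\log 0 = 0$ (the continuous extension of $x \mapsto x\log_{\numClabel}(1/x)$ at $x = 0$). Thus extending the range to all $y \in \domClabel$ leaves the value unchanged, giving the deterministic per-round inequality
\begin{equation*}
\queryProb_t \le \frac{1}{\sqrt{t}} + \frac{1}{|\domClabel|}\sum_{y \in \domClabel} \langle \ModelsDistB_t, \lossB_t^y \rangle \log_{|\domClabel|} \frac{1}{\langle \ModelsDistB_t, \lossB_t^y \rangle};
\end{equation*}
summing over $t$ and taking expectation then yields the lemma. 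I do not expect any substantive obstacle, since the argument is elementary; the only two points deserving care are the $\mathcal{F}_t$-measurability of $\queryProb_t$ that justifies $\expct{\QueryIndicator_t \mid \mathcal{F}_t} = \queryProb_t$, and the $0\log 0$ convention used to absorb the boundary labels.
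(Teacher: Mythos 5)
Your proof is correct and takes essentially the same route as the paper's: bound $\queryProb_t = \max\curlybracket{\delta_0^t, \entropyF\paren{\pdB_t,\ModelsDistB_t}} \le \delta_0^t + \entropyF\paren{\pdB_t,\ModelsDistB_t}$ via nonnegativity of both terms, then pass to expectations and substitute $\delta_0^t = 1/\sqrt{t}$ together with the definition of $\entropyF$. If anything, your write-up is more careful than the paper's own proof, which leaves implicit the tower-property justification of $\expct{\QueryIndicator_t} = \expct{\queryProb_t}$, the fact that the budget cutoff only decreases the query count, and the $0\log 0$ convention needed to extend the sum from $\curlybracket{y : \bar\loss_t^{y} \in (0,1)}$ to all of $\domClabel$.
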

\begin{proof}%
Now we have model disagreement defined in  \eqnref{eq:instance-info}, the query probability defined in  \eqnref{eq:query-probability}, and the query indicator $\QueryIndicator$. %
Let us assume, at each round, we have query probability $\queryProb_t > 0$, which indicates %
we will not process the instance that all the models' prediction are the same.

At round $t$, from query probability \eqnref{eq:query-probability}, we have
\begin{flalign*}
\queryProb_t&= \max{\curlybracket{\delta_0^t,\entropyF\paren{\pdB_t,\ModelsDistB_t}}}\\
&{\leq} \delta_0^t+\entropyF\paren{\pdB_t,\ModelsDistB_t},
\end{flalign*}
where the inequality is by applying that $\forall{A,B\geq{0}}, \max\{A,B\}\leq A+B$.

Thus, in total round $T$, we could get the following equation as the cumulative query cost,
\begin{equation}
\expct{\sum_{t=1}^TU_t} {\leq}
\expct{\sum_{t=1}^T{\paren{
\frac{1}{\sqrt{t}}+\frac{\sum_{y\in \domClabel} \langle \ModelsDistB_t,\lossB_t^y \rangle \log_{|\domClabel|}{\frac{1}{\langle \ModelsDistB_t,\lossB_t^y \rangle}}}{|\domClabel|}}}},
\end{equation}
where the inequality is by inputting $\delta_0^t=\qlb$ and \eqnref{eq:instance-info}.
\end{proof}

\begin{lemma}\label{lem:eta_query_ub}
$\sum_{t=1}^T\frac{1}{\sqrt{t}} \leq 2\sqrt{T}.$
\end{lemma}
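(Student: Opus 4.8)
The plan is to bound the sum by an integral, exploiting the monotonicity of the map $x \mapsto 1/\sqrt{x}$. First I would note that this map is strictly decreasing on $(0,\infty)$, so for every integer $t \geq 1$ and every $x \in [t-1,t]$ we have $\frac{1}{\sqrt{x}} \geq \frac{1}{\sqrt{t}}$. Integrating this pointwise inequality over the unit interval $[t-1,t]$ (on which the left side dominates the constant right side) yields
\begin{equation*}
\frac{1}{\sqrt{t}} = \int_{t-1}^{t} \frac{1}{\sqrt{t}}\,dx \leq \int_{t-1}^{t} \frac{1}{\sqrt{x}}\,dx.
\end{equation*}
The right-hand sides then telescope: summing over $t = 1,\dots,T$ gives $\sum_{t=1}^T \frac{1}{\sqrt{t}} \leq \int_{0}^{T} \frac{1}{\sqrt{x}}\,dx = \left[\,2\sqrt{x}\,\right]_0^T = 2\sqrt{T}$, which is exactly the claim.

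The only point that deserves a moment's care is the endpoint $t=1$, where the corresponding interval is $[0,1]$ and the integrand $1/\sqrt{x}$ blows up at $x=0$. This is harmless because $\int_0^1 \frac{dx}{\sqrt{x}} = 2$ is a convergent improper integral, so the pointwise bound $\frac{1}{\sqrt{1}} = 1 \leq \int_0^1 \frac{dx}{\sqrt{x}} = 2$ still holds and the telescoping into $\int_0^T$ remains valid. Thus no special-casing of the first term is actually required beyond observing the convergence.

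I do not anticipate any genuine obstacle here, as this is a standard summation bound. If one preferred to avoid the improper integral entirely, an equally clean alternative would be a short induction on $T$: the base case $T=1$ is $1 \leq 2$, and the inductive step reduces to verifying $\frac{1}{\sqrt{T+1}} \leq 2\bigl(\sqrt{T+1}-\sqrt{T}\bigr)$, which after rationalizing $\sqrt{T+1}-\sqrt{T} = \frac{1}{\sqrt{T+1}+\sqrt{T}}$ is equivalent to the trivial inequality $\sqrt{T} \leq \sqrt{T+1}$. I would lead with the integral-comparison argument since it is the most transparent, and the telescoping makes the constant $2$ and the final $\sqrt{T}$ dependence manifest.
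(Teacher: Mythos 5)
Your proof is correct, but it takes a genuinely different route from the paper's --- and, notably, a sounder one. You bound each term by $\frac{1}{\sqrt{t}} \leq \int_{t-1}^{t} x^{-1/2}\,dx$ and telescope to $\int_0^T x^{-1/2}\,dx = 2\sqrt{T}$ (your induction variant, via $\sqrt{t}-\sqrt{t-1} = \frac{1}{\sqrt{t}+\sqrt{t-1}} \geq \frac{1}{2\sqrt{t}}$, is the discrete form of the same telescoping). The paper instead splits the sum at $\lfloor\sqrt{T}\rfloor$: it bounds the first $\lfloor\sqrt{T}\rfloor$ terms by $1$ each (giving $\sqrt{T}$), and then bounds the tail as $\sum_{t=\lfloor\sqrt{T}\rfloor+1}^{T}\frac{1}{\sqrt{t}} \leq (T-\sqrt{T})\cdot\frac{1}{\sqrt{T}}$. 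That step is backwards: for $t \leq T$ one has $\frac{1}{\sqrt{t}} \geq \frac{1}{\sqrt{T}}$, not $\leq$, so the termwise replacement of $1/\sqrt{t}$ by $1/\sqrt{T}$ is invalid. Indeed the tail sum is roughly $2\sqrt{T}-2T^{1/4}$, which exceeds the claimed $\sqrt{T}-1$ once $T$ is moderately large (for $T=10^4$ the tail is about $180$ versus the claimed bound of $99$), and a correct termwise bound on that tail only gives order $T^{3/4}$, so the split cannot be repaired to yield $2\sqrt{T}$. In short, the lemma itself is true and standard, your integral-comparison argument proves it cleanly with the tight constant $2$, and it should be preferred over the decomposition argument given in the paper.
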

\begin{proof}%
We can bound the LHS as follows:
\begin{flalign*}
\sum_{t=1}^T\frac{1}{\sqrt{t}}&=\sum_{t=1}^{\lfloor \sqrt{T}\rfloor}\frac{1}{\sqrt{t}}+\sum_{t=\lfloor \sqrt{T}\rfloor +1}^T \frac{1}{\sqrt{t}}\\
&\leq \sqrt{T}+\sum_{t=\lfloor \sqrt{T}\rfloor +1}^T \frac{1}{\sqrt{T}}\\
&=\sqrt{T}+\paren{T-\sqrt{T}}\frac{1}{\sqrt{T}}\\
&\leq 2\sqrt{T}.
\end{flalign*}
\end{proof}

\begin{lemma}\label{lem:entrophy_query_ub}
Denote the true label at round $t$ by $y_t$, and define $p_{t,y} := \sum_{j \in [k]} \mathbb{I}\curlybracket{\pd_{t,j} = y} \ModelsDist_j$. Further define $R_t := \sum_t 1-p_{t,y_t}$ as the expected cumulative loss of \algoref{alg:CAMS} at $t$. %
Then
\[
\sum_{t=1}^{T}\frac{\sum_{y\in \domClabel} \langle \ModelsDistB_t,\lossB_t^y \rangle \log_{|\domClabel|}{\frac{1}{\langle \ModelsDistB_t,\lossB_t^y \rangle}}}{|\domClabel|} 
\leq \frac{R_T\cdot \paren{\log_{|\domClabel|}\frac{T^2\paren{|\domClabel|-1}}{R_T^2}}}{|\domClabel|}.
\]
\end{lemma}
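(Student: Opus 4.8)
The plan is to reduce the claim to a single per-round inequality and then aggregate it with one application of Jensen's inequality. First I would rewrite the summand transparently. Since $\langle \ModelsDistB_t, \lossB_t^y\rangle = \sum_{j} \ModelsDist_j\mathbb{I}\curlybracket{\pd_{t,j}\neq y} = 1 - p_{t,y}$, the inner sum at round $t$ equals $\sum_{y\in\domClabel}\phi(1-p_{t,y})$, where I write $\phi(x):=x\log_{\numClabel}\frac 1x$. The values $\curlybracket{1-p_{t,y}}_{y}$ satisfy $\sum_{y}(1-p_{t,y}) = \numClabel - 1$ (because $\sum_y p_{t,y}=1$), and the single term indexed by the true label $y_t$ is exactly the per-round expected loss $a_t := 1-p_{t,y_t}$, so that $R_T=\sum_t a_t$. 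The precondition that degenerate rounds are removed guarantees $a_t>0$, so $\phi$ is well defined; moreover $\phi(0)=\phi(1)=0$, so restricting the sum to $\bar\loss_t^y\in(0,1)$ is immaterial. The target then becomes $\frac{1}{\numClabel}\sum_t\sum_y\phi(1-p_{t,y})\le\frac{1}{\numClabel}R_T\log_{\numClabel}\frac{T^2(\numClabel-1)}{R_T^2}$.

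The core step is a per-round bound of the form $\sum_y \phi(1-p_{t,y}) \le a_t\log_{\numClabel}\frac{\numClabel-1}{a_t^2}$. To obtain it I would separate the true-label term $\phi(a_t)=a_t\log_{\numClabel}\frac 1{a_t}$ from the $\numClabel-1$ remaining ``wrong-label'' terms, which carry the complementary mass $\sum_{y\ne y_t}p_{t,y}=a_t$. Because $\phi$ is concave and $q\mapsto\phi(1-q)$ is therefore concave, Jensen's inequality over these $\numClabel-1$ terms bounds their sum by the uniform configuration, $\sum_{y\ne y_t}\phi(1-p_{t,y})\le(\numClabel-1)\,\phi\!\paren{1-\frac{a_t}{\numClabel-1}}$, and I would then control the right-hand side using the elementary estimate $\ln\frac 1x\le\frac{1-x}{x}$ (equivalently $\phi(x)\le\frac{1-x}{\ln \numClabel}$ for $x\in(0,1]$). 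Collecting the true-label contribution $a_t\log_{\numClabel}\frac1{a_t}$ with the wrong-label contribution $a_t\log_{\numClabel}\frac{\numClabel-1}{a_t}$ yields $a_t\log_{\numClabel}\frac{\numClabel-1}{a_t^2}$, as desired.

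Finally I would sum the per-round bound over $t$ and apply Jensen's inequality to $g(x):=x\log_{\numClabel}\frac{\numClabel-1}{x^2}$, which is concave on $(0,\infty)$ since $g''(x)=-\frac{2}{x\ln \numClabel}<0$. Evaluating at the average $\frac 1T\sum_t a_t = R_T/T$ gives $\sum_t g(a_t)\le T\,g\!\paren{R_T/T}=R_T\log_{\numClabel}\frac{T^2(\numClabel-1)}{R_T^2}$, and dividing by $\numClabel$ closes the argument. I expect the main obstacle to be the per-round inequality: the wrong-label terms must be controlled tightly enough that their aggregate matches $a_t\log_{\numClabel}\frac{\numClabel-1}{a_t}$ rather than a cruder $O(a_t)$ bound, which is precisely where both the reduction to the uniform (loss-maximizing) configuration via concavity and the logarithmic estimate for $\phi$ near $1$ are needed; the closing Jensen step is then routine given concavity of $g$ on the range $a_t\in(0,1]$.
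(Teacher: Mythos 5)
Your overall route is the same as the paper's: split the round-$t$ sum into the true-label term $\phi(a_t)$, with $\phi(x):=x\log_{\numClabel}(1/x)$, and the $\numClabel-1$ wrong-label terms; bound the wrong-label block via Jensen/concavity of $\phi$; then aggregate over rounds with a second Jensen step applied to the concave $g(x)=x\log_{\numClabel}\frac{\numClabel-1}{x^2}$ (that final aggregation is identical to the paper's and is fine). The genuine gap is in your per-round bridge. Your own estimates give
\[
\sum_{y\neq y_t}\phi(1-p_{t,y})\;\le\;(\numClabel-1)\,\phi\!\paren{1-\tfrac{a_t}{\numClabel-1}}\;\le\;\frac{a_t}{\ln \numClabel}\;=\;a_t\log_{\numClabel}e,
\]
yet you then record the wrong-label contribution as $a_t\log_{\numClabel}\frac{\numClabel-1}{a_t}$. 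Passing from $a_t\log_{\numClabel}e$ to $a_t\log_{\numClabel}\frac{\numClabel-1}{a_t}$ requires $e\le(\numClabel-1)/a_t$, i.e.\ $a_t\le(\numClabel-1)/e$, which nothing in the setting guarantees. For $\numClabel\ge 4$ this is automatic (since $a_t\le 1$), so your argument is actually complete in that regime; but for binary labels it genuinely fails: take $\numClabel=2$ and $a_t=0.99$ (99\% of the model mass on the wrong label). Then the wrong-label block is $\phi(1-a_t)=\phi(0.01)\approx 0.066$, already exceeding your asserted bound $a_t\log_2\frac{1}{a_t}\approx 0.014$; worse, the per-round inequality you are aiming for is itself false there, since the left-hand side is $\phi(0.99)+\phi(0.01)\approx 0.081$ while $a_t\log_2\frac{1}{a_t^2}\approx 0.029$. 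So this step cannot be repaired in general—it needs an extra hypothesis such as $a_t\le 1/2$ (the weighted vote being correct on that round) or $\numClabel\ge 4$.

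It is worth knowing how the paper's own proof gets past this point: its step (a) substitutes $1-p_{t,y}=\frac{1-p_{t,y_t}}{\numClabel-1}$ into every wrong-label term and calls the result Jensen's inequality, whereas—as your computation correctly shows—Jensen applied to the arguments $1-p_{t,y}$ (whose average is $1-\frac{a_t}{\numClabel-1}$, not $\frac{a_t}{\numClabel-1}$) yields only the weaker quantity you derived. In other words, your more careful treatment exposes exactly the regime (small $\numClabel$, per-round expected loss near $1$) in which the per-round bound—and, taking $T=1$, the lemma as stated—does not hold. The gap you hit is therefore intrinsic to the statement rather than an artifact of your route; any complete proof must restrict to $\numClabel\ge 4$, or to rounds where the weighted majority vote is not badly wrong, or otherwise weaken the claimed bound.
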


\begin{proof}[Proof of \lemref{lem:entrophy_query_ub}]
Suppose at round $t$, the true label is $y_t$. $\sum_{y\neq y_t}p_{t,y} =1-p_{t,y_t} = 1- \tuple{{\sum_{i \in |\Policies^*|} \PoliciesDist_{t,i} \policy_i(\instance_t)},\lossB_t}=r_t$,

\begin{flalign*}
\frac{\sum_{y\in \domClabel} \langle \ModelsDistB_t,\lossB_t^y \rangle \log_{|\domClabel|}{\frac{1}{\langle \ModelsDistB_t,\lossB_t^y \rangle}}}{|\domClabel|}
&= \frac{\paren{1-p_{t,y_t}} \log_{|\domClabel|}\frac{1}{ 1-p_{t,y_t}}}{|\domClabel|} +\frac{\sum_{y\neq y_t}\paren{1-p_{t,y}}\log_{|\domClabel|}\frac{1}{1-p_{t,y}}}{|\domClabel|}\\
&\stackrel{\paren{
a}}{\leq} \frac{\paren{1-p_{t,y_t}} \log_{|\domClabel|}\frac{1}{ 1 - p_{t,y_t}}}{|\domClabel|}
+\paren{|\domClabel|-1}\frac{\frac{\paren{1-p_{t,y_t}}}{|\domClabel|-1}\log_{|\domClabel|}\frac{|\domClabel|-1}{1-p_{t,y_t}}}{|\domClabel|}\\
&\leq \frac{\paren{1-p_{t,y_t}} \log_{|\domClabel|}\frac{1}{ 1-p_{t,y_t}}}{|\domClabel|} +\frac{\paren{1-p_{t,y_t}}\log_{|\domClabel|}\frac{|\domClabel|-1}{1-p_{t,y_t}}}{|\domClabel|}\\
&= \frac{\paren{1-p_{t,y_t}}\log_{|\domClabel|}\frac{|\domClabel|-1}{\paren{1-p_{t,y_t}}^2}}{|\domClabel|}\\
&\stackrel{\paren{b}}{\leq} \frac{r_{t}\log_{|\domClabel|}\frac{|\domClabel|-1}{r^2_{t}}}{|\domClabel|},
\end{flalign*}
where step \paren{a} is by applying Jensen's inequality and using ${1-p_{t,y}}=\frac{1-p_{t,y_t}}{|\domClabel|-1}$, and step \paren{b} is by replacing the expected loss $1-p_{t,y_t}$ by its short-hand notation $r_t$.

Recall that we define the expected cumulative loss as $R_T=\sum_{t=1}^T r_t$. Since when $r_t \in [0,1]$, $\frac{r_{t}\log_{|\domClabel|}\frac{|\domClabel|-1}{r^2_{t}}}{|\domClabel|}$ is concave, we get
\begin{equation}\label{eq:budget-eq}
\sum_{t=1}^{T}\frac{\sum_{y\in \domClabel} \langle \ModelsDistB_t,\lossB_t^y \rangle \log_{|\domClabel|}{\frac{1}{\langle \ModelsDistB_t,\lossB_t^y \rangle}}}{|\domClabel|} 
\leq \frac{T\paren{\frac{\sum r_t}{T}}\paren{\log_{|\domClabel|}\frac{|\domClabel|-1}{\frac{\sum r_t}{T}\frac{\sum r_t}{T}}}}{|\domClabel|}
=\frac{R_T\paren{\log_{|\domClabel|}\frac{T^2\paren{|\domClabel|-1}}{R_T^2}}}{|\domClabel|}.
\end{equation}

Since $R_T$ is the cumulative loss up to round $T$, $T$'s incremental rate is no less than $R_T$'s incremental rate. Thus, $R_T \leq T$ and 
$\frac{T_{t}}{R_{t}} \leq \frac{T_{t+1}}{R_{t+1}}$. So we get \eqnref{eq:budget-eq}. %
\end{proof}

Now we are ready to prove \thmref{thm:stochastic-query-complexity}. 
\begin{proof}[Proof of \thmref{thm:stochastic-query-complexity}]
From \lemref{lem:query_complexity}, we get the following equation as the cumulative query cost
\[
\expct{\sum_{t=1}^TU_t} \leq
\expct{\sum_{t=1}^T{\paren{\frac{1}{\sqrt{t}}+\frac{\sum_{y\in \domClabel} \langle \ModelsDistB_t,\lossB_t^y \rangle \log_{|\domClabel|}{\frac{1}{\langle \ModelsDistB_t,\lossB_t^y \rangle}}}{|\domClabel|}}}}.
\]

Let us assume the expected total loss of best policy is $T \mu_{i^*}$. From \thmref{thm:stochasticReg}, 
we get

\[
    \expct{R_T}=\expct{\sum_{t=1}^T r_t}\leq \fixremoved{\paren{\frac{\ln{\frac{|\Policies^*|\fix{-1}}{\gamma}}+\sqrt{\ln{|\Policies^*|}\cdot{2\fix{b^2}\ln{\frac{2}{\delta}}}}}{\sqrt{\ln{|\Policies^*|}}\Delta}}^2}+T \mu_{i^*}.
\]

Plugging this result into the query complexity bound given by \lemref{lem:eta_query_ub} and \lemref{lem:entrophy_query_ub}, 
we have

\begin{flalign*}
\expct{\sum_{t=1}^TU_t}&\leq 2\sqrt{T}+ \frac{\paren{\fixremoved{\paren{\frac{\ln{\frac{|\Policies^*|\fix{-1}}{\gamma}}+\sqrt{\ln{|\Policies^*|}\cdot{2\fix{b^2}\ln{\frac{2}{\delta}}}}}{\sqrt{\ln{|\Policies^*|}}\Delta}}^2}+T \mu_{i^*}}}{|\domClabel|}
{{\log_{|\domClabel|}\frac{T^2\paren{|\domClabel|-1}}{\paren{\fixremoved{\paren{\frac{\ln{\frac{|\Policies^*|\fix{-1}}{\gamma}}+\sqrt{\ln{|\Policies^*|}\cdot{2\fix{b^2}\ln{\frac{2}{\delta}}}}}{\sqrt{\ln{|\Policies^*|}}\Delta}}^2}+T \mu_{i^*}}^2}}}\\
&\leq  \frac{\paren{\fixremoved{\paren{\frac{\ln{\frac{|\Policies^*|\fix{-1}}{\gamma}}+\sqrt{\ln{|\Policies^*|}\cdot{2\fix{b^2}\ln{\frac{2}{\delta}}}}}{\sqrt{\ln{|\Policies^*|}}\Delta}}^2}+T \mu_{i^*}}\paren{\log_{|\domClabel|}{\paren{T|\domClabel|}}}}{|\domClabel|}\\
&=  {\frac{\paren{\fixremoved{\paren{\frac{\ln{\frac{|\Policies^*|\fix{-1}}{\gamma}}+\sqrt{\ln{|\Policies^*|}\cdot{2\fix{b^2}\ln{\frac{2}{\delta}}}}}{\sqrt{\ln{|\Policies^*|}}\Delta}}^2}+T \mu_{i^*}}{\ln{\paren{T}}}}{|\domClabel|\ln{|\domClabel|}}}\\
&\stackrel{\paren{a}}{=} \fixremoved{ {\frac{\paren{\fixremoved{\paren{\frac{\ln{\frac{|\Policies^*|\fix{-1}}{\gamma}}+\sqrt{\ln{|\Policies^*|}\cdot{2\fix{b^2}\ln{\frac{2}{\delta}}}}}{\sqrt{\ln{|\Policies^*|}}\Delta}}^2}+T \mu_{i^*}}{\ln{\paren{T}}}}{\numClabel\ln{\numClabel}}}},
\end{flalign*}

where $\gamma$ is defined as \eqnref{eq:min_gap_model} \fixremoved{
and step (a) by applying $\numClabel=|\domClabel|$}.
\end{proof}

    \section{Proofs for the Adversarial Setting}\label{app:adversarial_results}

In this section, we first prove the regret bound presented in \thmref{thm:adversarial-regret-bound} and then prove the query complexity bound presented in \thmref{thm:adversarial-query-complexity} for \algoref{alg:CAMS} in the adversarial setting. \lemref{lem:adversarial-inititial-bound} builds upon the proof of the hedge algorithm \citep{freund1997decision}, but with an \textit{adaptive} learning rate.  %

\subsection{Proof of \thmref{thm:adversarial-regret-bound}}\label{app:adversarial_regret_bound}

\begin{lemma}\label{lem:adversarial-inititial-bound} Consider the setting of \algoref{alg:CAMS}, Let us
define $\pw_{t,i}=\exp{\paren{-\eta_t\tilde{\Loss}_{t-1,i}}}~\forall \policyIndex \in |\Policies^*|$ as exponential cumulative loss of policy $i$, $\eta_t$ is the adaptive learning rate and $\PoliciesDistB_t$ is the probability distribution of policies, then
\begin{align*}
\log{\frac{\sum_{i\in [|\Policies^*|]} \pw_{T+1,i}}{\sum_{i\in [|\Policies^*|]} \pw_{1,i}}}
\leq -\sum_{t=1}^T\eta_t\sum_{i=1}^{|\Policies^*|}\PoliciesDist_{t,i}\widetilde{\loss}_{t,i}+\sum_{t=1}^T\frac{\eta_t^2}{2}\sum_{i=1}^{|\Policies^*|} \PoliciesDist_{t,i}\paren{\widetilde{\loss}_{t,i}}^2.
\end{align*}
\end{lemma}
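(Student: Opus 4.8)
The plan is to follow the classical Hedge potential-function analysis, carefully tracking the round-dependent learning rate $\eta_t$. First I would introduce the potential $W_t := \sum_{i\in[|\Policies^*|]} \pw_{t,i} = \sum_i \exp\paren{-\eta_t \widetilde{\Loss}_{t-1,i}}$ and observe that, since $\widetilde{\Loss}_{0,i}=0$, we have $W_1 = |\Policies^*|$ and $\PoliciesDist_{t,i} = \pw_{t,i}/W_t$, so that the quantity on the left-hand side of the statement is exactly $\log\paren{W_{T+1}/W_1}$.

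The heart of the argument is a single-round bound in which the learning rate is frozen at $\eta_t$. Define the auxiliary potential $\widetilde{W}_t := \sum_i \exp\paren{-\eta_t \widetilde{\Loss}_{t,i}}$. Because $\widetilde{\Loss}_{t,i} = \widetilde{\Loss}_{t-1,i} + \widetilde{\loss}_{t,i}$, the ratio factors as
\[
\frac{\widetilde{W}_t}{W_t} = \sum_{i\in[|\Policies^*|]} \PoliciesDist_{t,i}\exp\paren{-\eta_t \widetilde{\loss}_{t,i}}.
\]
Since the importance-weighted policy losses are nonnegative, $\eta_t \widetilde{\loss}_{t,i}\ge 0$, so I would apply the elementary inequality $e^{-x}\le 1 - x + \tfrac{x^2}{2}$ (valid for $x\ge 0$) term by term, followed by $\log(1+y)\le y$, to obtain
\[
\log\frac{\widetilde{W}_t}{W_t} \le -\eta_t \sum_{i} \PoliciesDist_{t,i}\widetilde{\loss}_{t,i} + \frac{\eta_t^2}{2}\sum_{i} \PoliciesDist_{t,i}\paren{\widetilde{\loss}_{t,i}}^2,
\]
which is precisely the $t$-th summand of the claimed right-hand side.

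Summing these single-round estimates over $t=1,\dots,T$ reproduces the target right-hand side, but $\sum_t \log\paren{\widetilde{W}_t/W_t}$ does \emph{not} telescope to $\log\paren{W_{T+1}/W_1}$, because $\widetilde{W}_t$ is formed with rate $\eta_t$ whereas $W_{t+1}$ uses $\eta_{t+1}$. I would therefore write the exact decomposition
\[
\log\frac{W_{T+1}}{W_1} = \sum_{t=1}^T \log\frac{\widetilde{W}_t}{W_t} + \sum_{t=1}^T \log\frac{W_{t+1}}{\widetilde{W}_t},
\]
and show that the second, learning-rate-change sum does not spoil the bound. The relevant structural fact is that the log-partition $\eta\mapsto \log\sum_i \exp\paren{-\eta \widetilde{\Loss}_{t,i}}$ is nonincreasing in $\eta$, since its derivative equals $-\sum_i \PoliciesDist_{t,i}\widetilde{\Loss}_{t,i}\le 0$; combined with the monotonic schedule produced by the \textsc{SetRate} procedure, each correction term $\log\paren{W_{t+1}/\widetilde{W}_t}$ can be given the appropriate sign and dropped (or absorbed).

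The main obstacle is exactly this last step: the per-round inequality is routine, and the only genuine departure from vanilla Hedge is controlling the correction introduced by the time-varying $\eta_t$. The care lies in matching the direction of the schedule's monotonicity against the monotonicity of the log-partition so that the corrections are nonpositive, thereby preserving the clean form asserted in the statement.
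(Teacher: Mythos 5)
Your per-round computation is exactly the paper's core step (the paper likewise applies $e^{x}\le 1+x+\tfrac{x^2}{2}$ for $x\le 0$ and $\log(1+x)\le x$ to the ratio $\sum_i \PoliciesDist_{t,i}\exp(-\eta_t\widetilde{\loss}_{t,i})$), and you are right that the only non-routine issue is the learning-rate-change correction. But your resolution of that correction is where the proposal breaks. The correction term is $\log(W_{t+1}/\widetilde{W}_t)=\phi_t(\eta_{t+1})-\phi_t(\eta_t)$ with $\phi_t(\eta)=\log\sum_i\exp(-\eta\widetilde{\Loss}_{t,i})$, and, as you note, $\phi_t$ is nonincreasing in $\eta$; hence the corrections are nonpositive only if $\eta_{t+1}\ge\eta_t$. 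The \textsc{SetRate} schedule of \algname goes the other way: $\delta_0^t=1/\sqrt{t}$ is decreasing and $\minpgap{t}$ is defined through an extremum over a growing set of past rounds, so it is nonincreasing; the paper itself calls $\eta_t$ a ``decaying learning rate.'' Thus $\eta_{t+1}\le\eta_t$, the corrections $\phi_t(\eta_{t+1})-\phi_t(\eta_t)$ are \emph{nonnegative}, and they cannot be dropped. Nor are they negligible: with two policies, round-one losses $(\ell,0)$, round-two losses $(0,0)$, and $\eta_2\ll\eta_1$, the left-hand side of the lemma is $\log\bigl((1+e^{-\eta_2\ell})/2\bigr)$, which is close to $0$, while the right-hand side is $-\eta_1\ell/2+\eta_1^2\ell^2/4$, which is strictly negative and bounded away from $0$ whenever $\eta_1\ell<2$. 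So the final step of your plan---matching the monotonicity of the schedule against that of the log-partition---cannot be carried out for the schedule the algorithm actually uses, and no finer sign analysis of the same decomposition will fix it.

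For what it is worth, the paper's own proof does not overcome this obstacle either; it silently steps over it. The paper's second displayed equality, $\sum_i \pw_{t+1,i}/\sum_i\pw_{t,i}=\sum_i\PoliciesDist_{t,i}\exp(-\eta_t\widetilde{\loss}_{t,i})$, is an identity only when $\eta_{t+1}=\eta_t$, since by the lemma's own definition $\pw_{t+1,i}=\exp(-\eta_{t+1}\widetilde{\Loss}_{t,i})$ rather than $\exp(-\eta_t\widetilde{\Loss}_{t,i})$; in other words, the paper runs your argument with the correction terms implicitly set to zero, and its telescoping is exact only because the per-round identity is not. So you located a genuine gap---one the paper shares rather than resolves---but your proposed sign argument does not close it. The standard rigorous route for decaying rates is different: replace the potential by $\tfrac{1}{\eta_t}\log\bigl(W_t/|\Policies^*|\bigr)$ and use the power-mean fact that $\eta\mapsto\tfrac{1}{\eta}\log\bigl(\tfrac{1}{N}\sum_i e^{-\eta L_i}\bigr)$ is nondecreasing in $\eta$; for $\eta_{t+1}\le\eta_t$ these per-round inequalities do telescope and yield a bound of the form $\tfrac{\log|\Policies^*|}{\eta_T}+\sum_{t}\tfrac{\eta_t}{2}\sum_i\PoliciesDist_{t,i}(\widetilde{\loss}_{t,i})^2$, which is the form the paper's adversarial regret theorem actually consumes downstream.
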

\begin{proof}%
We first bound the following term
\begin{flalign*}
\frac{\sum_{i\in [|\Policies^*|]} \pw_{t+1,i}}{\sum_{i\in [|\Policies^*|]} \pw_{t,i}}&=\sum_{i=1}^{|\Policies^*|}{\frac{\pw_{t+1,i}}{\sum_{i\in [|\Policies^*|]} \pw_{t,i}}}\\ 
&=\sum_{i=1}^{|\Policies^*|}{{\PoliciesDist}_{t,i}\exp{\paren{-\eta_t\widetilde{\loss}_{t,i}}}}\\
&{\leq} \sum_{i=1}^{|\Policies^*|}\PoliciesDist_{t,i}\paren{1-\eta_t\widetilde{\loss}_{t,i}+\frac{\eta_t^2\paren{\widetilde{\loss}_{t,i}}^2}{2}}\\
&= 1 - \eta_t \sum_{i=1}^{|\Policies^*|}\PoliciesDist_{t,i}\widetilde{\loss}_{t,i}+\frac{\eta_t^2}{2}\sum_{i=1}^{|\Policies^*|}\PoliciesDist_{t,i}\paren{\widetilde{\loss}_{t,i}}^2,
\end{flalign*}

where the inequality is by applying that for $x\leq0$, we have $e^{x}\leq 1+x+\frac{x^2}{2}$.

By taking $\log$ on both side, we get
\begin{flalign*}
\log{\frac{\sum_{i\in [|\Policies^*|]} \pw_{t+1,i}}{\sum_{i\in [|\Policies^*|]} \pw_{t,i}}}
&\leq \log{\paren{1 - \eta_t \sum_{i=1}^{|\Policies^*|}\PoliciesDist_{t,i}\widetilde{\loss}_{t,i}+\frac{\eta_t^2}{2}\sum_{i=1}^{|\Policies^*|}\PoliciesDist_{t,i}\paren{\widetilde{\loss}_{t,i}}^2}}\\
&\stackrel{\paren{a}}{\leq}   -\eta_t\sum_{i=1}^{|\Policies^*|}\PoliciesDist_{t,i}\widetilde{\loss}_{t,i}+\frac{\eta_t^2}{2}\sum_{i=1}^{|\Policies^*|} \PoliciesDist_{t,i}\paren{\widetilde{\loss}_{t,i}}^2,
\end{flalign*}

where step \paren{a} is by applying that $\log\paren{1+x}\leq x$, when $x\geq -1$.

Now summing over $t=1:T$ yields:\\
\begin{flalign*}
\log{\frac{\sum_{i\in [|\Policies^*|]} \pw_{T+1,i}}{\sum_{i\in [|\Policies^*|]} \pw_{1,i}}}
&=\sum_{t=1}^T \log{\frac{\sum_{i\in [|\Policies^*|]} \pw_{t+1,i}}{\sum_{i\in [|\Policies^*|]} \pw_{t,i}}} \\
&\leq -\sum_{t=1}^T\eta_t\sum_{i=1}^{|\Policies^*|}\PoliciesDist_{t,i}\widetilde{\loss}_{t,i}+\sum_{t=1}^T\frac{\eta_t^2}{2}\sum_{i=1}^{|\Policies^*|} \PoliciesDist_{t,i}\paren{\widetilde{\loss}_{t,i}}^2.
\end{flalign*}
\end{proof}

\begin{lemma}\label{lem:adversarial_query_prob}
Consider the setting of \algoref{alg:CAMS}. 
Let $p_{t,y} = \sum_{j \in [k]} \mathbb{I}\curlybracket{\pd_{t,j} = y} \ModelsDist_j$. The query probability $\queryProb_t$ satisfies 
\begin{align*}
\queryProb_t \geq \frac{1}{|\domClabel|\ln{|\domClabel|}}\paren{p_{t,y_t}\paren{1-p_{t,y_t}}+p_{t,y}\paren{1-p_{t,y}}}, \forall{\fixremoved{y}\neq y_t}.
\end{align*}

\end{lemma}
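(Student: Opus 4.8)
The plan is to discard the exploration branch of the query probability and argue directly about the model-disagreement term. Since $\queryProb_t = \max\curlybracket{\delta_0^t, \entropyF\paren{\pdB_t,\ModelsDistB_t}} \geq \entropyF\paren{\pdB_t,\ModelsDistB_t}$, it suffices to establish the claimed lower bound for $\entropyF\paren{\pdB_t,\ModelsDistB_t}$ alone. First I would record the identity $\bar\loss_t^{y} = \langle \ModelsDistB_t, \mathbb{I}\curlybracket{\pdB_t \neq y} \rangle = 1 - p_{t,y}$, which holds because $\ModelsDistB_t$ is a probability vector and $p_{t,y}=\sum_{j}\mathbb{I}\curlybracket{\pd_{t,j}=y}\ModelsDist_j$. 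This rewrites every summand of $\entropyF$ as a function of $p_{t,y}$.

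Next I would convert from $\log_{\numClabel}$ to the natural logarithm, so that
\[
\entropyF\paren{\pdB_t,\ModelsDistB_t} = \frac{1}{\numClabel\ln\numClabel} \sum_{y:\,\bar\loss_t^{y}\in(0,1)} -\paren{1-p_{t,y}}\ln\paren{1-p_{t,y}}.
\]
The crux of the proof is then a single elementary inequality: for $a\in(0,1]$ one has $-a\ln a \geq a(1-a)$, which follows immediately from $\ln a \leq a-1$. Applying it with $a = 1-p_{t,y}$ gives $-\paren{1-p_{t,y}}\ln\paren{1-p_{t,y}} \geq p_{t,y}\paren{1-p_{t,y}}$ for each index $y$ in the summation range, whence
\[
\entropyF\paren{\pdB_t,\ModelsDistB_t} \geq \frac{1}{\numClabel\ln\numClabel} \sum_{y:\,\bar\loss_t^{y}\in(0,1)} p_{t,y}\paren{1-p_{t,y}}.
\]

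Finally, I would fix an arbitrary $y \neq y_t$ and lower-bound the sum by keeping only the two terms indexed by $y_t$ and $y$; this is legitimate because each term $p_{t,y'}\paren{1-p_{t,y'}}$ is nonnegative, so dropping the remaining terms only decreases the sum. The one bookkeeping point is the restriction $\bar\loss_t^{y}\in(0,1)$ (equivalently $p_{t,y}\in(0,1)$) defining the index set: if either of $y_t, y$ lies on the boundary $p\in\curlybracket{0,1}$, it is excluded from the sum, but then its contribution $p\paren{1-p}$ equals $0$ and may be added back to the right-hand side for free. Thus in every case the sum dominates $p_{t,y_t}\paren{1-p_{t,y_t}} + p_{t,y}\paren{1-p_{t,y}}$, which combined with $\queryProb_t \geq \entropyF\paren{\pdB_t,\ModelsDistB_t}$ yields the claim. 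I expect the pointwise bound $-a\ln a \geq a(1-a)$ to be the substantive step that links the entropy-type disagreement measure to the variance-type quantity appearing in the statement, with the boundary-case handling being the only place that genuinely requires care.
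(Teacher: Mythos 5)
Your proof is correct and takes essentially the same route as the paper's: both rewrite $\bar\loss_t^{y} = 1-p_{t,y}$, pass to the natural logarithm, apply the same elementary inequality (your $-a\ln a \geq a(1-a)$ is equivalent to the paper's $\ln(1+x)\geq \frac{x}{1+x}$, both yielding $\paren{1-p_{t,y}}\ln\frac{1}{1-p_{t,y}} \geq p_{t,y}\paren{1-p_{t,y}}$), and then discard all nonnegative summands except those indexed by $y_t$ and $y$. If anything, your explicit treatment of the boundary cases $\bar\loss_t^{y}\in\curlybracket{0,1}$, which the definition of $\entropyF$ excludes from the sum but the paper's proof silently sums over, is the more careful of the two.
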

\begin{proof}%
We first bound the query probability term
\begin{flalign*}
\queryProb_t &= \max{\{\delta_0^t,\entropyF\paren{\pdB_t,\ModelsDistB_t}\}}\\
&=\max\{\delta_0^t,\frac{1}{|\domClabel|}\sum_{y \in \domClabel}\langle \ModelsDistB_t, \lossB_t^{y} \rangle \log_{|\domClabel|}{\frac{1}{\langle \ModelsDistB_t, \lossB_t^{y} \rangle}}\}\\
&=\max\{\delta_0^t,\frac{1}{|\domClabel|}\sum_{y \in \domClabel}\paren{1-p_{t,y}}\cdot \ln{\frac{1}{1-p_{t,y}}}{\frac{1}{\ln{|\domClabel|}}}\}\\
&\stackrel{\paren{a}}{\geq} \max\{\delta_0^t,\frac{1}{|\domClabel|}\sum_{y \in \domClabel}\paren{1-p_{t,y}}\cdot 
p_{t,y}\cdot{\frac{1}{\ln{|\domClabel|}}}\}\\
&=\max\{\delta_0^t,\frac{1}{|\domClabel|\ln{|\domClabel|}}\sum_{y \in \domClabel}\paren{1-p_{t,y}}\cdot p_{t,y}\}\\
&\stackrel{\paren{b}}{\geq} \frac{1}{|\domClabel|\ln{|\domClabel|}}\paren{p_{t,y_t}\paren{1-p_{t,y_t}}+p_{t,y}\paren{1-p_{t,y}}}, \forall{y\neq y_t},
\end{flalign*}

where step \paren{a} is by applying $\ln{\paren{1+x}}\geq \frac{x}{1+x}$ for $x>-1$,\\
\[
\ln{\frac{1}{1-p_{t,y}}}=\ln{\paren{1+\frac{p_{t,y}}{1-p_{t,y}}}}\geq{\frac{\frac{p_{t,y}}{1-p_{t,y}}}{\frac{1}{1-p_{t,y}}}}=p_{t,y},
\]
and where step \paren{b} is by applying $\forall{a,b \in \mathbb{R}},\max\curlybracket{a,b}\geq{a}$.

\end{proof}

\begin{proof}[Proof of \thmref{thm:adversarial-regret-bound}] 

By applying \lemref{lem:adversarial-inititial-bound}, we got
\begin{flalign*}
&\log{\frac{\sum_{i\in [|\Policies^*|]} \pw_{T+1,i}}{\sum_{i\in [|\Policies^*|]} \pw_{1,i}}}
\leq -\sum_{t=1}^T\eta_t\sum_{i=1}^{|\Policies^*|}\PoliciesDist_{t,i}\widetilde{\loss}_{t,i}+\sum_{t=1}^T\frac{\eta_t^2}{2}\sum_{i=1}^{|\Policies^*|} \PoliciesDist_{t,i}\paren{\widetilde{\loss}_{t,i}}^2.
\end{flalign*}

For any policy $s$, we have a lower bound\\
\begin{flalign}
\log{\frac{\sum_{i\in [|\Policies^*|]} \pw_{T+1,i}}{\sum_{i\in [|\Policies^*|]} \pw_{1,i}}} %
&\geq \log{\frac{\pw_{T+1,s}}{\sum_{i\in [|\Policies^*|]} \pw_{1,i}}} \nonumber \\
&\stackrel{\paren{a}}{=} \log{\frac{\pw_{T+1,s}}{|\Policies^*|}} \nonumber \\
&= -\log{\paren{\PoliciesNum + \ModelsNum}}{-\eta_T\sum_{t=1}^T\widetilde{\loss}_{t,s}} \label{eq:app:thm:adv:lmm},
\end{flalign}
where step \paren{a} in \eqnref{eq:app:thm:adv:lmm} is by initializing $\widetilde{\LossB}_0=0$, $e^0=1$, and  $\sum_{i\in[|\Policies^*|]}\pwB_{1}={e^{\paren{-\eta_t\widetilde{\LossB}_{0}}}}=|\Policies^*|$.\\

Thus, we have
\begin{flalign*}
-\sum_{t=1}^T\eta_t\sum_{i=1}^{|\Policies^*|}\PoliciesDist_{t,i}\widetilde{\loss}_{t,i}+\sum_{t=1}^T\frac{\eta_t^2}{2}\sum_{i=1}^{|\Policies^*|} \PoliciesDist_{t,i}\paren{\widetilde{\loss}_{t,i}}^2 &\geq -\log{\paren{\PoliciesNum + \ModelsNum}}{-\eta_T\sum_{t=1}^T\widetilde{\loss}_{t,s}}\\
\sum_{t=1}^T\eta_t\sum_{i=1}^{|\Policies^*|}\PoliciesDist_{t,i}\widetilde{\loss}_{t,i}{-\eta_T\sum_{t=1}^T\widetilde{\loss}_{t,s}} &\leq \log{\paren{\PoliciesNum + \ModelsNum}}+\sum_{t=1}^T\frac{\eta_t^2}{2}\sum_{i=1}^{|\Policies^*|} \PoliciesDist_{t,i}\paren{\widetilde{\loss}_{t,i}}^2\\
\eta_T\sum_{t=1}^T\sum_{i=1}^{|\Policies^*|}\PoliciesDist_{t,i}\widetilde{\loss}_{t,i}{-\eta_T\sum_{t=1}^T\widetilde{\loss}_{t,s}} &\stackrel{\paren{b}}{\leq} \log{\paren{\PoliciesNum + \ModelsNum}}+\sum_{t=1}^T\frac{\eta_t^2}{2}\sum_{i=1}^{|\Policies^*|} \PoliciesDist_{t,i}\paren{\widetilde{\loss}_{t,i}}^2\\
\sum_{t=1}^T\sum_{i=1}^{|\Policies^*|} \PoliciesDist_{t,i}\widetilde{\loss}_{t,i} - \sum_{t=1}^T\widetilde{\loss}_{t,s}
&\stackrel{\paren{c}}{\leq}  \frac{\log{|\Policies^*|}}{\eta_T} +\frac{1}{\eta_T}\sum_{t=1}^T\frac{\eta_t^2}{2}\sum_{i=1}^{|\Policies^*|}\PoliciesDist_{t,i}\paren{\widetilde{\loss}_{t,i}}^2,\\
\end{flalign*}

where step \paren{b} is by applying
\begin{align*}
\eta_T\sum_{t=1}^T\sum_{i=1}^{|\Policies^*|}\PoliciesDist_{t,i}\widetilde{\loss}_{t,i}{-\eta_T\sum_{t=1}^T\widetilde{\loss}_{t,s}}
\leq \sum_{t=1}^T\eta_t\sum_{i=1}^{|\Policies^*|}\PoliciesDist_{t,i}\widetilde{\loss}_{t,i}{-\eta_T\sum_{t=1}^T\widetilde{\loss}_{t,s}},
\end{align*}
and step \paren{c} is by dividing $\eta_T$ on both side.

Because we have
\begin{flalign*}
\expctover{T}{\PoliciesDist_{t,i}\paren{\widetilde{\loss}_{t,i}}^2}%
&=\PoliciesDist_{t,i}\expctover{T}{\paren{\policy_{i}\paren{ \instance_t}\cdot\widehat{\lossB}_t}^2}\\
&=\PoliciesDist_{t,i}\paren{P\paren{\QueryIndicator_t=1}\paren{\policy_{i}\paren{ \instance_t}\cdot\frac{\lossB_t}{\queryProb_t}}^2+P\paren{\QueryIndicator_t=0}\cdot 0}\\
&=\PoliciesDist_{t,i}\paren{\queryProb_t\paren{\policy_{i}\paren{ \instance_t}\cdot\frac{\lossB_t}{\queryProb_t}}^2}\\
&= \frac{\PoliciesDist_{t,i}}{\queryProb_t}\paren{\policy_{i}\paren{ \instance_t}\cdot{\lossB_t}}^2\\
&\leq \frac{\PoliciesDist_{t,i}}{\queryProb_t}\policy_{i}\paren{ \instance_t}\cdot\lossB_t\\
&= \frac{\PoliciesDist_{t,i}}{\queryProb_t}\langle\policy_{i}\paren{ \instance_t},\lossB_t\rangle,
\end{flalign*}

it leads to 

\begin{flalign*}
\sum_{t=1}^T\sum_{i=1}^{|\Policies^*|} \PoliciesDist_{t,i}\widetilde{\loss}_{t,i} - \sum_{t=1}^T\widetilde{\loss}_{t,s} 
&\leq  \frac{\log{|\Policies^*|}}{\eta_T} +\frac{1}{\eta_T}\sum_{t=1}^T\frac{\eta_t^2}{2}\sum_{i=1}^{|\Policies^*|}\frac{\PoliciesDist_{t,i}}{\queryProb_t}\langle \policy_{i}\paren{
\instance_t},\lossB_t\rangle\\
&\stackrel{\paren{d}}{\leq} \frac{\log{|\Policies^*|}}{\eta_T} +\frac{1}{\eta_T}\sum_{t=1}^T\frac{\eta_t^2}{2}\frac{\langle\ModelsDistB_t,\lossB_t\rangle}{\queryProb_t},
\end{flalign*}

where step \paren{d} is by applying $\sum_{i=1}^{|\Policies^*|}{\PoliciesDist_{t,i}}\langle \policy_{i}\paren{
\instance_t},\lossB_t\rangle=\langle\ModelsDistB_t,\lossB_t\rangle$.

So we have,

\begin{flalign*}
\sum_{t=1}^T\sum_{i=1}^{|\Policies^*|} \PoliciesDist_{t,i}\widetilde{\loss}_{t,i} - \sum_{t=1}^T\widetilde{\loss}_{t,s} 
&\leq  \frac{\log{|\Policies^*|}}{\eta_T} +\frac{1}{\eta_T}\sum_{t=1}^T\frac{\eta_t^2}{2}\frac{\langle\ModelsDistB_t,\lossB_t\rangle}{\queryProb_t}\\
&\stackrel{\paren{e}}{\leq} \frac{\log{|\Policies^*|}}{\eta_T} +\frac{1}{\eta_T}\sum_{t=1}^T\frac{\eta_t^2}{2}\frac{1-p_{t,y_t}}{\queryProb_t}\\
&\stackrel{\paren{f}}{\leq} \frac{\log{|\Policies^*|}}{\eta_T}+\frac{1}{\eta_T}\sum_{t=1}^T\frac{\eta^2_t}{2}\frac{1-p_{t,y_t}}{{\domClabel_0}\paren{\paren{1-p_{t,y_t}}p_{t,y_t}+\paren{1-p_{t,y}}p_{t,y}}}\\
&\leq \frac{\log{|\Policies^*|}}{\eta_T}+\frac{1}{\eta_T}\sum_{t=1}^T\frac{\eta^2_t}{2}\frac{1}{{\domClabel_0}\paren{p_{t,y_t}+ \frac{1-p_{t,y}}{1-p_{t,y_t}}p_{t,y}}},
\end{flalign*}

where step \paren{e} is by using $\langle\ModelsDistB_t,\lossB_t\rangle = 1-p_{t,y_t}$ and
step \paren{f} by using \lemref{lem:adversarial_query_prob} and get lower bound of $\queryProb_t$ as $\frac{1}{|\domClabel|\ln{|\domClabel|}}\paren{p_{t,y_t}\paren{1-p_{t,y_t}}+p_{t,y}\paren{1-p_{t,y}}}$ and applying $\frac{1}{|\domClabel|\ln{|\domClabel|}}=\domClabel_0$.

If $p_{t,y_t}\geq \frac{1}{|\domClabel|}$,
\begin{flalign*}
{p_{t,y_t}+\frac{1-p_{t,y}}{1-p_{t,y_t}}p_{t,y}} \geq
\frac{1}{|\domClabel|}.
\end{flalign*}

If $p_{t,y_t}< \frac{1}{|\domClabel|}$, $\exists y,p_{t,y}\rightarrow1$, %
$\delta_1^t=1-\max_{y,\tau\in[t]}p_{\tau,y}$. Let $p_{t,\hat{y}}=\max_y p_{t,y}$. Thus, we have
$\ModelsDist_{\hat{y}}>\frac{1}{|\domClabel|}$ and

\[
p_{t,y_t}+\frac{1-p_{t,y}}{1-p_{t,y_t}}p_{t,y} \geq p_{t,y_t}+\ModelsDist_{\hat{y}}\frac{\delta_1^t}{1-p_{t,y_t}} \geq 0 + \frac{1}{|\domClabel|}\frac{\delta^t_1}{1}=\frac{\delta_1^t}{|\domClabel|}.
\]

Therefore 
\[
    \max\{p_{t,y_t}+p_{t,y}\frac{1-p_{t,y}}{1-p_{t,y_t}}\}=\left\{
    \begin{array}{lr}
    {\frac{1}{|\domClabel|}} &
    \textrm{ if } p_{t,y_t}  \geq \frac{1}{|\domClabel|} , \\ 
    \frac{\delta_1^t}{|\domClabel|} &  
    \textrm{ if } p_{t,y_t} < \frac{1}{|\domClabel|}.
    \end{array}\right.
\]

So we have
\begin{flalign*}
\sum_{t=1}^T\sum_{i=1}^{|\Policies^*|} \PoliciesDist_{t,i}\widetilde{\loss}_{t,i} - \sum_{t=1}^T\widetilde{\loss}_{t,s} &\leq  \frac{\log{|\Policies^*|}}{\eta_T}+\frac{1}{\eta_T}\sum_{t=1}^T\frac{\eta^2_t}{2}\frac{{1}}{{\domClabel_0}\paren{p_{t,y_t}+ \frac{1-\ModelsDistB_{y}}{1-p_{t,y_t}}p_{t,y}}}\\
&\stackrel{\paren{g}}{\leq} \frac{\log{|\Policies^*|}}{\eta_T}+\frac{1}{\eta_T}\sum_{t=1}^T\frac{\eta^2_t}{2}\frac{1}{\max\{\domClabel_0\frac{\delta_1^t}{|\domClabel|},\delta_0^t\}}\\
&=\frac{\log{|\Policies^*|}}{\eta_T}+\frac{1}{\eta_T}\sum_{t=1}^T\frac{\eta^2_t}{2}\frac{|\domClabel|^2\ln{|\domClabel|}}{\max\{\delta_1^t,{\delta_0^t|\domClabel|^2\ln{|\domClabel|}}\}}\\
& \stackrel{\paren{h}}{\leq} \frac{\log{|\Policies^*|}}{\eta_T}+ \frac{1}{\eta_T}\sum_{t=1}^{T}\frac{\eta_t^2}{2}\cdot\frac{|\domClabel|^2\ln{|\domClabel|}}{\frac{\delta_1^t+\delta_0^t|\domClabel|^2\ln{|\domClabel|}}{2}}\\
&= \frac{\log{|\Policies^*|}}{\eta_T}+ \frac{1}{\eta_T}\sum_{t=1}^{T}{\eta_t^2}\frac{1}{\delta_1^t+\delta_0^t|\domClabel|^2\ln{|\domClabel|}}\cdot  {|\domClabel|^2\ln{|\domClabel|}},
\end{flalign*}

 where step \paren{g} is by getting the lower bound of $\queryProb_t$ as $\frac{\delta_1^t}{|\domClabel|}\leq\frac{1}{|\domClabel|}$, $\delta_0^t\leq \frac{\delta_0^t}{1-p_{t,y_t}}$ and step \paren{h} is by applying $\max\{A,B\}\geq{\frac{A+B}{2}}$.

Let us define %
$\minpgap{t}
\triangleq \min_{\tau\in[t]}\delta_1^\tau=1-\max_{c,\tau\in[t]}p_{t,y}^\tau$. We get
\begin{flalign*}
\expctover{T}{\mathcal{R}_T}\leq \frac{\log|\Policies^*|}{\eta_T}+\frac{1}{\eta_T}\sum_{t=1}^T{\log{|\Policies^*|}}\cdot \frac{1}{T}\leq \frac{2\log{|\Policies^*|}}{\eta_T}
\end{flalign*}
Let $\eta_t=\sqrt{\frac{ \minpgap{t}+\delta_0^t|\domClabel|^2\ln{|\domClabel|} }{|\domClabel|^2\ln{|\domClabel|}}}\cdot \sqrt{\frac{\log{|\Policies^*|}}{T}}$, we obtain
\begin{flalign*}
\expctover{T}{\mathcal{R}_T}&\leq \frac{2\sqrt{\log{|\Policies^*|}}\cdot\sqrt{T}\cdot\sqrt{|\domClabel|^2\ln{|\domClabel|}}}{\sqrt{\minpgap{T} + \delta_0^T|\domClabel|^2\ln{|\domClabel|}} }\\
&\leq 2|\domClabel|\sqrt{\frac{T\ln{|\domClabel|}\log{|\Policies^*|}}{\fixremoved{\max}\{\minpgap{T},\fixremoved{\sqrt{1/T}}\}}}
\end{flalign*}
\fixremoved{where the last inequality is due to the fact that $$\minpgap{T} + \delta_0^T|\domClabel|^2\ln{|\domClabel| > \max\{\minpgap{T},\delta_0^T}\} = \max\{\minpgap{T},\sqrt{1/T}\}$$ which completes the proof.}
\end{proof}

\subsection{Proof of \thmref{thm:adversarial-query-complexity}}\label{app:adversarial_query_complexity}

\begin{proof}[Proof of \thmref{thm:adversarial-query-complexity}]  
From \lemref{lem:query_complexity}, we get the following equation as the cumulative query cost
\[
\expct{\sum_{t=1}^TU_t} \leq
\expct{\sum_{t=1}^T{\paren{\frac{1}{\sqrt{t}}+\frac{\sum_{y\in \domClabel} \langle \ModelsDistB_t,\lossB_t^{y} \rangle \log_{|\domClabel|}{\frac{1}{\langle \ModelsDistB_t,\lossB_t^{y} \rangle}}}{|\domClabel|}}}}.
\]

Let us assume the expected total loss of best policy is $\tilde{\Loss}_{T,*}$. Thus, from \thmref{thm:adversarial-regret-bound}, 
we get the expected cumulative loss
\[
    \expct{R_T}=\expct{\sum_{t=1}^T r_t}\leq 2|\domClabel|\sqrt{\frac{T\ln{|\domClabel|}\log{|\Policies^*|}}{\fixremoved{\max}\{\minpgap{T},\fixremoved{\sqrt{1/T}}\}}} + \tilde{\Loss}_{T,*}.
\]

Now plugging the regret bound $\mathcal{R}_T$ proved in  \thmref{thm:adversarial-regret-bound} into the query complexity bound given by \lemref{lem:entrophy_query_ub}, we have
\begin{flalign*}
\sum_{t=1}^T{\frac{\sum_{y\in \domClabel} \langle \ModelsDistB_t,\lossB_t^y \rangle \log_{|\domClabel|}{\frac{1}{\langle \ModelsDistB_t,\lossB_t^y \rangle}}}{|\domClabel|}}
&\leq \frac{\paren{ 2|\domClabel|\sqrt{\frac{T\ln{|\domClabel|}\log{|\Policies^*|}}{\fixremoved{\max}\{\minpgap{T},\fixremoved{\sqrt{1/T}}\}}} + \tilde{\Loss}_{T,*}}\paren{\log_{|\domClabel|}\frac{T^2\paren{|\domClabel|-1}}{\paren{2|\domClabel|\sqrt{\frac{T\ln{|\domClabel|}\log{|\Policies^*|}}{\fixremoved{\max}\{\minpgap{T},\fixremoved{\sqrt{1/T}}\}}} + \tilde{\Loss}_{T,*}}^2}}}{|\domClabel|}\\
&\leq \frac{\paren{2|\domClabel|\sqrt{\frac{T\ln{|\domClabel|}\log{|\Policies^*|}}{\fixremoved{\max}\{\minpgap{T},\fixremoved{\sqrt{1/T}}\}}} + \tilde{\Loss}_{T,*}}\paren{\log_{|\domClabel|}{{T}|\domClabel|}}}{|\domClabel|}\\
&= \frac{\paren{2|\domClabel|\sqrt{\frac{T\ln{|\domClabel|}\log{|\Policies^*|}}{\fixremoved{\max}\{\minpgap{T},\fixremoved{\sqrt{1/T}}\}}} + \tilde{\Loss}_{T,*}}\paren{\log_{|\domClabel|}{{T}}+1}}{|\domClabel|}.
\end{flalign*}

Finally, by applying query complexity upper bound of \lemref{lem:eta_query_ub}, we got
\begin{flalign*}
\expct{\sum_{t=1}^TU_t} \leq 2\sqrt{T}+\frac{\paren{2|\domClabel|\sqrt{\frac{T\ln{|\domClabel|}\log{|\Policies^*|}}{\fixremoved{\max}\{\minpgap{T},\fixremoved{\sqrt{1/T}}\}}} + \tilde{\Loss}_{T,*}}\paren{\log_{|\domClabel|}{{T}}+1}}{|\domClabel|}.\\
\end{flalign*}
Since the second term on the RHS dominates the upper bound,
we have
\begin{flalign*}
{O}\paren{\expct{\sum_{t=1}^TU_t}} &={O}\paren{\frac{\paren{\sqrt{\frac{T\log{|\Policies^*|}}{\fixremoved{\max}\{\minpgap{T},\fixremoved{\sqrt{1/T}}\}}} + \tilde{\Loss}_{T,*}}\paren{\ln{{T}}}}{\sqrt{\ln{\paren{|\domClabel|}}}}}
\stackrel{\paren{a}}{=}{O}\paren{\paren{\sqrt{\frac{T\log{|\Policies^*|}}{\fixremoved{\max}\{\minpgap{T},\fixremoved{\sqrt{1/T}}\}}} + \tilde{\Loss}_{T,*}}\paren{\ln{{T}}}},
\end{flalign*}
\fixremoved{where step (a) is obtained by suppressing constant coefficients involving  $|\domClabel|$ into the $O$ notation.}
\end{proof}

\clearpage
\section{Additional Experiments}\label{app:additional_experiment}
{

In this section, we further evaluate \algname and provide additional experimental results (complementary to the main results presented in \figref{fig:exp:results}) under the following scenarios: 
\begin{enumerate}
\item In \appref{app:exp:large_scale}, we demonstrate that \algname outperforms the baselines on a large scale dataset as well.
\item In \appref{app:query_strategies_comparison}, we perform ablation study of three query strategies CAMS (entropy), variance and random strategy.  \emph{\algname (Entropy)} achieves the minimum cumulative loss for CIFAR10, DRIFT, and VERTEBRAL under the same query cost and outperform the other query strategies.
\item In a \textit{mixture of experts} environment, \algname converges to the best policy and outperforms all others (\appref{app:idvidual_expert_comparison}); 
\item In a \textit{non-contextual} (no experts) environment, \algname has approximately equal performance as Model Picker to reach the best classifier effectively (\appref{app:compare_CAMS_with_MP}); 
\item In an \textit{adversarial} environment, \algname can efficiently recover from the adversary and approach the performance of the best classifier (\appref{app:recover_in_complete_malicious_environment});
\item In a \textit{complete sub-optimal expert} environment, a variant of the \algname algorithm, namely \algname-MAX, which deterministically picks the most probable policy and selects the most probable model, %
outperforms \algname-Random-Policy, which randomly samples a policy and selects the most probable model %
(\appref{app:CAMS_MAX} \& \appref{app:CAMS_random_policy}). However, \algname-MAX at most approaches the performance of the best policy. In contrast, perhaps surprisingly, \algname is able to outperform the best policy on both VERTEBRAL and HIV (\appref{app:outperform_best_expert}). 
\item In \appref{app:exp:max_query}, we summarize the maximum query cost under a fixed number of realizations with its associated cumulative loss for all baselines (exclude oracle) on all benchmarks in experiment section.

\item In \appref{app:exp:query_complexity_experiment}, we compare the query complexity of each baselines and demonstrate that \algname has the lowest query cost increasing rate on CIFAR10, DRIFT and VERTEBRAL dataset.
\item In previous studies, we assume that the data comes in an online format. In \appref{app:scaling_param}, we assume we know the data stream length ahead and applying the scaling parameter to each algorithm to query their top data points from hindsight. \algname still outperforms all the baselines.
\item In \appref{app:sample_efficiency}, we compare \algname with \algname-nonactive, a greedy version (query label for each incoming data point). Although \algname query much less data, it still performs equally well or even better than the greedy version.
\item  In \appref{app:exp:rcl}, we demonstrates that \algname can achieve negative RCL on all benchmarks, which means it outperforms any algorithms that chase the best classifier where the horizontal 0 line represents the performance benchmark of best classifier.

\end{enumerate}
}

\subsection{Performance of \algname at scale: Experimental results on CovType}\label{app:exp:large_scale}
We scaled up our experiments on a larger dataset, {{CovType}} \citep{Dua:2019}. The CovType dataset offers details about different types of forest cover in the United States. It contains details including slope, aspect, elevation, measurements of the wilderness area, and the type of forest cover. CovType has 580K samples, of which 100K instances were chosen at random as online stream for testing. \figref{fig:exp:covtype} demonstrated that \algname outperforms all baselines which is consistent with the existing results in experiment section.

\begin{figure*}[ht]
        \vspace{-2mm}
\begin{subfigure}{1\textwidth}
    \centering
    \includegraphics[height=0.3cm, clip={0,0,0,0}]{./figures/legend_horizontal.png}
\end{subfigure}
\rotatebox[origin=l]{90}{\scriptsize \qquad\qquad Relative cumulative loss}
\begin{subfigure}{.3\textwidth}
    \centering
        \includegraphics[height=3.7cm, clip={0,0,0,0}]{./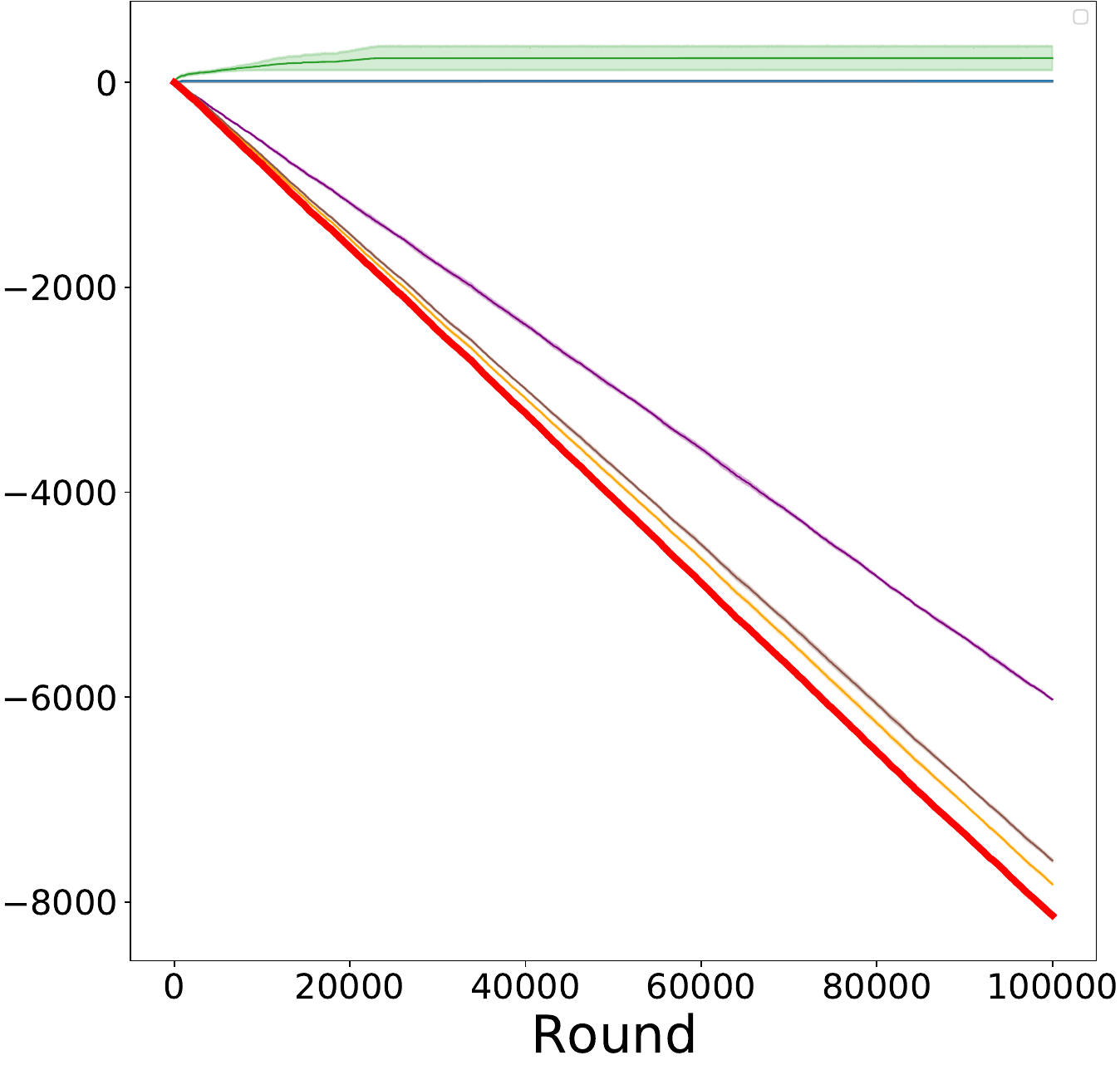}
        \vspace{-2mm}
    \caption{{Relative Cumulative Loss}}
\end{subfigure}
\rotatebox[origin=l]{90}{\scriptsize \qquad\qquad Number of queries}
\begin{subfigure}{.3\textwidth}
    \centering
        \includegraphics[height=3.7cm, clip={0,0,0,0}]{./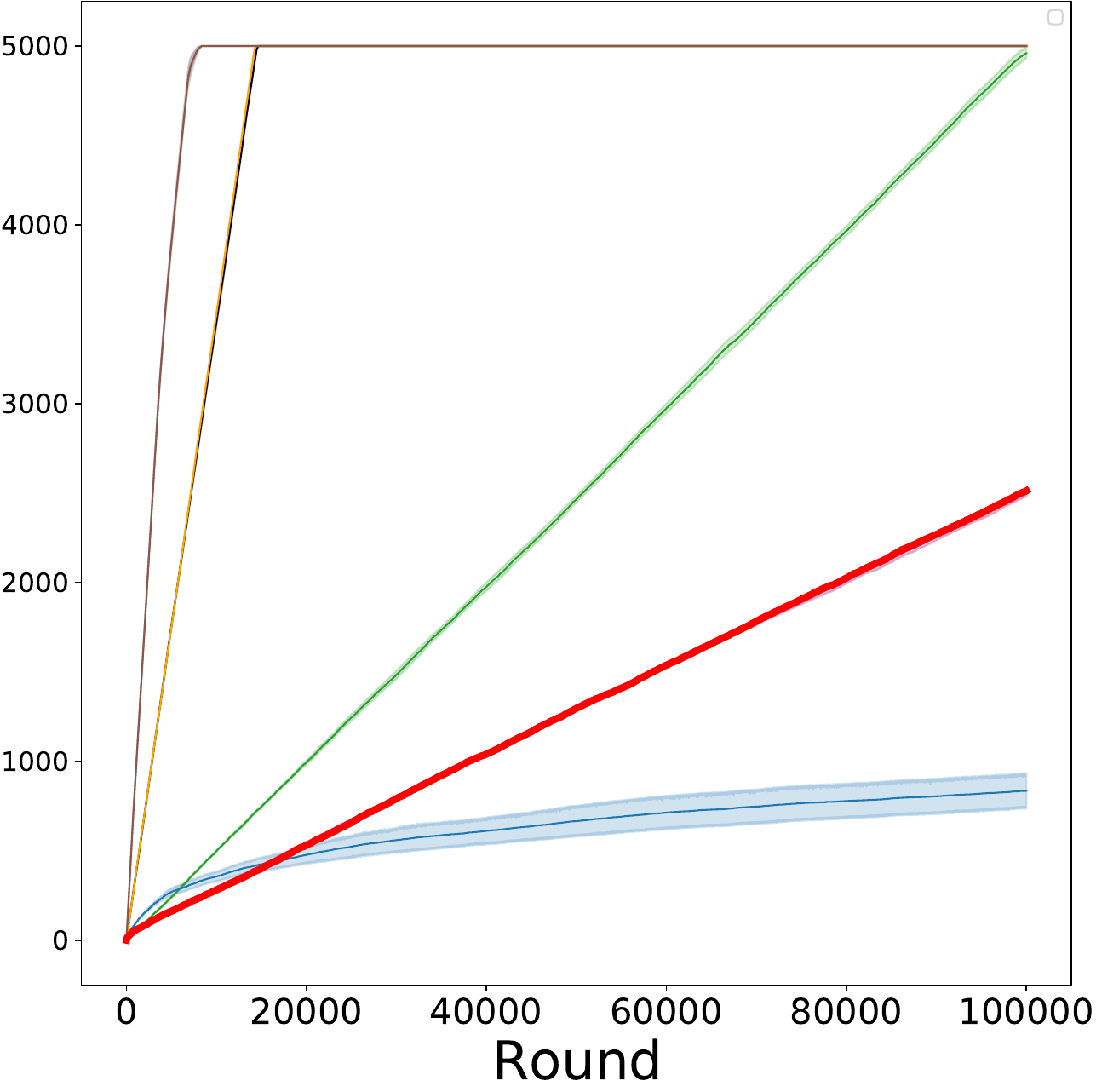}
        \vspace{-2mm}
    \caption{{ Cost}}
\end{subfigure}
\rotatebox[origin=l]{90}{\scriptsize \qquad\qquad Cumulative loss}
\begin{subfigure}{.3\textwidth}
    \centering
        \includegraphics[height=3.7cm, clip={0,0,0,0}]{./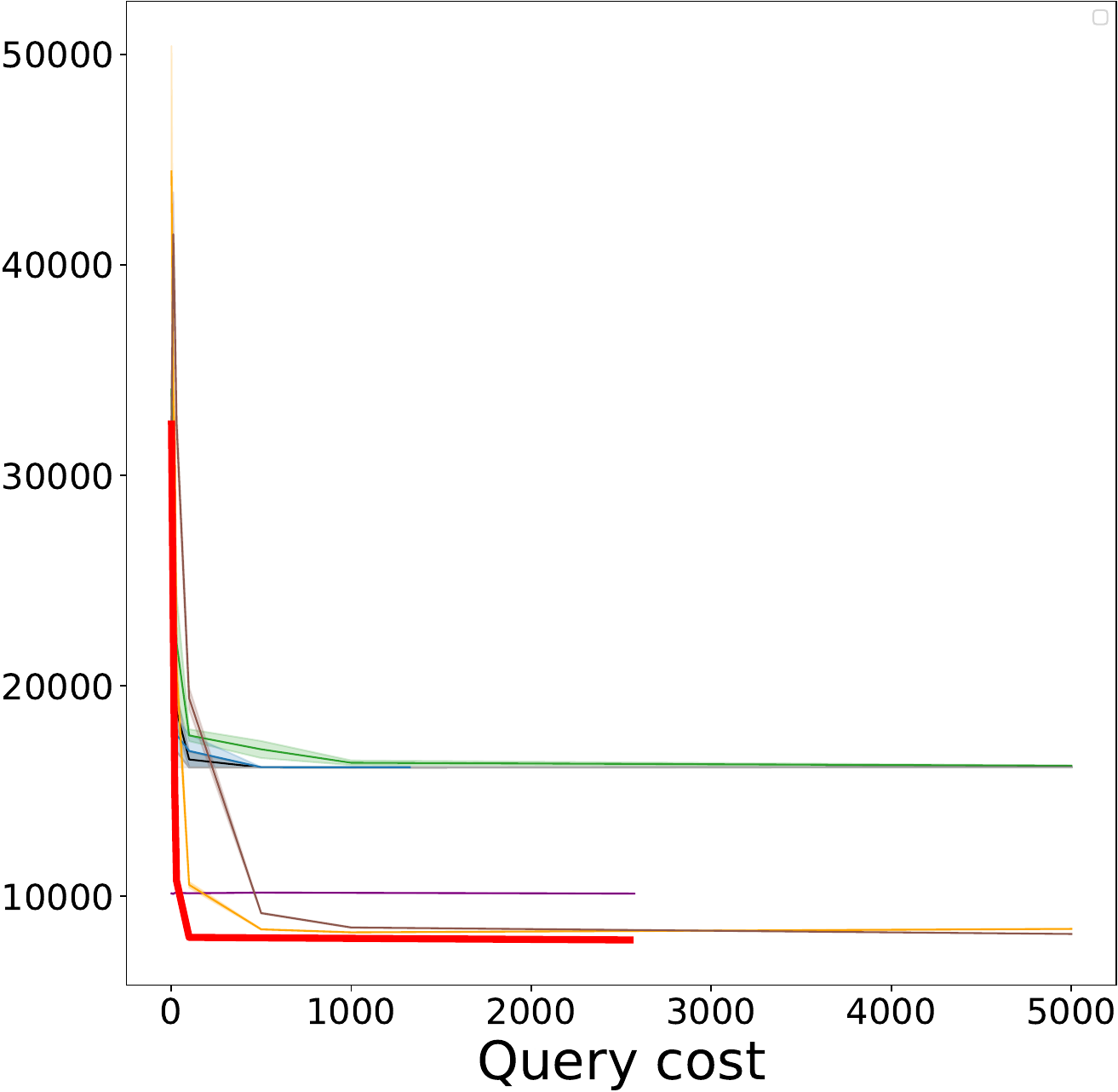}
        \vspace{-2mm}
    \caption{{Cumulative Loss VS Cost}}
\end{subfigure}
        \caption{Comparing \algname with 7 baselines on CovType in terms of relative cumulative loss, query complexity, and cost effectiveness. \algname outperforms all baselines. \textbf{(Left)} Performance measured by relative cumulative loss (i.e. loss against the best classifier) under a fixed query cost $B$ (where $B={1000}$). \textbf{(Middle)} Number of queries %
        and %
        \textbf{(Right)} Performance of cumulative loss by increasing the query cost, for a fixed number of rounds $T$ (where $T=100,000$) and maximal query cost $B$ (where $B=5000$ ). \textbf{Algorithms}: 4 contextual \{Oracle, CQBC, CIWAL, CAMS\} and 4 non-contextual baselines \{RS, QBC, IWAL, MP\} are included (see Section \pref{main:baseines}). 90\% confident interval are indicated in shades.}
    
    \label{fig:exp:covtype}
        \vspace{-4mm}
\end{figure*}

\subsection{Query strategies ablation comparison}\label{app:query_strategies_comparison}
 Using the same \algname model recommendation section, we compare three query strategies: the adaptive model-disagreement-based query strategy in Line~\ref{alg:cams:line:queryprob}-\ref{alg:cams:line:observe} of \figref{alg:CAMS} {(referred to as \emph{entropy} in the following)}, the variance-based query strategy from Model Picker \citep{karimi2021online} {(referred to as \emph{variance})}, and a random query strategy. \figref{fig:compare_3_query_strategies} shows that \algname's adaptive query strategy has the sharpest converge rate on cumulative loss, which demonstrates the effectiveness of the queried labels. Moreover, \emph{entropy} achieves the minimum cumulative loss for CIFAR10, DRIFT, and VERTEBRAL under the same query cost. For the HIV dataset, there is no clear winner between \emph{entropy} and \emph{variance} since the mean of their performance lie within the error bar of each other for the most part.

\begin{figure}[h]
\begin{subfigure}{1\textwidth}
    \centering
    \includegraphics[height=0.3cm, clip={0,0,0,0}]{./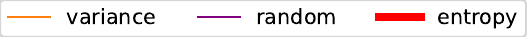}
\end{subfigure}
\rotatebox[origin=l]{90}{\scriptsize \qquad\qquad Cumulative loss}
\begin{subfigure}{.24\textwidth}
    \centering
        \includegraphics[height=3.3cm, clip={0,0,0,0}]{./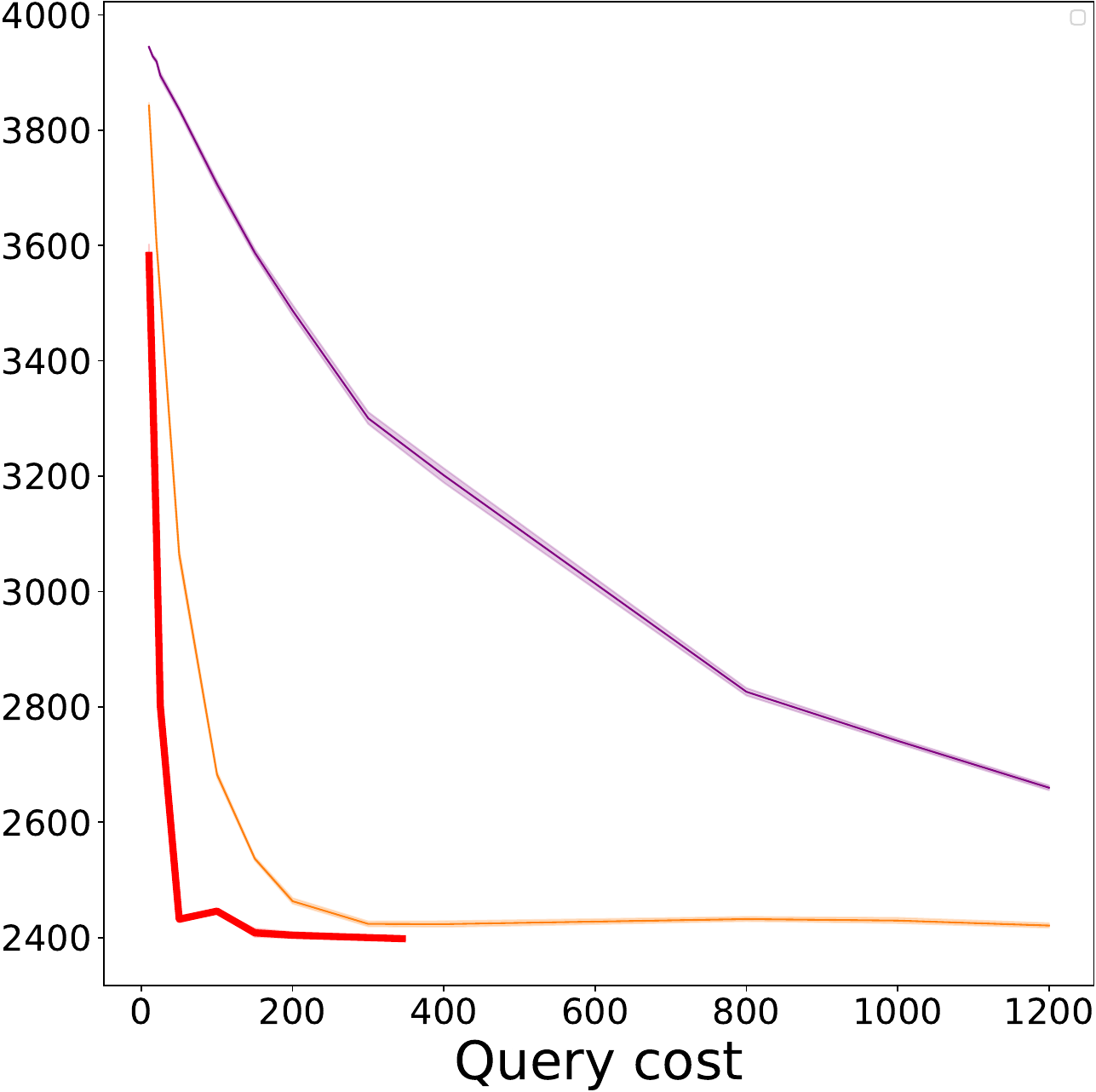}
    \caption{CIFAR10}
\end{subfigure}
\begin{subfigure}{.24\textwidth}
    \centering
        \includegraphics[height=3.3cm, clip={0,0,0,0}]{./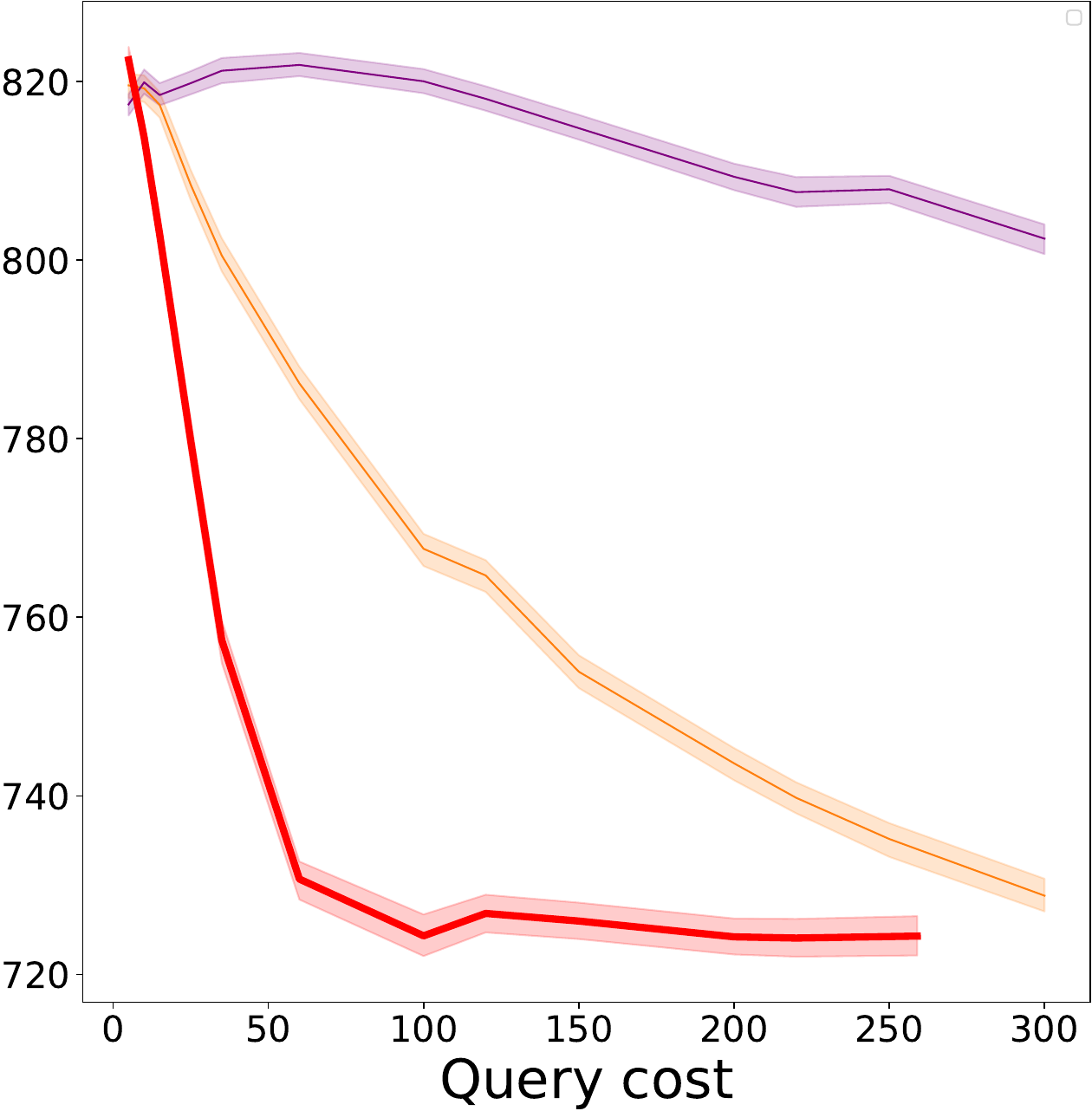}
    \caption{DRIFT}
\end{subfigure}
\begin{subfigure}{.24\textwidth}
    \centering
        \includegraphics[height=3.3cm, clip={0,0,0,0}]{./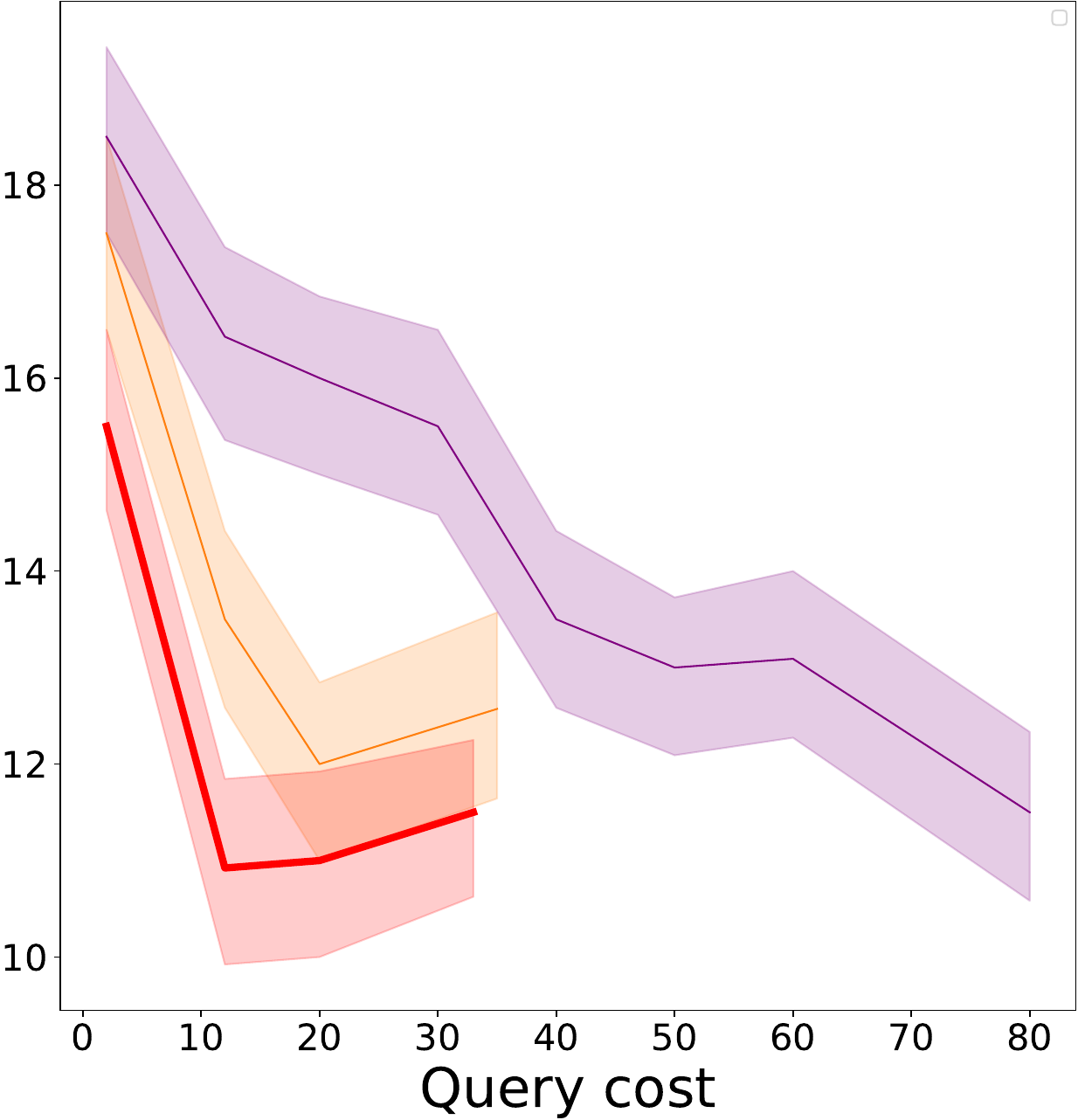}
    \caption{VERTEBRAL}
\end{subfigure}
\begin{subfigure}{.24\textwidth}
    \centering
        \includegraphics[height=3.3cm, clip={0,0,0,0}]{./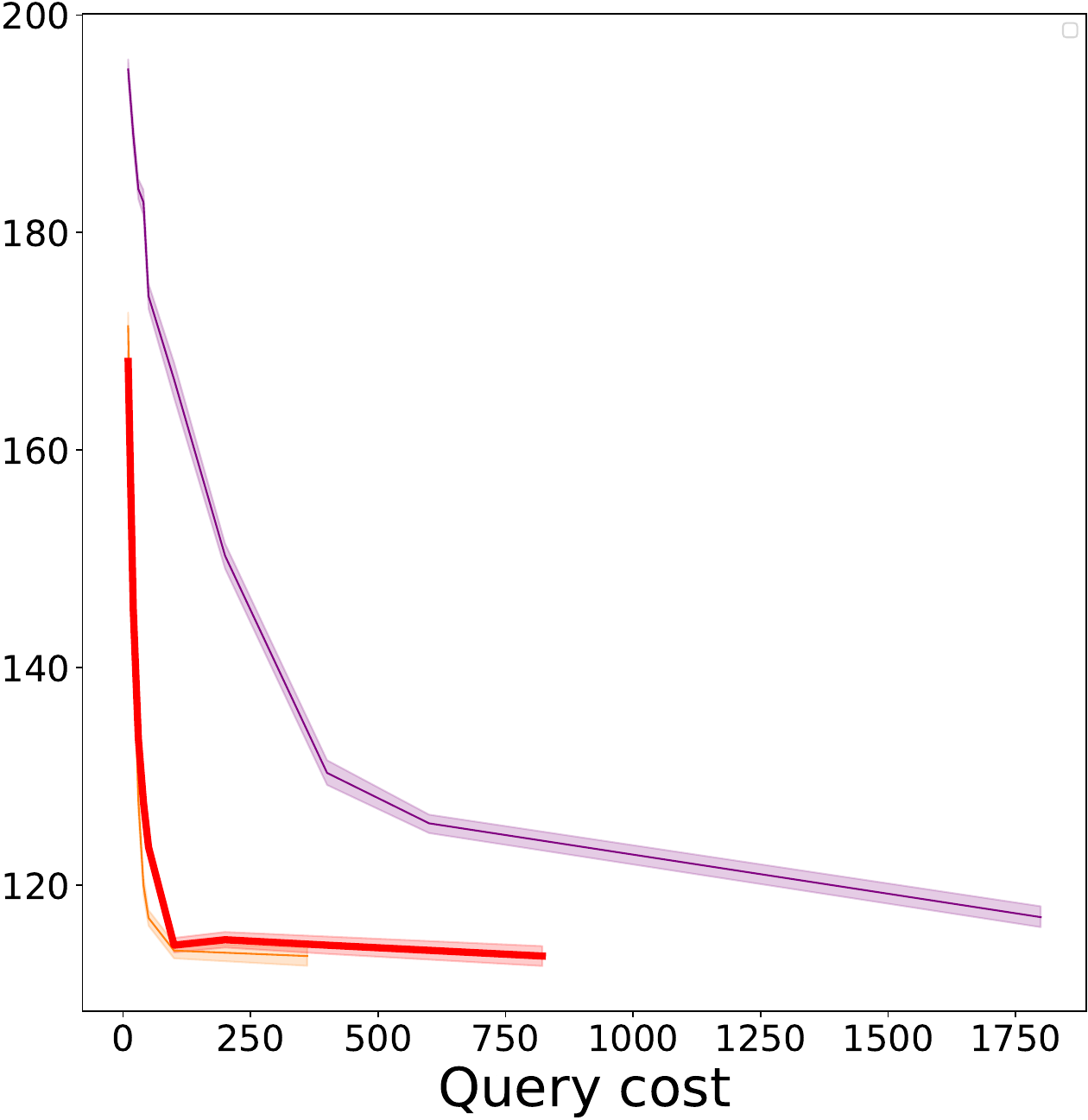}
    \caption{HIV}
\end{subfigure}
        \caption{Ablation study of three query strategies (entropy, variance, random) for 4 diverse benchmarks based on the same model recommendation strategy. Under the same query cost constraint, \algname's entropy-based strategy exceeds the performance of the other two strategies on non-binary benchmarks in terms of query cost and cumulative lost. 90\% confident intervals are indicated in shades.}
        \label{fig:compare_3_query_strategies}
\end{figure}

\subsection{Comparing \algname with each individual expert}\label{app:idvidual_expert_comparison} 

We evaluate \algname by comparing it with all the policies available in various benchmarks. The policies in each benchmark are summarized in \appref{app:policy_classifier} and \tabref{app:dataset_table}. The empirical results in \figref{fig:policy_individual_policy} demonstrate that \algname could efficiently outperform all policies and converge to the performance of the best policy with only slight increase in query cost in all benchmarks. In particular, on the VERTEBRAL and HIV benchmarks, \algname even outperforms the best policy.

\begin{figure}[h]
\begin{subfigure}{1\textwidth}
    \centering
    \includegraphics[height=0.3cm, clip={0,0,0,0}]{./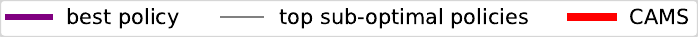}
\end{subfigure}
\rotatebox[origin=l]{90}{\quad \quad \scriptsize Cumulative loss}
\begin{subfigure}{.24\textwidth}
    \centering
        \includegraphics[height=3.3cm, clip={0,0,0,0}]{./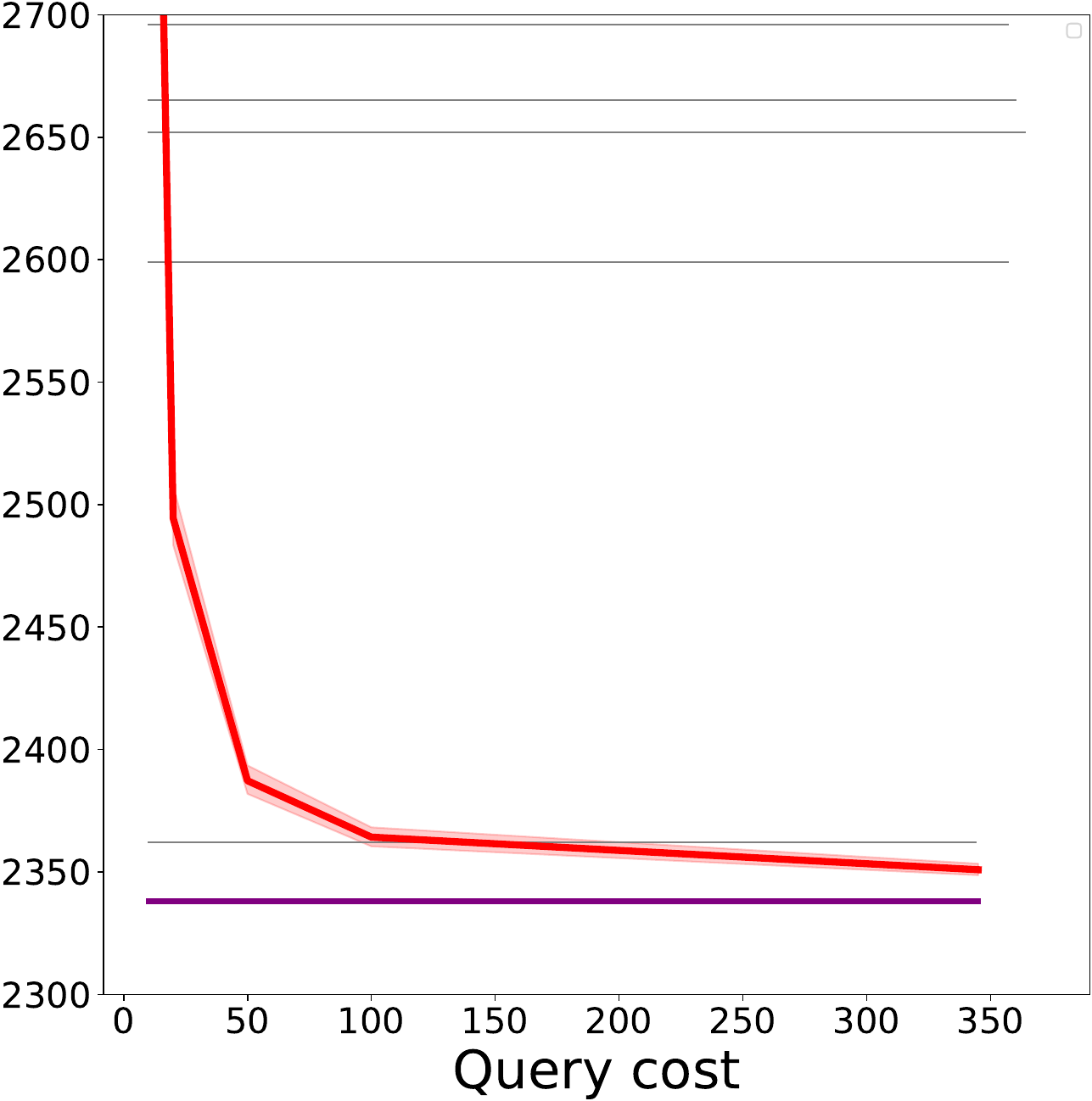}
    \caption{CIFAR10}
\end{subfigure}
\begin{subfigure}{.24\textwidth}
    \centering
        \includegraphics[height=3.3cm, clip={0,0,0,0}]{./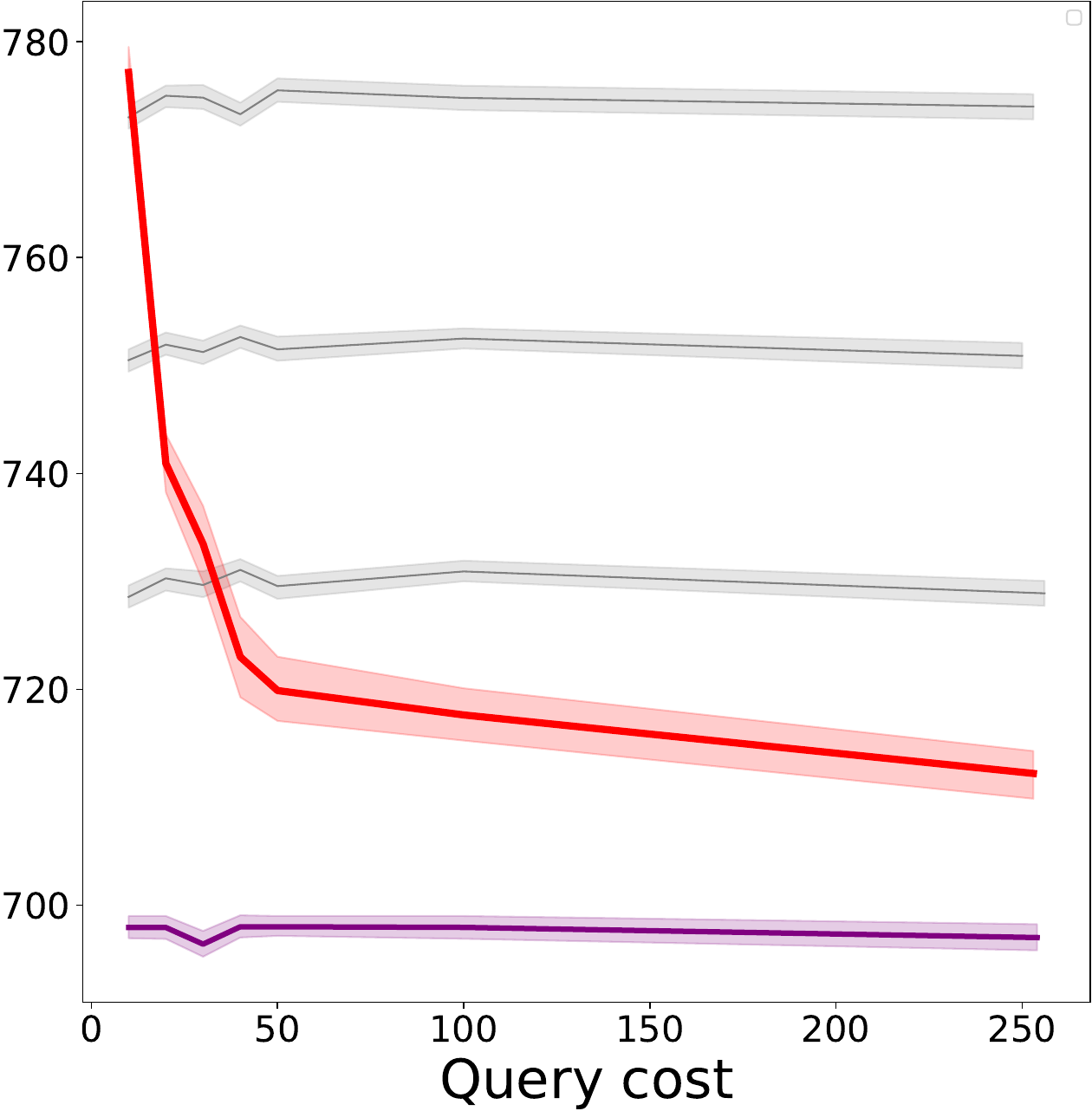}
    \caption{DRIFT}
\end{subfigure}
\begin{subfigure}{.24\textwidth}
    \centering
        \includegraphics[height=3.3cm, clip={0,0,0,0}]{./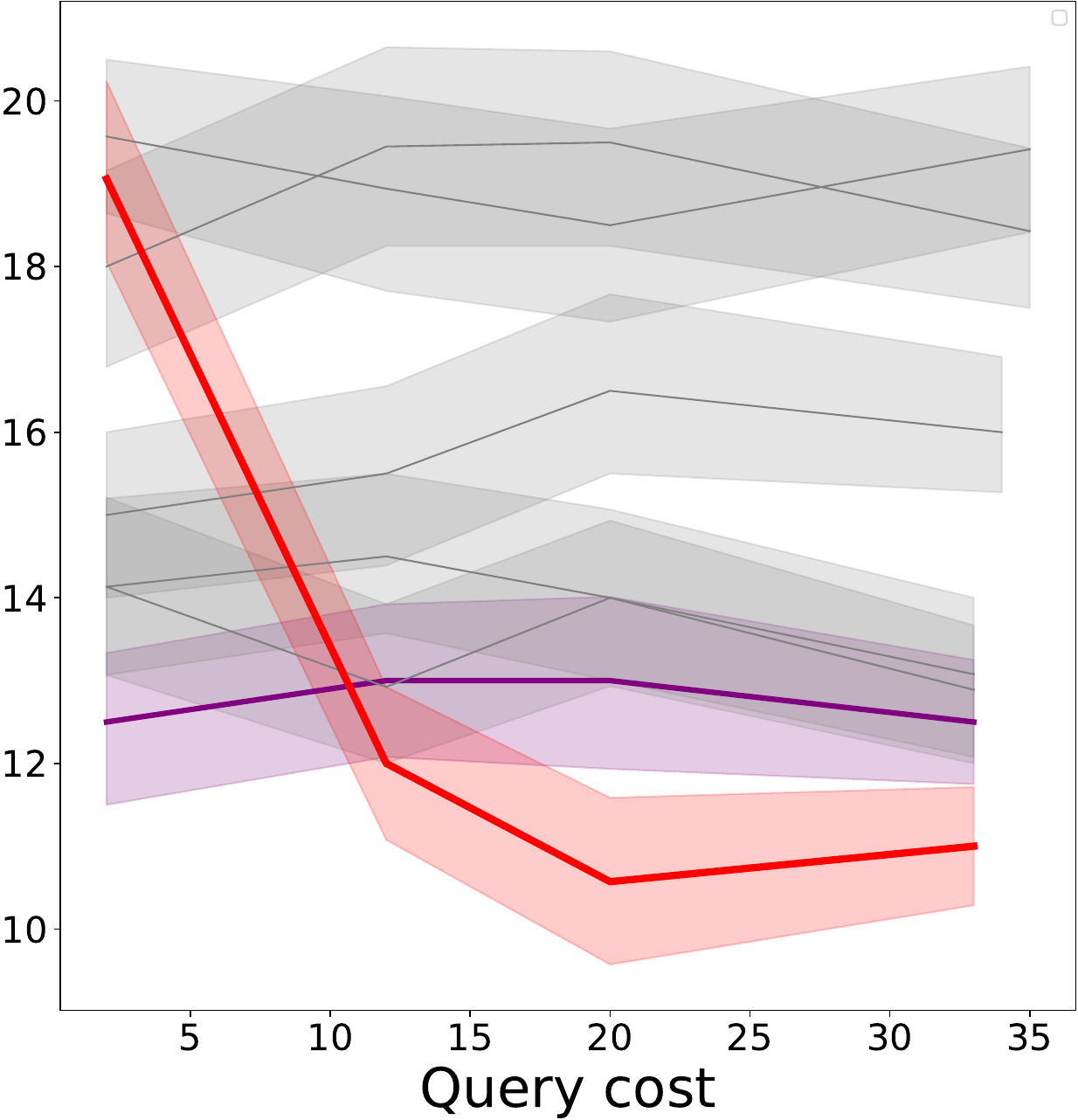}
    \caption{VERTEBRAL}
\end{subfigure}
\begin{subfigure}{.24\textwidth}
    \centering
        \includegraphics[height=3.3cm, clip={0,0,0,0}]{./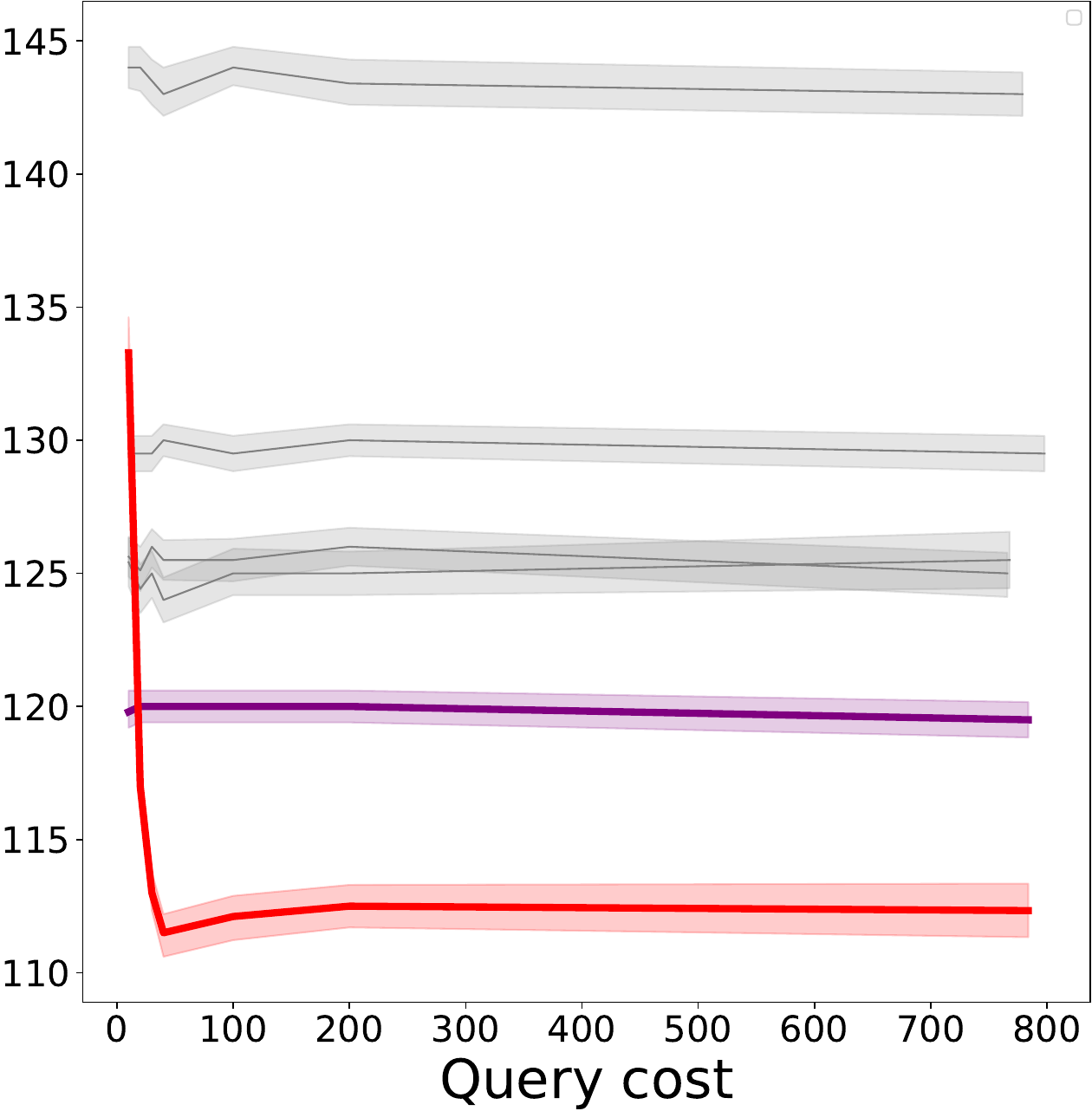}
    \caption{HIV}
\end{subfigure}
        \caption{Comparing \algname with every single policy (only plotted top performance policies in Figure). \algname could approach the best expert and exceed all others with limited queries. In particular, on VERTEBRAL and HIV Benchmarks, \algname outperforms the best expert. 90\% confident intervals are indicated in shades.}   \label{fig:policy_individual_policy}
\end{figure}

\subsection{Comparing \algname against Model Picker in a context-free environment}\label{app:compare_CAMS_with_MP}
{\algname %
outperforms Model Picker in \figref{fig:exp:results}, by leveraging the context information for adaptive model selection.}
In a context-free environment, $ \Policies = \curlybracket{\varnothing}$, so $\Policies^* := \curlybracket{\policy^{\text{const}}_1, \dots, \policy^{\text{const}}_k}$, where $\policy^{\text{const}}_j (\cdot) := \mathbi{e}_j$ represents a policy that only recommends a fixed model. In this case, selecting the best policy to \algname equals selecting the best single model. \figref{fig:mp_ap-plots} demonstrates that the mean of \algname and Model Picker lies in the shades of each other, which means \algname has approximately the same performance as model picker considering the randomness on all benchmarks.

\begin{figure}[h]
\begin{subfigure}{1\textwidth}
    \centering
    \includegraphics[height=0.3cm, clip={0,0,0,0}]{./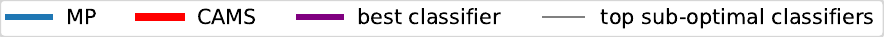}
\end{subfigure}
\rotatebox[origin=l]{90}{\quad \quad \scriptsize Cumulative loss}
\begin{subfigure}{.24\textwidth}
    \centering
        \includegraphics[height=3.3cm, clip={0,0,0,0}]{./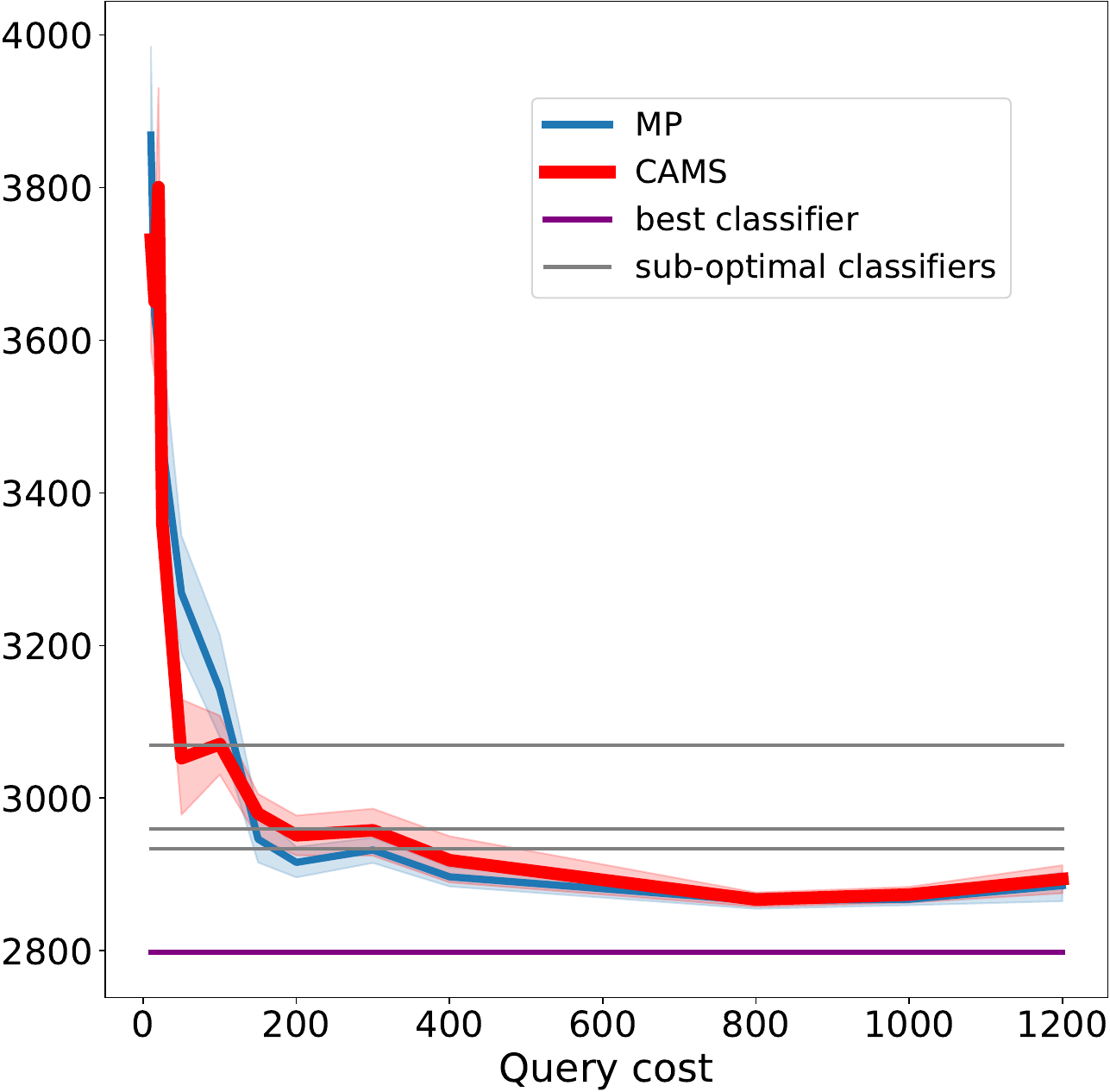}
    \caption{CIFAR10}
\end{subfigure}
\begin{subfigure}{.24\textwidth}
    \centering
        \includegraphics[height=3.3cm, clip={0,0,0,0}]{./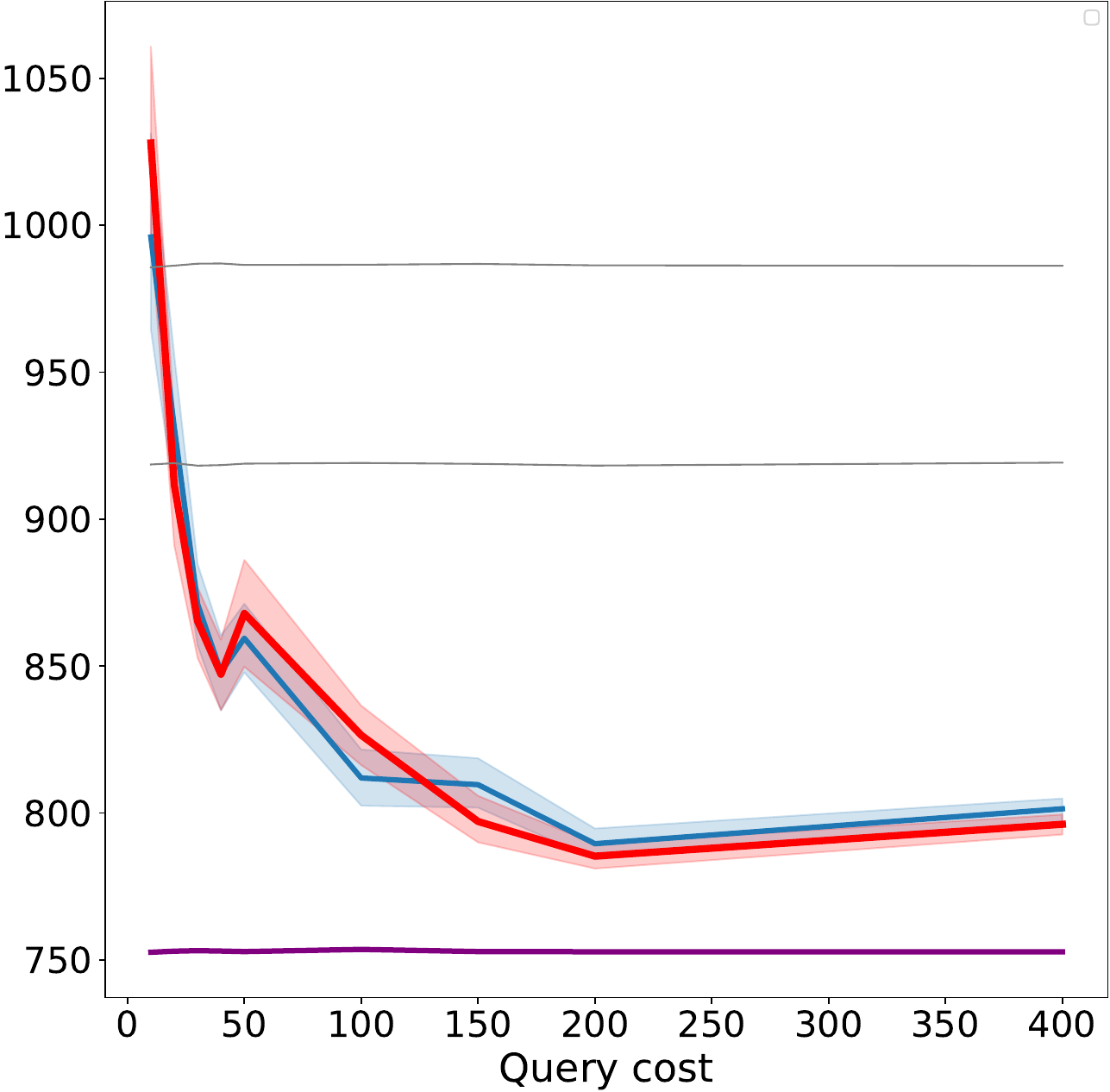}
    \caption{DRIFT}
\end{subfigure}
\begin{subfigure}{.24\textwidth}
    \centering
        \includegraphics[height=3.3cm, clip={0,0,0,0}]{./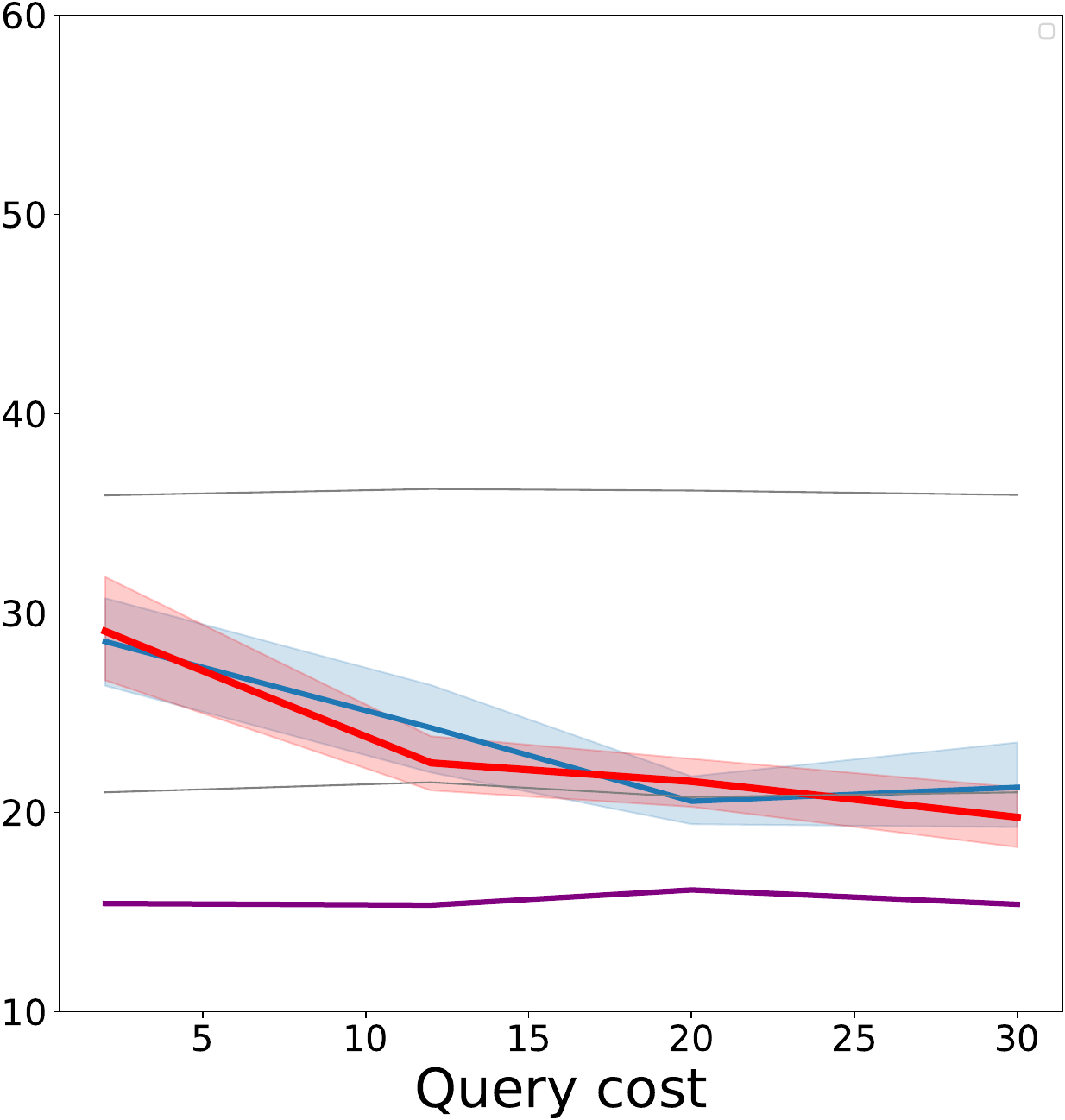}
    \caption{VERTEBRAL}
\end{subfigure}
\begin{subfigure}{.24\textwidth}
    \centering
        \includegraphics[height=3.3cm, clip={0,0,0,0}]{./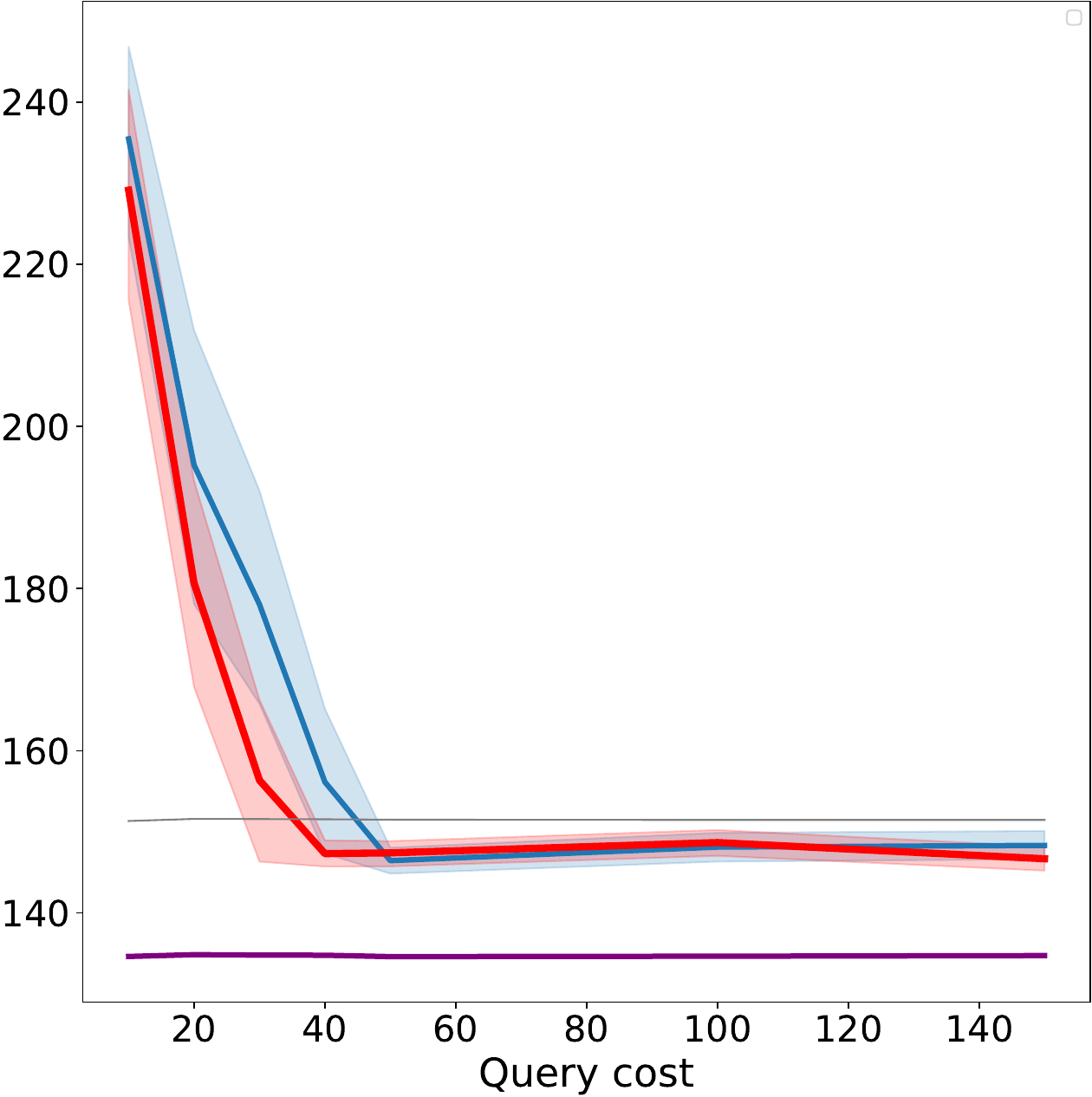}
    \caption{HIV}
\end{subfigure}
        \caption{Comparing the model selection strategy of \algname and Model Picker baseline based on the same variance-based query strategy in a context-free environment. \algname has approximately the same performance as Model Picker on all the benchmarks. 90\% confident intervals are indicated in shades.}
        \label{fig:mp_ap-plots}
\end{figure}

\subsection{Robustness against malicious experts in adversarial environments}\label{app:recover_in_complete_malicious_environment}

When given only malicious and random advice policies, the conventional contextual online learning from experts advice framework will be trapped in the malicious or random advice. In contrast, \algname could efficiently identify these policies and avoid taking advice from them. Meanwhile, it also successfully identifies the best classifier to learn to reach its best performance. 

The \emph{novelty} in \algname that enables this robustness is that we add the constant policies  \curlybracket{\policy^{\text{const}}_1, \dots, \policy^{\text{const}}_k} into the policy set $\Policies$ to form the new set as $\Policies^*$. To illustrate the performance difference, we have created a variant of \algname by adapting to the conventional approach (named \algname-conventional). \figref{fig:malicious-plots} demonstrates that \algname could outperform all the malicious and random policies and converge to the performance of the best classifier. \textbf{\emph{\algname-conventional:}} We create the \algname-conventional algorithm as the \algname using policy set $\Policies$, not $\Policies^*$.

\begin{figure}[h]
\begin{subfigure}{1\textwidth}
    \centering
    \includegraphics[height=0.3cm, clip={0,0,0,0}]{./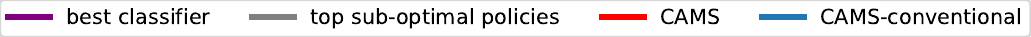}
\end{subfigure}
\rotatebox[origin=l]{90}{\quad \qquad \scriptsize Cumulative loss}
\begin{subfigure}{.24\textwidth}
    \centering
        \includegraphics[height=3.2cm, clip={0,0,0,0}]{./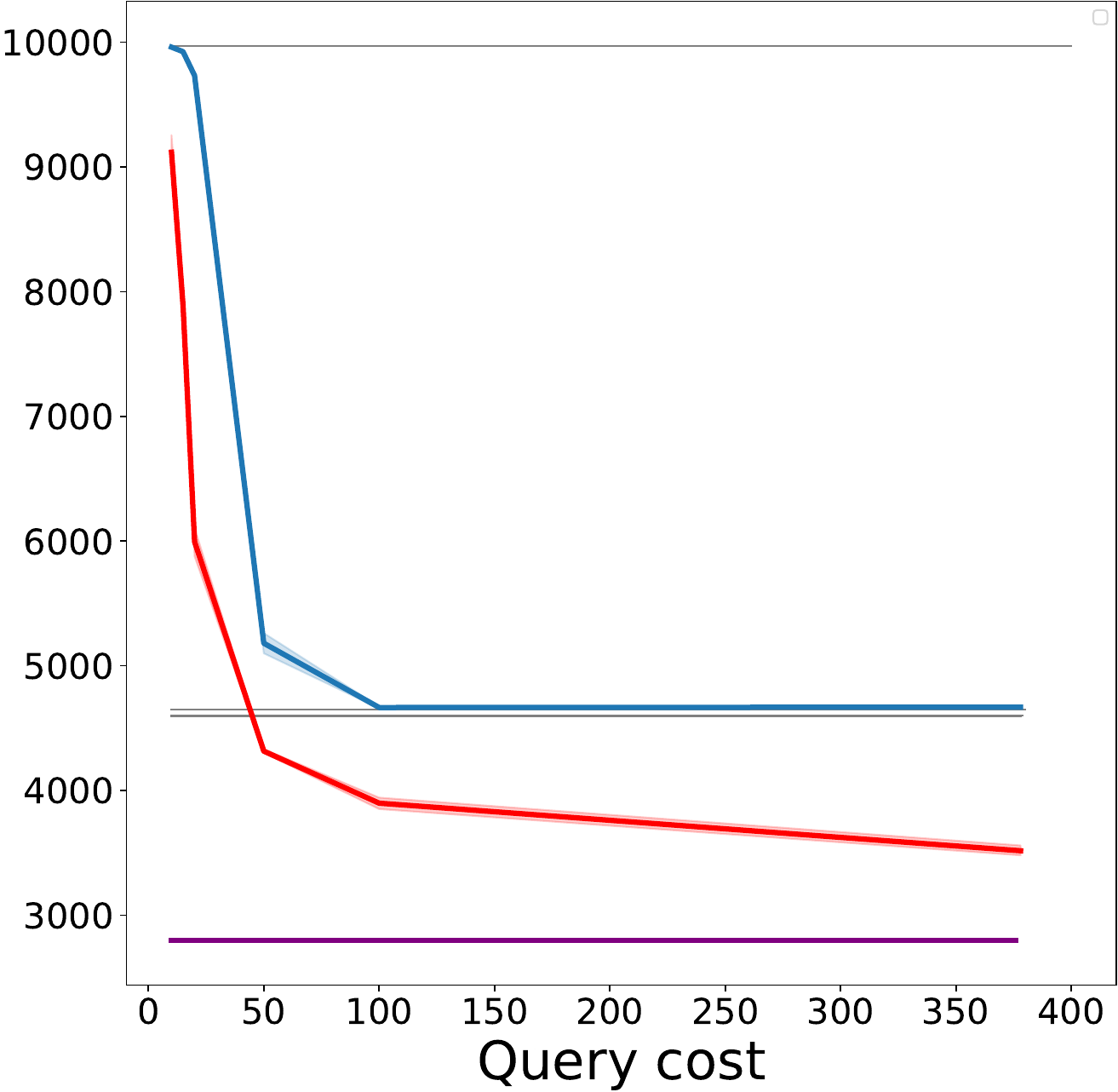}
    \caption{CIFAR10}
\end{subfigure}
\begin{subfigure}{.24\textwidth}
    \centering
        \includegraphics[height=3.3cm, clip={0,0,0,0}]{./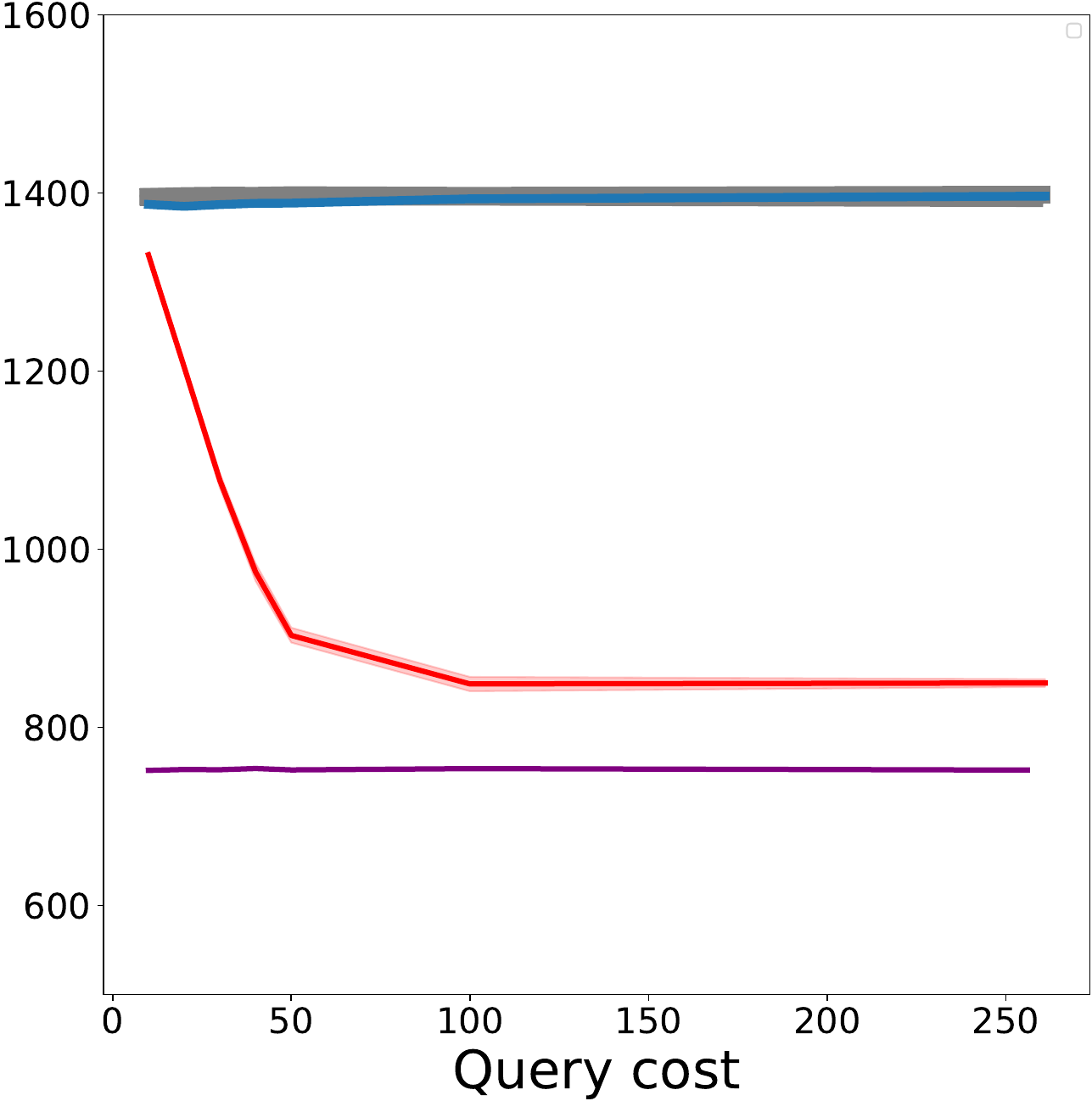}
    \caption{DRIFT}
\end{subfigure}
\begin{subfigure}{.24\textwidth}
    \centering
        \includegraphics[height=3.3cm, clip={0,0,0,0}]{./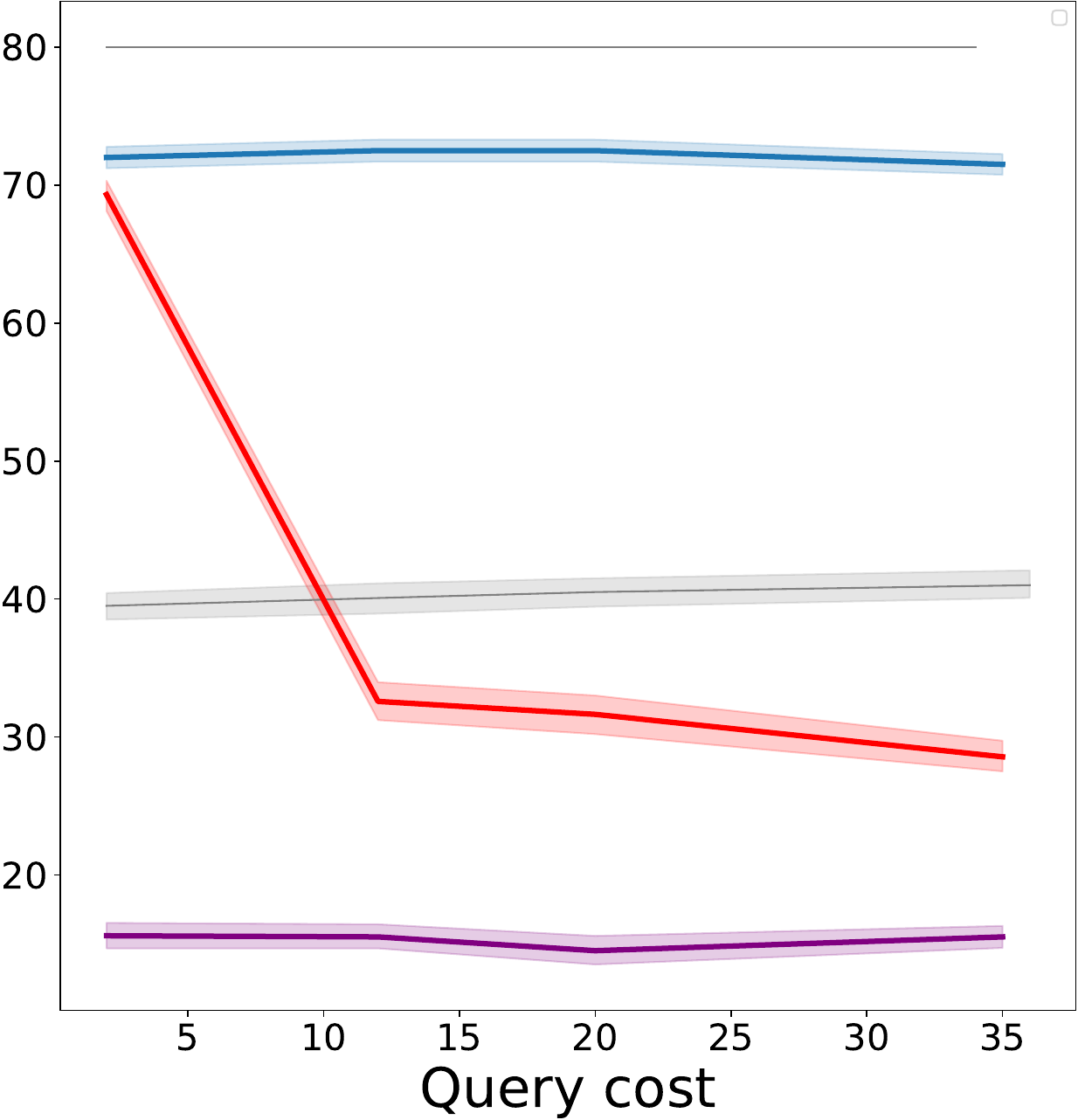}
    \caption{VERTEBRAL}
\end{subfigure}
\begin{subfigure}{.24   \textwidth}
    \centering
        \includegraphics[height=3.3cm, clip={0,0,0,0}]{./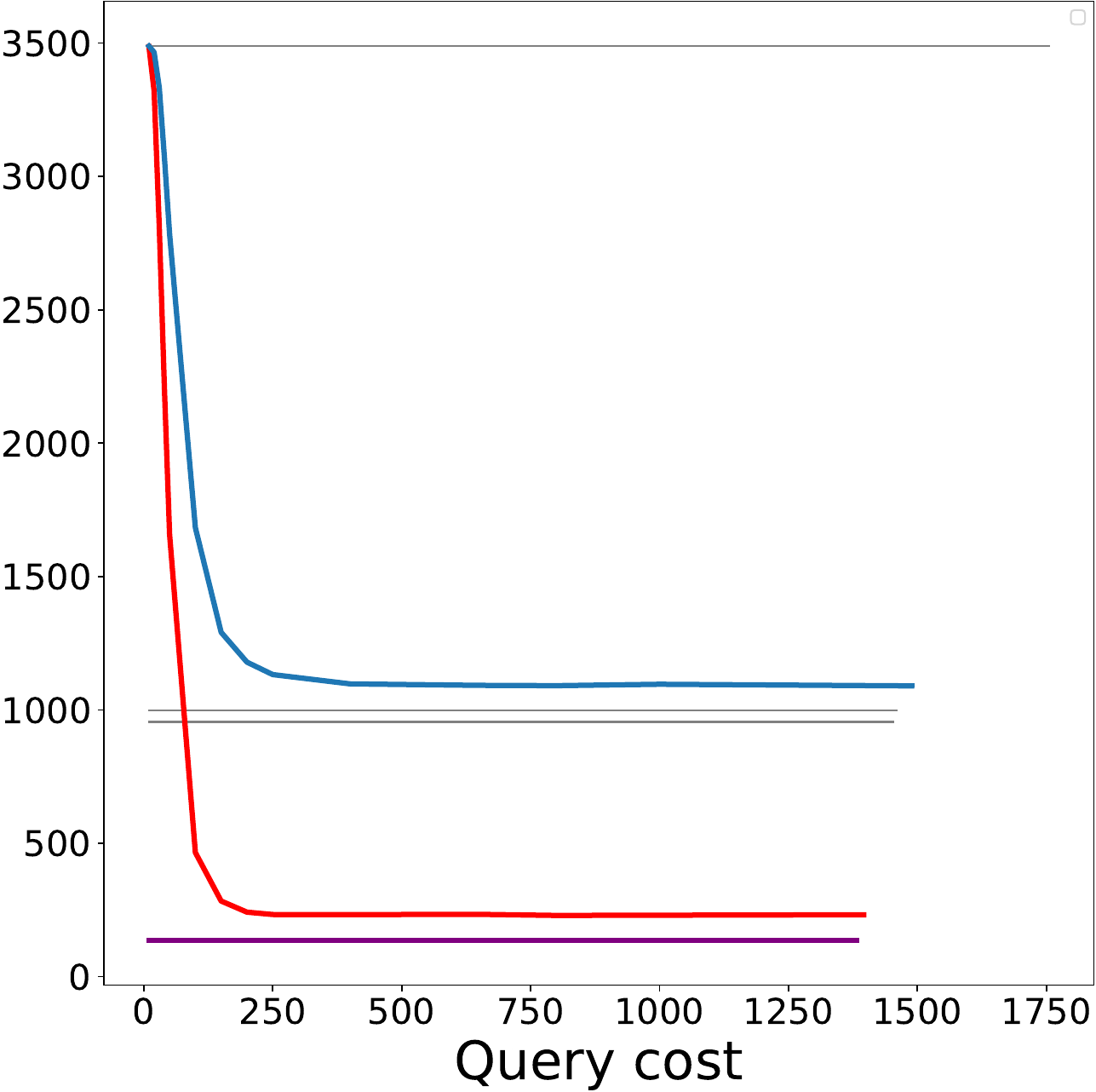}
    \caption{HIV}
\end{subfigure}
      \caption{Evaluating the robustness of \algname compared to the conventional learning from experts' advice (\algname-conventional) in a complete malicious and random policies environment. When no good policy is available, \algname could recover from malicious advice and successfully approach the performance of the best classifier. In contrast, the conventional approach will be trapped in malicious advice. 90\% confident intervals are indicated in shades.}
        \label{fig:malicious-plots}
\end{figure}

\subsection{Outperformance over the best policy/expert}\label{app:outperform_best_expert}

We also observe that \algname does not stop at approaching the best policy or classifier performance. Sometimes, it even outperforms all the policies and classifiers, and \figref{fig:outperform_all} demonstrates such a case. To demonstrate the advantage of \algname, we create two variant versions of \algname: (1) \algname-MAX (\appref{app:CAMS_MAX}), (2) \algname-Random-Policy (\appref{app:CAMS_random_policy}). \algname-MAX and \algname-Random-Policy use the same algorithm as \algname in adversarial settings but have different model selection strategies for ablation study in the stochastic settings.

We evaluate the three algorithms on VERTEBRAL and HIV benchmarks in terms of (a) \textit{normal policies} (\figref{fig:outperform_all} Left), (b) \textit{classifiers} (\figref{fig:outperform_all} Middle), and (c) \textit{malicious and random policies} (\figref{fig:outperform_all} Right). In the normal policies column, we only compare the policies with regular policies giving helpful advice. In the classifier column, we compare them with the performance of classifiers only. In the malicious and random policies column, we compare them with unreasonable policies only.

\figref{fig:outperform_all} demonstrates that all three algorithms could outperform the malicious/random policies. However, \algname-Random-Policy does not outperform the best classifier while both \algname and \algname-MAX can on both benchmarks. \algname-MAX approaches the performance of the best policy but does not outperform the best policy on both benchmarks. Finally, perhaps surprisingly, \algname outperforms the best policy (Oracle) on both benchmarks and continues to approach the hypothetical, optimal policy (with 0 cumulative loss).

This surprising factor is contributed by the adaptive weighted policy of \algname, which adaptively creates a better policy by combining the advantage of each sub-optimal policy and classifier to reach the performance of the hypothetical, optimal policy (defined as $ \sum_{t=1}^T \min_{i \in [\PoliciesNum+\ModelsNum]}\widetilde{\loss}_{t,i}$). The second reason could be that the benchmark we created, or any real-world cases, will not be strictly in a stochastic setting (in which a single policy outperforms all others or has lower $\mu$ in every round). The weight policy strategy can make a better combination of advice for this case.

\begin{figure}[h]
\begin{subfigure}{1\textwidth}
    \centering
    \includegraphics[height=0.3cm, clip={0,0,0,0}]{./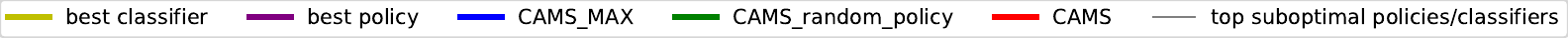}
\end{subfigure}
\rotatebox[origin=l]{90}{\quad \qquad  \scriptsize Cumulative loss}
\begin{subfigure}{.3\textwidth}
    \centering
        \includegraphics[height=3.8cm, clip={0,0,0,0}]{./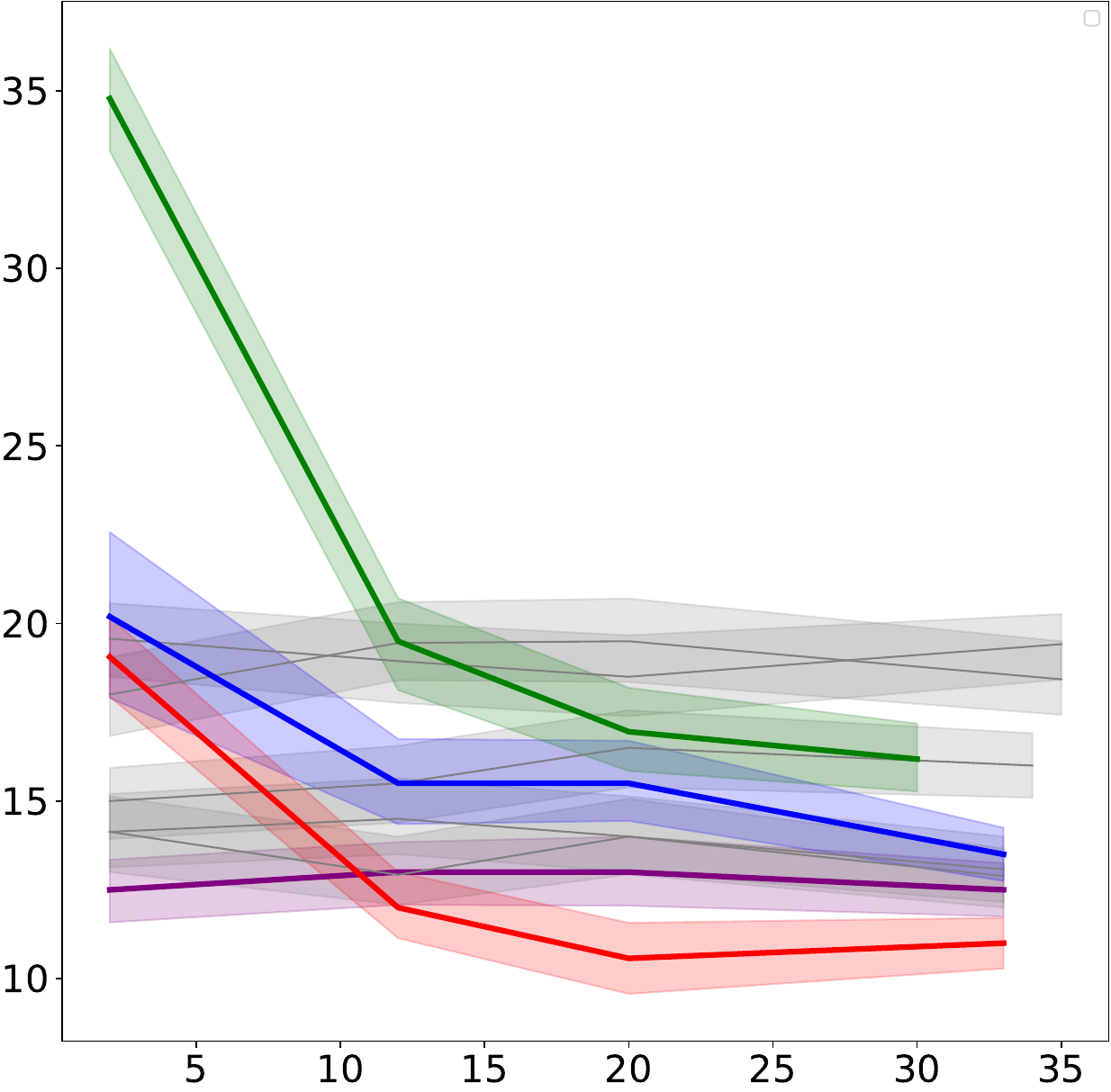}
\end{subfigure}
\begin{subfigure}{.3\textwidth}
    \centering
        \includegraphics[height=3.8cm, clip={0,0,0,0}]{./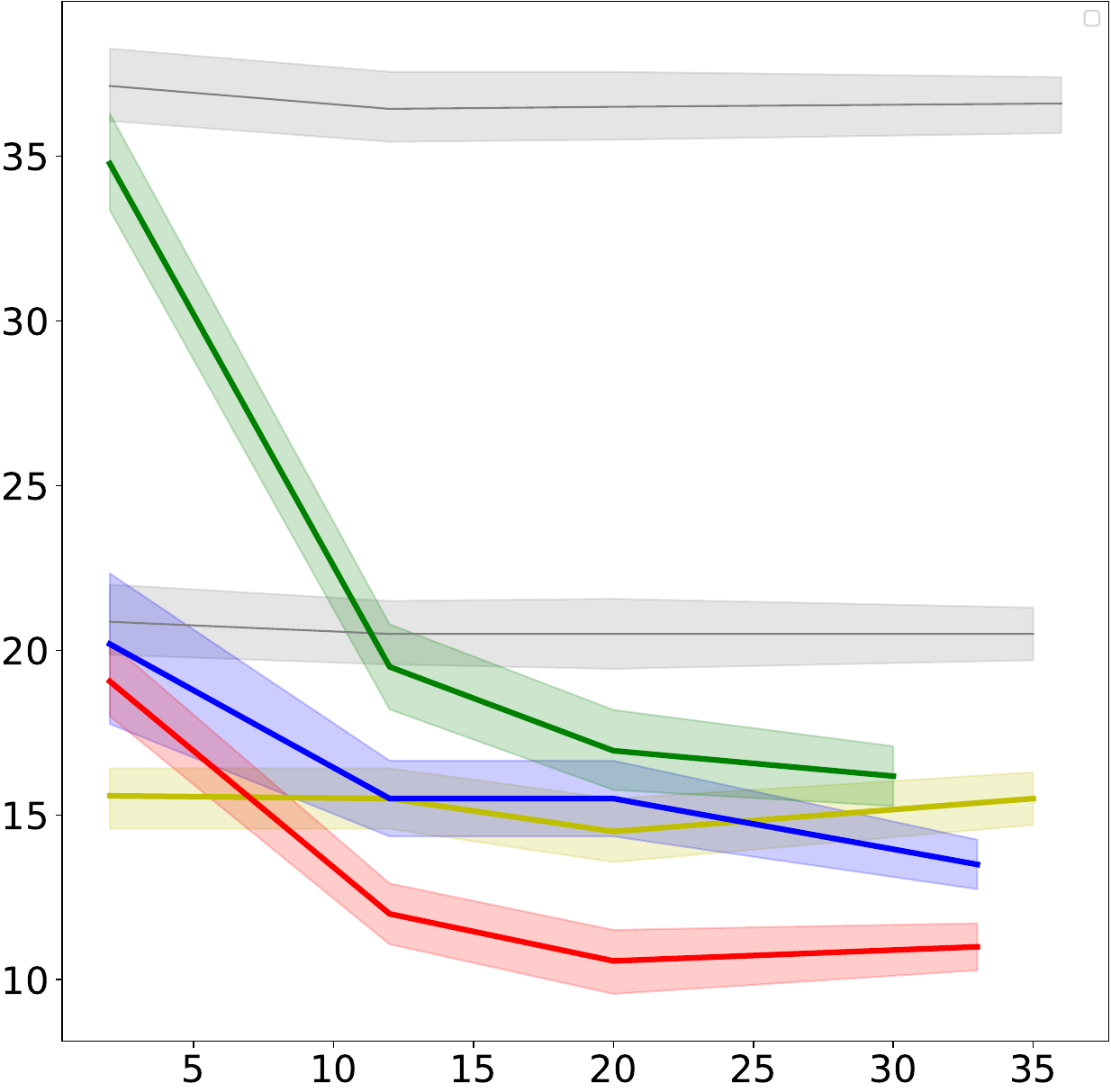}
\end{subfigure}
\begin{subfigure}{.3\textwidth}
    \centering
        \includegraphics[height=3.8cm, clip={0,0,0,0}]{./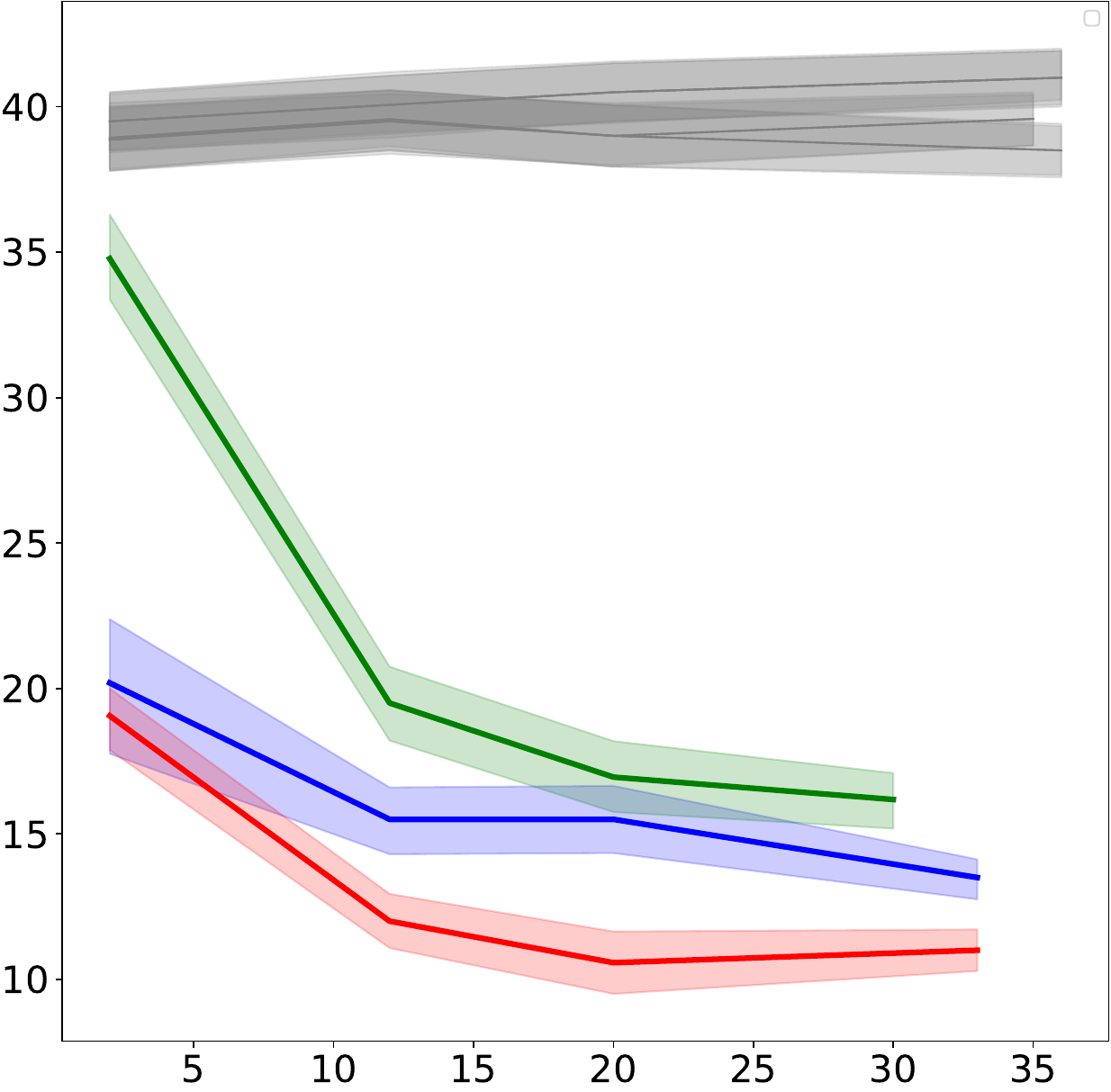}
\end{subfigure}
\rotatebox[origin=l]{90}{\small \qquad  VERTEBRAL}

\rotatebox[origin=l]{90}{\quad \quad \qquad  \scriptsize Cumulative loss}
\begin{subfigure}{.3\textwidth}
    \centering
        \includegraphics[height=4cm, clip={0,0,0,0}]{./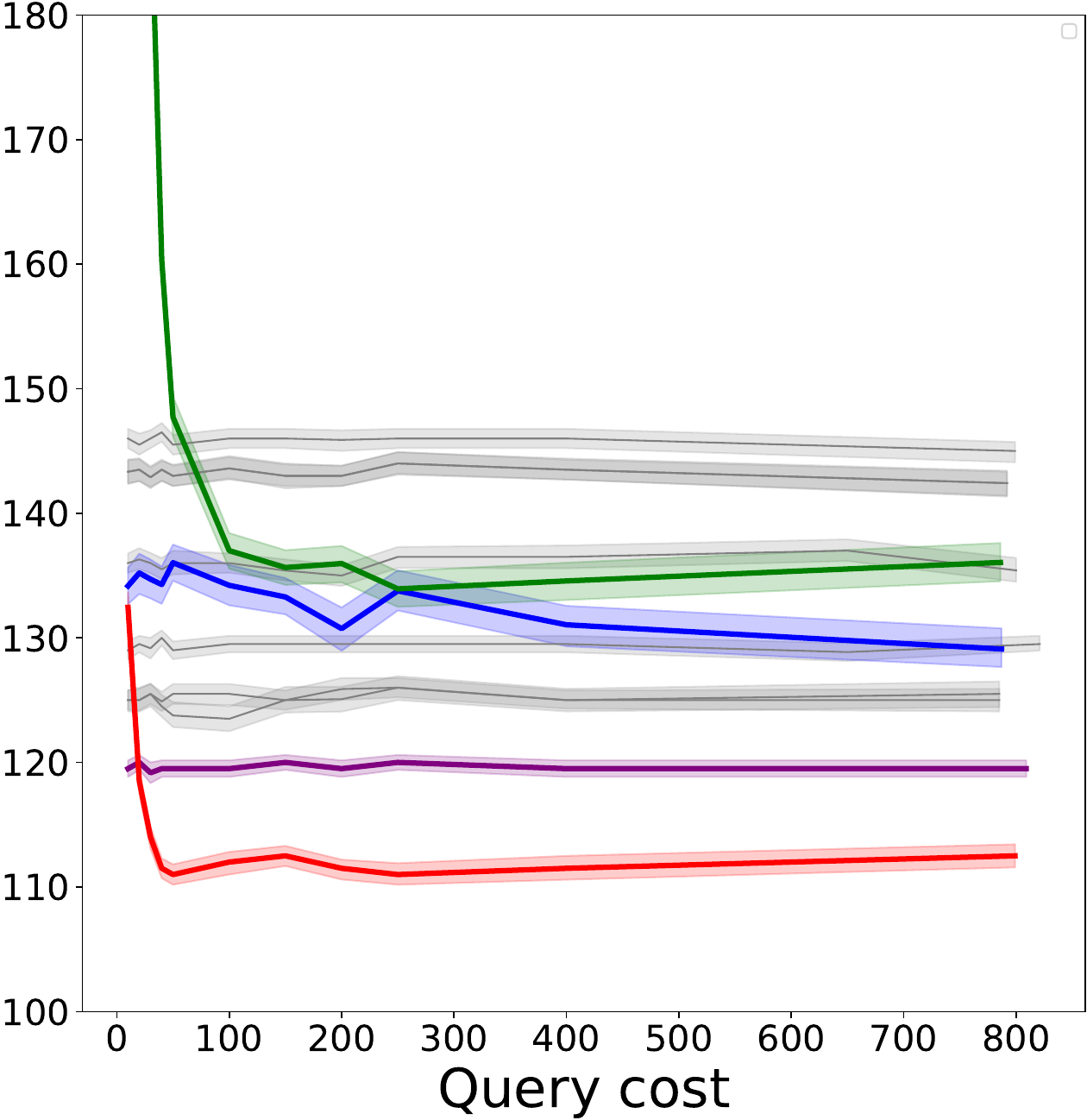}
    \caption{normal policies}
\end{subfigure}
\begin{subfigure}{.3\textwidth}
    \centering
        \includegraphics[height=4cm, clip={0,0,0,0}]{./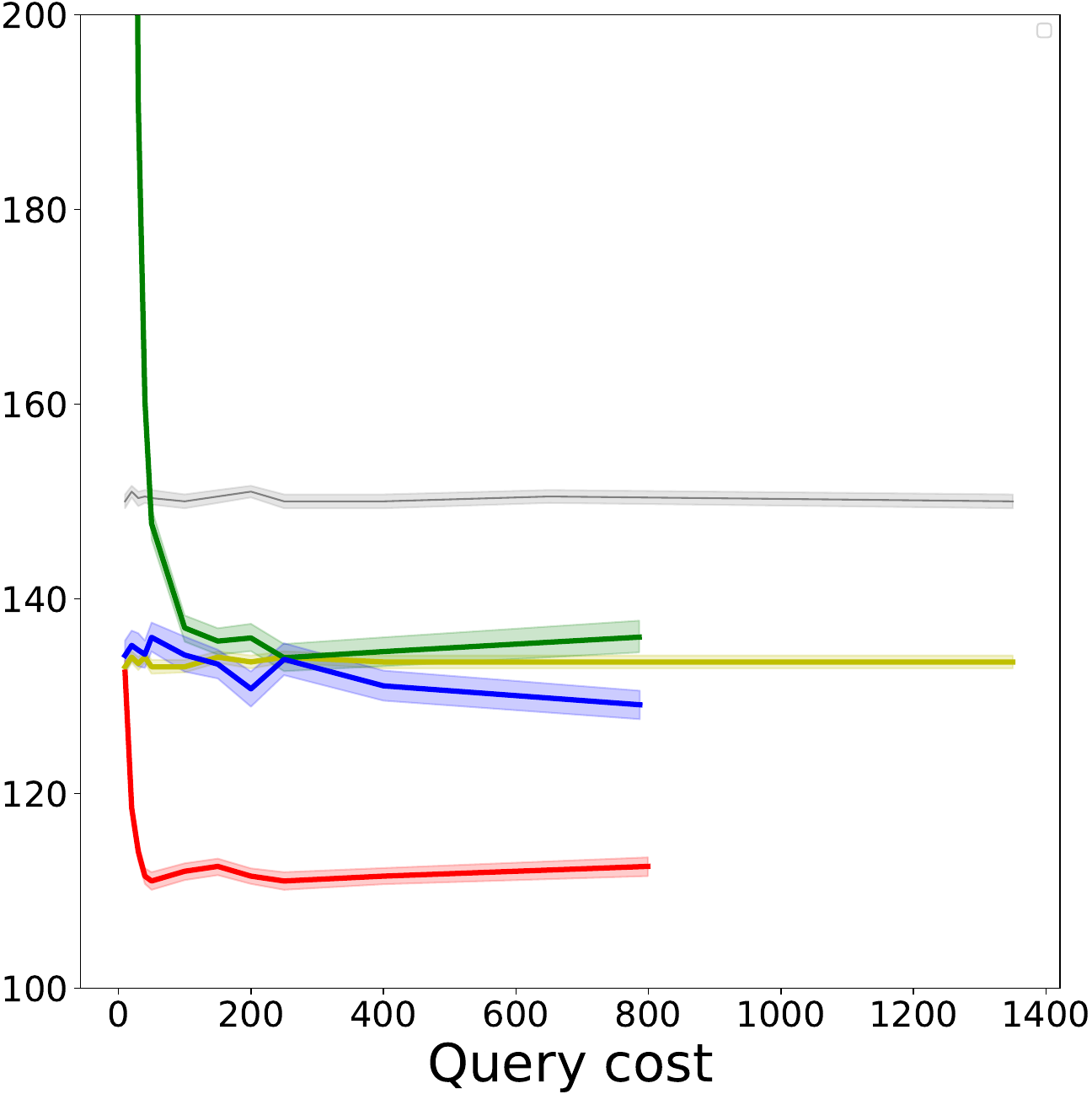}
    \caption{classifiers}
\end{subfigure}
\begin{subfigure}{.3\textwidth}
    \centering
        \includegraphics[height=4cm, clip={0,0,0,0}]{./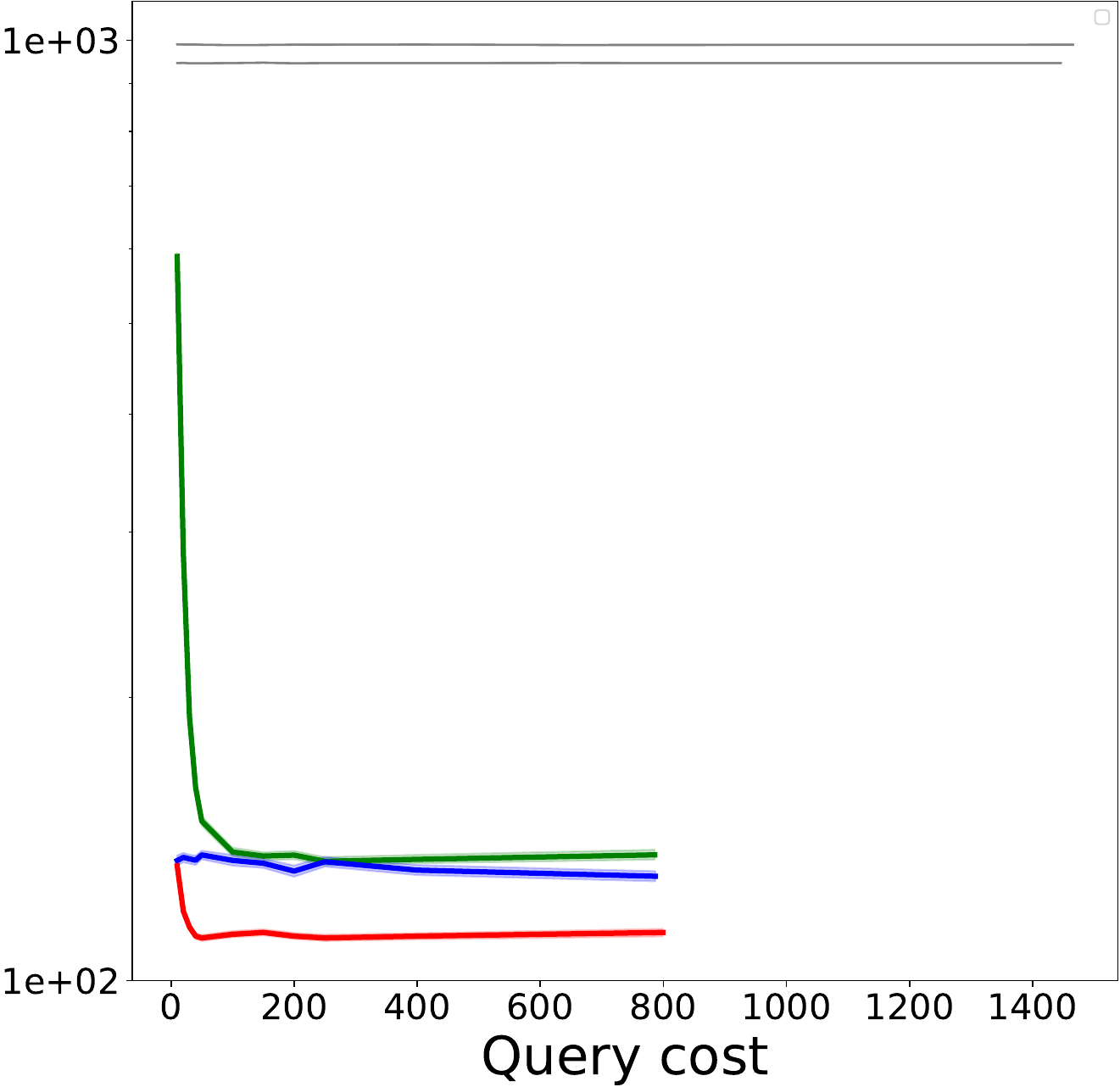}
    \caption{malicious and random policies}
\end{subfigure}
\rotatebox[origin=l]{90}{\qquad \qquad \qquad  \small HIV}
        \caption{Comparing \algname, \textsc{\algname-MAX} and \textsc{\algname-Random-Policy} with top policies and classifiers in the VERTEBRA and HIV benchmarks. They outperform all the malicious/random policies. Moreover, \algname and \textsc{CAMS-MAX} outperform the best classifier. Finally, only \algname even exceeds the best policy (Oracle) in both benchmarks and continues approaching the hypothetical, optimal policy (0 cumulative loss). 90\% confident intervals are indicated in shades. } 
        \label{fig:outperform_all}
\end{figure}

\subsection{The \algname-MAX algorithm}\label{app:CAMS_MAX}
\algname-MAX is a variant of \algname. In an adversarial setting, they share the same algorithm. However, in a stochastic setting, \algname-MAX gets the index $i^*$ of max value in the probability distribution of policy $\PoliciesDistB$, and selects the model with the max value in  $\policy_{i^*}\paren{\instance_t}$ to recommendation. The difference is marked as blue color in \figref{alg:CAMS_max_policy}.

\begin{figure}[h]
  \centering
  \scalebox{0.86}{
  \fbox{
  \begin{subfigure}[t]{0.6\textwidth}
\begin{algorithmic}[1]
  \State {\bfseries Input:} Models $\Models$, policies $\Policies^*$, \#rounds $T$, budget $\budget$
  \State Initialize loss $\tilde{\LossB}_{0}\leftarrow 0$; query cost $\queryCost_0 \leftarrow 0$ %
  \For{$t=1,2,...,T$}
  \State Receive $\instance_t$ 
  \State $\eta_t \leftarrow \textsc{SetRate}(t,\instance_t, \left|\Policies^*\right|)$
    \State Set $\PoliciesDist_{t,i} \propto \exp{\paren{-\eta_t\tilde{\Loss}_{t-1,i}}}~\forall \policyIndex \in |\Policies^*|$ 
  \State $j_t \leftarrow \algRecommend(\instance_t,\PoliciesDistB_t) $ %
  \State Output $\pd_{t,j_{t}} \sim f_{t,j_{t}}$ as the prediction for $\instance_t$ 
  \State Compute $\queryProb_t$ in \eqnref{eq:query-probability} 
  \State Sample $\mathbb{\QueryIndicator}_t \sim \textrm{Ber}\paren{\queryProb_t}$ %
  \If{$\mathbb{\QueryIndicator}_t = 1$ and $\queryCost_t\leq {\budget}$} 
  \State Query the label $\clabel_t$
  \State $\queryCost_t \leftarrow \queryCost_{t-1} + 1$ 
  \State Compute ${\lossB}_t$: $\loss_{t,j}=\mathbb{I}\curlybracket{\pd_{t,j}\neq \clabel_t}, \forall {j\in{[\left|\Models\right|]}}$ 
    \State Estimate model loss:  $\hat{\loss}_{t,j}=\frac{\loss_{t,j}}{\queryProb_t}, \forall j \in{[\left|\Models\right|]}$
    \State $\tilde{\lossB}_{t}$: %
    $\tilde{\loss}_{t,i} \leftarrow \langle \policy_i(\instance_t),  \hat{\loss}_{t,j} \rangle, \forall i\in \left[|\Policies^*|\right]$
    \State $\tilde{\LossB}_{t} = \tilde{\LossB}_{t-1} + \tilde{\lossB}_{t}$
    \Else
    \State $\tilde{\LossB}_{t} = \tilde{\LossB}_{t-1}$
    \State $\queryCost_t \leftarrow \queryCost_{t-1}$ 
    \EndIf
  \EndFor
\end{algorithmic}
    \end{subfigure}
  \begin{subfigure}[t]{0.50\textwidth}
  \begin{subfigure}[t]{1\textwidth}
\hfill
\begin{algorithmic}[1]
\setcounter{ALG@line}{20}
    \Procedure{SetRate}{$t,\instance_t,\numTotalPolicies$}
        \If{\stochastic}
        \State     $\eta_t=\sqrt{\frac{\ln{{\numTotalPolicies}}}{t}}$ 
        \EndIf
        \If{\adversarial}
        \State %
        Set $\minpgap{t}$ as in adversarial section
        \State  $\eta_t=\sqrt{\qlb + \frac{ \minpgap{t}}{c^2\ln c }}\cdot \sqrt{\frac{\ln{{\numTotalPolicies}}}{T}}$ 
        \EndIf
        \State \Return $\eta_t$
    \EndProcedure
\end{algorithmic}
    \end{subfigure}
    \\~
    \\~
 \begin{subfigure}[t]{1\textwidth}
\hfill
\begin{algorithmic}[1]
\setcounter{ALG@line}{28}
    \Procedure{\algRecommend}{$\instance_t,\PoliciesDistB_t$}
        \If{\stochastic}
        \State  {\color{blue}$i_t \leftarrow  \maxind{\PoliciesDistB_t}$}
        \State  {\color{blue}$j_t \leftarrow  \maxind{\policy_{i_t}\paren{\instance_t}}$}
        \EndIf
        \If{\adversarial}
        \State  $i_t \sim \PoliciesDistB_t$ 
        \State  $j_t \sim \policy_{i_t}\paren{\instance_t}$ 
        \EndIf
        \State \Return $j_t$
    \EndProcedure
    \end{algorithmic}
        \end{subfigure} 
    \\~
    \end{subfigure}
  }}
  \caption{The \algname-MAX Algorithm}
    \label{alg:CAMS_max_policy}
\end{figure}

\subsection{The CAMS-Random-Policy algorithm}\label{app:CAMS_random_policy}

\begin{figure}[!h]
  \centering
  \scalebox{0.86}{
  \fbox{
  \begin{subfigure}[t]{0.6\textwidth}
\begin{algorithmic}[1]
  \State {\bfseries Input:} Models $\Models$, policies $\Policies^*$, \#rounds $T$, budget $\budget$
  \State Initialize loss $\tilde{\LossB}_{0}\leftarrow 0$; query cost $\queryCost_0 \leftarrow 0$ %
  \For{$t=1,2,...,T$}
  \State Receive $\instance_t$ 
  \State $\eta_t \leftarrow \textsc{SetRate}(t,\instance_t, \left|\Policies^*\right|)$
    \State Set $\PoliciesDist_{t,i} \propto \exp{\paren{-\eta_t\tilde{\Loss}_{t-1,i}}}~\forall \policyIndex \in |\Policies^*|$ 
  \State $j_t \leftarrow \algRecommend(\instance_t,\PoliciesDistB_t) $ %
  \State Output $\pd_{t,j_{t}} \sim f_{t,j_{t}}$ as the prediction for $\instance_t$ 
  \State Compute $\queryProb_t$ in \eqnref{eq:query-probability} 
  \State Sample $\mathbb{\QueryIndicator}_t \sim \textrm{Ber}\paren{\queryProb_t}$ %
  \If{$\mathbb{\QueryIndicator}_t = 1$ and $\queryCost_t\leq {\budget}$} 
  \State Query the label $\clabel_t$
  \State $\queryCost_t \leftarrow \queryCost_{t-1} + 1$ 
  \State Compute ${\lossB}_t$: $\loss_{t,j}=\mathbb{I}\curlybracket{\pd_{t,j}\neq \clabel_t}, \forall {j\in{[\left|\Models\right|]}}$ 
    \State Estimate model loss:  $\hat{\loss}_{t,j}=\frac{\loss_{t,j}}{\queryProb_t}, \forall j \in{[\left|\Models\right|]}$
    \State $\tilde{\lossB}_{t}$: %
    $\tilde{\loss}_{t,i} \leftarrow \langle \policy_i(\instance_t),  \hat{\loss}_{t,j} \rangle, \forall i\in \left[|\Policies^*|\right]$
    \State $\tilde{\LossB}_{t} = \tilde{\LossB}_{t-1} + \tilde{\lossB}_{t}$
    \Else
    \State $\tilde{\LossB}_{t} = \tilde{\LossB}_{t-1}$
    \State $\queryCost_t \leftarrow \queryCost_{t-1}$ 
    \EndIf
  \EndFor
\end{algorithmic}
    \end{subfigure}
  \begin{subfigure}[t]{0.50\textwidth}
  \begin{subfigure}[t]{1\textwidth}
\hfill
\begin{algorithmic}[1]
\setcounter{ALG@line}{20}
    \Procedure{SetRate}{$t,\instance_t,\numTotalPolicies$}
        \If{\stochastic}
        \State     $\eta_t=\sqrt{\frac{\ln{{\numTotalPolicies}}}{t}}$ 
        \EndIf
        \If{\adversarial}
        \State %
        Set $\minpgap{t}$ as in adversarial section
        \State  $\eta_t=\sqrt{\qlb + \frac{ \minpgap{t}}{c^2\ln c }}\cdot \sqrt{\frac{\ln{{\numTotalPolicies}}}{T}}$ 
        \EndIf
        \State \Return $\eta_t$
    \EndProcedure
\end{algorithmic}
    \end{subfigure}
    \\~
    \\~
 \begin{subfigure}[t]{1\textwidth}
\hfill
\begin{algorithmic}[1]
\setcounter{ALG@line}{28}
    \Procedure{\algRecommend}{$\instance_t,\PoliciesDistB_t$}
        \If{\stochastic}
        \State  {\color{blue}$i_t \sim \PoliciesDistB_t$}
        \State  {\color{blue}$j_t \leftarrow  \maxind{\policy_{i_t}\paren{\instance_t}}$}
        \EndIf
        \If{\adversarial}
        \State  $i_t \sim \PoliciesDistB_t$ 
        \State  $j_t \sim \policy_{i_t}\paren{\instance_t}$ 
        \EndIf
        \State \Return $j_t$
    \EndProcedure
    \end{algorithmic}
        \end{subfigure} 
    \\~
    \end{subfigure}
  }}
  \caption{The \algname-Random-Policy Algorithm}
    \label{alg:CAMS_random_policy}
\end{figure}

\algname-Random-Policy is a variant of \algname. It shares the same algorithm with \algname in an adversarial environment. However, it uses a random sampling policy method in a stochastic setting. It randomly samples the policy from the probability distribution of policy $\PoliciesDistB$, and selects the model with max value in $\policy_{i^*}\paren{\instance_t}$ to recommendation. The difference is marked as blue color in \figref{alg:CAMS_random_policy}.

\subsection{Maximal queries from experiments}\label{app:exp:max_query}
\tabref{app:emperical-query-complexity} in this section summarizes the maximum query cost for a given data stream (of fixed total size), %
with its associated cumulative loss for all baselines (exclude Oracle) on all benchmarks in experiment section.
The result in this table is slightly different from the query complexity curves of \figref{fig:exp:results} (Middle). The curve in \figref{fig:exp:results} (Middle) takes the average value, while the table takes the maximal value from a fixed number of simulations. \algname wins over all baselines (other than Oracle) in terms of query cost on CIFAR10, DRIFT, and VERTEBRAL benchmarks.  \algname outperforms all baselines in terms of cumulative loss on DRIFT, VERTEBRAL, and HIV benchmarks. In particular, \algname outperforms both cumulative loss and query cost on the DRIFT and VERTEBRAL benchmarks.

\begin{table}[h]
\centering
\scalebox{1}{
    \begin{tabular}{l l l l l l}
\toprule
\textbf{Algorithm}
& \textbf{CIFAR10}
& \textbf{DRIFT} 
& \textbf{VERTEBRAL}
& \textbf{HIV}\\
\midrule
\emph{Max queries, Cumulative loss}
& \emph{\makecell[l]{1200, 10000}}
& \emph{\makecell[l]{2000, 3000}}
& \emph{\makecell[l]{80, 80}}
& \emph{\makecell[l]{2000, 4000}}\\
\hline
RS
& \makecell[l]{1200, 2916}
& \makecell[l]{2000, 766}
& \makecell[l]{80, 19} 
& \makecell[l]{2000, 143}\\
\hline
QBC
& \makecell[l]{1200, 2857}
& \makecell[l]{1904, 771}
& \makecell[l]{72, 20} 
& \makecell[l]{2000, 139}\\
\hline
IWAL
& \makecell[l]{1200, 2854}
& \makecell[l]{2000, 760}
& \makecell[l]{80, 19} 
& \makecell[l]{690, 140}\\
\hline
MP
& \makecell[l]{1200, 2885}
& \makecell[l]{493, 803}
& \makecell[l]{33, 25}
& \makecell[l]{153, 148}\\
\hline
CQBC
& \makecell[l]{1200, 2284}  
& \makecell[l]{1900, 744}
& \makecell[l]{68, 13} 
& \makecell[l]{2000, 124}\\
\hline
CIWAL
& \makecell[l]{1200, 2316}
& \makecell[l]{2000, 746}
& \makecell[l]{80, 12} 
& \makecell[l]{690, 124}\\
\hline
\textbf{\algname}
& \makecell[l]{\textbf{348}, 2348} 
& \makecell[l]{\textbf{251, 710}} 
& \makecell[l]{\textbf{32, 11}} 
& \makecell[l]{782, \textbf{112}}\\
\bottomrule
\end{tabular}
}
\caption{Maximal queries from experiments}
\label{app:emperical-query-complexity}
\end{table}

\subsection{Query complexity}\label{app:exp:query_complexity_experiment}

{To achieve the same level of prediction accuracy (measured by average cumulative loss over a fixed number of rounds), \algname incurs less than 10\% of the label cost of the best competing baselines on CIFAR10 (10K examples), and 68\% the cost on VERTEBRAL (see \figref{app:exp:query_complexity});
\figref{app:exp:query_complexity} \footnote{{We also consider variants for each algorithm (other than Random and Oracle) where we scale the query probabilities based on the early-phase performance and observe similar behavior. See \appref{app:scaling_param} for the corresponding results.}} and \tabref{app:emperical-query-complexity} also demonstrate the compelling effectiveness of \algname's query strategy outperforming all baselines in terms of query cost in VERTEBRAL, DRIFT, and CIFAR10 benchmarks, which is consistent with our query complexity bound in \thmref{thm:stochastic-query-complexity}.
}

\begin{figure*}[ht!]
\begin{subfigure}{1\textwidth}
    \centering
    \includegraphics[height=0.3cm,  clip={0,0,0,0}]{./figures/legend_horizontal.png}
\end{subfigure}
\rotatebox[origin=l]{90}{\quad \quad \scriptsize Number of queries}
\begin{subfigure}{.24\textwidth}
    \centering
    \includegraphics[height=2.7cm, width=3.4cm,  clip={0,0,0,0}]{./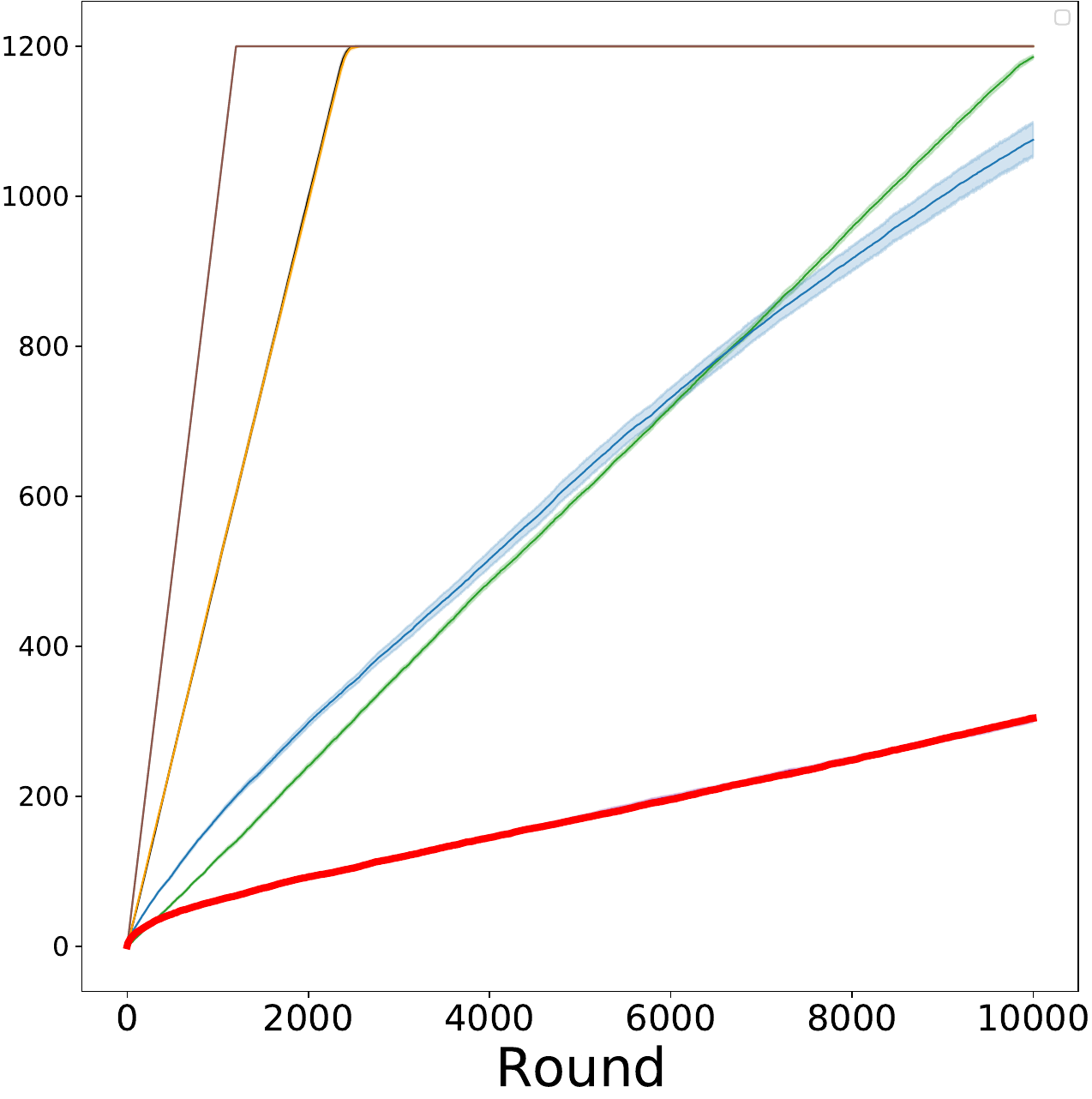}
   \caption{CIFAR10}
\end{subfigure}
\begin{subfigure}{.24\textwidth}
    \centering
    \includegraphics[height=2.7cm, width=3.4cm,  clip={0,0,0,0}]{./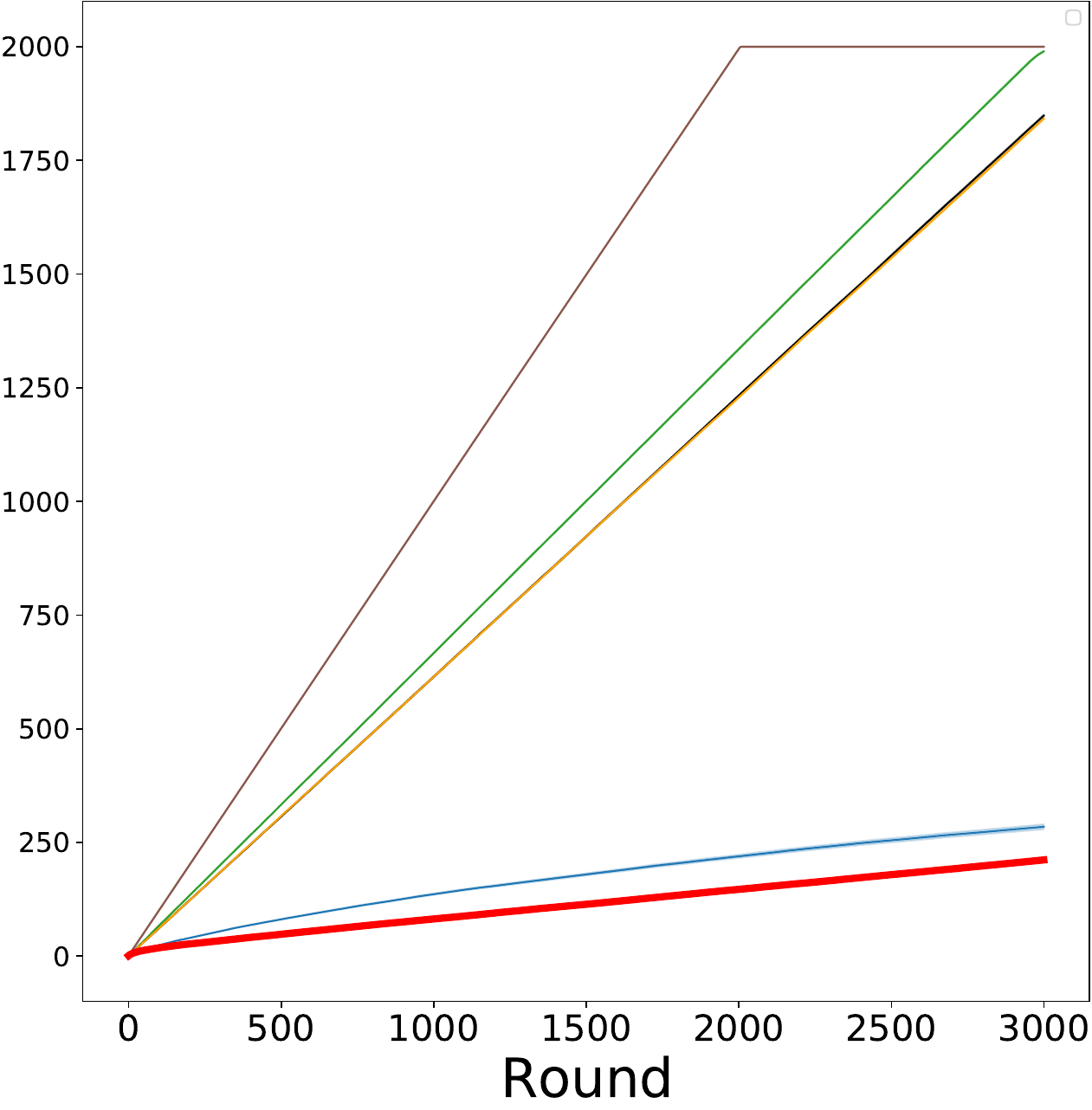}
   \caption{DRIFT}
\end{subfigure}
\begin{subfigure}{.24\textwidth}
    \centering
    \includegraphics[height=2.7cm, width=3.4cm,  clip={0,0,0,0}]{./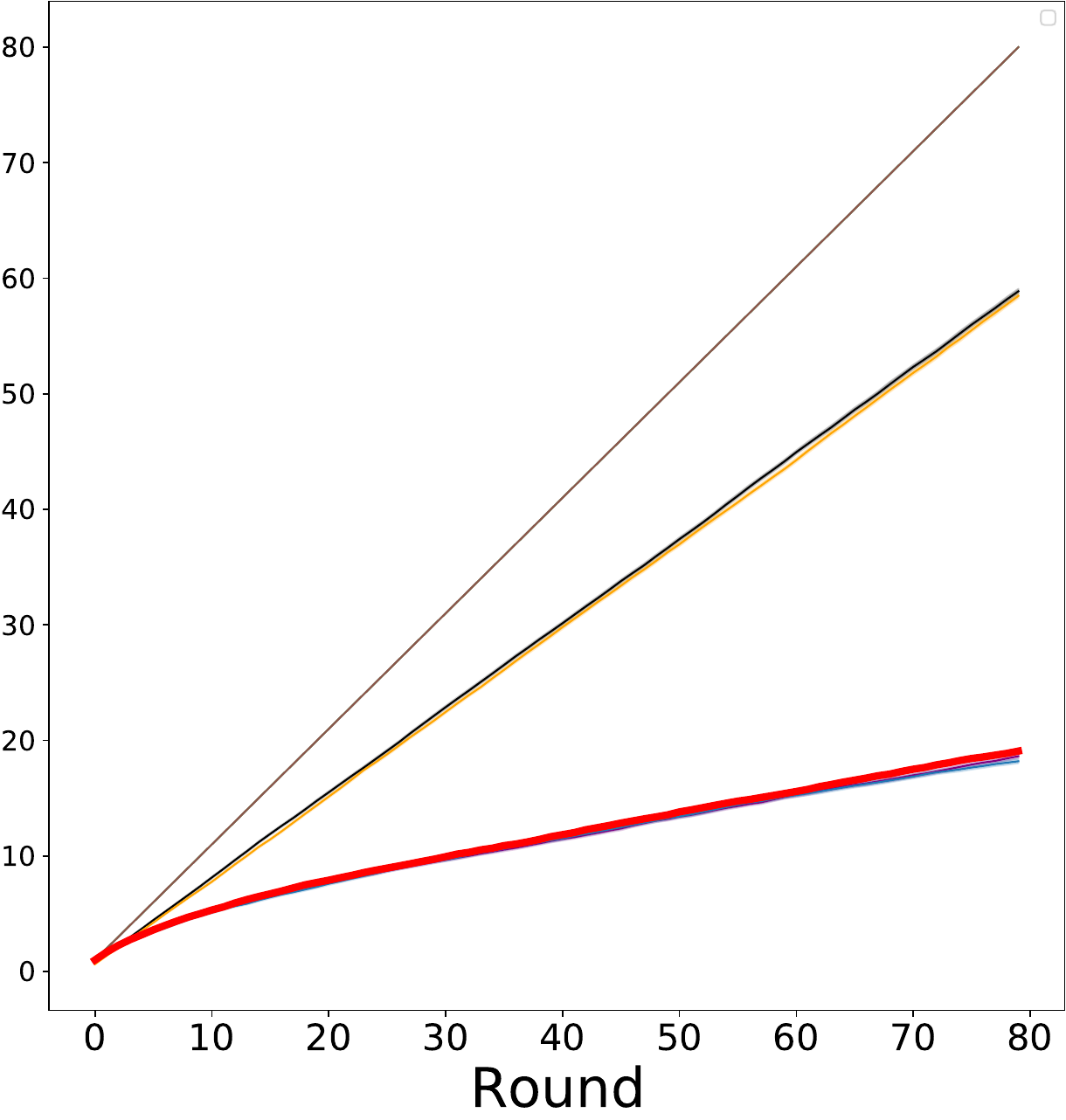}
   \caption{VERTEBRAL}
\end{subfigure}
\begin{subfigure}{.24\textwidth}
    \centering
    \includegraphics[height=2.7cm, width=3.4cm, clip={0,0,0,0}]{./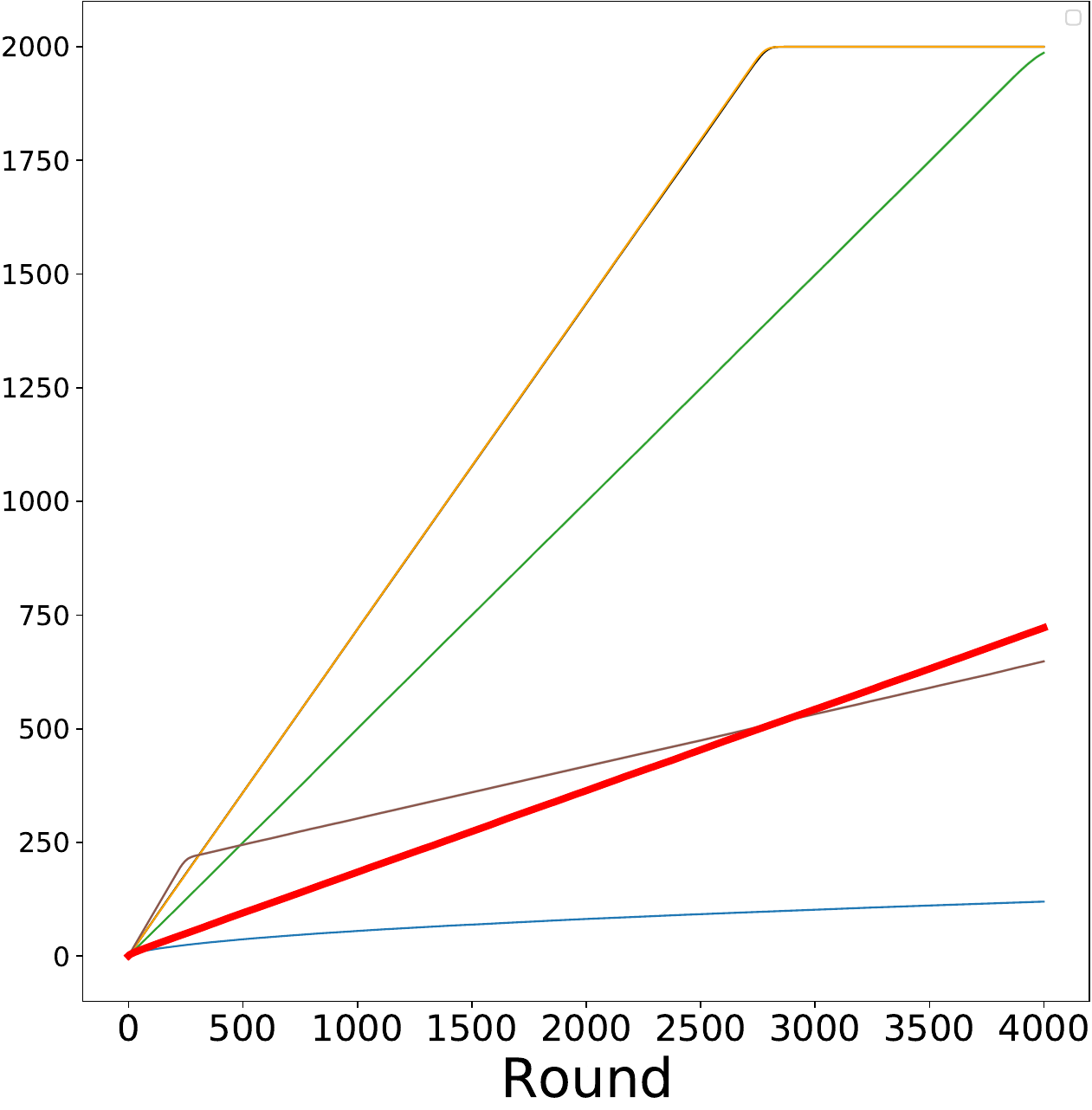}
   \caption{HIV}
\end{subfigure}
    \centering
        \caption{Comparing \algname with 7 baselines on 4 diverse benchmarks in terms of query complexity (Number of queries). 
        \algname outperforms all baselines 
        for a fixed number of rounds $T$ (where $T=10000, 3000, 80, 4000$ from left to right) and maximal query cost $B$ (where $B=1200,2000,80,2000$ from left to right). \textbf{Algorithms}: 4 contextual \{Oracle, CQBC, CIWAL, CAMS\} and 4 non-contextual baselines \{RS, QBC, IWAL, MP\} are included (see Section \pref{main:baseines}). 90\% confident interval are indicated in shades. %
        }
  \label{app:exp:query_complexity}
\end{figure*}

\subsection{{Fine-tuning the query probabilities for stochastic streams}}\label{app:scaling_param}
{

For the experimental results we reported in the main paper, we consider a streaming setting where the data arrives online in an \emph{arbitrary order} and \emph{arbitrary length}. Therefore, for both \algname and the baselines, we used the exact off-the-shelf query criteria as described in experiment setup section
\emph{without fine-tuning the query probabilities}, which could be otherwise desirable in certain scenarios (e.g. for stochastic streams, where the query probability can be further optimized).

In this section, we consider such scenarios, and conduct an additional set of experiments to further demonstrate the performance of \algname assuming stochastic data streams. 
Given the stream length $T$ and query budget $b$, we may optimize each algorithm by scaling their query probabilities, so that each algorithm allocates its query budget to the top $b$ informative labels in the entire online stream based on its own query criterion. Note that in practice, finding the exact scaling parameter is infeasible, as we do not know the online performance unless we observe the entire data stream. While it is challenging to determine the scaling factor for each algorithm under the adversarial setting, one can effectively estimate the scaling factor for stochastic streams, where the context arrives \textit{i.i.d.}. 

Concretely, we use the early budget to decide the scaling parameter in our following evaluation: Firstly, we use a small fraction (i.e. ${T}/{10}$) of the online stream and see how much queries $b_{\text{early}}$ each algorithm consumed. Then we calculate the scaling parameter $s=\frac{(b-b_{\text{early}})}{T-T/{10}}\cdot \frac{T/10}{b_{\text{early}}}$ and multiply the scaling factor with the query probability of each algorithm for the remaining $\frac{9}{10}\cdot{T}$ rounds.  %
The results in \figref{scaling-plots} demonstrate that \algname still outperforms all the baselines (excluding Oracle) when all algorithms select the top $b$ data of the whole online stream to query. 
The improvement of CAMS over the baseline approaches \emph{does not differ much} between the two versions (with or without scaling) of the experiments as shown in the bottom plots of \figref{fig:exp:results} and \figref{scaling-plots}. 

For a head-to-head comparison between the bottom plots of \figref{fig:exp:results} and \figref{scaling-plots}, note that the total number of rounds stays the same for DRIFT ($T=3000$), VERTEBRAL ($T=80$), and HIV ($T=4000$); while we used half the rounds and half the maximal budget for CIFAR10 ($T=5000$) for the version with scaling. Roughly speaking, the cumulative regret plots for the baselines were "streched out" to cover the full allocated budget after scaling, but we do not observe a significant difference in terms of the absolute gain in terms of the cumulative loss. Another way to read the difference between the two plots is to compare the cumulative losses at the budget range where all algorithms were not cut off early: e.g., for DRIFT, when Query Cost is 250, the cumulative losses for the competing algorithm stay roughly the same under the two evaluation scenarios.

} %

\begin{figure}[h]
\begin{subfigure}{1\textwidth}
    \centering
    \includegraphics[height=0.3cm, clip={0,0,0,0}]{./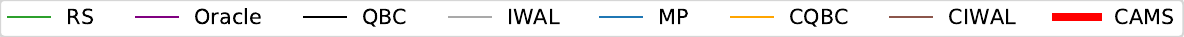}
\end{subfigure}
\rotatebox[origin=l]{90}{\qquad \qquad \scriptsize Cumulative loss}
\begin{subfigure}{.24\textwidth}
    \centering
        \includegraphics[height=3.2cm, clip={0,0,0,0}]{./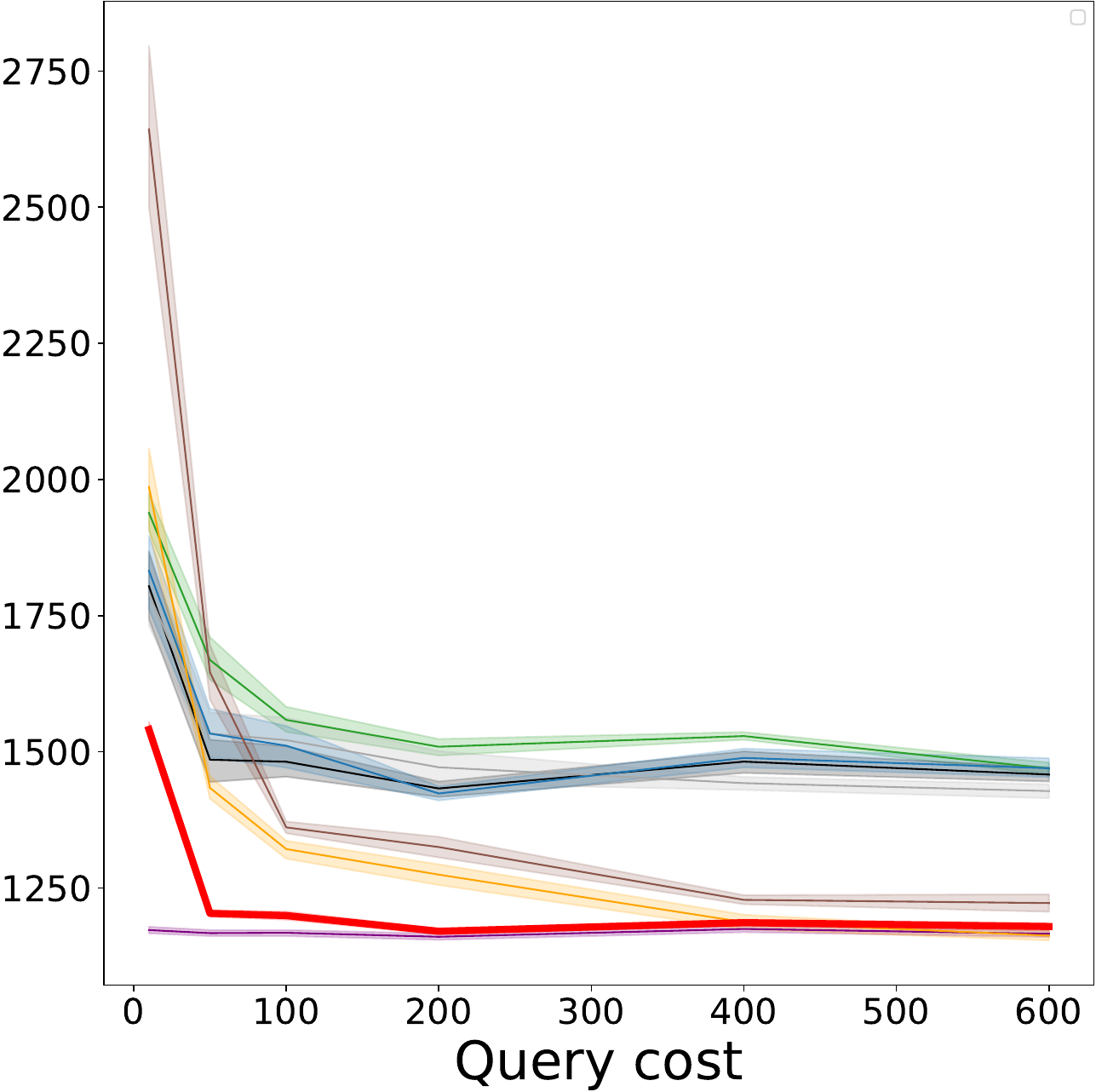}
    \caption{CIFAR10}
\end{subfigure}
\begin{subfigure}{.24\textwidth}
    \centering
        \includegraphics[height=3.3cm, clip={0,0,0,0}]{./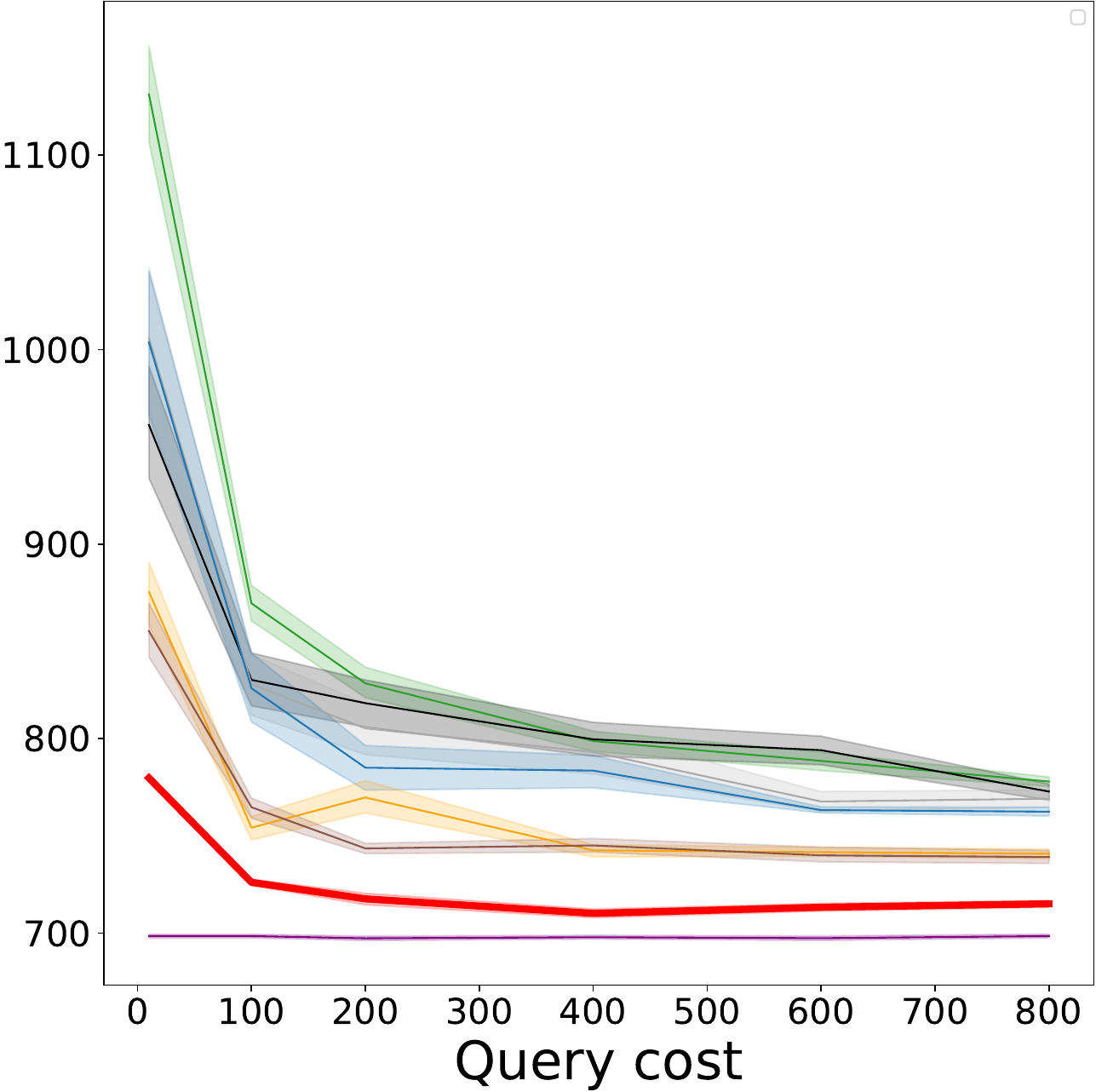}
    \caption{DRIFT}
\end{subfigure}
\begin{subfigure}{.24\textwidth}
    \centering
        \includegraphics[height=3.3cm, clip={0,0,0,0}]{./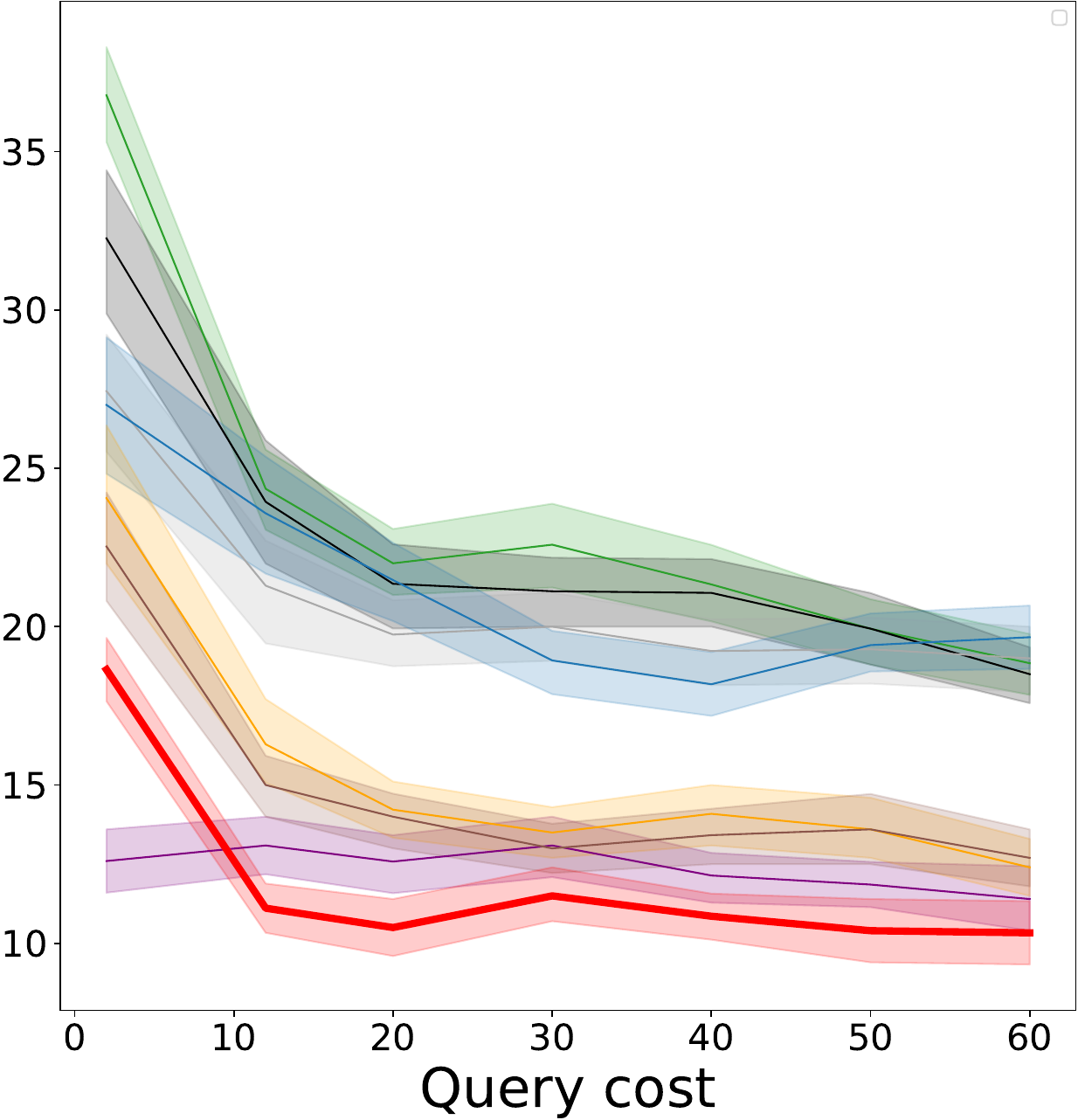}
    \caption{VERTEBRAL}
\end{subfigure}
\begin{subfigure}{.24\textwidth}
    \centering
        \includegraphics[height=3.3cm, clip={0,0,0,0}]{./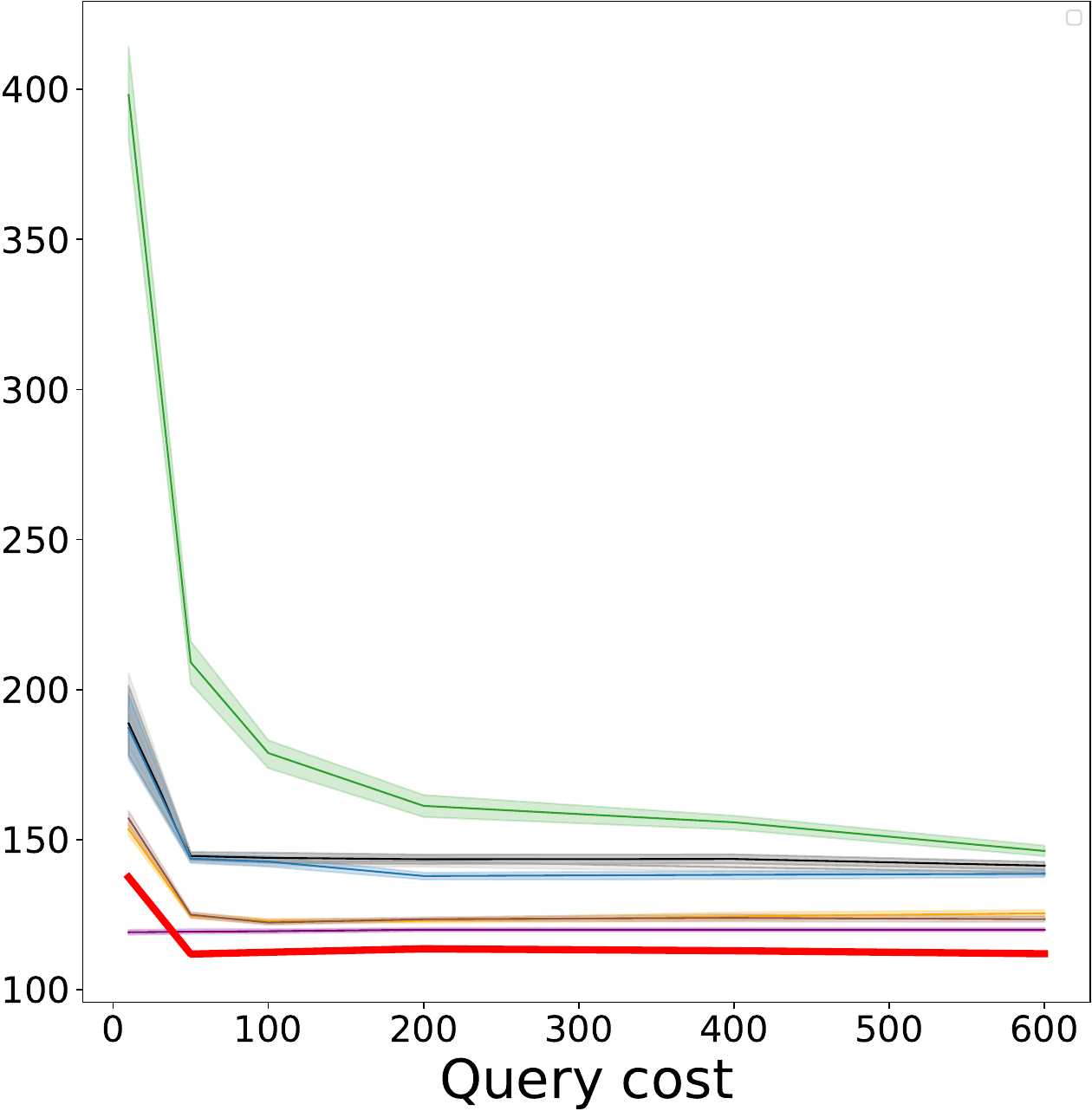}
    \caption{HIV}
\end{subfigure}
      \caption{Comparing \algname with 7 model selection baselines on 4 diverse benchmarks in terms of cost effectiveness {after applying the scaling parameter to each algorithm}. CAMS outperforms all baselines (excluding Oracle). Performance of cumulative loss by increasing the query cost, for a fixed number of rounds $T$ (where $T=5000, 3000, 80, 4000$ from left to right) and maximal query cost $B$ (where $B=600,800,60,600$ from left to right). \textbf{Algorithms}: 4 contextual \{Oracle, CQBC, CIWAL, CAMS\} and 4 non-contextual baselines \{RS, QBC, IWAL, MP\} are included (see Section \pref{main:baseines}). 90\% confident interval are indicated in shades.}
      \label{scaling-plots}
\end{figure}

\subsection{{Ablation study on the active query strategy }}\label{app:sample_efficiency}
{In this section, we compare the performance of \algname and its non-active variant (\algname-nonactive), which queries the label for each incoming data point. As shown in \figref{fig:app:sample-efficiency}, \algname performs equally well or better than \algname-nonactive, even though it queries significantly less data. Surprisingly, on the DRIFT dataset, \algname significantly outperforms \algname-nonactive, even when using less than 10 percent of the query budget (\figref{fig:app:sample-efficiency:drift}). This demonstrates that \algname selectively choose the data to query to maximal optimize policy improvement, while \algname queries all data points, regardless of their usefulness or noise, which hampers policy improvement and convergence.}

\begin{figure*}[h]
\begin{subfigure}{1\textwidth}
    \centering
    \includegraphics[height=0.3cm,  clip={0,0,0,0}]{./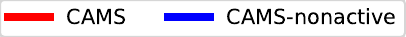}
\end{subfigure}
\rotatebox[origin=l]{90}{\quad \scriptsize Number of queries}
\begin{subfigure}{.24\textwidth}
    \centering
    \includegraphics[height=2.7cm, width=3.4cm, clip={0,0,0,0}]{./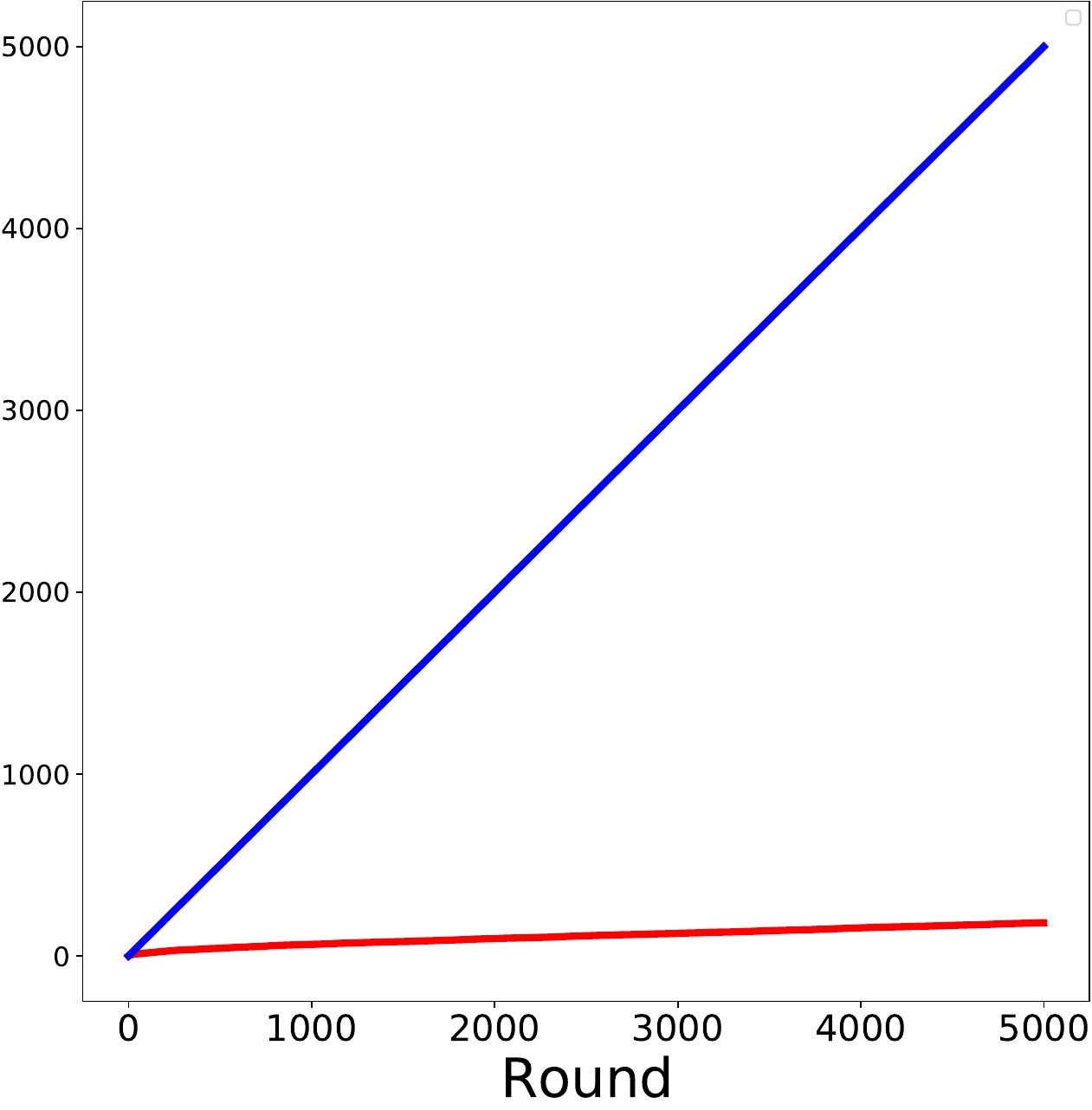}
\end{subfigure}
\begin{subfigure}{.24\textwidth}
    \centering
    \includegraphics[height=2.7cm, width=3.4cm, clip={0,0,0,0}]{./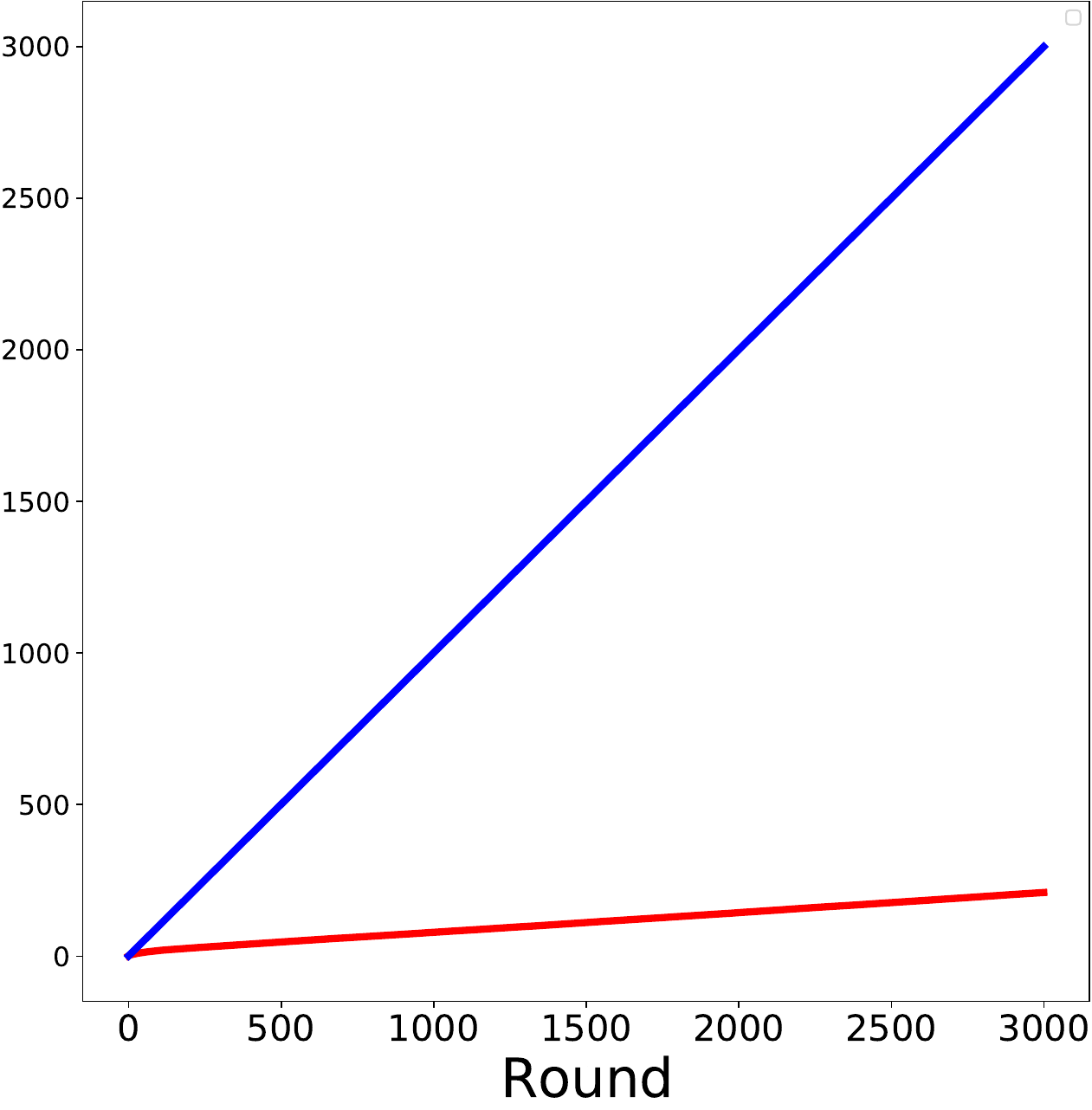}
\end{subfigure}
\begin{subfigure}{.24\textwidth}
    \centering
    \includegraphics[height=2.7cm, width=3.4cm, clip={0,0,0,0}]{./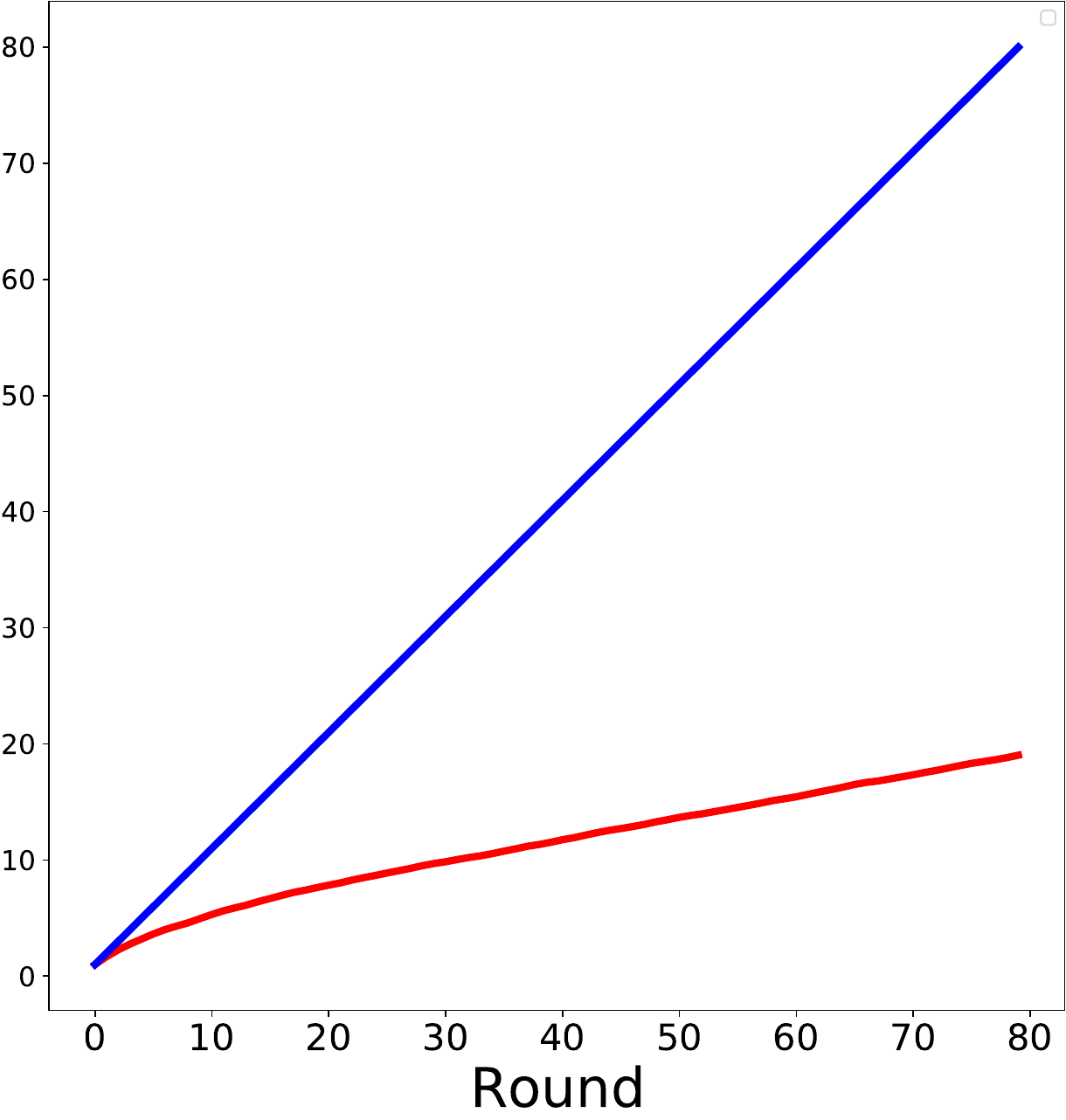}
\end{subfigure}
\begin{subfigure}{.24\textwidth}
    \centering
    \includegraphics[height=2.7cm, width=3.4cm,clip={0,0,0,0}]{./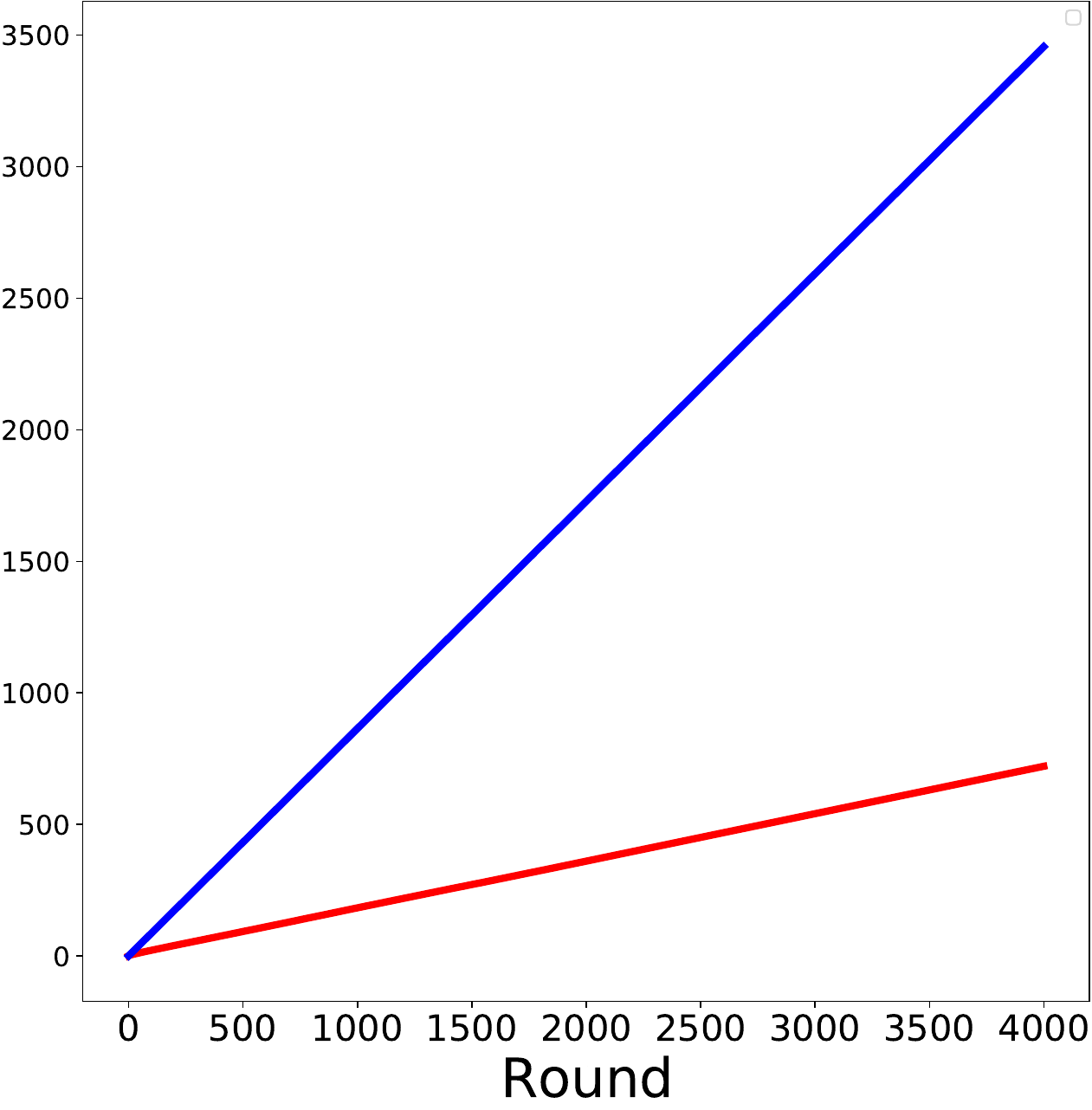}
\end{subfigure}
\rotatebox[origin=l]{90}{\qquad  \scriptsize Cumulative loss}
\begin{subfigure}{.24\textwidth}
    \centering
    \includegraphics[height=2.7cm, width=3.4cm, clip={0,0,0,0}]{./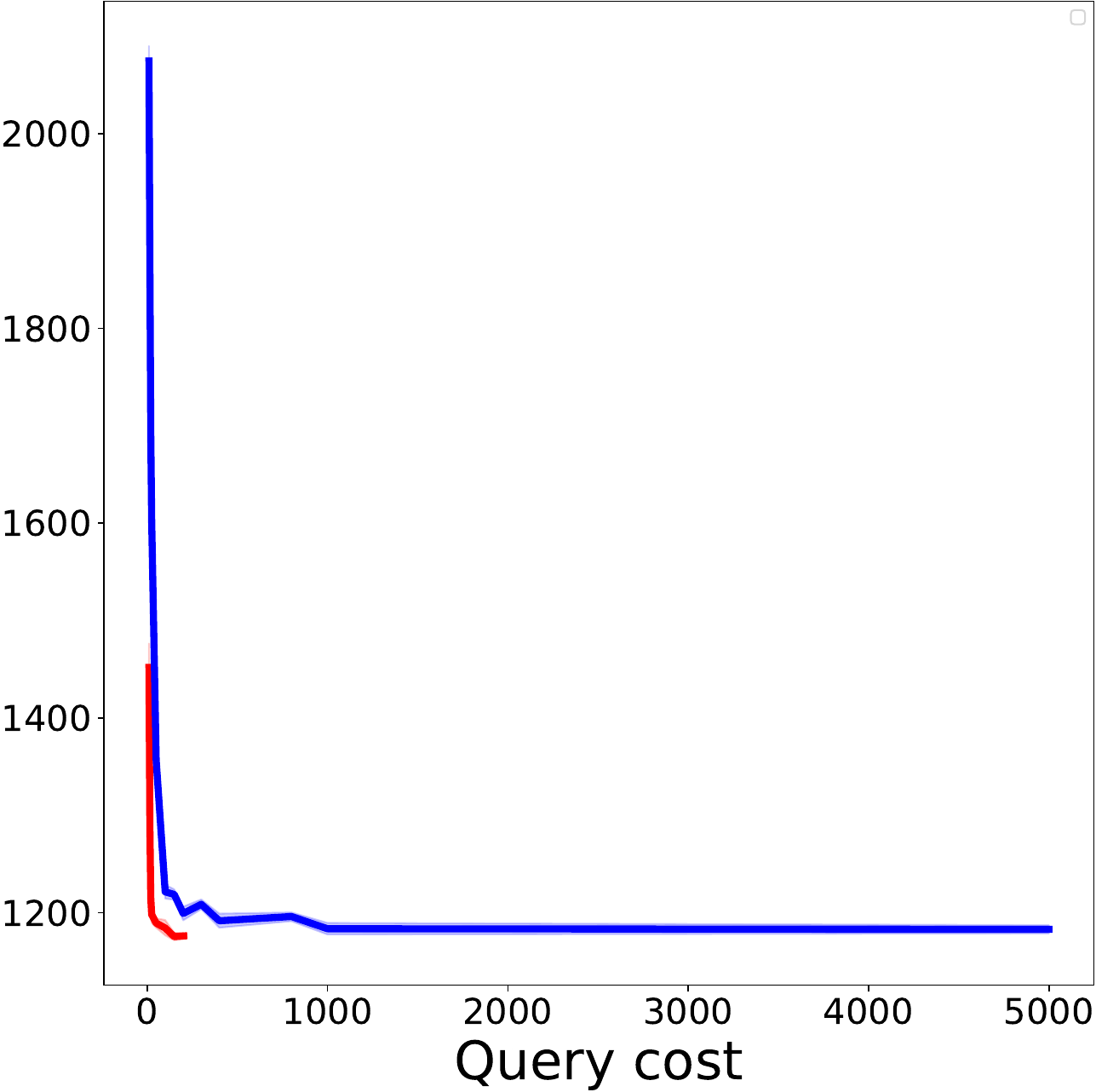}
    \caption{CIFAR10}
\end{subfigure}
\begin{subfigure}{.24\textwidth}
    \centering
    \includegraphics[height=2.7cm, width=3.4cm, clip={0,0,0,0}]{./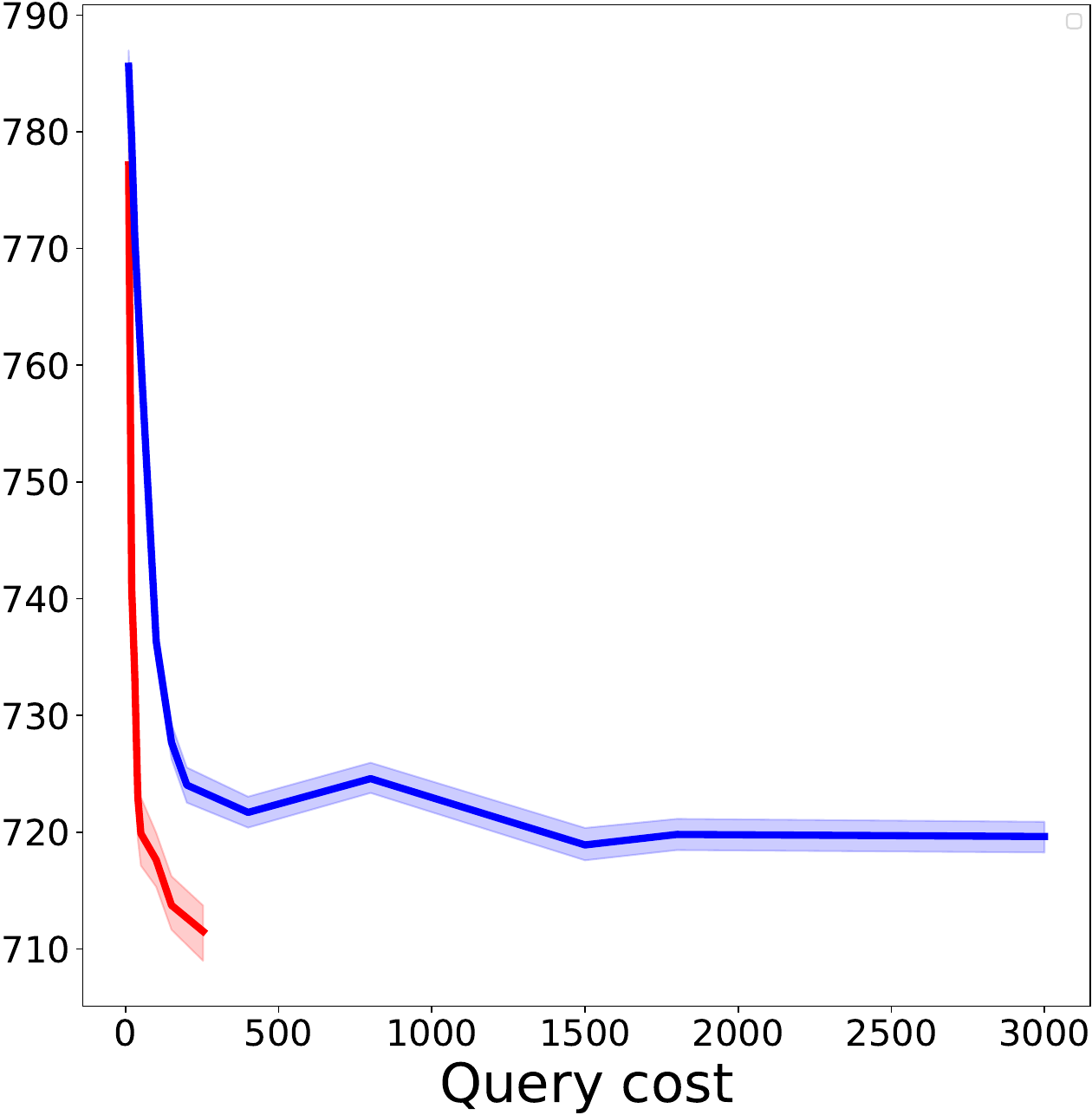}
    \caption{DRIFT}\label{fig:app:sample-efficiency:drift}
\end{subfigure}
\begin{subfigure}{.24\textwidth}
    \centering
    \includegraphics[height=2.7cm, width=3.4cm, clip={0,0,0,0}]{./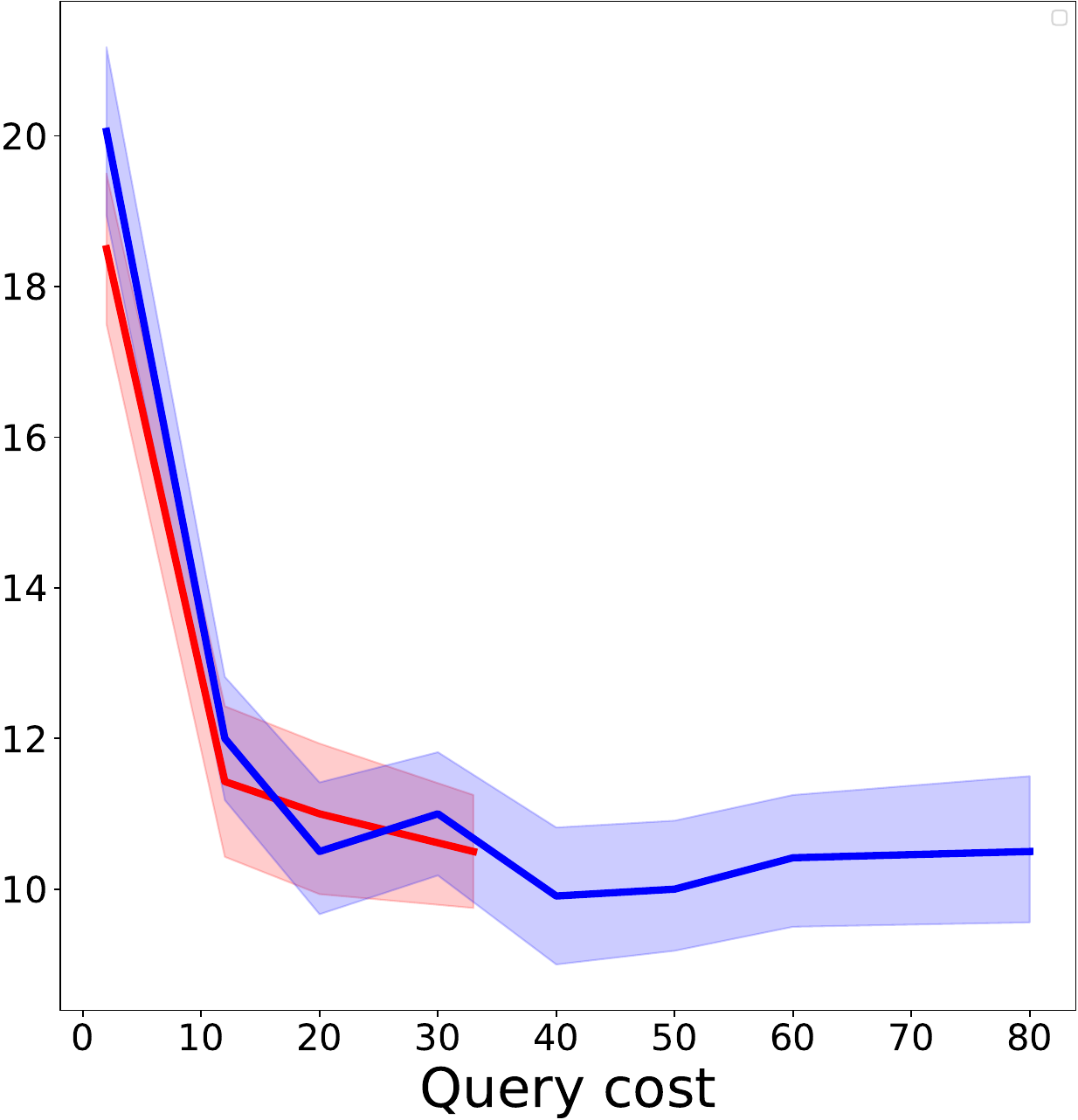}
    \caption{VERTEBRAL}
\end{subfigure}
\begin{subfigure}{.24\textwidth}
    \centering
    \includegraphics[height=2.7cm, width=3.4cm, clip={0,0,0,0}]{./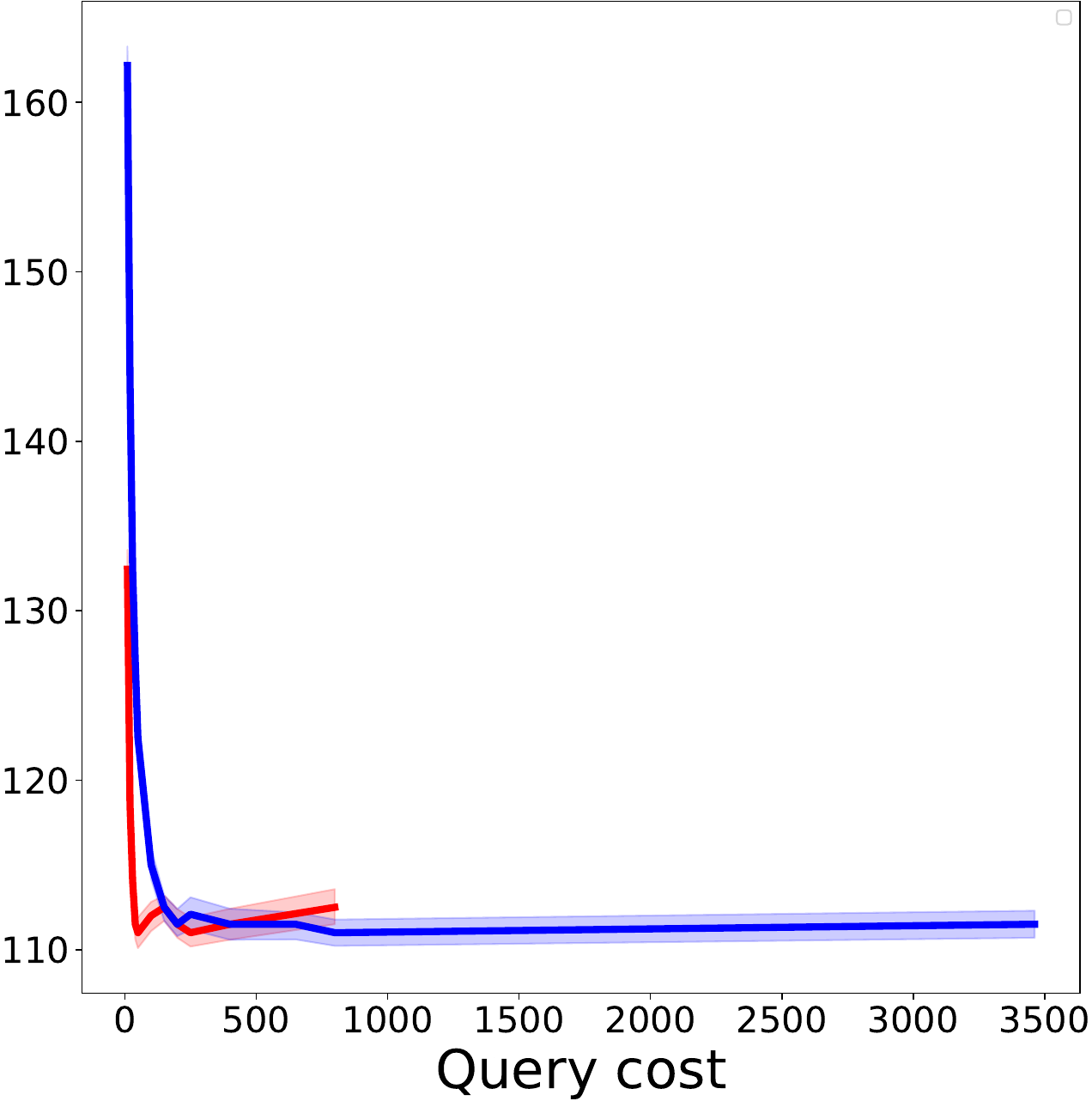}
    \caption{HIV}
\end{subfigure}
    \centering
        \caption{
        {
        Comparing \algname (in red) with \algname-nonactive (in blue) on 4 diverse benchmarks in terms of query complexity, and cost effectiveness.  \algname outperforms or performs equally well to \algname-nonactive with much less queried labels for all benchmarks. \textbf{(Top)} Number of queries and 
        \textbf{(Bottom)} Performance of cumulative loss by increasing the query cost, for a fixed number of rounds $T$ (where $T=5000, 3000, 80, 4000$ from left to right) and maximal query cost $B$ (where $B=T=5000, 3000, 80, 4000$ from left to right). 90\% confident interval are indicated in shades. 
        }
        }
  \label{fig:app:sample-efficiency}
\end{figure*}

\subsection{Relative Cumulative Loss}\label{app:exp:rcl}

{\paragraph{Relative cumulative loss (RCL).} At round $t$, we define RCL as $\Loss_{t,j_i} - \Loss_{t,j^*}$, where $\Loss_{t,j^*}$ stands for the cumulative loss (CL) of the policy always selecting the best classifier, and $\Loss_{t,j_i}$ stands for the CL of policy $i$.

\begin{figure*}[!h]
\begin{subfigure}{1\textwidth}
    \centering
    \includegraphics[height=0.3cm,  clip={0,0,0,0}]{./figures/legend_horizontal.png}
\end{subfigure}
\rotatebox[origin=l]{90}{\quad \quad \scriptsize Relative cumulative loss}
\begin{subfigure}{.24\textwidth}
    \centering
    \includegraphics[height=2.7cm, width=3.4cm,clip={0,0,0,0}]{./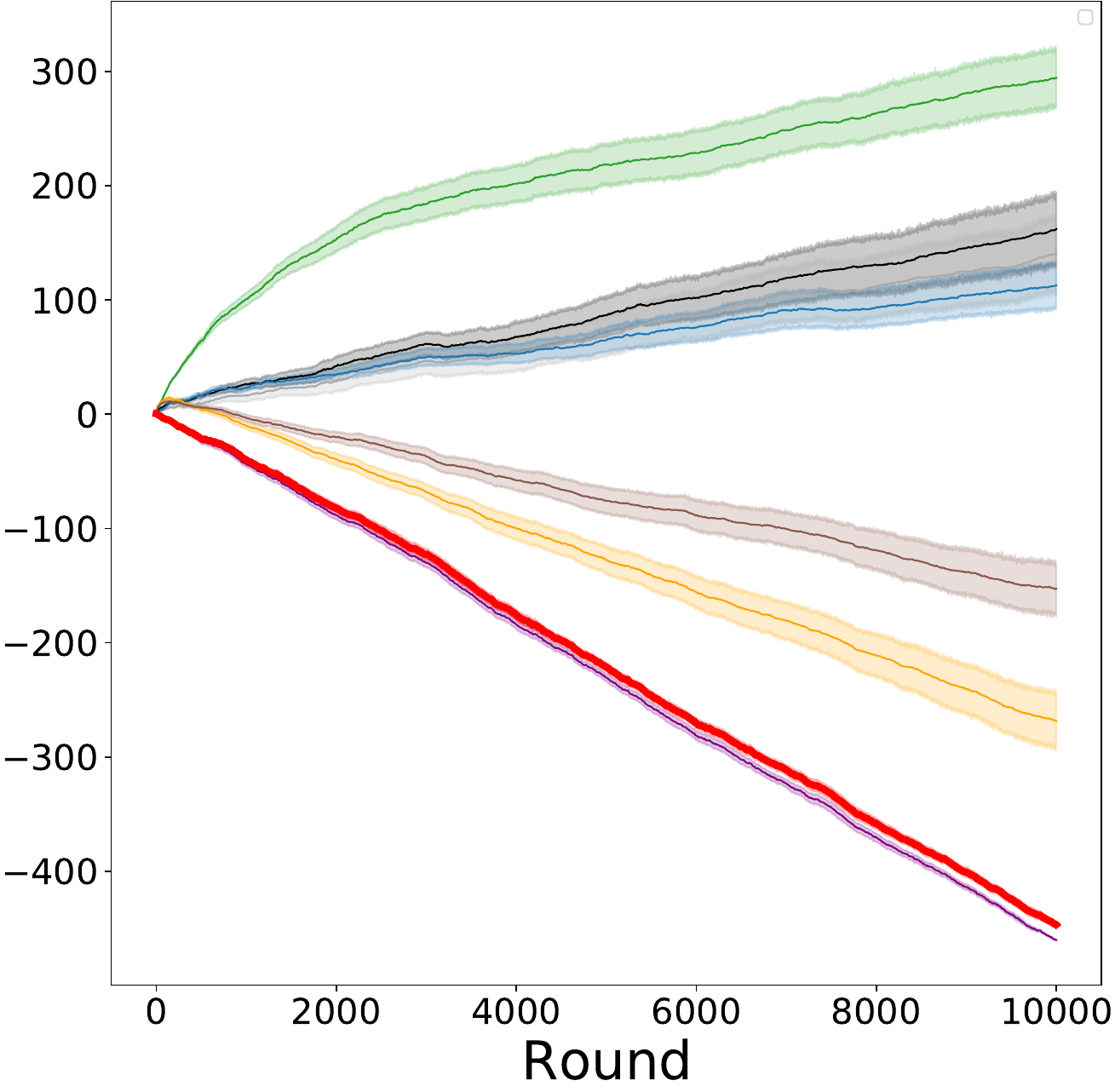}
   \caption{CIFAR10}
\end{subfigure}
\begin{subfigure}{.24\textwidth}
    \centering
    \includegraphics[height=2.7cm,width=3.4cm, clip={0,0,0,0}]{./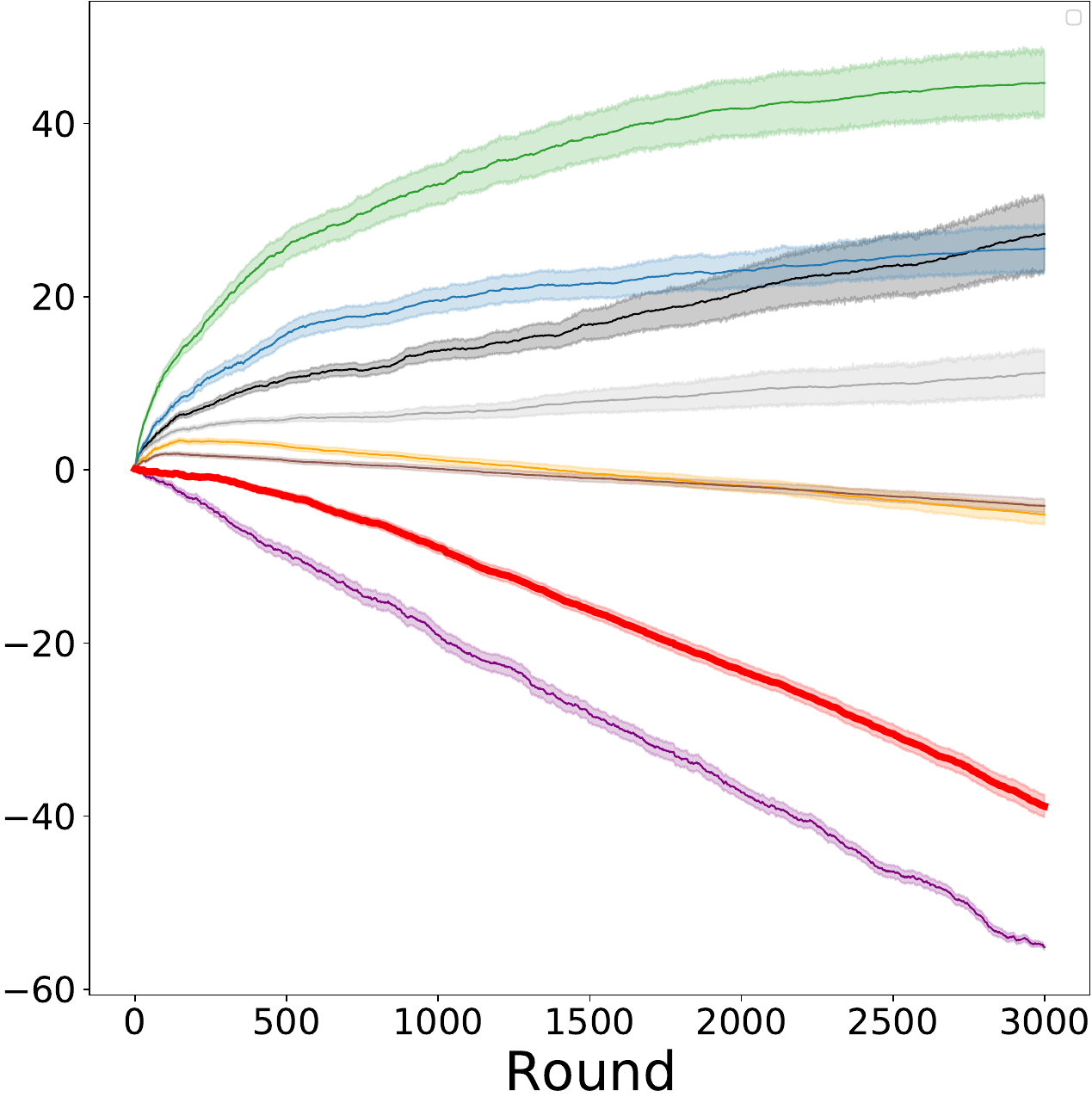}
   \caption{DRIFT}
\end{subfigure}
\begin{subfigure}{.24\textwidth}
    \centering
    \includegraphics[height=2.7cm,width=3.4cm,clip={0,0,0,0}]{./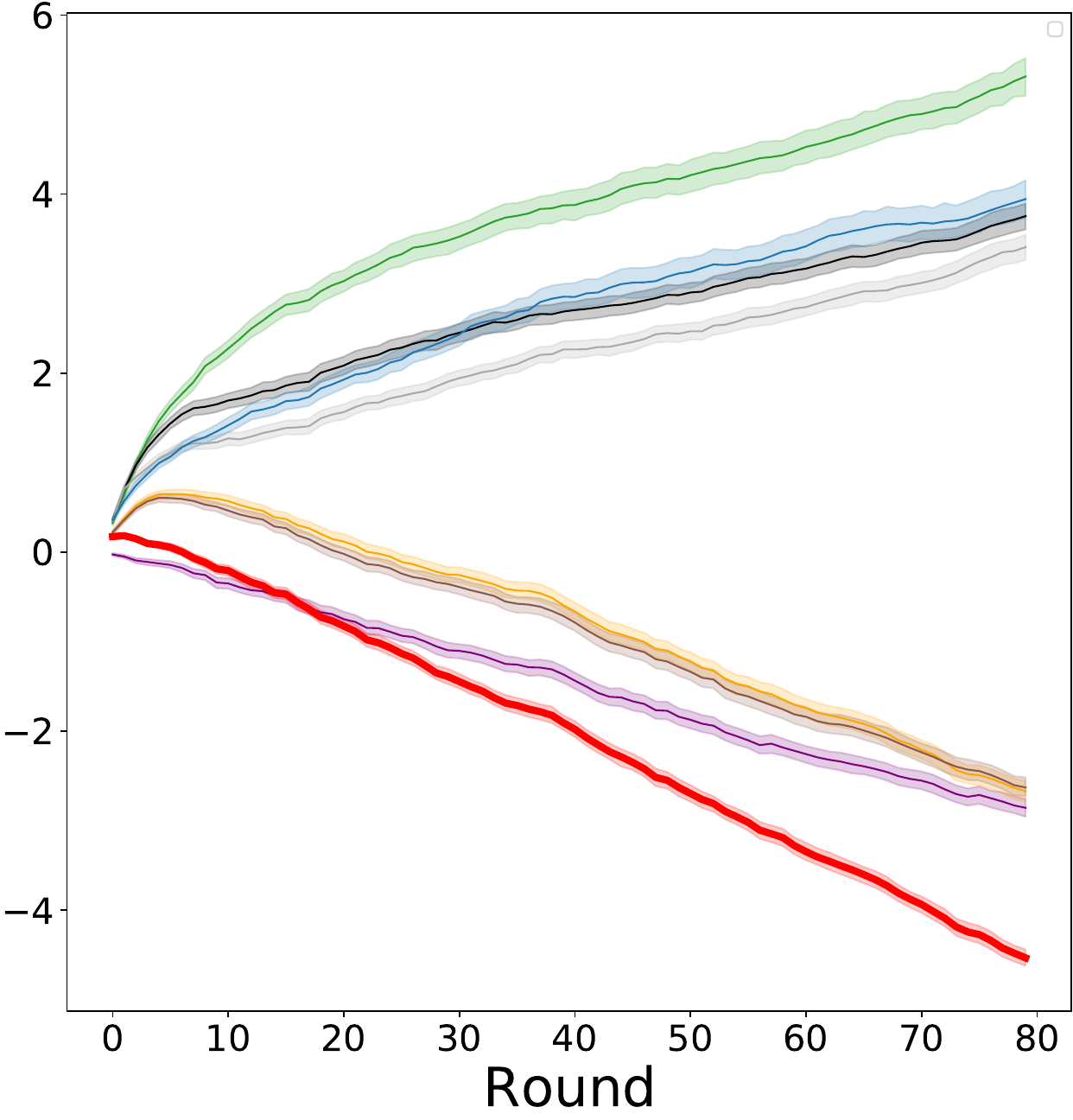}
   \caption{VERTEBRAL}
\end{subfigure}
\begin{subfigure}{.24\textwidth}
    \centering
    \includegraphics[height=2.7cm, width=3.4cm, clip={0,0,0,0}]{./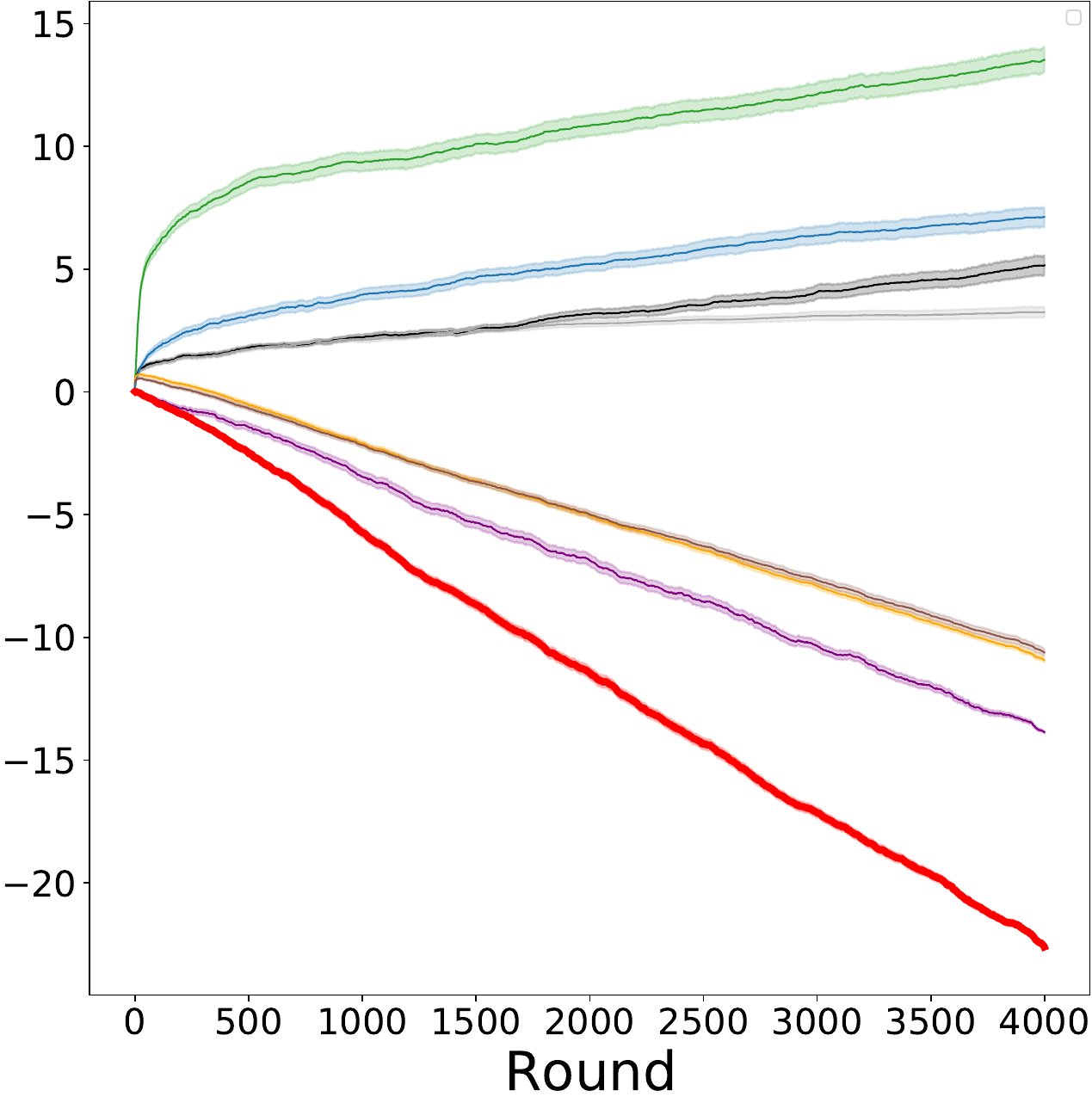}
   \caption{HIV}
\end{subfigure}
    \centering
        \caption{Comparing \algname with 7 baselines on 4 diverse benchmarks in terms of loss trajectory. \algname outperforms all baselines. 
        Performance measured by relative cumulative loss (i.e. loss against the best classifier) under a fixed query cost $B$ (where $B=200,400,30,400$ from left to right). 
        \textbf{Algorithms}: 4 contextual \{Oracle, CQBC, CIWAL, CAMS\} and 4 non-contextual baselines \{RS, QBC, IWAL, MP\} are included (see Section \pref{main:baseines}). 90\% confident interval are indicated in shades. %
        }
  \label{fig:exp:rcl}
\end{figure*}
The RCL under the same query cost for all baselines is shown in \figref{fig:exp:rcl}. 
The loss trajectory demonstrates that \algname efficiently adapts to the best policy after only a few rounds and outperforms all baselines in all benchmarks. The result also demonstrates that \algname can achieve negative RCL on all benchmarks, which means it outperforms any algorithms that chase the best classifier, as the horizontal 0 line represents the performance benchmark of best classifier. This empirical result aligns with \thmref{thm:stochasticReg} that, in the worst scenario, if the best classifier is the best policy, \algname will achieve its performance. Otherwise, \algname will reach a better policy and incurs no regret. 

{\algname could achieve such performance because when an Oracle fails to achieve 0 loss over all instances and contexts, \algname has the opportunity to outperform the Oracle \emph{in those rounds Oracle does not make the best recommendation}. For instance, the stochastic version of \algname (Line 22-23; Line 30-32 in \figref{alg:CAMS}) may achieve this by recommending a model using the weighted majority vote among all policies. Therefore, one can view \algname as adaptively constructing a new policy at each round by combining the advantages of each sub-optimal policy, which may outperform any single expert/policy. Furthermore, for the experiments we ran (or in most real-world scenarios), the data streams are not strictly in a stochastic setting (in which a single policy outperforms all others or has a lower expected loss in every round). The weighted policy strategy may find a better combination of "advices" in such cases (see \figref{fig:exp:results} and \appref{app:outperform_best_expert}).}
}

}{}

\section{Experiments on ImageNet}
\begin{figure*}[ht!]
\begin{subfigure}{1\textwidth}
    \centering
    \includegraphics[%
    width=6.5cm,  clip={0,0,0,0}]{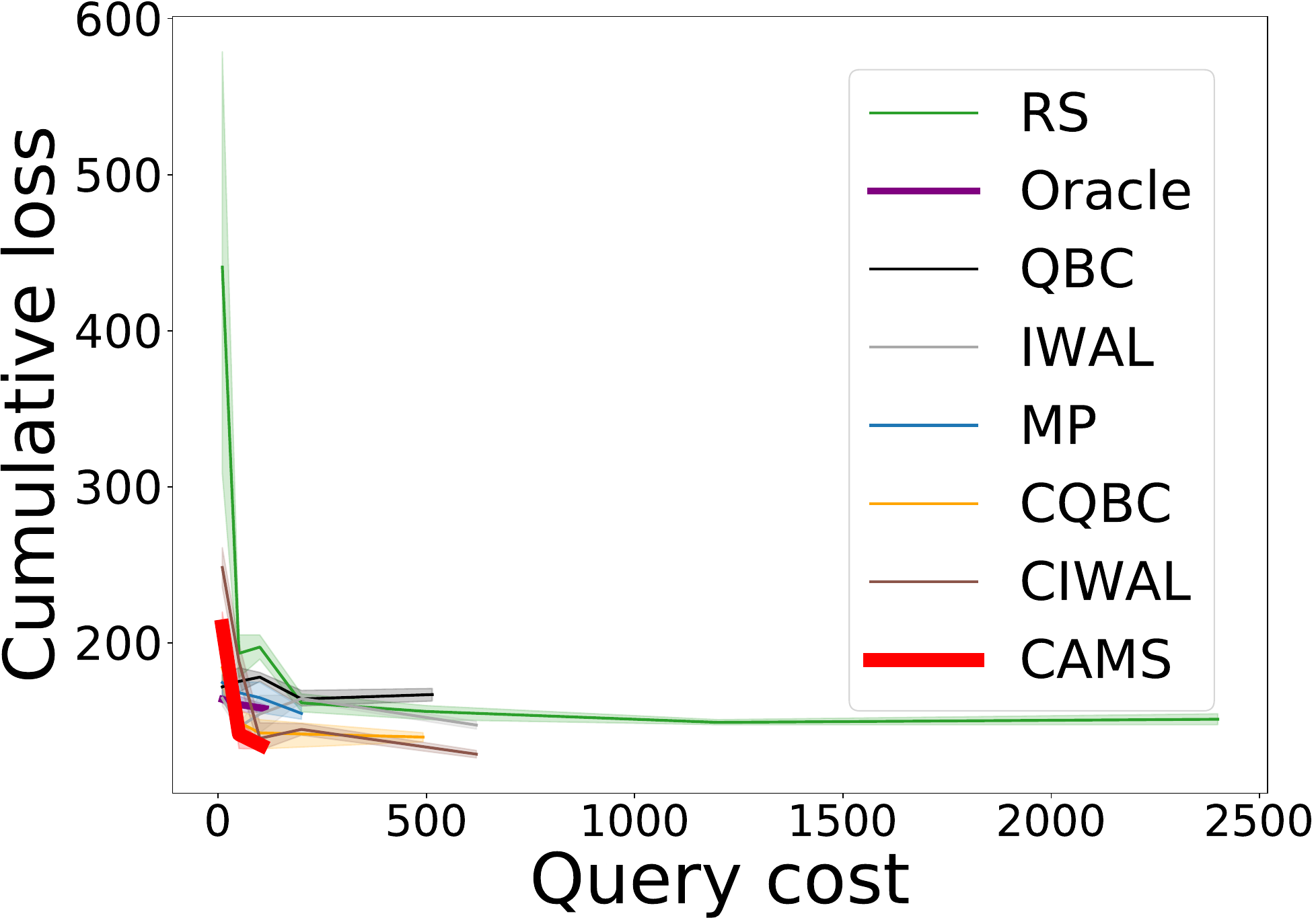}
    \caption{IMAGENET}\label{fig:results:cifar10}
\end{subfigure}
        \caption{Comparison of \algname with 7 baselines on IMAGENET benchmark in terms of cost effectiveness. We plot the cumulative loss as we increase the query cost for a fixed number of rounds $T$ and maximal query cost $B$ ($T=3000$, and $B=2500$). \algname outperforms all baselines. \textbf{Algorithms}: 4 contextual \{Oracle, CQBC, CIWAL, CAMS\} and 4 non-contextual baselines \{RS, QBC, IWAL, MP\} are included. 90\% confident interval are indicated in shades. %
        }
\end{figure*}

\section{Comparing CAMS against recent works in active learning}

\begin{table}[h]
\centering
\scalebox{0.65}{
    \begin{tabular}{l l l l l l l l}
\toprule
\textbf{\makecell[c]{Active Learning Setting \\$/$ Algorithms}}
& \textbf{Coreset}
& \textbf{Batch-BALD}
& \textbf{ \makecell[c]{BADGE; \\VAAL; \\ ClusterMargin}}
& \textbf{ \makecell[c]{BALANCE;\\ GLISTER}}
& \textbf{VeSSAL}
& \textbf{Model Picker}
& \textbf{CAMS}
\\
\midrule
\emph{Streaming, sequential}
& $\times$
& $\times$
& $\times$
& $\times$
& $\times$
& $\checkmark$
& $\checkmark$\\
\hline
\emph{Streaming, batch}
& $\times$
& $\times$
& $\times$
& $\times$
& $\checkmark$
& $\times$
& $\times$ \\
\hline
\emph{Pool-based, batch}
& $\checkmark$
& $\checkmark$
& $\checkmark$
& $\checkmark$
& $\times$
& $\times$
& $\times$\\
\bottomrule \\
\end{tabular}
}
\caption{Selective comparison against recent works in active learning. Among these algorithms, Coreset (Sener \& Savarese, 2017) is a diversity sampling strategy for deep active learning; Batch-BALD is an uncertainty sampling strategy; BADGE (Ash et al., 2019), VAAL (Sinha et al., 2019) ClusterMargin (Citovsky et al., 2021), and VeSSAL (Saran et al., 2023) represent strategies that combine both; GLISTER (Killamsetty et al., 2020) and BALANCE (Zhang et al., 2023) represent decision-theoretic approaches that directly optimize the utility of queries.}
\label{app:emperical-query-complexity}
\end{table}

\end{document}